\newenvironment{smallbmatrix}
  {\left[\begin{smallmatrix}}
  {\end{smallmatrix}\right]}
\theoremstyle{plain}
\newtheorem{theorem}{Theorem}[section]
\newtheorem{lemma}[theorem]{Lemma}
\theoremstyle{definition}
\newtheorem{definition}[theorem]{Definition}
\theoremstyle{remark}
\newtheorem{remark}[theorem]{Remark}
\newcommand\DoToC{%
  \startcontents
  \printcontents{}{1}{\textbf{Table of Contents}\vskip3pt\hrule\vskip5pt}
  \vskip3pt\hrule\vskip5pt
}
\icmltitlerunning{Neural Collapse in Deep Linear Networks: From Balanced to Imbalanced Data}
\begin{document}

\twocolumn[
\icmltitle{Neural Collapse in Deep Linear Networks: From Balanced to Imbalanced Data}
\icmlsetsymbol{cofirst}{*}
\icmlsetsymbol{colast}{**}

\begin{icmlauthorlist}
\icmlauthor{Hien Dang}{cofirst,fpt}
\icmlauthor{Tho Tran}{cofirst,fpt}
\icmlauthor{Stanley Osher}{ucla}
\icmlauthor{Hung Tran-The}{deakin}
\icmlauthor{Nhat Ho}{colast,austin}
\icmlauthor{Tan Nguyen}{colast,nus}
\end{icmlauthorlist}

\icmlaffiliation{fpt}{FPT Software AI Center, Vietnam}
\icmlaffiliation{ucla}{Department of Mathematics, University of California, Los Angeles, USA}
\icmlaffiliation{deakin}{Applied Artificial Intelligence Institute, Deakin University, Victoria, Australia}
\icmlaffiliation{austin}{Department of Statistics and Data
Sciences, University of Texas at Austin, USA}
\icmlaffiliation{nus}{Department of Mathematics, National University of Singapore, Singapore}


\icmlcorrespondingauthor{Hien Dang}{danghoanghien1123@gmail.com}
\icmlcorrespondingauthor{Tho Tran}{thotranhuu99@gmail.com}

\icmlkeywords{Machine Learning, ICML}

\vskip 0.3in
]
\printAffiliationsAndNotice{\icmlEqualContribution}
\begin{abstract}
Modern deep neural networks have achieved impressive performance on tasks from image classification to natural language processing. Surprisingly, these complex systems with massive amounts of parameters exhibit the same structural properties in their last-layer features and classifiers across canonical datasets when training until convergence.  
In particular, it has been observed that the last-layer features collapse to their class-means, and those class-means are the vertices of a simplex Equiangular Tight Frame (ETF).
This phenomenon is known as Neural Collapse ($\mathcal{NC}$). Recent papers have theoretically shown that $\mathcal{NC}$ emerges in the global minimizers of training problems with the simplified ``unconstrained feature model''. In this context, we take a step further and prove the $\mathcal{NC}$ occurrences in deep linear networks for the popular mean squared error (MSE) and cross entropy (CE) losses, showing that global solutions exhibit $\mathcal{NC}$ properties across the linear layers.
Furthermore, we extend our study to imbalanced data for MSE loss and present the first geometric analysis of $\mathcal{NC}$ under bias-free setting. 
Our results demonstrate the convergence of the last-layer features and classifiers to a geometry consisting of orthogonal vectors, whose lengths depend on the amount of data in their corresponding classes. Finally, we empirically validate our theoretical analyses on synthetic and practical network architectures with both balanced and imbalanced scenarios.
\end{abstract}

\section{Introduction}
\label{sec:introduction}

Despite the impressive performance of deep neural networks (DNNs) across areas of machine learning and artificial intelligence~\cite{Krizhevsky12, Simonyan14, Good16, He15, Huang17, Brown20}, the highly non-convex nature of these systems, as well as their massive number of parameters, ranging from hundreds of millions to hundreds of billions, impose a significant barrier to having a concrete theoretical understanding of how they work. Additionally, a variety of optimization algorithms have been developed for training DNNs, which makes it more challenging to analyze the resulting trained networks and learned features~\cite{Ruder16}. In particular, the modern practice of training DNNs includes training the models far beyond \emph{zero error} to achieve \emph{zero loss} in the terminal phase of training (TPT)~\cite{Ma17, Belkin19, Belkin18}. A mathematical understanding of this training paradigm is important for studying the generalization and expressivity properties of DNNs~\cite{Papyan20, Han21}.

 Recently, \cite{Papyan20} has empirically discovered an  intriguing  phenomenon, named Neural Collapse ($\mathcal{NC}$), which reveals a common pattern of the learned deep representations across canonical datasets and architectures in image classification tasks. 
\cite{Papyan20} defined Neural Collapse as the existence of the following four properties:

$(\mathcal{NC}1)$ \textbf{Variability collapse:} features of the same class converge to a unique vector, as training progresses.
    
$(\mathcal{NC}2)$ \textbf{Convergence to simplex ETF:} the optimal class-means have the same length and are equally and maximally pairwise seperated, i.e., they form a simplex Equiangular Tight Frame (ETF).
    
$(\mathcal{NC}3)$ \textbf{Convergence to self-duality:} up to rescaling, the class-means and classifiers converge on each other. 
    
$(\mathcal{NC}4)$ \textbf{Simplification to nearest class-center:} given a feature, the classifier converges to choosing whichever class has the nearest class-mean to it.

Theoretically, it has been proven that $\mathcal{NC}$ emerges in the last layer of DNNs during TPT when the models belong to the class of ``unconstrained features model'' (UFM) \cite{Mixon20} and trained with cross-entropy (CE) loss or mean squared error (MSE) loss.
With regard to classification tasks, CE is undoubtedly the most popular loss function to train neural networks. However, MSE has recently been shown to be effective for classification tasks, with comparable or even better generalization performance than CE loss \cite{Hui20, Demirkaya20, Zhou22b}. 

\textbf{Contributions:} We provide a thorough analysis of the global solutions to the training deep linear network problem with MSE and CE losses under the unconstrained features model defined in Section~\ref{sec:UFM_setting}. Moreover, we study the geometric structure of the learned features and classifiers under a more practical setting where the dataset is imbalanced among classes. Our contributions are three-fold:

\textbf{1. UFM + MSE +  balanced + deep linear network:} We provide the \emph{first mathematical analysis of the global solutions for deep linear networks with arbitrary depths and widths under UFM setting}, showing that the global solutions exhibit $\mathcal{NC}$ properties and how adding the bias term can affect the collapsed structure, when training the model with the MSE loss and balanced data. 

\textbf{2. UFM + MSE +  imbalanced + plain/deep linear network:} We provide the \emph{first geometric analysis for the plain UFM}, which includes only one layer of weight after the unconstrained features, when training the model with the MSE loss and imbalanced data. This result for the plain UFM case sheds light on the geometry of the optimal last-layer classifier and last-layer features of deep non-linear networks, since this setting is consistent with practical overparameterized non-linear networks. Additionally, we also generalize this  setting to the deep linear network one. 

\textbf{3. UFM + CE +  balanced + deep linear network:} We study deep linear networks trained with CE loss
and demonstrate the existence of $\mathcal{NC}$ for any global minimizes in this setting.

\noindent
\textbf{Related works:} In recent years, there has been a rapid increase in interest in $\mathcal{NC}$, resulting in a decent amount of works in a short period of time. Under UFM, these works studied different training problems, proving ETF and $\mathcal{NC}$ properties are exhibited by any global solutions of the loss functions. In particular, a line of works use UFM with CE training to analyze theoretical abstractions of $\mathcal{NC}$ \cite{Zhu21, Fang21, Lu20, yaras23}. Other works study UFM with MSE loss \cite{Tirer22, Zhou22a, Ergen20, Rangamani22}. $\mathcal{NC}$ phenomenon has also been observed and analyzed for supervised contrastive loss \cite{graf23}. For MSE loss, recent extensions to account for additional layers in the analysis with non-linearity are studied in \cite{Tirer22, Rangamani22}, or with batch normalization \cite{Ergen20}. However, these works require strong assumptions on the global optimal solution or the network architecture/capability for their theoretical results to be hold (see Appendix \ref{sec:relatedworks} for more details). On the other hand, \cite{Zhu21, Zhou22a, Zhou22b} have shown the benign optimization landscape for several loss functions under the plain UFM setting, demonstrating that critical points can only be global minima or strict saddle points. Another line of work exploits the ETF structure to improve the network design by initially fixing the last-layer linear classifier as a simplex ETF and not performing any subsequent learning \cite{Zhu21, Yang22}.


Most recent papers study $\mathcal{NC}$ in a balanced setting, i.e., the number of training samples in every class is identical. This setting is vital for the existence of the ETF structure. To the best of our knowledge, $\mathcal{NC}$ with imbalanced data is studied in \cite{Fang21, Christos22, Yang22, Xie22}. In particular, \cite{Fang21} is the first to observe that for imbalanced setting, the collapse of features within the same class is preserved, but the geometry skew away from the ETF. \cite{Christos22} theoretically studies the SVM problem, whose global minima follows a more general geometry than the simplex ETF, called ``SELI''. However, this work also makes clear that the unregularized version of CE loss only converges to KKT points of the SVM problem, which are not necessarily global minima. \cite{Yang22} studies the imbalanced setting but with fixed last-layer linear classifiers initialized as a simplex ETF right at the beginning and proves the optimal last-layer features will converge to ETF structure.

Analyzing deep linear networks is an important step in studying deep nonlinear networks. The theoretical analysis of deep nonlinear networks is very challenging and, in fact, there has been no rigorous theory for deep nonlinear networks yet to the best of our knowledge. Thus, deep linear networks have been studied to provide insights into the behavior of deep nonlinear networks. \cite{Saxe14, Kawaguchi16, Laurent17, Hardt18} show that the optimization of deep linear models exhibits similar properties to those of the optimization of deep nonlinear models. As pointed out in \cite{Saxe14}, despite the linearity of their input-output map, deep linear networks have nonlinear gradient descent dynamics on weights that change with the addition of each new hidden layer. This nonlinear learning phenomenon is proven to be similar to those seen in deep nonlinear networks. On the other hand, in practice, deep linear networks can help improve the training and performance of deep nonlinear networks \cite{huh23, guo21, arora18}. For example, \cite{huh23} empirically proves that linear overparameterization in nonlinear networks improves generalization on classification tasks. In particular, \cite{huh23} expands each linear layer into a succession of multiple linear layers and does not include any non-linearities in between, which results in a considerable increase in performance.

Due to space considerations, we defer a full discussion of related works to Appendix \ref{sec:relatedworks}. A comparison of our results with existing works regarding the study of $\mathcal{NC}$ global optimality conditions is shown in Table \ref{table:1} in Appendix \ref{sec:relatedworks}.

\textbf{Notation:} For a weight matrix $\mathbf{W}$, we use $\mathbf{w}_{j}$ to denote its $j$-th row vector. $\| . \|_{F}$ denotes the Frobenius norm of a matrix and $\| . \|_{2}$ denotes $l_{2}$-norm of a vector.  $\otimes$ denotes the Kronecker product. The symbol ``$\propto$'' denotes proportional, i.e, equal up to a positive scalar. Moreover, we denote the best rank-$k$ approximation of a matrix $\mathbf{A}$ as $\mathcal{P}_{k}(\mathbf{A})$. We also use some common matrix notations: $\mathbf{1}_{n}$ is the all-ones vector,   $\operatorname{diag}\{a_{1},\ldots, a_{K}\}$ is a square diagonal matrix size $K \times K$ with diagonal entries $a_{1},\ldots, a_{K}$.

\section{Problem Setup}
\label{sec:problemsetup}
We consider the classification task with $K$ classes. Let $n_{k}$ denote the number of training samples of class $k$, $\forall \: k \in [K]$ and $N:= \sum_{k=1}^{K} n_{k}$.
A typical deep neural network $\mathbf{\psi} (\cdot): \mathbb{R}^{D} \to \mathbb{R}^{K}$ can be expressed as follows:
\begin{align}
    \mathbf{\psi} (\mathbf{x}) = \mathbf{W} \mathbf{\phi}(\mathbf{x}) + \mathbf{b}, \nonumber
\end{align}
where $\phi (\cdot): \mathbb{R}^{D} \to \mathbb{R}^{d}$ is the feature mapping, and $\mathbf{W} \in \mathbb{R}^{K \times d}$ and $\mathbf{b} \in \mathbb{R}^{K}$ are the last-layer linear classifiers and bias, respectively. Formally, the feature mapping $\phi(.)$ consists of a multilayer nonlinear compositional mapping, which can be written as:
\begin{align}
    \phi_{\theta}(\mathbf{x}) =
    \sigma(\mathbf{W}_{L} \ldots
    \sigma(\mathbf{W}_{1} \mathbf{x} + \mathbf{b}_{1}) + \mathbf{b}_{L} ), \nonumber
\end{align}
where $\mathbf{W}_{l}$ and $\mathbf{b}_{l}$, $l=1,\ldots,L$, are the weight matrix and bias at layer $l$, respectively. Here, $\sigma(\cdot)$ is a nonlinear activation function. Let $\theta:=\{\mathbf{W}_{l}, \mathbf{b}_{l}\}_{l=1}^{L}$ be the set of  parameters in the feature mapping and $\Theta := \left\{ \mathbf{W}, \mathbf{b}, \theta \right\}$ be the set of all network's parameters. We solve the following optimization problem to find the optimal values for $\Theta$:
\begin{align}
    \min_{\Theta} 
    \sum_{k=1}^{K}
    \sum_{i=1}^{n_{k}}
    \mathcal{L}(\psi(\mathbf{x}_{k,i}), \mathbf{y}_{k})
    + \frac{\lambda}{2} \| \Theta \|_F^2,
    \label{eq:loss}
\end{align}
where $\mathbf{x}_{k,i}\in \mathbb{R}^{D}$ is the $i$-th training sample in the $k$-th class, and   $\mathbf{y}_{k} \in \mathbb{R}^{K}$ denotes its corresponding label, which is a one-hot vector whose $k$-th entry is 1 and other entries are 0. Also, $\lambda > 0$ is the regularization hyperparameter that control the impact of the weight decay penalty, and $\mathcal{L}(\psi(\mathbf{x}_{k,i}), \mathbf{y}_{k})$ is the loss function that measures the difference between the output $\psi(\mathbf{x}_{k,i})$ and the target $\mathbf{y}_{k}$.

\begin{figure}[t]
    \centering    \includegraphics[width=.47\textwidth]{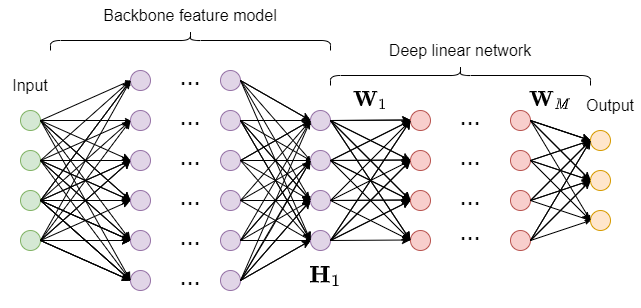}
    \vspace{-1em}
    \caption{Illustration of UFM, followed by linear layers.}
    \label{fig:DNN}
\end{figure}

\begin{figure}[!t]
\centering
\subfigure[OF (Thm. \ref{thm:bias-free})]{
\includegraphics[width=.18\textwidth]{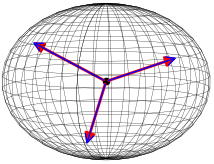}
}
\hspace{5mm}
\subfigure[ETF (Thm. \ref{thm:bias-free})]{
\includegraphics[width=.18\textwidth]{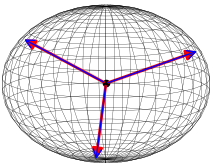}
}
\hspace{5mm}
\subfigure[GOF (Thm. \ref{thm:UFM_imbalance})]{
\includegraphics[width=.18\textwidth]{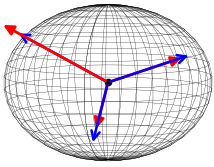}
}
\hspace{8mm}
\subfigure{
\includegraphics[width=.12\textwidth]{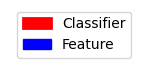}
}
\vspace{-0.75em}
\caption{Visualization of geometries of Frobenius-normalized classifiers and features with $K = 3$ classes. For imbalanced example, the number of samples for each class is 30, 10, and 5.}
\label{fig:geo}
\vspace{-0.1in}
\end{figure}

\subsection{Formulation under Unconstrained Features Model}
\label{sec:UFM_setting}
Following recent studies of the $\mathcal{NC}$ phenomenon, we adopt the \emph{unconstrained features model (UFM)} in our setting. UFM treats the last-layer features $\mathbf{h}=\phi(\mathbf{x}) \in \mathbb{R}^{d}$ as free optimization variables. This relaxation can be justified by the well-known result that an overparameterized deep neural network can  approximate any continuous function \cite{Hornik89, Hornik91, Zhou18, Yarotsky18}. Using the UFM, we consider the following slight variant of \eqref{eq:loss}: 
\begin{align}
    \min_{\mathbf{W}, \mathbf{H}, \mathbf{b}}
    f(\mathbf{W}, \mathbf{H}, \mathbf{b}) &:=
    \frac{1}{2 N}
    \sum_{k=1}^{K}
    \sum_{i=1}^{n_{k}}
    \mathcal{L}(\mathbf{W} \mathbf{h}_{k,i} + \mathbf{b}, \mathbf{y}_{k})  \nonumber\\
    + \frac{\lambda_{W}}{2} \| \mathbf{W} \|_F^2
    &+  \frac{\lambda_{H}}{2} \| \mathbf{H} \|_F^2
    + \frac{\lambda_{b}}{2} \| \mathbf{b} \|_2^2,    
\label{eq:UFM}
\end{align}
where $\mathbf{h}_{k,i}$ is the feature of the $i$-th training sample in the $k$-th class. We let $\mathbf{H}:=[\mathbf{h}_{1,1},\ldots, \mathbf{h}_{1,n_{1}},\mathbf{h}_{2,1},\ldots, \mathbf{h}_{K,n_{K}}] \in \mathbb{R}^{d \times N}$ be the matrix of unconstrained features. The feature class-means and global-mean are computed as $\mathbf{h}_{k} := n_{k}^{-1} \sum_{i=1}^{n_{k}} \mathbf{h}_{k,i}$ for $k=1,\ldots,K$ and $\mathbf{h_{G}} := N^{-1} \sum_{k=1}^{K} \sum_{i=1}^{n_{k}} \mathbf{h}_{k,i}$, respectively. In this paper, we also denote $\mathbf{H}$ by $\mathbf{H_1}$ and use these notations interchangeably.

\textbf{Extending UFM to the setting with $M$ linear layers:} $\mathcal{NC}$ phenomenon has been studied extensively for different loss functions under UFM but with only 1 to 2 layers of weights. In this work, we study $\mathcal{NC}$ under UFM in its significantly more general form with $M \ge 2$ linear layers by generalizing  \eqref{eq:UFM} to deep linear networks with arbitrary depths and widths (see Fig. \ref{fig:DNN} for an illustration). We consider the following generalization of \eqref{eq:UFM} in the $M$-linear-layer setting:
\begin{align}
    &\underset{\substack{\mathbf{W}_{M},\ldots, \mathbf{W}_{1} \\ \mathbf{H}_{1}, \mathbf{b}}}{\text{min}} \,\frac{1}{2N} \sum_{k=1}^{K}
    \sum_{i=1}^{n_{k}}
    \mathcal{L}(\mathbf{W}_{M} \mathbf{W}_{M-1} \ldots \mathbf{W}_{1} \mathbf{h}_{k,i}+ \mathbf{b}, \mathbf{y}_{k}) 
    \nonumber \\ 
    &+ \frac{\lambda_{W_{M}}}{2} \| \mathbf{W}_{M} \|^2_F + \frac{\lambda_{W_{M-1}}}{2} \| \mathbf{W}_{M-1} \|^2_F +
    \ldots  \nonumber \\
    &+
    \frac{\lambda_{W_{1}}}{2} \| \mathbf{W}_{1} \|^2_F
    +
    \frac{\lambda_{H_{1}}}{2} \| \mathbf{H}_{1} \|^2_F + \frac{\lambda_{b}}{2} \| \mathbf{b} \|_{2}^{2},
    \label{eq:UFM_general_linear}
\end{align}
where $M \geq 2$, $\lambda_{W_{M}}, \ldots,  \lambda_{W_{1}}, \lambda_{H_{1}}, \lambda_{b} > 0$ are regularization hyperparameters, 
and $\mathbf{W}_{M} \in \mathbb{R}^{K \times d_{M}}$, $\mathbf{W}_{M-1} \in \mathbb{R}^{d_{M} \times d_{M-1}}, \ldots, \mathbf{W}_{1}  \in \mathbb{R}^{d_{2} \times d_{1}}$ with $d_{M}, d_{M-1}, \ldots, d_{1}$ are arbitrary positive integers. In our setting, we do not consider the biases of intermediate hidden layers.

\noindent \textbf{Imbalanced data:} Without loss of generality, we assume $n_{1} \geq n_{2} \geq \ldots \geq n_{K}$. This setting is more general than those in previous works, where only two different class sizes are considered, i.e., the majority classes of $n_{A}$ training samples and the minority classes of $n_{B}$ samples with the imbalance ratio $R:= n_{A} /  n_{B} > 1 \label{eq:im_ratio}$~\cite{Fang21, Christos22}.

\noindent We now define the ``General Orthogonal Frame'' (GOF), which is the convergence geometry of the class-means and classifiers in imbalanced MSE training problem with no bias (see Section \ref{sec:imbalance}). 

\begin{definition}[General Orthogonal Frame]
\label{def:GOF}
    A standard general orthogonal frame (GOF) is a collection of points in $\mathbb{R}^{K}$ specified by the columns of:
    \begin{align}
        \mathbf{N} = \frac{1}{\sqrt{\sum_{k=1}^{K} a_{k}^{2}}} \operatorname{diag}(a_{1}, a_{2}, \ldots, a_{K}), \: a_{i} > 0 \: \: \forall \: i \in [K]. \nonumber
    \end{align}
    We also consider the general version of GOF as a collection of points in $
    \mathbb{R}^{d} \: (d \geq K)$ specified by the columns of $ \mathbf{P} \mathbf{N}$ where $\mathbf{P} \in \mathbb{R}^{d \times K}$ is an orthonormal matrix, i.e. $\mathbf{P}^{\top} \mathbf{P} = \mathbf{I}_{K}$. In the special case where $a_{1} = a_{2} = \ldots = a_{K}$, we have $\mathbf{N}$ follows OF structure in \cite{Tirer22}, i.e., $\mathbf{N}^{\top} \mathbf{N} \propto \mathbf{I}_{K}$. 
    Fig. \ref{fig:geo} shows a visualization for GOF versus OF and ETF in~\cite{Papyan20}.
\end{definition}

\section{Neural Collapse in Deep Linear Networks under the UFM Setting with Balanced Data}
\label{sec:mainresults}

\noindent In this section, we present our study on the global optimality conditions for the $M$-layer deep linear networks ($M \geq 2$), trained with the MSE loss under the balanced setting, i.e., $n_{1} = n_{2} = \ldots = n_{K} := n$, extending the prior results that consider only one or two hidden layers. We consider the following optimization problem for training the model:
\begin{align}
    &\underset{\substack{\mathbf{W}_{M},\ldots, \mathbf{W}_{1} \\ \mathbf{H}_{1}, \mathbf{b}}}{\text{min}} \,
    \frac{1}{2N} \| \mathbf{W}_{M} \mathbf{W}_{M-1} \ldots  \mathbf{W}_{1}
    \mathbf{H}_{1} + \mathbf{b} \mathbf{1}^{\top}_{n} - \mathbf{Y} \|_F^2  \nonumber \\
    &+ \frac{\lambda_{W_{M}}}{2} \| \mathbf{W}_{M} \|^2_F +
    \ldots +
    \frac{\lambda_{W_{1}}}{2} \| \mathbf{W}_{1} \|^2_F +
    \frac{\lambda_{H_{1}}}{2} \| \mathbf{H}_{1} \|^2_F, \label{eq:bias_free}
\end{align}
where $\mathbf{Y}=\mathbf{I}_{K} \otimes \mathbf{1}_{n}^{\top} \in \mathbb{R}^{K \times N}$ is the one-hot vectors matrix. Note that~\eqref{eq:bias_free} is a special case of~\eqref{eq:UFM_general_linear} when $\lambda_{b_{M}} = 0$.

We further consider two different settings from \eqref{eq:bias_free}: (i) bias-free, i.e., excluding $\mathbf{b}$, and (ii) last-layer unregularized bias, i.e., including $\mathbf{b}$. We now state the characteristics of the global solutions to these problems. 

\begin{theorem}
\label{thm:bias-free}
Let $R : = \min(K, d_{M}, d_{M-1}, \ldots, d_{2}, d_{1})$ and $\left(\mathbf{W}_{M}^*, \mathbf{W}_{M-1}^*, \ldots, \mathbf{W}_1^*, \mathbf{H}_1^*, \mathbf{b}^* \right)$ be any global minimizer of \eqref{eq:bias_free}. Denoting $a:= K \sqrt[M]{K n  \lambda_{W_{M}} \lambda_{W_{M-1}} \ldots \lambda_{W_{1}} \lambda_{H_{1}}}$, then the following results hold for both (i) bias-free setting with $\mathbf{b}^*$ excluded and (ii) last-layer unregularized bias setting with $\mathbf{b}^*$ included:
\begin{itemize}
    \item[(a)] If $a < \frac{(M-1)^{\frac{M-1}{M}}}{M^{2}}$, we have: \\
    
    ($\mathcal{NC}1$) $\mathbf{H}_1^*=\overline{\mathbf{H}}^{*} \otimes \mathbf{1}_n^{\top}$, where $\overline{\mathbf{H}}^{*} = [\mathbf{h}_{1}^{*}, \ldots, \mathbf{h}_{K}^{*}]  \in \mathbb{R}^{d \times K}$ and $\mathbf{b}^{*} = \frac{1}{K} \mathbf{1}_{K}$.
    \\
    
     ($\mathcal{NC}2$) $\forall \: j = 1,\ldots,M$ :
        \begin{align}
            \mathbf{W}_{M}^{\ast} \mathbf{W}_{M}^{\ast
            \top} \propto 
            \overline{\mathbf{H}}^{\ast \top} \overline{\mathbf{H}}^{\ast}
            \propto
            \mathbf{W}_{M}^{\ast} \mathbf{W}_{M-1}^{\ast} 
            \ldots 
            \overline{\mathbf{H}}^{*}  
            \nonumber \\ \propto
            (\mathbf{W}_{M}^{\ast} \mathbf{W}_{M-1}^{\ast} \ldots \mathbf{W}_{j}^{\ast})(\mathbf{W}_{M}^{\ast} \mathbf{W}_{M-1}^{\ast} \ldots \mathbf{W}_{j}^{\ast})^{\top}
             \nonumber    
        \end{align}
        and align to:
        \begin{itemize}
            \item[(i)] OF structure if \eqref{eq:bias_free} is bias-free:
            \begin{align}
                \left\{\begin{matrix}
                \mathbf{I}_{K} &\text{if } R \geq K \\
                \mathcal{P}_{R}(\mathbf{I}_{K}) & \text{if } R < K
                \end{matrix}\right. .
                \nonumber
            \end{align}
            \item[(ii)] ETF structure if \eqref{eq:bias_free} has last-layer bias $\mathbf{b}$:
            \begin{align}
                \left\{\begin{matrix}
                \mathbf{I}_{K} - \frac{1}{K} \mathbf{1}_{K} \mathbf{1}_{K}^{\top}&\text{if } R \geq K - 1 \\
                \mathcal{P}_{R} \left( \mathbf{I}_{K} - \frac{1}{K} \mathbf{1}_{K} \mathbf{1}_{K}^{\top} \right) & \text{if } R < K - 1
                \end{matrix}\right. .
                \nonumber
            \end{align}
        \end{itemize}

    ($\mathcal{NC}3$) $\forall \: j = 1,\ldots,M$:
    \begin{align}
    \begin{gathered}
        \mathbf{W}_{M}^{\ast} \mathbf{W}_{M-1}^{\ast} \ldots \mathbf{W}_{1}^{\ast} \propto \overline{\mathbf{H}}^{* \top}, \\
        \mathbf{W}_{M}^{*} \mathbf{W}_{M-1}^{*} \ldots \mathbf{W}_{j}^{*} \propto  (\mathbf{W}_{j-1}^{*} \ldots \mathbf{W}_{1}^{*} \overline{\mathbf{H}}^{*})^{\top}. \nonumber
    \end{gathered}
    \end{align}

    
\item[(b)] If $a > \frac{(M-1)^{\frac{M-1}{M}}}{M^{2}}$, \eqref{eq:bias_free} only has trivial global minima $(\mathbf{W}_{M}^*, \mathbf{W}_{M-1}^*, \ldots, \mathbf{W}_1^*, \mathbf{H}_1^{*}, \mathbf{b}^{*} ) = (\mathbf{0}, \mathbf{0},\ldots, \mathbf{0}, \mathbf{0}, \frac{1}{K} \mathbf{1}_{K})$.
\\

\item[(c)] If $a = \frac{(M-1)^{\frac{M-1}{M}}}{M^{2}}$, \eqref{eq:bias_free} has trivial global solution $(\mathbf{W}_{M}^*, \ldots, \mathbf{W}_1^*, \mathbf{H}_1^{*}, \mathbf{b}^{*} ) = (\mathbf{0}, .., \mathbf{0}, \mathbf{0}, \frac{1}{K} \mathbf{1}_{K})$ and nontrivial global solutions that have the same $(\mathcal{NC}1)$ and $(\mathcal{NC}3)$ properties as case (a).
\\

For $(\mathcal{NC}2)$ property, for $j = 1, \ldots, M$, we have:
    \begin{align}
         \mathbf{W}_{M}^{\ast} \mathbf{W}_{M}^{\ast \top} \propto 
            \overline{\mathbf{H}}^{\ast \top} \overline{\mathbf{H}}^{\ast}
            \propto
            \mathbf{W}_{M}^{\ast} \mathbf{W}_{M-1}^{\ast} 
            \ldots 
            \overline{\mathbf{H}}^{*}  \propto
            \nonumber \\
            (\mathbf{W}_{M}^{\ast} \mathbf{W}_{M-1}^{\ast} \ldots \mathbf{W}_{j}^{\ast})(\mathbf{W}_{M}^{\ast} \mathbf{W}_{M-1}^{\ast} \ldots \mathbf{W}_{j}^{\ast})^{\top}
             \nonumber    
        \end{align}
        and align to:
        \begin{align}
            \left\{\begin{matrix}
            \mathcal{P}_{r} \left( \mathbf{I}_{K} \right) &\text{if \eqref{eq:bias_free} is bias-free}  \\
            \mathcal{P}_{r} \left( \mathbf{I}_{K} - \frac{1}{K} \mathbf{1}_{K} \mathbf{1}_{K}^{\top} \right) &\text{if \eqref{eq:bias_free} has last-layer bias}
            \end{matrix}\right. , \nonumber
        \end{align}
    with $r$ is the number of positive singular value of $\overline{\mathbf{H}}^{*}$.
    \end{itemize}
\end{theorem}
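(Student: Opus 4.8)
The plan is to collapse the problem, in stages, down to a one-dimensional optimization over a single scalar (a singular value of the effective product matrix), and then read off the $\mathcal{NC}$ properties from the structure of the minimizing factorizations. First I would establish $(\mathcal{NC}1)$: for fixed weights $\mathbf{W}_M,\dots,\mathbf{W}_1$ and bias $\mathbf{b}$, the objective decouples across the columns of $\mathbf{H}_1$ and is strictly convex in each column (the $\frac{\lambda_{H_1}}{2}\|\cdot\|^2$ term makes the per-column Hessian positive definite). Since the label $\mathbf{y}_k$ is shared by all $n$ samples of class $k$, the unique per-column minimizer is identical within a class, forcing $\mathbf{H}_1^\ast=\overline{\mathbf{H}}^\ast\otimes\mathbf{1}_n^\top$. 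Substituting this and using $N=Kn$ reduces the data term to the effective $K\times K$ quantity $\frac{1}{2K}\|\mathbf{A}+\mathbf{b}\mathbf{1}_K^\top-\mathbf{I}_K\|_F^2$ with $\mathbf{A}:=\mathbf{W}_M\cdots\mathbf{W}_1\overline{\mathbf{H}}$ and feature penalty $\frac{n\lambda_{H_1}}{2}\|\overline{\mathbf{H}}\|_F^2$. In the bias setting I would then minimize the unregularized $\mathbf{b}$ in closed form, obtaining $\mathbf{b}^\ast=\frac1K(\mathbf{I}_K-\mathbf{A})\mathbf{1}_K$; back-substitution replaces the target $\mathbf{I}_K$ by the centering matrix $\mathbf{J}:=\mathbf{I}_K-\frac1K\mathbf{1}_K\mathbf{1}_K^\top$, since the residual becomes $(\mathbf{A}-\mathbf{I}_K)\mathbf{J}$. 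This projection onto $\mathbf{1}_K^\perp$ is exactly what turns the OF geometry of the bias-free case into ETF, and at the optimum (where $\mathbf{A}\mathbf{1}_K=\mathbf{0}$) it yields $\mathbf{b}^\ast=\frac1K\mathbf{1}_K$.

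The core reduction is a deep matrix factorization lemma. For a fixed product $\mathbf{A}$, I would minimize $\sum_{l}\frac{\lambda_{W_l}}{2}\|\mathbf{W}_l\|_F^2+\frac{n\lambda_{H_1}}{2}\|\overline{\mathbf{H}}\|_F^2$ over all $(M{+}1)$-factor factorizations of $\mathbf{A}$. Optimal factorizations are aligned (all factors share a common chain of singular subspaces), so per singular value a weighted AM--GM / balancedness argument shows the minimum depends only on the singular values of $\mathbf{A}$ and equals $\frac{M+1}{2}\bigl(n\prod_{l}\lambda_{W_l}\,\lambda_{H_1}\bigr)^{1/(M+1)}\sum_i\sigma_i(\mathbf{A})^{2/(M+1)}$. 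With the penalty now purely spectral, von Neumann's trace inequality applied to the data term — which, for fixed singular values, is minimized by maximizing $\operatorname{tr}(\mathbf{A})$ (resp.\ $\operatorname{tr}(\mathbf{A}\mathbf{J})$) — forces the optimal $\mathbf{A}$ to be symmetric PSD and simultaneously diagonalizable with the target. The whole problem then separates across the singular values $\sigma_1,\dots,\sigma_R$, where $R=\min(K,d_M,\dots,d_1)$ is the maximal attainable rank of the product.

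It remains to analyze the scalar problem $\phi(\sigma)=\frac{1}{2K}(\sigma-1)^2+\frac{M+1}{2}\gamma\,\sigma^{2/(M+1)}$ over $\sigma\ge0$, with $\gamma:=(n\prod_{l}\lambda_{W_l}\,\lambda_{H_1})^{1/(M+1)}$. Comparing the nontrivial stationary point against $\phi(0)$, solving $\phi'(\sigma)=0$ and $\phi(\sigma)=\phi(0)$ simultaneously pins the critical abscissa at $\sigma=\frac{M-1}{M}$, and after raising to the power $M{+}1$ and regrouping the constants one finds the crossover at exactly $a=\frac{(M-1)^{(M-1)/M}}{M^2}$ (this regrouping is precisely why $a$ is defined with an $M$-th root). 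For $a$ below the threshold each of the $R$ usable modes prefers the common positive value $\sigma^+$, giving $\mathbf{A}\propto\mathbf{I}_K$ (or a rank-$R$ truncation $\mathcal{P}_R(\mathbf{I}_K)$ when $R<K$) in the bias-free case, and $\mathbf{A}\propto\mathbf{J}$ (its rank-$R$ truncation when $R<K-1$) with bias — this is $(\mathcal{NC}2)$. For $a$ above the threshold only $\sigma=0$ wins, yielding the trivial case (b); at equality both $0$ and $\sigma^+$ are global minimizers of $\phi$, so any number $r\le R$ of active modes is optimal, yielding case (c). Finally, $(\mathcal{NC}3)$ and the per-layer $(\mathcal{NC}2)$ product identities follow from the equality conditions of the factorization lemma: aligned, balanced factors satisfy $\mathbf{W}_M^\ast\cdots\mathbf{W}_1^\ast\propto\overline{\mathbf{H}}^{\ast\top}$ and the stated partial-product proportionalities, since every factor is diagonal with balanced singular values in the shared eigenbasis of $\mathbf{A}$.

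I expect the main obstacle to be the factorization lemma together with its equality case, i.e.\ proving that the regularization lower bound is attained \emph{only} by aligned, singular-value-balanced factorizations, and converting this into the exact proportionality statements for $(\mathcal{NC}2)$ and $(\mathcal{NC}3)$ across all $M$ layers. The von Neumann reduction and the scalar threshold computation are routine once the spectral penalty is in hand; the delicate work is controlling the full set of minimizers of a nonconvex $(M{+}1)$-factor product, and in particular the rank bookkeeping that produces the $\mathcal{P}_R$ truncations (and the $R\ge K-1$ versus $R<K-1$ dichotomy in the bias case) as well as the boundary freedom in the number $r$ of positive singular values in case (c).
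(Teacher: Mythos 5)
Your proposal is sound and reaches the correct threshold (I checked: your crossover abscissa $\sigma=\tfrac{M-1}{M}$ for the singular values of $\mathbf{A}=\mathbf{W}_M\cdots\mathbf{W}_1\overline{\mathbf{H}}$ corresponds exactly to the paper's $x=(M-1)^{1/M}$ via $\sigma=\tfrac{x^M}{x^M+1}$, and your condition $\gamma^{M+1}=n\lambda_{H_1}\prod_l\lambda_{W_l}=\tfrac{(M-1)^{M-1}}{M^{2M}K^{M+1}}$ regroups to the paper's $a=\tfrac{(M-1)^{(M-1)/M}}{M^2}$), but it takes a genuinely different route. The paper never fixes the product $\mathbf{A}$: it works with the first-order criticality conditions of the full nonconvex problem, derives the balancedness relations $\lambda_{W_{l+1}}\mathbf{W}_{l+1}^{\top}\mathbf{W}_{l+1}=\lambda_{W_l}\mathbf{W}_l\mathbf{W}_l^{\top}$ and the chained SVD alignment (Lemmas \ref{lm:2}--\ref{lm:4}), eliminates $\mathbf{H}_1$ by the ridge-regression formula, and lands on the separable scalar function $g(x)=\tfrac{1}{x^M+1}+bx$. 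You instead perform a nested minimization: convexity in each column of $\mathbf{H}_1$ for $(\mathcal{NC}1)$ (cleaner than the paper's route, which reads the collapse off the closed form of the critical point), closed-form elimination of the unregularized bias to replace $\mathbf{I}_K$ by the centering matrix, a Schatten-$\tfrac{2}{M+1}$ variational characterization of the regularizer over all factorizations of a fixed $\mathbf{A}$, von Neumann to diagonalize, and a separable scalar problem $\phi(\sigma)$. What your approach buys is modularity and a cleaner conceptual explanation of where OF versus ETF and the $\mathcal{P}_R$ truncations come from; what the paper's approach buys is that the alignment and balancedness structure needed for the per-layer $(\mathcal{NC}2)$/$(\mathcal{NC}3)$ identities is established once and for all at the level of critical points rather than deferred to an equality-case analysis.

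Two places still need real work, and you correctly flag the first. The factorization lemma's equality case --- that \emph{every} minimizing factorization of $\mathbf{A}$ is aligned and balanced --- carries all of $(\mathcal{NC}2)$ and $(\mathcal{NC}3)$; its proof is essentially the content of the paper's Lemmas \ref{lm:2} and \ref{lm:4} (first-order conditions of the constrained problem plus propagation of shared singular bases), so you do not escape that machinery, and the lower bound itself over \emph{arbitrary} (non-aligned) factorizations needs log-majorization of singular values of products, not just per-mode AM--GM. Second, in the bias case the elimination of $\mathbf{b}$ leaves the data term $\tfrac{1}{2K}\|\mathbf{A}\mathbf{J}-\mathbf{J}\|_F^2$, not $\tfrac{1}{2K}\|\mathbf{A}-\mathbf{J}\|_F^2$; you need the explicit monotonicity argument that replacing $\mathbf{A}$ by $\mathbf{A}\mathbf{J}$ leaves the fit unchanged while strictly decreasing $\sum_i\sigma_i(\mathbf{A})^{2/(M+1)}$ unless $\mathbf{A}\mathbf{1}_K=\mathbf{0}$ (using $\sigma_i(\mathbf{A}\mathbf{J})\le\sigma_i(\mathbf{A})$ and $\|\mathbf{A}\|_F^2=\|\mathbf{A}\mathbf{J}\|_F^2+\tfrac{1}{K}\|\mathbf{A}\mathbf{1}_K\|_2^2$), which is what forces $\mathbf{b}^*=\tfrac{1}{K}\mathbf{1}_K$ and caps the rank at $K-1$; the paper handles the same point through the observation that the column sums of $\mathbf{W}_M$ vanish at any nontrivial critical point.
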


Our proofs (in Appendix \ref{sec:proofs_balanced}) first characterize critical points of the loss function, showing that the weight matrices of the network have the same set of singular values, up to a factor depending on the weight decay. Then, we use the singular value decomposition on these weight matrices to
transform the loss function into a function of singular values of $\mathbf{W}_{1}$ and singular vectors of $\mathbf{W}_{M}$. Due to the separation of the singular values/vectors in the expression of the loss function, we can optimize each one individually. This method shares some similarities with the proof for bias-free case in \cite{Tirer22} where they transform a lower bound of the loss function into a function of singular values. Furthermore, the threshold $(M-1)^{\frac{M-1}{M}}/{M^{2}}$  of the constant $a$
is derived from the minimizer of the function $g(x) = 1/(x^{M} + 1) + bx$ for $x \geq 0$. For instance, if $b > (M-1)^{\frac{M-1}{M}}/M$, $g(x)$ is minimized at $x = 0$ and the optimal singular values will be $0$'s, leading to the stated solution.

The main difficulties and novelties of our proofs for deep linear networks are: 
i) we observe that the product of many matrices can be simplified by using SVD with identical orthonormal bases between consecutive weight matrices  (see Lemma \ref{lm:4}) and, thus, only the singular values of $\mathbf{W}_{1}$ and left singular vectors of $\mathbf{W}_{M}$ remain in the loss function, 
ii) optimal singular values are related to the minimizer of the function $g(x) = 1/(x^M + 1) + bx$ (see Appendix \ref{sec:study_g}), and iii) we study the properties of optimal singular vectors to derive the geometries of the global solutions. 

Theorem \ref{thm:bias-free} implies the following interesting results:
\begin{itemize}[leftmargin=1em]
\itemsep0em 
\vspace{-1em}
    \item \textbf{Features collapse}: For each $k \in [K]$, with class-means matrix $\overline{\mathbf{H}}^{*} = [\mathbf{h}^{*}_{1}, \ldots, \mathbf{h}^{*}_{K}] \in \mathbb{R}^{d \times K}$, we have $\mathbf{H}_{1}^{*} = \overline{\mathbf{H}}^{*} \otimes \mathbf{1}_{n}^{\top}$, implying the collapse of features within the same class to their class-mean. 
    
    \item \textbf{Convergence to OF/Simplex ETF:} The class-means matrix, the last-layer linear classifiers, or the product of consecutive weight matrices converge to OF in the case of bias-free and simplex ETF in the case of having last-layer bias. This result is consistent with the one and two-layer cases in \cite{Tirer22, Zhou22a}.
  
    \item \textbf{Convergence to self-duality:} If we separate the product $\mathbf{W}_{M}^{*} \ldots \mathbf{W}_{1}^{*} \overline{\mathbf{H}}^{*}$ (once) into any two components, they will be perfectly aligned to each other up to rescaling. This generalizes from the previous results which demonstrate that the last-layer linear classifiers are perfectly matched with the class-means after rescaling.
\end{itemize}

\begin{remark}
The convergence of the class-means matrix to OF/ETF happens when $d_{m} \geq K$ (or $K-1$) $\: \forall \: m \in [M]$, which often holds in practice~\cite{Krizhevsky12, He15}. Otherwise, they converge to the best rank-$R$ approximation of $\mathbf{I}_{K}$ or $\mathbf{I}_{K} - \frac{1}{K} \mathbf{1}_{K}\mathbf{1_{K}}^{\top}$, where the class-means neither have the equinorm nor the maximally pairwise separation properties. This result is consistent with the two-layer case observed in \cite{Zhou22a}.
\end{remark}
\vspace{0.5em}

\begin{remark} 
From the proofs, we can show that under the condition $d_{m} \geq K,$ $\: \forall \: m \in [M]$, the optimal value of the loss function is strictly smaller than when this condition does not hold. Our result is aligned with \cite{Zhu18}, where they empirically observe that a larger network (i.e., larger width) tends to exhibit severe $\mathcal{NC}$ and have smaller training errors.
\end{remark}
\vspace{0.5em}

\begin{remark}  
We study deep linear networks under UFM and balanced data for CE loss in Appendix \ref{sec:CEloss}. The result demonstrates $\mathcal{NC}$ properties of every global solutions, whose the matrices product $\mathbf{W}_{M} \times \mathbf{W}_{M-1} \times \ldots \times \mathbf{W}_{1}$ and $\mathbf{H}_{1}$ converge to the ETF structure when training progresses.
\end{remark}

\section{Neural Collapse in Deep Linear Networks under the UFM Setting with MSE Loss and Imbalanced Data}
\label{sec:imbalance}
The majority of theoretical results for $\mathcal{NC}$ only consider the balanced data setting, i.e., the same number of training samples for each class. This assumption plays a vital role in the existence of the well-structured ETF geometry. In this section, we instead consider the imbalanced data setting and derive the first geometry analysis under this setting for MSE loss. Furthermore, we extend our study from the plain UFM setting, which includes only one layer of weight after the unconstrained features, to the deep linear network one.

\subsection{Plain UFM Setting with No Bias}
The bias-free plain UFM  with MSE loss is given by:
\begin{align}
    \min_{\mathbf{W}, \mathbf{H}} \frac{1}{2N}
    \| \mathbf{W} \mathbf{H} - \mathbf{Y} \|_F^2 
    &+ \frac{\lambda_{W}}{2} \| \mathbf{W} \|_F^2  
    + \frac{\lambda_{H}}{2} \| \mathbf{H} \|_F^2 ,
    \label{eq:UFM_imbalance}
\end{align}
where $\mathbf{W} \in \mathbb{R}^{K \times d}$,
$\mathbf{H} \in \mathbb{R}^{d \times N}$, and $\mathbf{Y} \in \mathbb{R}^{K \times N}$ is the one-hot vectors matrix consisting $n_{k}$ one-hot vectors for each class $k$, $\forall \: k \in [K]$. We now state the $\mathcal{NC}$ properties of the global solutions of \eqref{eq:UFM_imbalance} under the imbalanced data setting when the feature dimension $d$ is at least the number of classes $K$.

\begin{theorem}
\label{thm:UFM_imbalance}
   Let $d \geq K$ and $(\mathbf{W}^{*}, \mathbf{H}^{*})$ be any global minimizer of problem \eqref{eq:UFM_imbalance}. Then, we have:
    \\
    
    $(\mathcal{NC}1) \quad
        \mathbf{H}^{*} = \overline{\mathbf{H}}^{*} \mathbf{Y} 
        \Leftrightarrow \mathbf{h}_{k,i}^{*} = \mathbf{h}_{k}^{*} \: \forall \: k \in [K], i \in [n_{k}], \nonumber$
    where $\overline{\mathbf{H}}^{*} = [\mathbf{h}_{1}^{*},\ldots,\mathbf{h}_{K}^{*} ] \in \mathbb{R}^{d \times K}$.
    \\
    
    $(\mathcal{NC}2)$ Let $a:= N^{2} \lambda_{W} \lambda_{H}$, we have:
    \begin{align}
    \begin{gathered}
             \mathbf{W}^{*} \mathbf{W}^{* \top}
             = \operatorname{diag}
             \left\{s_{k}^{2} \right\}_{k=1}^{K},
             \\
            \overline{\mathbf{H}}^{* \top}
            \overline{\mathbf{H}}^{*} 
            =
            \operatorname{diag}
            \left\{
           \frac{s_{k}^{2}}{(s_{k}^{2} + N \lambda_{H})^{2}}
            \right\}_{k=1}^{K}, \nonumber
     \end{gathered}    
    \end{align}
    \begin{align}
    \begin{gathered}
            \mathbf{W}^{*} \mathbf{H}^{*}
            = \operatorname{diag}
            \left\{
            \frac{s_{k}^{2}}{s_{k}^{2} + N \lambda_{H}}
            \right\}_{k=1}^{K}
            \mathbf{Y}
            \nonumber \\
            = \begin{bmatrix}
            \frac{s_{1}^{2}}{s_{1}^{2} + N \lambda_{H}} \mathbf{1}_{n_{1}}^{\top}  & \ldots & \mathbf{0} \\
            \vdots & \ddots & \vdots \\
            \mathbf{0}  & \ldots & \frac{s_{K}^{2}}{s_{K}^{2} + N \lambda_{H}} \mathbf{1}_{n_{K}}^{\top} \\
            \end{bmatrix}. \nonumber
    \end{gathered}    
    \end{align}
    where:
    \begin{itemize}
        \item If $\frac{a}{n_{1}} \leq \frac{a}{n_{2}} \leq \ldots \leq \frac{a}{n_{K}} \leq 1$:
        \begin{align}
        \begin{aligned}
            s_{k} = 
            \sqrt{ \sqrt{\frac{n_{k} \lambda_{H}}{\lambda_{W}} }
          - N \lambda_{H}} \quad &\forall \: k \in [K] \nonumber  
        \end{aligned}
        \end{align}
        
        \item If there exists a $j \in [K-1]$ s.t. $\frac{a}{n_{1}} \leq \frac{a}{n_{2}} \leq \ldots \leq \frac{a}{n_{j}} \leq 1 < \frac{a}{n_{j+1}} \leq \ldots \leq \frac{a}{n_{K}}$:
        \begin{align}
        \begin{aligned}
        s_{k} = \left\{\begin{matrix}
        \sqrt{ \sqrt{\frac{n_{k} \lambda_{H}}{\lambda_{W}} }
          - N \lambda_{H}} \quad &\forall \: k \leq j \\ 0 \quad &\forall \: k > j
        \end{matrix}\right. .
        \nonumber
        \end{aligned}
        \end{align}
        
        \item If $1 < \frac{a}{n_{1}} \leq \frac{a}{n_{2}} \leq \ldots \leq \frac{a}{n_{K}} $:
        \begin{align}
           (s_{1}, s_{2}, \ldots, s_{K} ) &= (0,0,\ldots,0), \nonumber
        \end{align}
        and $(\mathbf{W}^{*}, \mathbf{H}^{*}) = (\mathbf{0}, \mathbf{0})$ in this case.
    \end{itemize}
     For any $k$ such that $s_{k} = 0$, we have:
        \begin{align}
            \mathbf{w}_{k}^{*} = \mathbf{h}_{k}^{*} = \mathbf{0}. \nonumber
        \end{align}

    $(\mathcal{NC}3) \quad 
    \mathbf{w}_{k}^{*} = \sqrt{\frac{n_{k} \lambda_{H}}{\lambda_{W}}} \mathbf{h}_{k}^{*} \quad \forall \: k \in [K]. \nonumber$
    
\end{theorem}
\noindent The detailed proofs are provided in the Appendix \ref{sec:proofs_im}. We use the same approach as the proofs of Theorem \ref{thm:bias-free} to prove this result, with challenge arises in the process of lower bounding the loss function w.r.t. the singular vectors of $\mathbf{W}$. Interestingly, the left singular matrix of $\mathbf{W}^{*}$ consists multiple orthogonal blocks on its diagonal, with each block corresponds with a group of classes having the same number of training samples. This property creates the orthogonality of $(\mathcal{NC}2)$ geometries.

Theorem \ref{thm:UFM_imbalance} implies the following interesting results:

\begin{itemize}[leftmargin=1em]
\itemsep0em 
\vspace{-1em}
\item \textbf{Features collapse:} The features in the same class also converge to their class-mean, similar as balanced case.

 \item \textbf{Convergence to GOF:} When the condition $N^{2} \lambda_{W} \lambda_{H} / n_{K} < 1$ is hold, the class-means matrix and the last-layer classifiers converge to GOF (see Definition \ref{def:GOF}). This geometry includes orthogonal vectors, but their length depends on the number of training samples in the class. The above condition implies that the imbalance and the regularization level should not be too heavy to avoid trivial solutions that may harm the model performances. We will discuss more about this phenomenon in Section \ref{subsec:MC}.
    
\item \textbf{Alignment between linear classifiers and last-layer features:} The last-layer linear classifier is aligned with the class-mean of the same class, but with a different ratio across classes. These ratios are proportional to the square root of the number of training samples, and thus different compared to the balanced case where $\mathbf{W}^{*} / \| \mathbf{W}^{*} \|_F =  \overline{\mathbf{H}}^{* \top} / \| \overline{\mathbf{H}}^{* \top} \|_F$.

\end{itemize}

    

\begin{remark}
 We study the case $d < K$  in Theorem \ref{thm:im_UFM_bottleneck}. In this case, while $(\mathcal{NC}1)$ and $(\mathcal{NC}3)$ are exactly similar as the case $d \geq K$, the $(\mathcal{NC}2)$ geometries are different if $a/n_{d} < 1$ and $n_{d} = n_{d+1}$, where a square block on the diagonal is replaced by its low-rank approximation. This square block corresponds to classes with the number of training samples equal $n_{d}$. 
Also, we have $\mathbf{w}_{k}^{*} = \mathbf{h}^{*}_{k} = \mathbf{0}$ for any class $k$ with the amount of data is 
less than $n_{d}$.   
\end{remark} 


\subsection{GOF Structure with Different Imbalance Levels and Minority Collapse}
\label{subsec:MC}

Given the exact closed forms of the singular values of $\mathbf{W}^{*}$ stated in Theorem \ref{thm:UFM_imbalance}, we derive the norm ratios between the  classifiers and between features across classes as follows:

\begin{lemma}
\label{lm:ratio}
Suppose $(\mathbf{W}^{*}, \mathbf{H}^{*})$ is a global minimizer of problem \eqref{eq:UFM_imbalance} such that $d \geq K$ and  $N^{2} \lambda_{W} \lambda_{H} / n_{K} < 1$, so that all the $s_{k}$'s are positive. The following results hold:
\begin{align}
    \frac{\| \mathbf{w}_{i}^{*} \|^{2}}{\| \mathbf{w}_{j}^{*}  \|^{2}} = \frac{ \sqrt{\frac{n_{i} \lambda_{H}}{\lambda_{W}}} - N \lambda_{H} }{\sqrt{\frac{n_{j} \lambda_{H}}{\lambda_{W}}} - N \lambda_{H}},
    \frac{\| \mathbf{h}_{i}^{*}  \|^{2}}{\| \mathbf{h}_{j}^{*}  \|^{2}}
    = \frac{n_{j}}{n_{i}} 
    \frac{ \sqrt{\frac{n_{j} \lambda_{H}}{\lambda_{W}}} - N \lambda_{H} }{\sqrt{\frac{n_{i} \lambda_{H}}{\lambda_{W}}} - N \lambda_{H}}. \nonumber
\end{align}
If $n_{i} \geq n_{j}$, we have $\| \mathbf{w}_{i}^{*}  \| \geq \| \mathbf{w}_{j}^{*}  \|$ and $\| \mathbf{h}_{i}^{*}  \| \leq \| \mathbf{h}_{j}^{*}  \|$.
\end{lemma}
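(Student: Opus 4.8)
The plan is to derive Lemma~\ref{lm:ratio} as a direct corollary of the closed-form expressions already established in Theorem~\ref{thm:UFM_imbalance}. The two hypotheses $d \geq K$ and $N^{2}\lambda_{W}\lambda_{H}/n_{K} < 1$ place us exactly in the first case of that theorem: writing $a := N^{2}\lambda_{W}\lambda_{H}$, the ordering $n_{1}\geq\ldots\geq n_{K}$ gives $a/n_{1}\leq\ldots\leq a/n_{K} < 1$, so every singular value $s_{k}=\sqrt{\sqrt{n_{k}\lambda_{H}/\lambda_{W}}-N\lambda_{H}}$ is strictly positive. First I would read off the two relevant squared norms from the $(\mathcal{NC}2)$ identities: since $\mathbf{W}^{*}\mathbf{W}^{*\top}=\operatorname{diag}\{s_{k}^{2}\}$ and $\overline{\mathbf{H}}^{*\top}\overline{\mathbf{H}}^{*}=\operatorname{diag}\{s_{k}^{2}/(s_{k}^{2}+N\lambda_{H})^{2}\}$ are both diagonal, their $k$-th diagonal entries are precisely $\|\mathbf{w}_{k}^{*}\|^{2}=s_{k}^{2}$ and $\|\mathbf{h}_{k}^{*}\|^{2}=s_{k}^{2}/(s_{k}^{2}+N\lambda_{H})^{2}$.

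For the classifier ratio I would simply substitute $s_{k}^{2}=\sqrt{n_{k}\lambda_{H}/\lambda_{W}}-N\lambda_{H}$ into $\|\mathbf{w}_{i}^{*}\|^{2}/\|\mathbf{w}_{j}^{*}\|^{2}=s_{i}^{2}/s_{j}^{2}$, which reproduces the stated expression verbatim. The accompanying monotonicity is then immediate and unconditional: $s_{k}^{2}$ is a strictly increasing function of $n_{k}$, so $n_{i}\geq n_{j}$ forces $\|\mathbf{w}_{i}^{*}\|\geq\|\mathbf{w}_{j}^{*}\|$.

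For the feature ratio the key algebraic simplification is the identity $s_{k}^{2}+N\lambda_{H}=\sqrt{n_{k}\lambda_{H}/\lambda_{W}}$, whence $(s_{k}^{2}+N\lambda_{H})^{2}=n_{k}\lambda_{H}/\lambda_{W}$ and therefore $\|\mathbf{h}_{k}^{*}\|^{2}=(\lambda_{W}/(n_{k}\lambda_{H}))\,s_{k}^{2}$; forming the quotient over $k=i,j$ and cancelling then yields the claimed ratio, with the extra factor $n_{j}/n_{i}$ coming from the $1/n_{k}$ prefactor. The main obstacle—and the only step that is not pure bookkeeping—is the feature-norm comparison $\|\mathbf{h}_{i}^{*}\|\leq\|\mathbf{h}_{j}^{*}\|$ for $n_{i}\geq n_{j}$. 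Unlike the classifier case, the map $n\mapsto\|\mathbf{h}^{*}\|^{2}=\sqrt{\lambda_{W}/\lambda_{H}}\,n^{-1/2}-N\lambda_{W}\,n^{-1}$ is not globally monotone, so I would isolate it as a single-variable function $\phi(n):=\sqrt{\lambda_{W}/\lambda_{H}}\,n^{-1/2}-N\lambda_{W}\,n^{-1}$, compute $\phi'(n)$, and determine the sign on the admissible range $n_{k} > a$; this is where the interplay between the imbalance and the regularization level encoded in the standing assumption must be used carefully to conclude the inequality in the intended regime.
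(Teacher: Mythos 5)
Your route is the same one the paper takes (the paper offers no separate proof of Lemma~\ref{lm:ratio}; it is presented as an immediate corollary of Theorem~\ref{thm:UFM_imbalance}): read $\|\mathbf{w}_{k}^{*}\|^{2}=s_{k}^{2}$ and $\|\mathbf{h}_{k}^{*}\|^{2}=s_{k}^{2}/(s_{k}^{2}+N\lambda_{H})^{2}$ off the diagonal $(\mathcal{NC}2)$ identities and substitute $s_{k}^{2}=\sqrt{n_{k}\lambda_{H}/\lambda_{W}}-N\lambda_{H}$. The classifier part of your argument is complete and correct. One bookkeeping point on the feature ratio: your identity $\|\mathbf{h}_{k}^{*}\|^{2}=\frac{\lambda_{W}}{n_{k}\lambda_{H}}s_{k}^{2}$ is right, but the quotient it produces is $\frac{n_{j}}{n_{i}}\cdot\frac{s_{i}^{2}}{s_{j}^{2}}=\frac{n_{j}}{n_{i}}\cdot\frac{\sqrt{n_{i}\lambda_{H}/\lambda_{W}}-N\lambda_{H}}{\sqrt{n_{j}\lambda_{H}/\lambda_{W}}-N\lambda_{H}}$, which has $i$ and $j$ transposed relative to the second factor in the lemma's display. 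You should not assert that your computation ``yields the claimed ratio'' without flagging that discrepancy; your formula is the one that actually follows from Theorem~\ref{thm:UFM_imbalance}.

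The genuine gap is the step you defer, and carrying out your own suggestion shows it cannot be closed under the stated hypotheses. With $a:=N^{2}\lambda_{W}\lambda_{H}$, the function $\phi(n)=\sqrt{\lambda_{W}/\lambda_{H}}\,n^{-1/2}-N\lambda_{W}\,n^{-1}$ satisfies $\phi'(n)=0$ exactly at $n=4a$, is increasing on $(a,4a)$ and decreasing on $(4a,\infty)$; since the hypothesis only guarantees $n_{k}>a$, the admissible range straddles the maximum. Consequently $\|\mathbf{h}_{i}^{*}\|\leq\|\mathbf{h}_{j}^{*}\|$ fails whenever $a<n_{j}<n_{i}\leq 4a$. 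Concretely, take $K=2$, $n_{1}=3$, $n_{2}=2$, $N=5$, $\lambda_{W}=\lambda_{H}=0.2$: then $a=1$, both $s_{k}^{2}=\sqrt{n_{k}}-1$ are positive, yet $\|\mathbf{h}_{1}^{*}\|^{2}=s_{1}^{2}/3\approx 0.244>0.207\approx s_{2}^{2}/2=\|\mathbf{h}_{2}^{*}\|^{2}$ even though $n_{1}>n_{2}$. So the feature-norm monotonicity requires the stronger condition $n_{K}\geq 4N^{2}\lambda_{W}\lambda_{H}$ (or the claim must be restated); the ``careful interplay'' you appeal to in the last sentence does not exist under $N^{2}\lambda_{W}\lambda_{H}/n_{K}<1$ alone. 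Your instinct to isolate $\phi$ and differentiate was correct, but you should carry it through and record the resulting restriction rather than leaving the sign analysis open.
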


It has been empirically observed that the classifiers of the majority classes have greater norms~\cite{Kang19}. Our result is in agreement with this observation. Moreover, it has been shown that class imbalance impairs the model's accuracy on minority classes~\cite{Kang19, Cao19}. Recently, \cite{Fang21} discover the ``Minority Collapse'' phenomenon. In particular, they show that there exists a finite threshold for imbalance level beyond which all the minority classifiers collapse to a single vector, resulting in the model's poor performance on these classes. 
\emph{Theorem \ref{thm:UFM_imbalance} is not only aligned with the ``Minority Collapse'' phenomenon, but also provides the imbalance threshold for the  collapse of minority classes to vector $\mathbf{0}$, i.e., $N^{2} \lambda_{W} \lambda_{H} / n_{K} > 1$.}

\subsection{Bias-free Deep Linear Network under the UFM setting}

    We now generalize \eqref{eq:UFM_imbalance} to bias-free deep linear networks with $M \geq 2$ and arbitrary widths. We study the following optimization problem with imbalanced data:
    \begin{align}
    \begin{aligned}
        &\min _{\mathbf{W}_{M}, \mathbf{W}_{M-1},\ldots, \mathbf{W}_{1}, \mathbf{H}_{1}} 
        \frac{1}{2N} \| \mathbf{W}_{M} \mathbf{W}_{M-1} \ldots \mathbf{W}_{1}
        \mathbf{H}_{1} - \mathbf{Y} \|_F^2 
        \\
        &+ \frac{\lambda_{W_{M}}}{2} \| \mathbf{W}_{M} \|^2_F \
        + \ldots   + \frac{\lambda_{W_{1}}}{2} \| \mathbf{W}_{1} \|^2_F +
        \frac{\lambda_{H_{1}}}{2} \| \mathbf{H}_{1} \|^2_F,
        \label{eq:deep_imbalance}
    \end{aligned}
    \end{align}
where the target matrix $\mathbf{Y}$ is the one-hot vectors matrix defined in \eqref{eq:UFM_imbalance}. We now state the $\mathcal{NC}$ properties of the global solutions of \eqref{eq:deep_imbalance} when the dimensions of the hidden layers are at least the number of classes $K$.

\begin{theorem}
\label{thm:deep_imbalance}
     Let $d_{m} \geq K, \: \forall \: m \in [M]$, and $(\mathbf{W}_{M}^{*}, \mathbf{W}_{M-1}^{*},\ldots, \mathbf{W}_{1}^{*}, \mathbf{H}_{1}^{*})$ be any global minimizer of problem \eqref{eq:deep_imbalance}. We have the following results:
     \\
     
   ($\mathcal{NC}1) \quad
        \mathbf{H}_{1}^{*} = \overline{\mathbf{H}}^{*} \mathbf{Y} \nonumber
        \Leftrightarrow \mathbf{h}_{k,i}^{*} = \mathbf{h}_{k}^{*} \: \forall \: k \in [K], i \in [n_{k}], $
    where $\overline{\mathbf{H}}^{*} = [\mathbf{h}_{1}^{*},\ldots,\mathbf{h}_{K}^{*}]  \in \mathbb{R}^{d_{1} \times K}$.
    \\
    
    $(\mathcal{NC}2)$ Let $c:= \frac{\lambda_{W_{1}}^{M-1}}
    {\lambda_{W_{M}} \lambda_{W_{M-1}} \ldots \lambda_{W_{2}} }$, $a:= N \sqrt[M]{N  \lambda_{W_{M}} \lambda_{W_{M-1}} \ldots \lambda_{W_{1}} \lambda_{H_{1}}}$ and $\forall k \in [K]$,  $x^{*}_{k}$ is the largest positive solution of the equation $\frac{a}{n_{k}} - \frac{ x^{M-1}}{ (x^{M} + 1)^{2}} = 0$, we have the following:
    \begin{align}
    \begin{aligned}
             &\mathbf{W}^{*}_{M} \mathbf{W}^{* \top}_{M}
             = \frac{\lambda_{W_{1}}}{\lambda_{W_{M}}} \operatorname{diag}
             \left\{s_{k}^{2} \right\}_{k=1}^{K},
             \\
            &(\mathbf{W}_{M}^{*}  \ldots  \mathbf{W}_{1}^{*} )(\mathbf{W}_{M}^{*} \ldots  \mathbf{W}_{1}^{*} )^{\top} = 
            \operatorname{diag}
             \left\{c s_{k}^{2M} \right\}_{k=1}^{K},
             \\
           &\overline{\mathbf{H}}^{* \top}
            \overline{\mathbf{H}}^{*} 
            =
            \operatorname{diag}
            \left\{
            \frac{c s_{k}^{2M}}{(c s_{k}^{2M} + N \lambda_{H_{1}})^{2}}
            \right\}_{k=1}^{K}, \nonumber
            \\
             &\mathbf{W}_{M}^{*} \mathbf{W}_{M-1}^{*} \ldots  \mathbf{W}_{1}^{*}   \mathbf{H}_{1}^{*} 
             = 
             \left\{
            \frac{c s_{k}^{2M}}{c s_{k}^{2M} + N \lambda_{H_{1}}}
            \right\}_{k=1}^{K} \mathbf{Y},
    \end{aligned}    
    \end{align}
    
    ($\mathcal{NC}3$) We have, $\forall \: k \in [K]$:
    \begin{align}
        (\mathbf{W}_{M}^{*} \mathbf{W}_{M-1}^{*} \ldots \mathbf{W}_{1}^{*})_{k} = (c s_{k}^{2M} + N \lambda_{H_{1}}) \mathbf{h}_{k}^{*}, \nonumber
    \end{align}
    where:
    \begin{itemize}
        \item If $\frac{a}{n_{1}} \leq \frac{a}{n_{2}} \leq \ldots \leq \frac{a}{n_{K}} < \frac{(M-1)^{\frac{M-1}{M}}}{M^{2}}$, we have:
        \begin{align}
        \begin{aligned}
        s_{k} = 
        \sqrt[2M]{\frac{N \lambda_{H_{1}} x_{k}^{* M}}{c}} \quad \forall \: k \in [K]
        .
        \nonumber
        \end{aligned}
        \end{align}
     
        \item If there exists a $j \in [K-1]$ s.t. $\frac{a}{n_{1}} \leq \frac{a}{n_{2}} \leq \ldots \leq \frac{a}{n_{j}} < \frac{(M-1)^{\frac{M-1}{M}}}{M^{2}} < \frac{a}{n_{j+1}} \leq \ldots \leq \frac{a}{n_{K}}$, we have:
        \begin{align}
        \begin{aligned}
        s_{k} = \left\{\begin{matrix}
        \sqrt[2M]{\frac{N \lambda_{H_{1}} x_{k}^{* M}}{c}} \quad &\forall \: k \leq j \\ 0 \quad &\forall \: k > j
        \end{matrix}\right. .
        \nonumber
        \end{aligned}
        \end{align}
        For any $k$ such that $s_{k} = 0$, we have:
        \begin{align}
            (\mathbf{W}_{M}^{*})_{k} = \mathbf{h}_{k}^{*} = \mathbf{0}. \nonumber
        \end{align}  

        \item If $\frac{(M-1)^{\frac{M-1}{M}}}{M^{2}} < \frac{a}{n_{1}} \leq \frac{a}{n_{2}} \leq \ldots \leq \frac{a}{n_{K}}$, we have:
        \begin{align}
           (s_{1}, s_{2}, \ldots, s_{K} ) &= (0,0,\ldots,0), \nonumber
        \end{align}
        and $(\mathbf{W}_{M}^{*}, \ldots, \mathbf{W}_{1}^{*}, \mathbf{H}_{1}^{*}) = (\mathbf{0}, \ldots, \mathbf{0}, \mathbf{0})$ in this case.
    \end{itemize}
\end{theorem}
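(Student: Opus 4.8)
The plan is to combine the deep-network singular-value reduction of Theorem~\ref{thm:bias-free} with the per-class analysis of Theorem~\ref{thm:UFM_imbalance}. First I would dispose of $(\mathcal{NC}1)$: for fixed class-means the objective is a sum of independent strictly convex quadratics in the $\mathbf{h}_{k,i}$, and within a class all of them share the same minimizer (the target is the common one-hot vector), so every global minimizer satisfies $\mathbf{h}_{k,i}^{*}=\mathbf{h}_k^{*}$, i.e.\ $\mathbf{H}_1^{*}=\overline{\mathbf{H}}^{*}\mathbf{Y}$. This collapses the fitting term to $\frac{1}{2N}\sum_k n_k\|\mathbf{A}\mathbf{h}_k-\mathbf{e}_k\|^2$ with $\mathbf{A}:=\mathbf{W}_M^{*}\cdots\mathbf{W}_1^{*}$ and $\mathbf{e}_k$ the $k$-th standard basis vector.

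Next I would invoke the critical-point/balancedness analysis together with Lemma~\ref{lm:4}: writing each $\mathbf{W}_l$ in its SVD, the stationarity conditions force consecutive factors to share orthonormal bases, so the product $\mathbf{A}$ reduces to a single effective map whose mode-$k$ singular value is $\mu_k=\sqrt{c}\,s_k^{M}$, where $s_k$ is the $k$-th singular value of $\mathbf{W}_1^{*}$ and $c=\lambda_{W_1}^{M-1}/(\lambda_{W_M}\cdots\lambda_{W_2})$ follows from the geometric balancing $\lambda_{W_l}\sigma_{l,k}^2\equiv\lambda_{W_1}s_k^2$, which also gives an accumulated weight penalty $\tfrac{M}{2}\lambda_{W_1}s_k^2$ per mode. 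Eliminating $\overline{\mathbf{H}}^{*}$ from its ridge-type stationarity equation yields $\overline{\mathbf{H}}^{*}=(\mathbf{A}^{\top}\mathbf{A}+N\lambda_{H_1}\mathbf{I})^{-1}\mathbf{A}^{\top}$, and substituting back leaves the objective, as a function of $\{\mu_j\}$ and the orthogonal matrix $\mathbf{U}$ of left singular vectors of $\mathbf{A}$, in the form $\frac{\lambda_{H_1}}{2}\sum_j \frac{\beta_j}{\mu_j^2+N\lambda_{H_1}}+\frac{M\lambda_{W_1}}{2}c^{-1/M}\sum_j\mu_j^{2/M}$, where $\beta_j:=\sum_k n_k U_{kj}^2=[\mathbf{U}^{\top}\operatorname{diag}\{n_k\}\mathbf{U}]_{jj}$.

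The main obstacle is the joint optimization over $\mathbf{U}$ and the singular values, which is precisely where imbalance breaks the rotational symmetry of the balanced case. I would argue as follows: by the Schur--Horn theorem, as $\mathbf{U}$ ranges over orthogonal matrices the vector $(\beta_1,\dots,\beta_K)$ ranges over all vectors majorized by $(n_1,\dots,n_K)$; for fixed $\{\mu_j\}$ the objective is \emph{linear} in $\beta$, so its minimum over this permutohedron is attained at a vertex, forcing $\beta$ to be a permutation of $(n_k)$, and the rearrangement inequality then pairs the largest $n_k$ with the largest $\mu_k$. Hence the optimal $\mathbf{U}$ is a permutation, up to arbitrary orthogonal rotations inside any block of equal $n_k$ (this is what produces the orthogonal-block structure), so that $\mathbf{A}\mathbf{A}^{\top}=\operatorname{diag}\{\mu_k^2\}=\operatorname{diag}\{c\,s_k^{2M}\}$ is diagonal and the class-means and classifiers align to a GOF. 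With $\beta_j=n_j$ the objective decouples into the scalar problems $\min_{s_k\ge 0}\ \frac{\lambda_{H_1}n_k}{2(c s_k^{2M}+N\lambda_{H_1})}+\frac{M\lambda_{W_1}}{2}s_k^2$.

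Finally, via the substitution $x^{M}=c s_k^{2M}/(N\lambda_{H_1})$ each scalar problem becomes the minimization of $g(x)=1/(x^{M}+1)+bx$ with $b=Ma/n_k$, whose behaviour is characterized in Appendix~\ref{sec:study_g}: the minimizer is $x=0$ (forcing $s_k=0$, hence $(\mathbf{W}_M^{*})_k=\mathbf{h}_k^{*}=\mathbf{0}$) exactly when $a/n_k$ exceeds the threshold $(M-1)^{(M-1)/M}/M^{2}$, and is otherwise the largest positive root $x_k^{*}$ of $g'(x)=0$, i.e.\ of $a/n_k=x^{M-1}/(x^{M}+1)^2$, giving $s_k=\sqrt[2M]{N\lambda_{H_1}\,x_k^{*M}/c}$. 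Comparing the sorted values $a/n_1\le\cdots\le a/n_K$ against this threshold yields the three stated cases, the closed forms for $\mathbf{W}_M^{*}\mathbf{W}_M^{*\top}$, $\overline{\mathbf{H}}^{*\top}\overline{\mathbf{H}}^{*}$, and the prediction matrix follow by back-substitution, and $(\mathcal{NC}3)$ is read off directly from $(\mathbf{A}^{\top}\mathbf{A}+N\lambda_{H_1}\mathbf{I})\mathbf{h}_k^{*}=\mathbf{A}^{\top}\mathbf{e}_k$, which gives $(\mathbf{A})_k^{\top}=(c s_k^{2M}+N\lambda_{H_1})\mathbf{h}_k^{*}$.
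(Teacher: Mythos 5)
Your proposal is correct, and its skeleton matches the paper's proof: both reduce the deep product to a single effective map via the critical-point balancedness relations and shared SVD bases (Lemmas \ref{lm:2}--\ref{lm:5}), eliminate $\mathbf{H}_1$ through the ridge stationarity equation, rewrite the loss in terms of the singular values $s_k$ and the left singular vectors $\mathbf{u}_j$ of $\mathbf{W}_M$ through the weights $\beta_j=(\mathbf{u}_j\odot\mathbf{u}_j)^{\top}\mathbf{n}$, and finish with the scalar analysis of $g(x)=1/(x^{M}+1)+bx$ with $b=Ma/n_k$, which reproduces the threshold $(M-1)^{(M-1)/M}/M^{2}$ and the equation $a/n_k=x^{M-1}/(x^{M}+1)^{2}$. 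The one step where you genuinely diverge is the optimization over the orthogonal factor: the paper proves the partial-sum inequalities $\sum_{k\le j}\beta_k\le\sum_{k\le j}n_k$ by hand from orthonormality and then applies an Abel-summation lemma (Lemma~\ref{lm:weighted_sum}), followed by a case-by-case equality analysis (its Cases A--D) to extract the block structure of $\mathbf{U}_{W_M}$; you instead invoke Schur--Horn to identify the feasible set of $(\beta_j)$ with the permutohedron of $(n_k)$, note that the objective is linear in $\beta$ so its minimum sits at a vertex, and apply the rearrangement inequality. The two arguments are mathematically equivalent --- the paper's inequalities are exactly the majorization conditions, and its Lemma~\ref{lm:weighted_sum} is the standard proof that a linear functional on a majorization polytope is extremized at a permutation --- but your route imports the classical theorem rather than re-deriving it and makes the origin of the orthogonal-block degeneracy for ties among the $n_k$ (constancy of the linear functional along a face of the permutohedron) more transparent. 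Your convexity/separability argument for $(\mathcal{NC}1)$ is also a cleaner alternative to the paper's, which reads the collapse off the closed form $\mathbf{H}_1^{*}=\overline{\mathbf{H}}^{*}\mathbf{Y}$. The only thing still owed for a complete write-up is the tightness bookkeeping: verifying that every intermediate inequality is an equality at a global minimizer is what actually forces $\mathbf{U}_{W_M}$ to be block-diagonal and hence $\mathbf{W}_M^{*}\mathbf{W}_M^{*\top}$ and $\overline{\mathbf{H}}^{*\top}\overline{\mathbf{H}}^{*}$ to be diagonal, together with $(\mathbf{W}_M^{*})_k=\mathbf{h}_k^{*}=\mathbf{0}$ for collapsed classes --- but your sketch identifies the right mechanism for all of this.
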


\noindent The detailed proofs of Theorem \ref{thm:deep_imbalance} and the remaining case where there are some $\frac{a}{n_{k}}$'s equal to  $\frac{(M-1)^{\frac{M-1}{M}}}{M^{2}}$ are provided in Appendix \ref{sec:proofs_im_deep}.
\vspace{0.5em}
\begin{remark}
The equation that solves for the optimal singular value, $\frac{a}{n} - \frac{x^{M-1}}{(x^{M}+1)^{2}} = 0$, has exactly two positive solutions when $a <  (M-1)^{\frac{M-1}{M}}/M^{2}$ (see Section \ref{sec:study_g}). Solving this equation leads to cumbersome solutions of a high-degree polynomial. Even without the exact closed-form formula for the solution, the $(\mathcal{NC}2)$ geometries can still be easily computed using numerical methods.
\end{remark}
\vspace{0.5em}

\begin{remark}
  We study the case $R := \min(d_{M}, \ldots, d_{1}, K) < K$ in Theorem \ref{thm:im_deep_bottleneck}. In this case, while $(\mathcal{NC}1)$ and $(\mathcal{NC}3)$ are exactly similar as the case $R = K$ in Theorem \ref{thm:deep_imbalance}, the $(\mathcal{NC}2)$ geometries are different if $a/n_{R} \leq 1$ and $n_{R} = n_{R+1}$, where a square block on the diagonal is replaced by its low-rank approximation. This square block corresponds to classes with the number of training samples equal $n_{R}$. Also, we have $(\mathbf{W}_{M})_{k}^{*} = \mathbf{h}^{*}_{k} = \mathbf{0}$ for any class $k$ with the amount of data is less than $n_{R}$.  
\end{remark}

\begin{figure}[t!]
    \centering
    \includegraphics[width = 1\columnwidth]{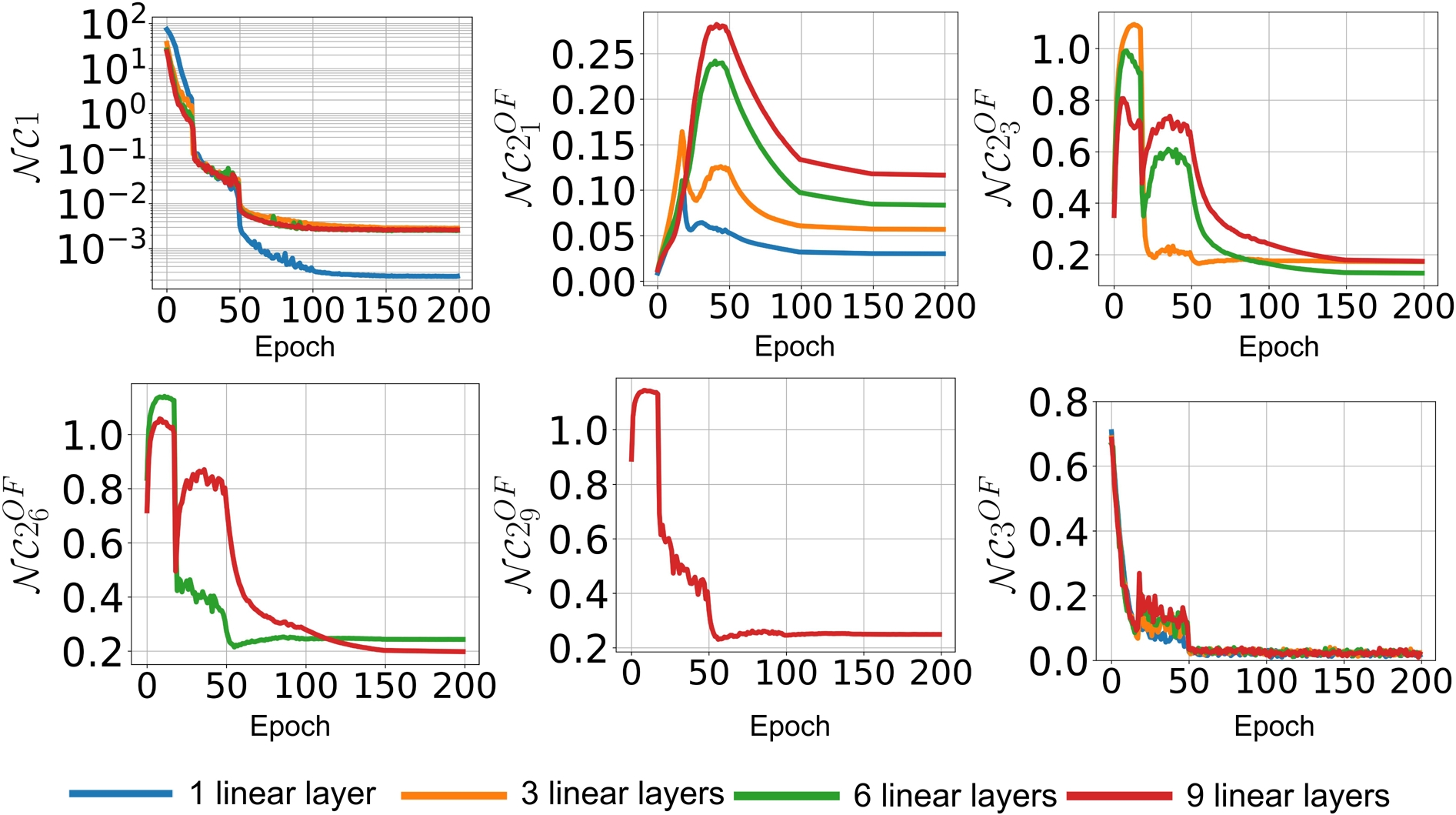}
    \vspace{-2em}
    \caption{\small \color{black}{Illustration of $\mathcal{NC}$ with 6-layer MLP backbone on CIFAR10 for MSE loss, balanced data and bias-free setting.}}
    \label{fig:MLP_nobias_balance}
\vspace{-0.1in}
\end{figure}

\begin{figure}[t!]
    \centering
    \includegraphics[width = 1\columnwidth]{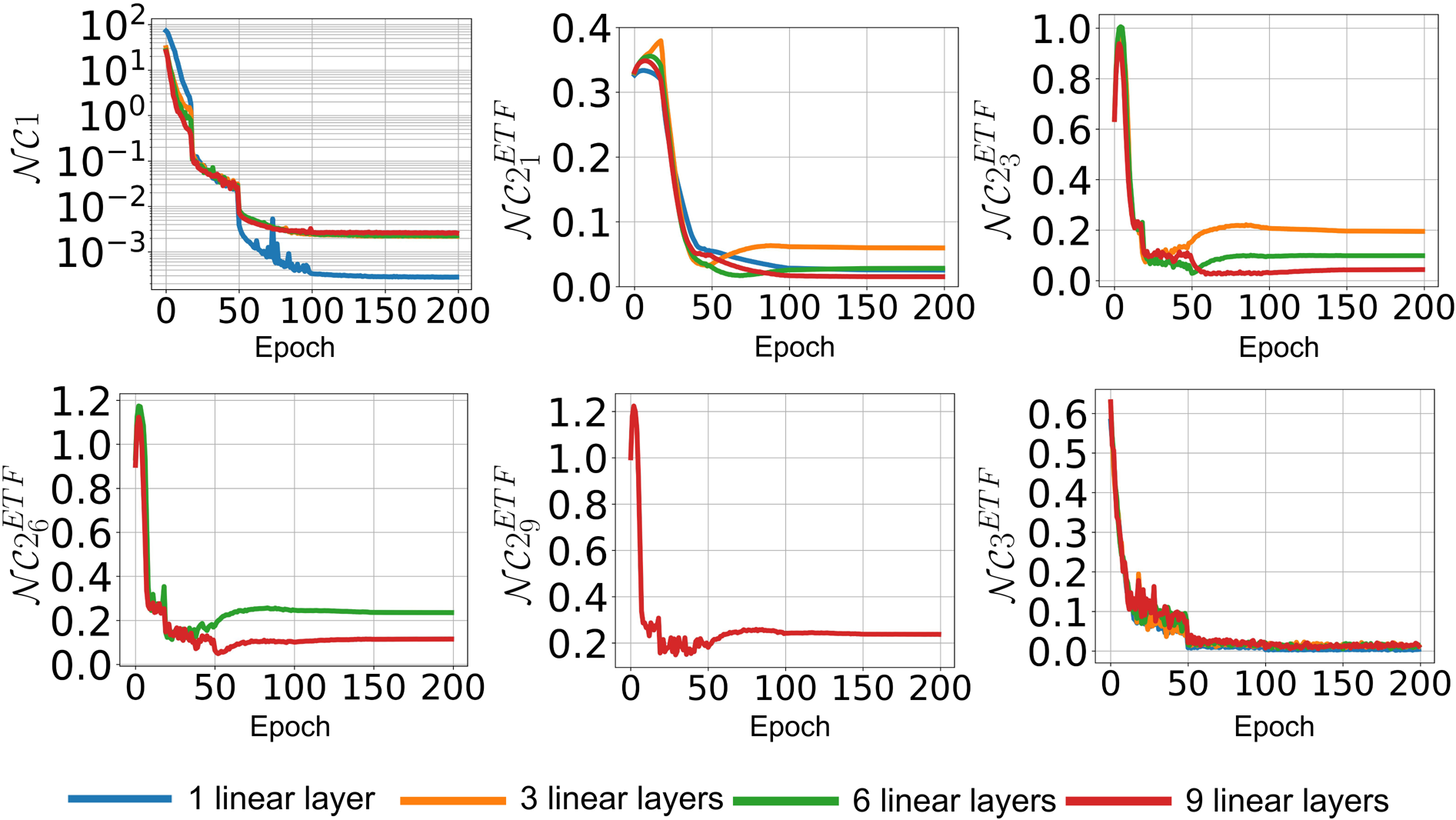}
    \vspace{-2em}
    \caption{\small \color{black}{Same setup as Fig. \ref{fig:MLP_nobias_balance} but having last-layer bias.}}
    \label{fig:MLP_bias_balance}
\vspace{-0.1in}
\end{figure}

\section{Experimental Results}
\label{sec:experimental_results} 

In this section, we empirically verify our theoretical results in multiple settings for both balanced and imbalanced data. In particular, we observe the evolution of $\mathcal{NC}$ properties in the training of deep linear networks with a prior backbone feature extractor (e.g., MLP, ResNet18) to create the ``unconstrained'' features (see Fig.~\ref{fig:DNN} for a sample visualization). The experiments are performed on CIFAR10~\citep{Krizhevsky09learningmultiple} dataset and EMNIST letter~\citep{cohen2017emnist} dataset for the image classification task. Moreover, we perform direct optimization experiments, which follows the setting in \eqref{eq:UFM_general_linear} to guarantee our theoretical analysis. To verify the results are consistent through different dataset, we also conduct experiments on text classification tasks in Appendix~\ref{subsubsec:addtional_balanced}.

The hyperparameters of the optimizers are tuned to reach the global optimizer in all experiments. The definitions of the $\mathcal{NC}$ metrics, hyperparameters details, and additional numerical results can be found in Appendix~\ref{sec:experiment_details_appendix}.



\begin{figure}[t!]
    \centering
    \includegraphics[width = 1\columnwidth]{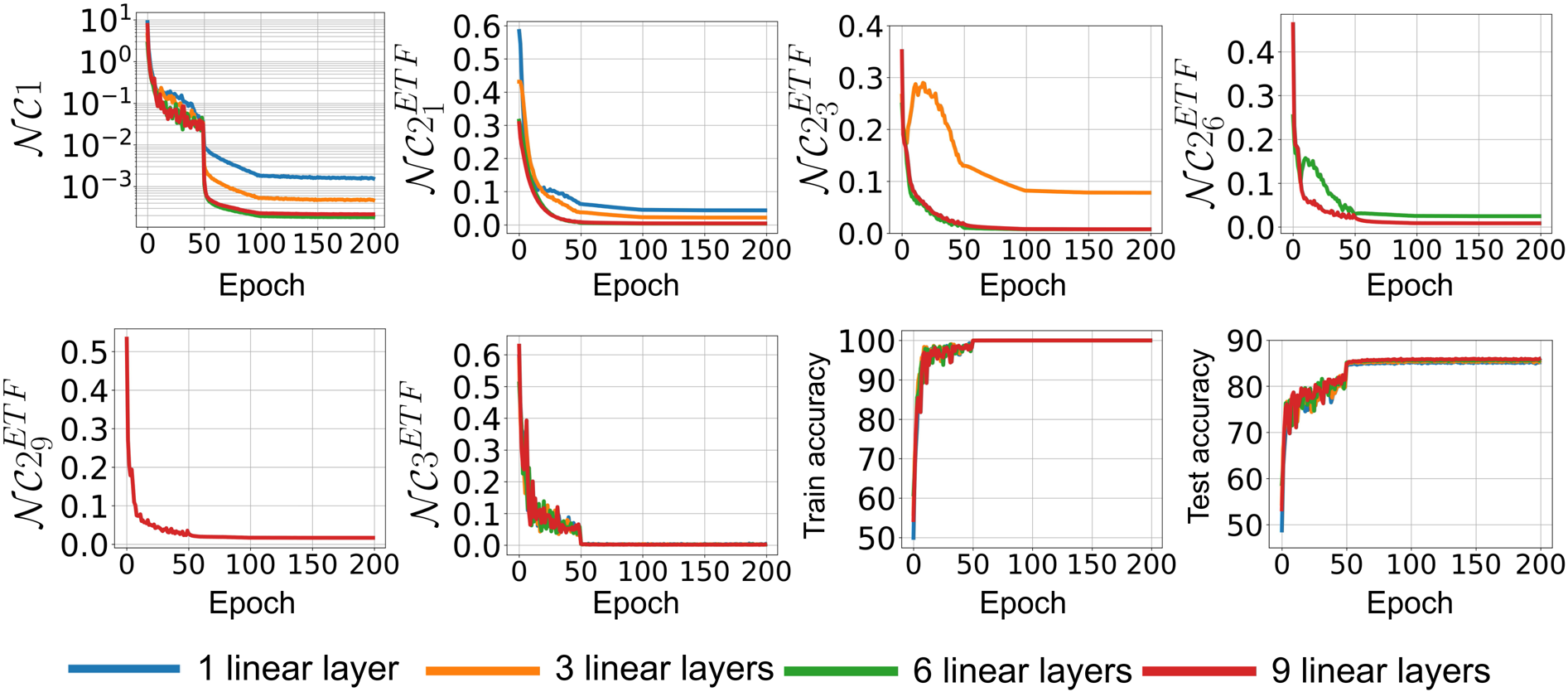}
    \vspace{-2em}
    \caption{\small \color{black}{Training results with ResNet18 backbone on CIFAR10 for MSE loss, balanced data and last-layer bias setting.}}
    \label{fig:ResNet_balance}
\vspace{-0.1in}
\end{figure}

\begin{figure}[t!]
    \centering
    \includegraphics[width = 1\columnwidth]{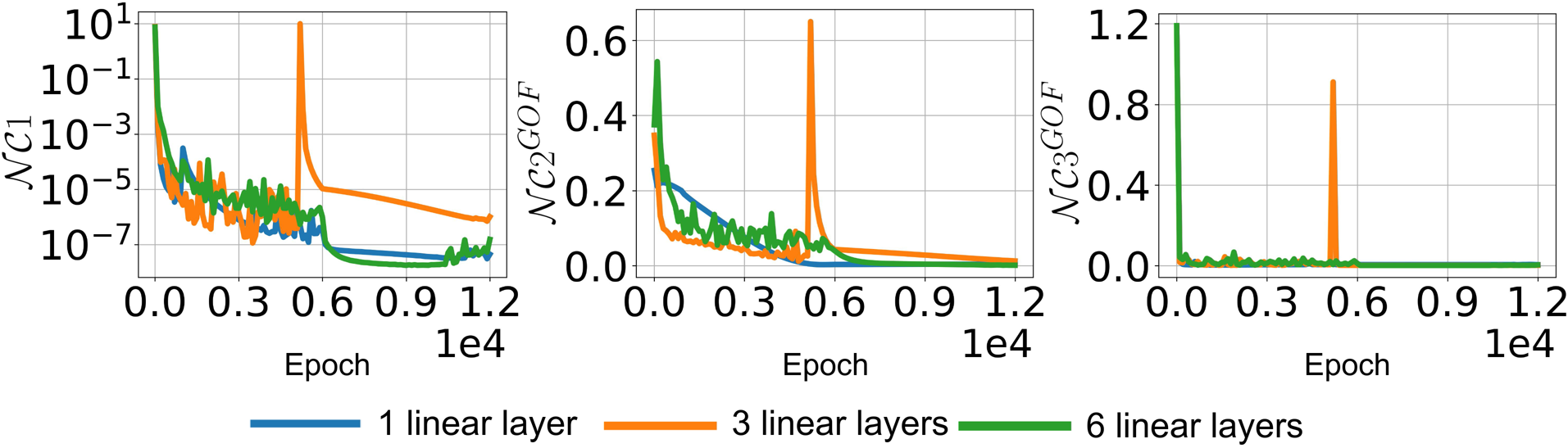}
    \vspace{-2em}
    \caption{\small \color{black}{Illustration of $\mathcal{NC}$ with 6-layer MLP backbone on an imbalanced subset of CIFAR10 for MSE loss and bias-free setting.}}
    \label{fig:mlp_imbalance}
\vspace{-0.1in}
\end{figure}

\subsection{Balanced Data}
\label{subsec:balanced_data_experiment}
Under the balanced data setting, we alternatively substitute between multilayer perceptron (MLP), ResNet18~\citep{DBLP:conf/cvpr/HeZRS16} and VGG16~\citep{Simonyan14} in place of the backbone feature extractor. For all experiments with MLP backbone model, we perform the regularization on the ``unconstrained'' features $\mathbf{H}_{1}$ and on 
subsequent weight layers to replicate the UFM setting in \eqref{eq:UFM_general_linear}. For deep learning experiments with ResNet18 and VGG16 backbone, we enforce the weight decay on all parameters of the network, which aligns to the typical training protocol.
\subsubsection{Image classification experiment on CIFAR10 dataset}
\label{subsec:balanced_data_experiment:cifar10}
{\bf Multilayer perceptron experiment:}
We use a 6-layer MLP model with ReLU activation as the backbone feature extractor in this experiment. For deep linear layers, we cover all depth-width combinations with depth $\in\{1, 3, 6, 9\}$ and width $\in\{512, 1024, 2048\}$.  We run both bias-free and last-layer bias cases to demonstrate the convergence to OF and ETF geometry, with the models trained by Adam optimizer \cite{Kingma2014} for 200 epochs. For a concrete illustration, the results of width-1024 MLP backbone and linear layers for MSE loss are shown in Fig. \ref{fig:MLP_nobias_balance} and Fig. \ref{fig:MLP_bias_balance}. 
We consistently observe the convergence of $\mathcal{NC}$ metrics to small values as training progresses for various depths of the linear networks. Additional results with MLP backbone for other widths and for CE loss can be found in Appendix~\ref{subsec:balance_case_appendix}.



{\bf Deep learning experiment:}
We use ResNet18 and VGG16 as the deep learning backbone for extracting $\mathbf{H}_{1}$ in this experiment. The depths of the deep linear network are selected from the set $\{1, 3, 6, 9\}$ and the widths are chosen to equal the last-layer dimension of the backbone model (i.e., $512$). 
The models are trained with the MSE loss without data augmentation for $200$ epochs using stochastic gradient descent (SGD). As shown in Fig.~\ref{fig:ResNet_balance} above and Fig.\ref{fig:VGG_balance} in the Appendix \ref{subsubsec:addtional_balanced}, $\mathcal{NC}$ properties are obtained for widely used architectures in deep learning contexts. Furthermore, the results empirically confirm the occurrences of $\mathcal{NC}$ across deep linear classifiers described in Theorem~\ref{thm:bias-free}.

\subsubsection{Image classification experiment on EMNIST letter dataset}
Similar to the deep learning experiment described in section \ref{subsec:balanced_data_experiment:cifar10}, we use ResNet18 and VGG16 as deep learning backbones. We consider deep linear networks with depth selected from the set $\{1, 3, 6\}$ and the width is chosen to be $512$. All models are trained with MSE loss for 200 epochs using SGD. As shown in Fig. \ref{fig:ResNet_balance_EMNIST} and Fig. \ref{fig:VGG_balance_EMNIST} in Appendix~\ref{subsec:balance_case_appendix}, the occurrences of $\mathcal{NC}$ across deep linear classifiers described in Theorem~\ref{thm:bias-free} can also be observed when training on the EMNIST letter dataset.


\subsubsection{Direct optimization experiment}
To exactly replicate the problem \eqref{eq:UFM_general_linear}, $\mathbf{W}_{M}, \ldots, \mathbf{W}_{1}$ and $\mathbf{H}_{1}$ are initialized with standard normal distribution scaled by $0.1$ and optimized with gradient descent with step-size 0.1 for MSE loss. In this experiment, we set $K=4, n=100, d_M = d_{M-1} = \ldots = d_1 = 64$ and all $\lambda$'s are set to be $\num{5e-4}$. We cover multiple depth settings with $M$ chosen from the set $\{1, 3, 6, 9\}$. Fig.~\ref{fig:synthetic_balance_nobias} and Fig. \ref{fig:synthetic_balance_bias} in Appendix \ref{subsubsec:addtional_balanced} shows the convergence to $0$ of $\mathcal{NC}$ metrics for bias-free and last-layer bias settings, respectively. The convergence errors are less than $0.001$ at the final iteration, which corroborates Theorem \ref{thm:bias-free}.


\subsection{Imbalanced Data}
\label{subsec:imbalanced_data_experiment}
For imbalanced data setting, we perform three experiments: 
CIFAR10 and EMNIST letter image classification with MLP backbone  and direct optimization with a similar setup as in Section \ref{subsec:balanced_data_experiment}.

{\bf Multilayer perceptron experiment on CIFAR10 dataset:}
In this experiment, we use a 6-layer MLP network with ReLU activation as the backbone model with removed batch normalization. We choose a random subset of CIFAR10 dataset with number of training samples of each class chosen from the list $\{500, 500, 400, 400, 300, 300, 200, 200, 100, 100\}$. The network is trained with batch gradient descent 
for $12000$ epochs. Both the feature extraction model and deep linear model share the hidden width $d=2048$. This experiment is performed with multiple linear model depths $M = 1, 3, 6$ and the results are shown in Fig. \ref{fig:mlp_imbalance}.
The converge of $\mathcal{NC}$ metrics to $0$ (errors are at most $0.05$ at the final epoch) strongly validates Theorem \ref{thm:UFM_imbalance} and \ref{thm:deep_imbalance} with the convergence to GOF structure of learned classifiers and features.

{\bf Multilayer perceptron experiment on EMNIST letter dataset:}
In this experiment, we use the same architecture as descibed in previous CIFAR10 experiment. Our training set is randomly sampled from the EMNIST letter training set. The number of training samples is as followed: 1 major class with $1500$ samples, 5 medium class with $600$ samples per class, and 20 minor classes with $50$ sample per class. We train the model with batch gradient descent 
for $12000$ epochs with the hidden width of both the feature extraction model and deep linear model is chosen to be $d=2048$. We perform the experiment with multiple linear model depths $M = \{1, 3, 6\}$. The results are shown in Fig. \ref{fig:mlp_imbalance_EMNIST} in Appendix \ref{subsubsec:addtional_imbalanced}. The convergence of $\mathcal{NC}$ metrics to  small values also validates the convergence to GOF structure as described in Theorems \ref{thm:UFM_imbalance} and \ref{thm:deep_imbalance}.

{\bf Direct optimization experiment:}
\label{subsubsec:direct_im}
In this experiment, except for the imbalanced data of $K = 4$ and $n_1 = 200, n_2 = 100, n_3=n_4=50$, the settings are identical to the direct optimization experiment in balanced case for MSE loss.
Fig. \ref{fig:synthetic_imbalance_GOF} in Appendix \ref{subsubsec:addtional_imbalanced} corroborates Theorems~\ref{thm:UFM_imbalance} and \ref{thm:deep_imbalance} for various depths $M = 1,3,6$ and $9$.

\section{Concluding Remarks}
\label{sec:conclusion}
In this work, we extend the global optimal analysis of the deep linear networks trained with the mean squared error (MSE) and cross entropy losses under the unconstrained features model. We prove that $\mathcal{NC}$ phenomenon is exhibited by the global solutions across layers. Moreover, we extend our theoretical analysis to the UFM imbalanced data settings for the MSE loss, which are much less studied in the current literature, and thoroughly analyze NC properties under this scenario. The convergence to GOF structure of the last-layer classifier and the last-layer features in a UFM with 1-layer learnable linear classifier (see Theorem \ref{thm:UFM_imbalance}) is relevant to the practical training of deep nonlinear networks.

In our work, we do not include biases in the training problem under imbalanced setting. We leave the study of the collapsed structure with the presence of biases as future work. As the next natural development of our results, characterizing $\mathcal{NC}$ for deep networks with non-linear activations under unconstrained features model is a highly interesting direction for future research. For example,  \cite{He22} recently discovers the decreasing pattern of $\mathcal{NC}1$ across layers of the model through extensive experiments on multiple architectures and datasets.


\section*{Acknowledgements}
This material is based on research sponsored by the AFOSR MURI FA9550-18-1-0502, the ONR grant N00014-20-1-2093, the MURI N00014-20-1-2787, and the NSF under Grant\# 2030859 to the Computing Research Association for the CIFellows Project (CIF2020-UCLA-38). NH
acknowledges support from the NSF IFML 2019844 and the NSF AI Institute for Foundations of Machine Learning.

\bibliography{ICML_ref}
\bibliographystyle{icml2023}

\newpage
\appendix
\onecolumn
\begin{center}
{\bf \Large Appendix for ``Neural Collapse in Deep Linear Networks: From Balanced to Imbalanced Data''}
\end{center}

Firstly, we study $\mathcal{NC}$ characteristics for cross-entropy loss function in deep linear networks in Appendix \ref{sec:CEloss}.
The delayed related works discussion are provided in Appendix \ref{sec:relatedworks}.
Next, we present additional numerical results and experiments, details of training hyperparameters and describe $\mathcal{NC}$ metrics used for experiments in Appendix \ref{sec:experiment_details_appendix}. Finally, detailed proofs for Theorems \ref{thm:bias-free}, \ref{thm:UFM_imbalance}, \ref{thm:deep_imbalance} and \ref{thm:CE} are provided in Appendix \ref{sec:proofs_balanced}, \ref{sec:proofs_im}, \ref{sec:proofs_im_deep} and \ref{sec:CE_proof}, respectively.

\DoToC

\section{Neural Collapse in Deep Linear Networks under UFM Setting for CE with Balanced Data}
\label{sec:CEloss}

In this section, we turn to cross-entropy loss and generalize $\mathcal{NC}$ for deep linear networks with last-layer bias under balanced setting, and a mild assumption that all the hidden layers dimension are at least $K-1$ is required. We consider the training problem \eqref{eq:UFM_general_linear} with CE loss as following:
\begin{align}
    &\min_{\mathbf{W}_{M},\ldots, \mathbf{W}_{1}, \mathbf{H}_{1}, \mathbf{b}} 
    \frac{1}{N} \sum_{k=1}^{K} \sum_{i=1}^{n} 
    \mathcal{L}_{CE} (\mathbf{W}_{M} \ldots \mathbf{W}_{1} \mathbf{h}_{k,i} + \mathbf{b}, \mathbf{y}_{k}) 
    +
    \frac{\lambda_{W_{M}}}{2} \| \mathbf{W}_{M} \|^2_F +  \ldots + \frac{\lambda_{H_{1}}}{2} \| \mathbf{H}_{1} \|^2_F
    + \frac{\lambda_{b}}{2} \|  \mathbf{b} \|_2^2,
    \label{eq:CE_loss}
\end{align}
where:
\begin{align}
    \mathcal{L}_{CE} (\mathbf{z}, \mathbf{y}_{k}) 
    := -\log \left(
    \frac{ e^{z_{k}} }{ \sum_{i=1}^{K} e^{z_{i}}}
    \right).
    \nonumber
\end{align}

\begin{theorem}
    \label{thm:CE}
    Assume $d_{k} \geq K - 1 \: \forall \: k \in [M]$, then any global minimizer $(\mathbf{W}_{M}^{*}, \ldots,  \mathbf{W}_{1}^{*}, \mathbf{H}_{1}^{*}, \mathbf{b}^{*})$  of problem \eqref{eq:CE_loss} satisfies:
    \begin{itemize}
        \item $(\mathcal{NC}1) + (\mathcal{NC}3)$:
        \begin{align}
            \mathbf{h}_{k,i}^{*}
            &= \frac{\lambda_{H_{1}}^{M}}{ \lambda_{W_{M}} \lambda_{W_{M-1}} \ldots \lambda_{W_{1}}}
            \frac{\sum_{k=1}^{K-1} s_{k}^{2}}{\sum_{k=1}^{K-1} s_{k}^{2M}} (\mathbf{W}_{M}^{*}  \mathbf{W}_{M-1}^{*}
           \ldots\mathbf{W}_{1}^{*})_{k} \quad \forall k \in [K], i \in [n] 
           \nonumber \\
            \Rightarrow
            \mathbf{h}_{k,i}^{*} &= \mathbf{h}_{k}^{*} \quad \forall \: i \in [n], k \in [K] ,
            \nonumber
        \end{align}
        where $\{ s_{k} \}_{k=1}^{K-1}$ are the singular values of $\mathbf{H}^{*}_{1}$.
        
        \item $(\mathcal{NC}2):$
        $\mathbf{H}_{1}^{*}$ and $\mathbf{W}_{M}^{*} \mathbf{W}_{M-1}^{*} \cdots \mathbf{W}_{1}^{*}$ will converge to a simplex ETF when training progresses:
        \begin{align}
            (\mathbf{W}_{M}^{*} \mathbf{W}_{M-1}^{*} \cdots \mathbf{W}_{1}^{*}) (\mathbf{W}_{M}^{*} \mathbf{W}_{M-1}^{*} \cdots \mathbf{W}_{1}^{*})^{\top} 
           =  
            \frac{ \lambda_{H_{1}}^{M} \sum_{k=1}^{K-1} s_{k}^{2M}}{(K-1) \lambda_{W_{M}} \lambda_{W_{M-1}} \ldots \lambda_{W_{1}}} \left( 
            \mathbf{I}_{K} - \frac{1}{K} \mathbf{1}_{K} \mathbf{1}_{K}^{\top}
            \right). \nonumber
        \end{align}
        
        \item We have $\mathbf{b}^{*} = b^{*} \mathbf{1}$ where either $b^{*} = 0$ or $\lambda_{b} = 0$.
    \end{itemize}
\end{theorem}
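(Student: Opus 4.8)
Write $\mathbf{A}:=\mathbf{W}_M^{*}\mathbf{W}_{M-1}^{*}\cdots\mathbf{W}_1^{*}\in\mathbb{R}^{K\times d_1}$ for the end-to-end linear map and $\mathbf{p}_{k,i}:=\mathrm{softmax}(\mathbf{A}\mathbf{h}_{k,i}+\mathbf{b}^{*})$ for the predicted probabilities; since the objective is coercive (every regularization weight is positive) a global minimizer exists and I may work with the stationarity conditions. I would split the argument into four coupled stages. For \emph{(1) feature collapse}, I note that for fixed weights and bias the map $\mathbf{h}\mapsto\mathcal{L}_{CE}(\mathbf{A}\mathbf{h}+\mathbf{b}^{*},\mathbf{y}_k)$ is convex (an affine image of the convex softmax cross-entropy) while $\mathbf{h}\mapsto\|\mathbf{h}\|_2^2$ is strictly convex, so by Jensen's inequality replacing every within-class feature by its class mean strictly decreases the objective unless the features already coincide. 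Hence $\mathbf{h}_{k,i}^{*}=\mathbf{h}_k^{*}$, i.e. $\mathbf{H}_1^{*}=\overline{\mathbf{H}}^{*}\otimes\mathbf{1}_n^{\top}$, which is $(\mathcal{NC}1)$.

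For \emph{(2) reduction of the depth}, as in the proof of Theorem~\ref{thm:bias-free} I first record the per-layer stationarity condition $\lambda_{W_l}\mathbf{W}_l^{*}=-(\mathbf{W}_M^{*}\cdots\mathbf{W}_{l+1}^{*})^{\top}\mathbf{G}\,(\mathbf{W}_{l-1}^{*}\cdots\mathbf{W}_1^{*}\mathbf{H}_1^{*})^{\top}$, where $\mathbf{G}$ collects the residuals $\tfrac1N(\mathbf{p}_{k,i}^{*}-\mathbf{y}_k)$. Left/right multiplying consecutive conditions by the appropriate factors yields the balancedness identities $\lambda_{W_{l+1}}\mathbf{W}_{l+1}^{*\top}\mathbf{W}_{l+1}^{*}=\lambda_{W_l}\mathbf{W}_l^{*}\mathbf{W}_l^{*\top}$, so by Lemma~\ref{lm:4} consecutive factors share orthonormal singular bases and $\mathbf{A}$ has singular values equal to the scaled products of the common ones. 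This eliminates the intermediate factors and expresses $\sum_l\tfrac{\lambda_{W_l}}2\|\mathbf{W}_l^{*}\|_F^2$ and the coupling constants through the singular values of $\mathbf{A}$ (equivalently the $\{s_k\}_{k=1}^{K-1}$ of $\mathbf{H}_1^{*}$). For \emph{(3) self-duality}, stationarity in $\mathbf{h}_k^{*}$ reads $\tfrac1N\mathbf{A}^{\top}(\mathbf{p}_k^{*}-\mathbf{y}_k)+\lambda_{H_1}\mathbf{h}_k^{*}=\mathbf{0}$; once the optimum is shown class-symmetric, $\mathbf{p}_k^{*}-\mathbf{y}_k=(p-1-q)\mathbf{e}_k+q\mathbf{1}_K$ for scalars $p>q$, and using $\mathbf{A}^{\top}\mathbf{1}_K=\mathbf{0}$ (which follows from the ETF Gram established in stage~(4)) the constant term drops, giving $\mathbf{h}_k^{*}\propto(\mathbf{A})_k^{\top}$. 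Matching the proportionality constant against the balancedness relations of stage~(2) produces exactly the stated factor $\tfrac{\lambda_{H_1}^{M}}{\lambda_{W_M}\cdots\lambda_{W_1}}\tfrac{\sum_k s_k^2}{\sum_k s_k^{2M}}$, which is $(\mathcal{NC}3)$.

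For \emph{(4) ETF geometry and the bias}, I use the balanced permutation symmetry of the data term together with the fact that the regularizer depends on $\mathbf{A}$ only through its singular values (a unitarily invariant penalty). This reduces the choice of the left singular vectors of $\mathbf{A}$ to minimizing the symmetric cross-entropy over configurations of fixed singular-value budget; the minimizer makes the $K\times K$ matrix of class-mean logits of the form $\alpha\mathbf{I}_K+\gamma\mathbf{1}_K\mathbf{1}_K^{\top}$, and after absorbing the $\mathbf{1}_K\mathbf{1}_K^{\top}$ part (ignored by softmax, carried by the bias) this forces $\mathbf{A}\mathbf{A}^{\top}\propto\mathbf{I}_K-\tfrac1K\mathbf{1}_K\mathbf{1}_K^{\top}$, i.e. the simplex ETF of Definition~\ref{def:ETF}; the magnitude is pinned by the same scalar program $g(x)=1/(x^M+1)+bx$ as in Theorem~\ref{thm:bias-free}, yielding the constant $\tfrac{\lambda_{H_1}^M\sum_k s_k^{2M}}{(K-1)\lambda_{W_M}\cdots\lambda_{W_1}}$ in $(\mathcal{NC}2)$. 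For the bias, stationarity $\tfrac1N\sum_{k,i}(\mathbf{p}_{k,i}^{*}-\mathbf{y}_k)+\lambda_b\mathbf{b}^{*}=\mathbf{0}$ has a residual orthogonal to $\mathbf{1}_K$ (softmax outputs and one-hot labels both sum to one), while shift-invariance of softmax makes the data term independent of the $\mathbf{1}_K$-component of $\mathbf{b}$; the class symmetry forces the orthogonal part to vanish, so $\mathbf{b}^{*}=b^{*}\mathbf{1}_K$, and when $\lambda_b>0$ the regularizer further forces $b^{*}=0$.

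\textbf{Main obstacle.} The hard part is stage~(4): unlike the MSE case the cross-entropy data term is not quadratic, so showing that the \emph{symmetric} simplex ETF (rather than some skewed orthogonal frame) uniquely minimizes the reduced objective demands a genuine convexity-plus-symmetry argument on the singular vectors. Moreover, for depth $M\geq 3$ the induced Schatten-$2/M$ penalty on $\mathbf{A}$ is only a quasi-norm, so convexity of the reduced problem is unavailable and one must instead optimize the singular values and the singular vectors separately, carefully tracking how the depth enters through the scalar program $g$ and through the balancedness constants relating the $s_k$ to the singular values of $\mathbf{A}$.
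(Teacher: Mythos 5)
Your stages (1) and (2) are sound: the Jensen/strict-convexity argument for within-class collapse is a valid (and cleaner) alternative to how the paper obtains $\mathcal{NC}1$, and the balancedness identities plus the shared-singular-basis reduction (Lemmas~\ref{lm:2}--\ref{lm:4}) match the paper exactly. The genuine gap is stage (4), and you have in effect diagnosed it yourself in your ``main obstacle'' paragraph without closing it. Permutation symmetry of the objective plus unitary invariance of the penalty does \emph{not} imply that the minimizer is the symmetric configuration $\alpha\mathbf{I}_K+\gamma\mathbf{1}_K\mathbf{1}_K^{\top}$: symmetric non-convex problems routinely break symmetry at their global minima, the feasible set of end-to-end maps with a prescribed singular-value profile is not convex, and for $M\geq 3$ the induced Schatten-type penalty is a quasi-norm, so no averaging or Jensen argument over the permutation group is available. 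Since stage (3) borrows both ``class symmetry'' and $\mathbf{A}^{\top}\mathbf{1}_K=\mathbf{0}$ from stage (4), the self-duality claim and the exact proportionality constant in $(\mathcal{NC}3)$ are also left hanging, as is the derivation of $\mathbf{b}^{*}=b^{*}\mathbf{1}$ (which you again attribute to ``class symmetry'').

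The paper closes this gap by an entirely different and more elementary device, extended from \cite{Zhu21}: it lower-bounds each cross-entropy term by an affine function of the logits (Lemma~\ref{lm:7}), turning the data term into $-\frac{1}{n}\sum_{i,k}(\mathbf{W}_M^{*}\cdots\mathbf{W}_1^{*})_k(\mathbf{h}_{k,i}-\overline{\mathbf{h}}_i)$ up to constants, and then applies AM--GM to this bilinear form. The \emph{equality conditions} of these two scalar inequalities (Lemma~\ref{lm:8}) directly force equal row norms of the product, equal off-diagonal logits, $\overline{\mathbf{h}}_i=\mathbf{0}$, $c_3(\mathbf{W}_M^{*}\cdots\mathbf{W}_1^{*})_k=\mathbf{h}_{k,i}$, $\mathbf{b}^{*}=b^{*}\mathbf{1}$, and the rank-$(K-1)$ simplex-ETF Gram with the stated constant --- no symmetry or convexity argument on the singular vectors is needed. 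A complete proof along your lines would also have to supply the final coercivity step (the reduced function $\xi$ of the singular values attains its minimum at finite $s_k$), which your proposal does not address. If you want to salvage your outline, replace stage (4) by the tangent-line-plus-AM--GM bound and derive stages (3) and the bias claim from its equality conditions rather than from symmetry.
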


\noindent The proof is delayed until Section \ref{sec:CE_proof} and some of the key techniques are extended from the proof for the plain UFM in \cite{Zhu21}. Comparing with the plain UFM with one layer of weight only, we have for deep linear case similar results as the plain UFM case, with the $(\mathcal{NC}2)$ and $(\mathcal{NC}3)$ property now hold for the product $\mathbf{W}_{M} \mathbf{W}_{M-1} \ldots \mathbf{W}_{1}$ instead of $\mathbf{W}$.

\section{Related Works}
\label{sec:relatedworks}

    \textbf{Neural Collapse for balanced data:} In recent years, there has been a rapid increase in interest in $\mathcal{NC}$, resulting in a decent amount of works in a short period of time. Under UFM, these works studied different training problems and proving ETF and $\mathcal{NC}$ properties for the last-layer classifier and last-layer features by treating the last-layer features as unconstrained variables. In particular, a line of works use UFM with CE training to analyze theoretical abstractions of $\mathcal{NC}$ \cite{Zhu21, Fang21, Lu20, yaras23}. Other works study UFM with MSE loss \cite{Tirer22, Zhou22a, Ergen20, Rangamani22}. $\mathcal{NC}$ phenomenon has also been observed and analyzed for supervised contrastive loss \cite{graf23}. For MSE loss, recent extensions to account for additional layers with non-linearity are studied in \cite{Tirer22, Rangamani22}, or with batch normalization \cite{Ergen20}. \cite{Tirer22} extends UFM to account for one additional layer, from one-layer linear classifier to two-layer linear classifier after the "unconstrained" features. \cite{Tirer22} also extends UFM to two-layer case with ReLU activation but requires a strong assumption about nuclear norm equality (see Table \ref{table:1}). The work in \cite{Rangamani22} studies deep homogeneous networks with MSE loss and trained with stochastic gradient descent. Specifically, the critical points of gradient flow satisfying the so-called symmetric quasi-interpolation assumption are proved to exhibit $\mathcal{NC}$ properties, but the other solutions are not investigated. \cite{Ergen20} derives $\mathcal{NC}$ for networks with parallel architectures without requiring UFM. However, their results require a large number of parallel branches in the architecture and require the
    number of nodes in the second-to-last layer in each branch to be at least the total number of training samples in the dataset. On the other hand, \cite{Zhu21, Zhou22a, Zhou22b} show the benign optimization landscape for several loss functions under the plain UFM setting, demonstrating that critical points can only be global minima or strict saddle points. Another line of work exploits the ETF structure to improve the network design by initially fixing the last-layer linear classifier as a simplex ETF and not performing any subsequent learning \cite{Zhu21, Yang22}.

    \textbf{Neural Collapse for imbalanced data:} Most recent papers study $\mathcal{NC}$ under a balanced setting, i.e., the number of training samples in every class is identical. This setting is vital for the existence of the simplex ETF structure. To the best of our knowledge, $\mathcal{NC}$ with imbalanced data is studied in \cite{Fang21, Christos22, Yang22, Xie22}. In particular, \cite{Fang21} is the first to observe that for imbalanced setting, the collapse of features within the same class $\mathcal{NC}1$ is preserved, but the geometry skew away from ETF. They also present a phenomenon called "Minority Collapse": for large levels of imbalance, the minorities' classifiers collapse to the same vector. \cite{Christos22} theoretically studies the SVM problem, whose global minima follows a more general geometry than the ETF, called "SELI". However, this work also makes clear that the unregularized version of CE loss only converges to KKT points of the SVM problem, which are not necessarily global minima. \cite{Yang22} studies the imbalanced setting but with fixed last-layer linear classifiers initialized as a simplex ETF right at the beginning and proves that the optimal features will also converge to ETF structure in this setting. \cite{Xie22} proposed a novel loss function for balancing different components of the gradients for imbalanced learning. A comparison of our results with some existing works regarding the study of  global optimality conditions is shown in Table \ref{table:1}.

    \textbf{Deep linear networks:}
    Analyzing a deep linear network is an important step in studying deep nonlinear networks. The theoretical analysis of deep nonlinear networks is very challenging and, in fact, there has been no rigorous theory for deep nonlinear networks yet to the best of our knowledge. Thus, deep linear networks have been studied to provide insights into the behavior of deep nonlinear networks. For example, using only linear regression, \cite{Hastie20} can recover several phenomena observed in large-scale deep nonlinear networks, including the double descent phenomenon \cite{Nakkiran19}. \cite{Saxe14, Kawaguchi16, Laurent17, Hardt18} empirically show that the optimization of deep linear models exhibits similar properties to those of the optimization of deep nonlinear models. As pointed out in \cite{Saxe14}, despite the linearity of their input-output map, deep linear networks have nonlinear gradient descent dynamics on weights that change with the addition of each new hidden layer. This nonlinear learning phenomenon is proven to be similar to those seen in deep nonlinear networks.
    
    In practice, deep linear networks can help improve the training and performance of deep nonlinear networks \cite{huh23, guo21, arora18}. Specifically, \cite{huh23} empirically proves that linear overparameterization in nonlinear networks improves generalization on classification tasks (see Section 4 in \cite{huh23}). In particular, \cite{huh23} expands each linear layer into a succession of multiple linear layers and does not include any non-linearities in between, which results in a considerable increase in performance. \cite{guo21} applies a similar strategy for compact networks, and their experiments show that training such expanded networks yields better results than training the original compact networks. \cite{arora18} shows that linear overparameterization, i.e., the use of a deep linear network in place of a classic linear model, induces on gradient descent a particular preconditioning scheme that can accelerate optimization. The preconditioning scheme that deep linear layers introduce can be interpreted as using momentum and adaptive learning rate.
    
    \textbf{Relation with previous works on neural networks optimization landscape: } This work also relates to recent advances in studying the optimization landscape in deep neural network training. As pointed out in \cite{Zhu21}, the UFM takes a top-down approach to the analysis of deep neural networks, where last-layer features are treated as free optimization variables, in contrast to the conventional bottom-up
    approach that studies the problem starting from the input \cite{Baldi89, Zhu18, Kawaguchi16, Yun17, Laurent17, Safran17, Yun18}. These works studies the optimization landscape of two-layer linear network \cite{Baldi89, Zhu18}, deep linear network \cite{Kawaguchi16, Yun17, Laurent17} and non-linear network \cite{Safran17, Yun18}. \cite{Zhu21} provides an interesting perspective about the differences between this top-down and bottom-up approach, with how results stemmed from UFM can provide more insights to the network design and the generalization of deep learning.

\begin{table}[t]
\resizebox{\textwidth}{!}{
\begin{tabular}{c|l|l|l|l|l|l}
\multicolumn{1}{l|}{} &
  \textbf{Loss} &
  \textbf{Train model} &
  \textbf{Setting} &
  \textbf{\begin{tabular}[c]{@{}l@{}}Consider\\  $d < K - 1$?\end{tabular}} &
  \textbf{\begin{tabular}[c]{@{}l@{}}Extra \\ assumption\end{tabular}} &
  \textbf{\begin{tabular}[c]{@{}l@{}}$\mathcal{NC}2$ \\ geometry\end{tabular}} \\ \hline
\textbf{\cite{Zhu21}} &
  CE &
  Plain UFM &
  Balanced &
  No &
  N/a &
  Simplex ETF \\ \hline
\textbf{\cite{Fang21}} &
  CE &
  Layer-peeled &
  Balanced &
  No &
  N/a &
  Simplex ETF \\ \hline
\textbf{\cite{Zhou22a}} &
  MSE &
  Plain UFM &
  Balanced &
  Yes &
  N/a &
  Simplex ETF \\ \hline
\multirow{4}{*}{\textbf{\begin{tabular}[c]{@{}c@{}} \cite{Tirer22}\end{tabular}}} &
  MSE &
  Plain UFM, no bias &
  Balanced &
  No &
  N/a &
  OF \\ \cline{2-7} 
 &
  MSE &
  Plain UFM, un-reg. bias &
  Balanced &
  No &
  N/a &
  Simplex ETF \\ \cline{2-7} 
 &
  MSE &
  Extended UFM 2 linear layers, no bias &
  Balanced &
  No &
  N/a &
  OF \\ \cline{2-7} 
 &
  MSE &
  Extended UFM 2 layers with ReLU, no bias &
  Balanced &
  No &
  \begin{tabular}[c]{@{}l@{}}Nuclear norm \\ equality 
  \tablefootnote{\cite{Tirer22} assumes the nuclear norm of $\mathbf{W}^{*}_{1} \mathbf{H}^{*}_{1}$ and $\text{ReLU}(\mathbf{W}^{*}_{1} \mathbf{H}^{*}_{1})$ are equal for any global solution $(\mathbf{W}^{*}_{2}, \mathbf{W}^{*}_{1}, \mathbf{H}^{*}_{1})$.}
  \end{tabular} &
  OF \\ \hline
\textbf{\begin{tabular}[c]{@{}c@{}}  \cite{Rangamani22} \end{tabular}} &
  MSE &
  Deep ReLU network, no bias &
  Balanced &
  No &
  \begin{tabular}[c]{@{}l@{}}Symmetric Quasi-\\ interpolation \tablefootnote{\cite{Rangamani22} assumes having a classifer $f: \mathbb{R}^{D} \to \mathbb{R}^{K}$ where $[f(\mathbf{x}_{k,i})]_{k} = 1 - \epsilon$ and $[f(\mathbf{x}_{k,i})]_{k^{\prime}} = \epsilon / (K-1) \: \forall \: k^{\prime} \neq k$ for all training samples} \end{tabular} &
  Simplex ETF \\ \hline
\textbf{ \cite{Christos22}} &
  CE &
  UFM Support Vector Machine &
  Imbalanced &
  No &
  N/a &
  SELI \\ \hline
\multirow{5}{*}{\textbf{This work}} &
  MSE &
  Extended UFM M linear layers, no bias (Theorem \ref{thm:bias-free}) &
  Balanced &
  Yes &
  N/a &
  OF \\ \cline{2-7} 
 &
  MSE &
  Extended UFM M linear layers, un-reg. last bias (Theorem \ref{thm:bias-free}) &
  Balanced &
  Yes &
  N/a &
  Simplex ETF \\ \cline{2-7} 
 &
  MSE &
  Plain UFM, no bias (Theorem \ref{thm:UFM_imbalance}) &
  Imbalanced &
  Yes &
  N/a &
  GOF \\ \cline{2-7} 
 &
  MSE &
  Extended UFM M linear layers, no bias (Theorem \ref{thm:deep_imbalance}) &
  Imbalanced  &
  Yes &
  N/a &
  GOF \\ \cline{2-7} 
 &
  CE &
  Extended UFM M linear layers (Theorem \ref{thm:CE}) &
  Balanced &
  No &
  N/a &
  Simplex ETF
\end{tabular}}
\caption{Selected comparision of theoretical results on global optimality conditions with $\mathcal{NC}$ occurrence.}
\label{table:1}
\end{table}

\section{Additional Experiments, Network Training and Metrics}
\label{sec:experiment_details_appendix}
\begin{figure}[t!]
	\centering
	\includegraphics[width = 0.7\columnwidth]{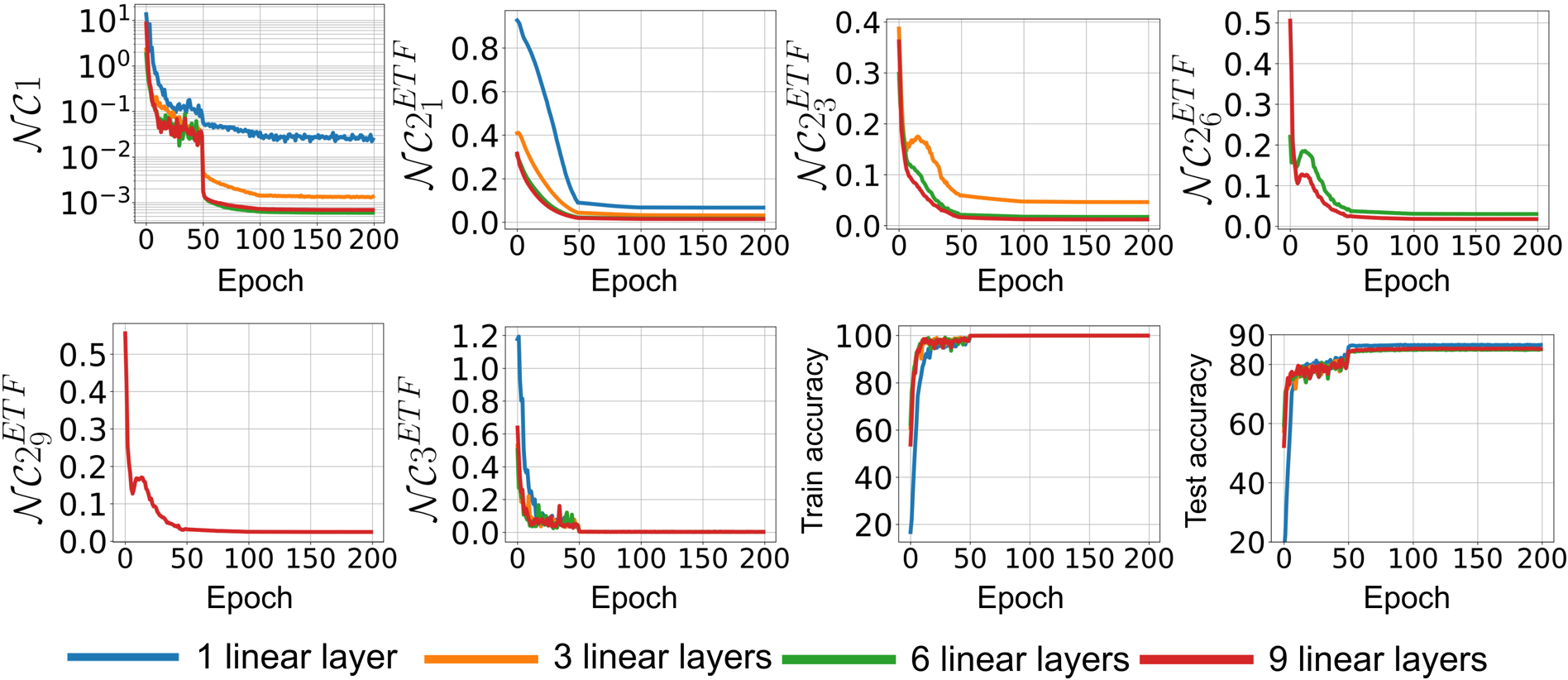}
    \caption{\small \color{black}{Training results with VGG16 backbone on CIFAR10 with MSE loss, balanced data and last-layer bias setting.}}
	\label{fig:VGG_balance}
\end{figure}

\begin{figure}[t]
    \centering
    \includegraphics[width = 0.7\columnwidth]{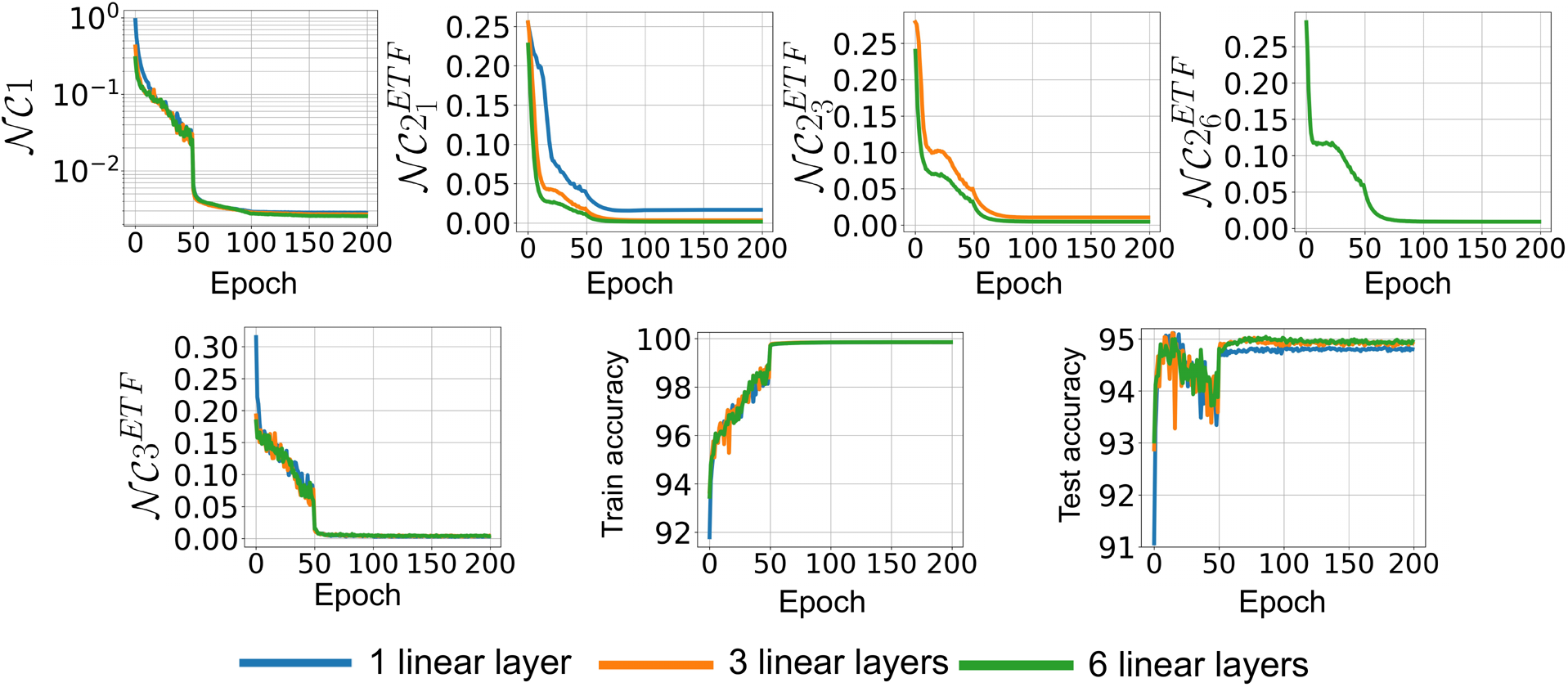}
    \caption{\small \color{black}{Training results with ResNet18 backbone on EMNIST letter dataset with MSE loss, balanced data, and last-layer bias setting.}}
    \label{fig:ResNet_balance_EMNIST}
    \centering
    \includegraphics[width = 
    0.7\columnwidth]{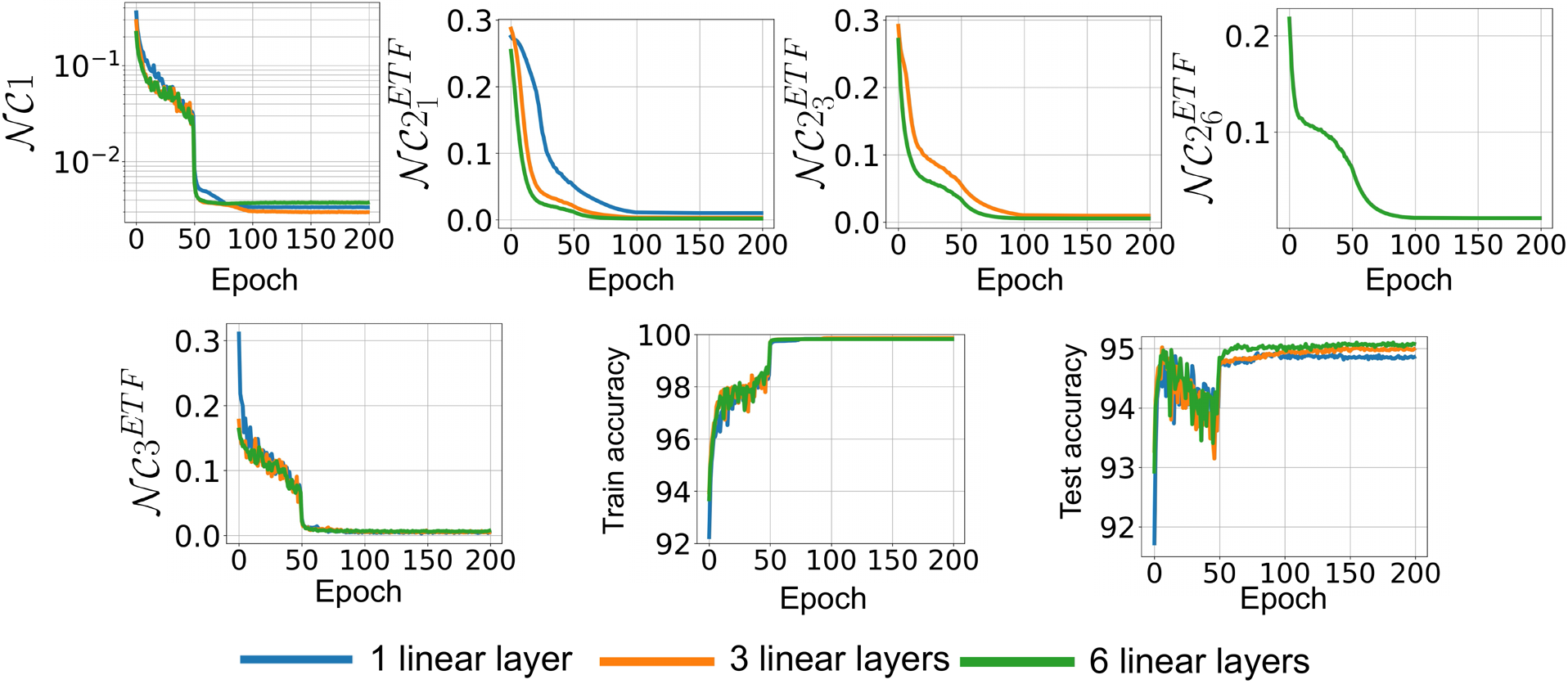}
    \caption{\small \color{black}{Training results with VGG16 backbone on EMNIST letter dataset with MSE loss, balanced data, and last-layer bias setting.}}
	\label{fig:VGG_balance_EMNIST}
\end{figure}
\subsection{Balanced Data}
\label{subsec:balance_case_appendix}
\subsubsection{Metric for measuring $\mathcal{NC}$ in balanced settings}
\label{subsubsec:metric_balance_appendix}
For balanced data, we use similar metrics to those presented in \citep{Zhu21} and \citep{Tirer22}, but also extend them to the multilayer network setting:
\begin{itemize}
    \item \textbf{Features collapse.} 
    Since the collapse of the features of the backbone extractors implies the collapse of the features in subsequent linear layers, we only consider $\mathcal{NC}1$ metric for the output features of the backbone model.
    We recall the definition of the class-means and global-mean of the features $\{ \mathbf{h}_{k, i} \}$ as:\\
    \[ 
    \mathbf{h}_{k} := \frac{1}{n} \sum_{i=1}^n \mathbf{h}_{k,i}, \quad
    \mathbf{h}_{G} := \frac{1}{K n} \sum_{k=1}^K \sum_{i=1}^n \mathbf{h}_{k,i}. \quad 
    \]
    We also define the within-class, between-class covariance matrices, and $\mathcal{NC}1$ metric as following:
    \[
    \mathbf{\Sigma}_W := \frac{1}{N} \sum_{k=1}^{K} \sum_{i=1}^{n} (\mathbf{h}_{k,i} - \mathbf{h}_{k}) (\mathbf{h}_{k,i} - \mathbf{h}_{k})^{\top}, 
    \quad
    \mathbf{\Sigma}_B := \frac{1}{K} \sum_{k=1}^K (\mathbf{h}_{k} - \mathbf{h}_{G}) (\mathbf{h}_{k} - \mathbf{h}_{G})^{\top},
    \]
    \[
    \mathcal{NC}1 := \frac{1}{K} \text{trace}(\mathbf{\Sigma}_W \mathbf{\Sigma}_B^\dagger).
    \]
    where $\mathbf{\Sigma}_B^\dagger$ denotes the pseudo inverse of $\mathbf{\Sigma}_B$.

    \item \textbf{Convergence to OF/Simplex ETF.} To capture the $\mathcal{NC}$ behaviors across layers, we denote $\mathbf{W}^{m} := \mathbf{W}_{M} \mathbf{W}_{M-1} \ldots \mathbf{W}_{M-m+1}$ as the product of last $m$  weight matrices of the deep linear network. We define $\mathcal{NC}2^{OF}_{m}$ and $\mathcal{NC}2^{ETF}_{m}$ to measure the similarity of the learned classifiers  $\mathbf{W}^{m}$ to OF (bias-free case) and  ETF (last-layer bias case) as:
    \begin{align*}
     &\mathcal{NC}2^{OF}_m := \left\Vert \frac{\mathbf{W}^{m} \mathbf{W}^{m \top}}{\left\Vert \mathbf{W}^{m} \mathbf{W}^{m \top} \right\Vert_F} - \frac{1}{\sqrt{K}} \mathbf{I}_K \right\Vert_F,\\
    &\mathcal{NC}2^{ETF}_m := \left\Vert \frac{\mathbf{W}^{m} \mathbf{W}^{m \top}}{\left\Vert \mathbf{W}^{m} \mathbf{W}^{m \top} \right\Vert_F} - \frac{1}{\sqrt{K-1}} \left(\mathbf{I}_K - \frac{1}{K}\mathbf{1}_K \mathbf{1}_K^{\top} \right) \right\Vert_F. \nonumber
    \end{align*}

    \item \textbf{Convergence to self-duality.} 
    We measure the alignment between the learned classifier $\mathbf{W}_M \mathbf{W}_{M-1} \ldots \mathbf{W}_{1}$ and the learned class-means $\overline{\mathbf{H}}$ via:
    \begin{align*}
    &\mathcal{NC}3^{OF} := \left\Vert \frac{\mathbf{W}_{M}\mathbf{W}_{M-1} \ldots \mathbf{W}_{1} \overline{\mathbf{H}}}{\left\Vert \mathbf{W}_{M} \mathbf{W}_{M-1} \ldots \mathbf{W}_{1} \overline{\mathbf{H}} \right\Vert_F} - \frac{1}{\sqrt{K}} \mathbf{I}_K \right\Vert_F, \nonumber \\
    &\mathcal{NC}3^{ETF} := \left\Vert \frac{\mathbf{W}_{M} \mathbf{W}_{M-1} \ldots \mathbf{W}_{1} \overline{\mathbf{H}}}{\left\Vert \mathbf{W}_{M} \mathbf{W}_{M-1}\ldots \mathbf{W}_{1} \overline{\mathbf{H}} \right\Vert_F} -\frac{1}{\sqrt{K-1}}\left( \mathbf{I}_K - \frac{1}{K} \mathbf{1}_K \mathbf{1}_K^{\top} \right) \right\Vert_F,
    \end{align*}
    where $\overline{\mathbf{H}} = [\mathbf{h}_{1}, \ldots, \mathbf{h}_{K}]$ is the class-means matrix.
\end{itemize}
\subsubsection{Additional numerical results for balanced data}
\label{subsubsec:addtional_balanced}

\begin{figure}[t!]
	\centering
	\includegraphics[width = 0.7\columnwidth]{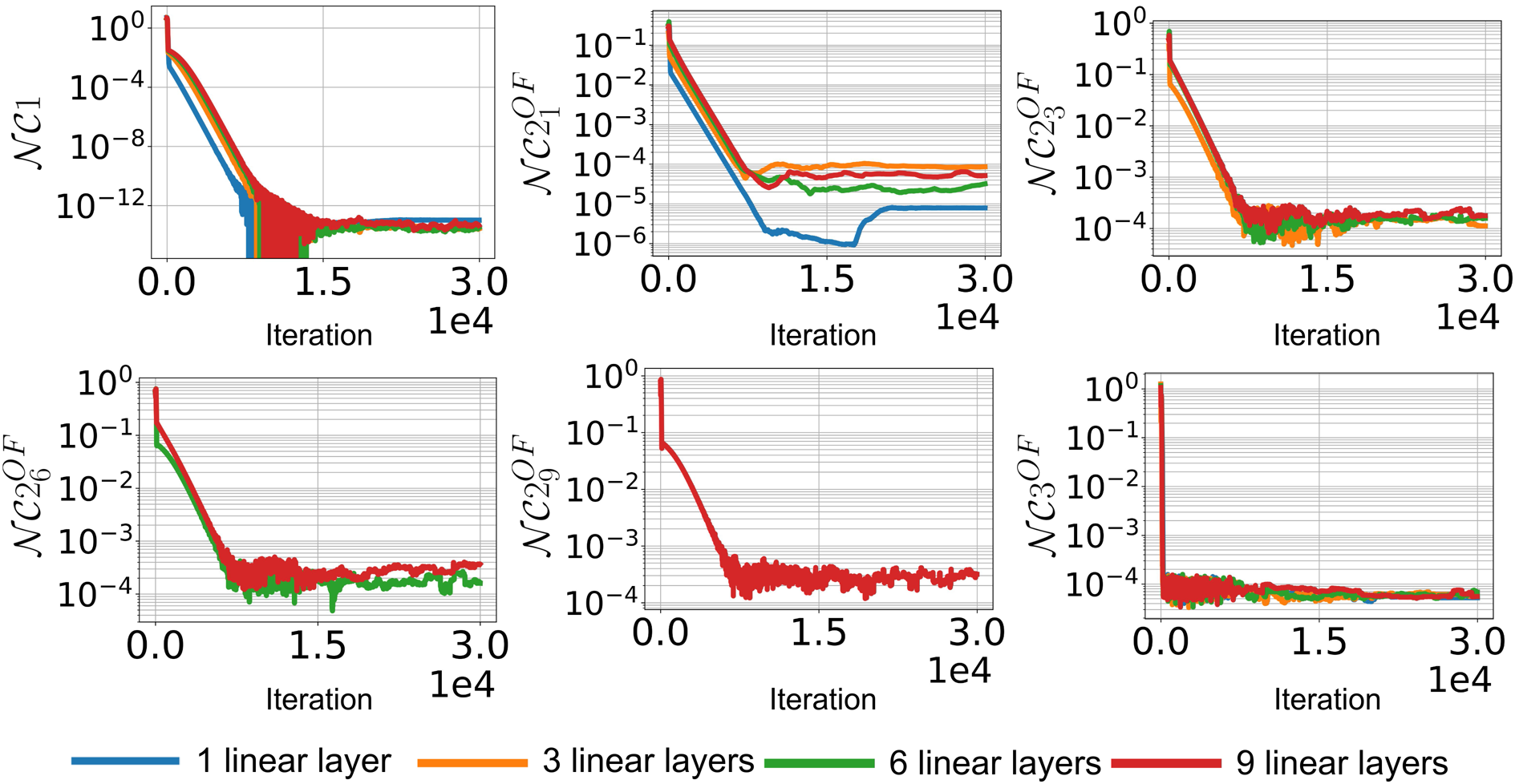}
    \caption{\small \color{black}{Illustration of $\mathcal{NC}$ for direct optimization experiment with MSE loss, balanced data and bias-free setting.}}
	\label{fig:synthetic_balance_nobias}
\end{figure}

\begin{figure}[t!]
	\centering
	\includegraphics[width = 0.7\columnwidth]{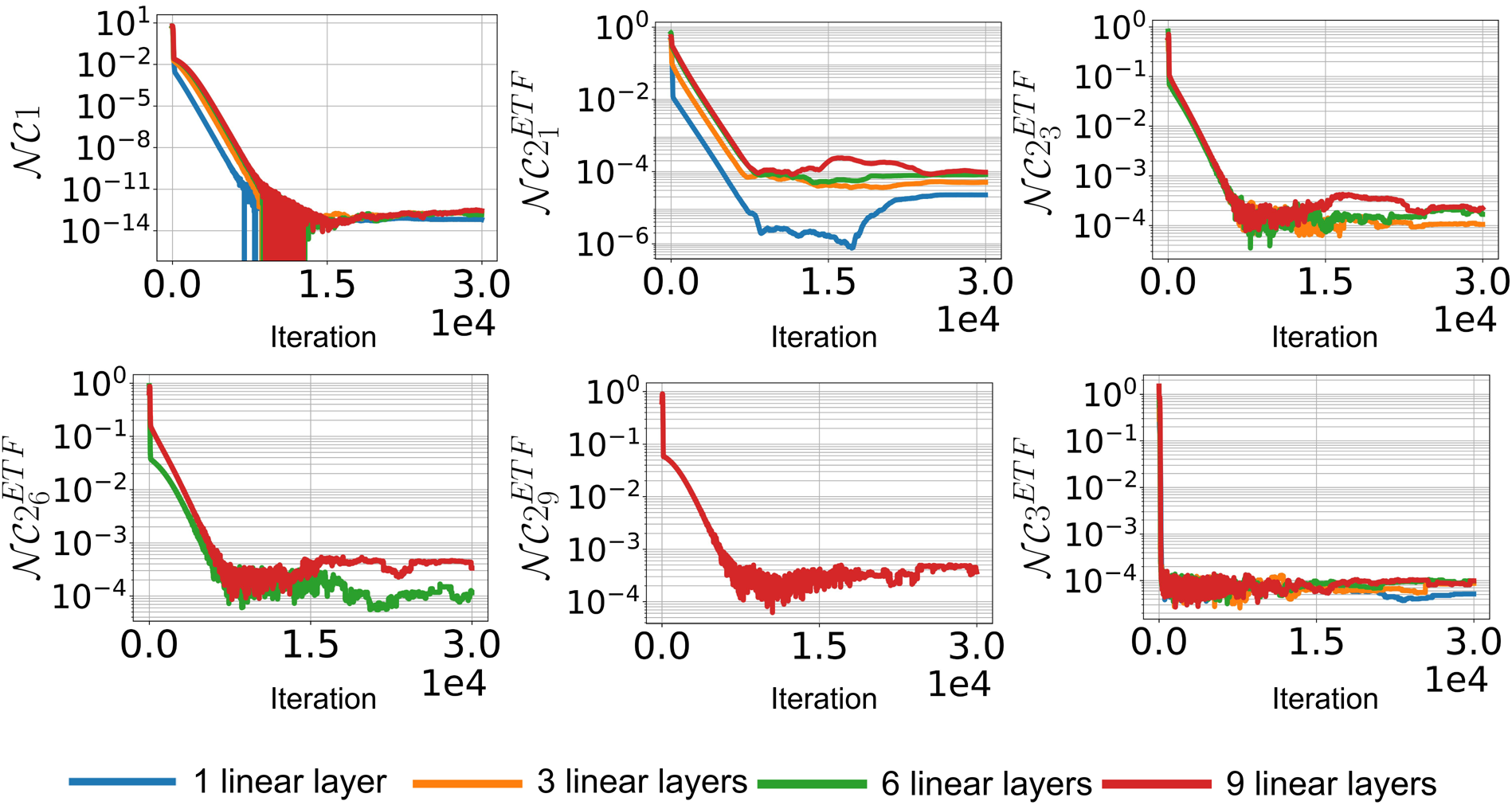}
    \caption{\small \color{black}{Illustration of $\mathcal{NC}$ for direct optimization experiment with MSE loss, balanced data and last-layer bias setting.}}
	\label{fig:synthetic_balance_bias}
\end{figure}

This subsection expands upon the experiment results for balanced data in subsection~\ref{subsec:balanced_data_experiment} by the following points: i) For MLP experiment, we provide $\mathcal{NC}$ metrics measured at the last epoch for the remaining depth-widths combinations mentioned in subsection \ref{subsec:balanced_data_experiment}, ii) Empirically verify Theorem~\ref{thm:CE} of the $\mathcal{NC}$ existence for cross-entropy loss in deep linear network setting, iii) Conduct experiments to verify the consistent of $\mathcal{NC}$ for text classification dataset, and iv) Empirically demonstrate the occurrence of $\mathcal{NC}$ for ReLU network with depth $\in \{2, 3\}$.
\medbreak
\textbf{Last-epoch $\bm{\mathcal{NC}}$ metrics for multilayer perceptron and deep learning experiments:}
 We include the full set of last-epoch $\mathcal{NC}$ metrics for  mentioned MLP depth-width combinations in Table~\ref{table:full_metric_MLP_nobias_appendix} and \ref{table:full_metric_MLP_bias_appendix}. In which, Table~\ref{table:full_metric_MLP_nobias_appendix} corresponds to the bias-free setting and Table~\ref{table:full_metric_MLP_bias_appendix} corresponds to the last-layer bias setting. Similarly, the full set of last-epoch $\mathcal{NC}$ metrics for deep learning experiments with ResNet18 and VGG19 models are also presented in Table~\ref{table:full_metric_deeplearning_appendix}. 

\begin{table}[t]
\resizebox{\textwidth}{!}{\begin{tabular}{cccccccccccccc} 
\toprule
 \scalebox{1}{No. layer} &  \scalebox{1}{Hidden dim} & \scalebox{1}{$\mathcal{NC}1$} & \scalebox{1}{$\mathcal{NC}2_1^{OF}$} & \scalebox{1}{$\mathcal{NC}2_2^{OF}$} & \scalebox{1}{$\mathcal{NC}2_3^{OF}$} & \scalebox{1}{$\mathcal{NC}2_4^{OF}$} & \scalebox{1}{$\mathcal{NC}2_5^{OF}$} & \scalebox{1}{$\mathcal{NC}2_6^{OF}$} &
 \scalebox{1}{$\mathcal{NC}2_7^{OF}$} &
 \scalebox{1}{$\mathcal{NC}2_8^{OF}$} &
 \scalebox{1}{$\mathcal{NC}2_9^{OF}$} &
  \scalebox{1}{$\mathcal{NC}3^{OF}$}\\ 
 \midrule
 \multirow{3}{*}{1} & 512 & $\num{1.819e-3}$ &  $\num{5.856e-2}$ &  &  &  &  &  &  &  &  & $\num{1.769e-2}$ \\
                    & 1024 & $\num{2.437e-4}$ & $\num{3.024e-2}$ & & & & & & & & & $\num{1.528e-2}$ \\
                    & 2048 & $\num{1.259e-4}$ & $\num{1.467e-2}$ & & & & & & & & & $\num{1.712e-2}$ \\
 \cmidrule{1-13}
 \multirow{3}{*}{3} & 512 & $\num{8.992e-3}$ & $\num{5.09e-2}$ & $\num{1.057e-1}$ & $\num{1.486e-1}$ & & & & & & & $\num{2.958e-2}$ \\
                    & 1024 & $\num{2.843e-3}$ & $\num{5.697e-2}$ & $\num{1.009e-1}$ & $\num{1.731e-1}$ & & & & & & & $\num{2.368e-2}$\\
                    & 2048 & $\num{5.165e-4}$ & $\num{3.857e-2}$ & $\num{5.799e-2}$ & $\num{8.648e-2}$ & & & & & & & $\num{2.797e-2}$\\
 \cmidrule{1-13}
 \multirow{3}{*}{6} & 512 & $\num{8.701e-3}$ & $\num{7.833e-2}$ & $\num{1.009e-1}$ & $\num{1.186e-1}$ & $\num{1.340e-1}$ & $\num{1.511e-1}$ & $\num{1.824e-1}$ & & & & $\num{3.478e-2}$\\
                    & 1024 & $\num{2.578e-3}$ & $\num{8.356e-2}$ & $\num{1.066e-1}$ & $\num{1.283e-1}$ & $\num{1.489e-1}$ & $\num{1.725e-1}$ & $\num{2.429e-1}$ & & & & $\num{1.928e-2}$\\
                    & 2048 & $\num{8.231e-4}$ & $\num{7.187e-2}$ & $\num{9.224e-2}$ & $\num{1.078e-1}$ & $\num{1.160e-1}$ & $\num{1.214e-1}$ & $\num{1.386e-1}$ & & & & $\num{3.430e-2}$\\
 \cmidrule{1-13}
 \multirow{3}{*}{9} & 512 & $\num{9.359e-3}$ & $\num{1.149e-1}$ & $\num{1.480e-1}$ & $\num{1.703e-1}$ & $\num{1.824e-1}$ & $\num{1.868e-1}$ & $\num{1.855e-1}$ & $\num{1.821e-1}$ & $\num{1.823e-1}$ & $\num{2.033e-1}$ & $\num{3.074e-2}$\\
                    & 1024 & $\num{2.615e-3}$ & $\num{1.165e-1}$ & $\num{1.488e-1}$ & $\num{1.745e-1}$ & $\num{1.893e-1}$ & $\num{1.961e-1}$ & $\num{1.975e-1}$ & $\num{1.972e-1}$ & $\num{2.013e-1}$ & $\num{2.492e-1}$ & $\num{2.089e-2}$\\
                    & 2048 & $\num{7.694e-4}$ & $\num{1.070e-1}$ & $\num{1.402e-1}$ & $\num{1.701e-1}$ & $\num{1.864e-1}$ & $\num{1.929e-1}$ & $\num{1.892e-1}$ & $\num{1.763e-1}$ & $\num{1.592e-1}$ & $\num{1.371e-1}$ & $\num{2.141e-2}$\\
 \bottomrule
\end{tabular}
}
\caption{Full set of metrics $\mathcal{NC}1$, $\mathcal{NC}2$, and $\mathcal{NC}3$ described in multilayer perceptron experiment in section \ref{subsec:balanced_data_experiment} with bias-free setting.}
\label{table:full_metric_MLP_nobias_appendix}
\end{table}

\begin{table}[t]
\resizebox{\textwidth}{!}{\begin{tabular}{cccccccccccccc} 
\toprule
 \scalebox{1}{No. layer} &  \scalebox{1}{Hidden dim} & \scalebox{1}{$\mathcal{NC}1$} & \scalebox{1}{$\mathcal{NC}2_1^{ETF}$} & \scalebox{1}{$\mathcal{NC}2_2^{ETF}$} & \scalebox{1}{$\mathcal{NC}2_3^{ETF}$} & \scalebox{1}{$\mathcal{NC}2_4^{ETF}$} & \scalebox{1}{$\mathcal{NC}2_5^{ETF}$} & \scalebox{1}{$\mathcal{NC}2_6^{ETF}$} &
 \scalebox{1}{$\mathcal{NC}2_7^{ETF}$} &
 \scalebox{1}{$\mathcal{NC}2_8^{ETF}$} &
 \scalebox{1}{$\mathcal{NC}2_9^{ETF}$} &
  \scalebox{1}{$\mathcal{NC}3^{ETF}$}\\ 
 \midrule
 \multirow{3}{*}{1} & 512 & $\num{2.058e-3}$ &  $\num{4.936e-2}$ & & & & & & & & & $\num{5.406e-3}$ \\
                    & 1024 & $\num{2.791e-4}$ & $\num{2.540e-2}$ & & & & & & & & & $\num{3.862e-3}$ \\
                    & 2048 & $\num{1.434e-4}$ & $\num{9.418e-3}$ & & & & & & & & & $\num{1.750e-3}$ \\
 \cmidrule{1-13}
 \multirow{3}{*}{3} & 512 & $\num{7.601e-3}$ & $\num{5.147e-2}$ & $\num{1.124e-1}$ & $\num{1.586e-1}$ & & & & & & & $\num{1.972e-2}$ \\
                    & 1024 & $\num{2.194e-3}$ & $\num{5.967e-2}$ & $\num{1.071e-1}$ & $\num{1.949e-1}$ & & & & & & & $\num{1.155e-2}$\\
                    & 2048 & $\num{6.397e-4}$ & $\num{3.447e-2}$ & $\num{5.795e-2}$ & $\num{9.811e-2}$ & & & & & & & $\num{5.311e-3}$\\
 \cmidrule{1-13}
 \multirow{3}{*}{6} & 512 & $\num{8.308e-3}$ & $\num{2.006e-2}$ & $\num{5.110e-2}$ & $\num{8.624e-2}$ & $\num{1.221e-1}$ & $\num{1.587e-1}$ & $\num{1.997e-1}$ & & & & $\num{1.757e-2}$\\
                    & 1024 & $\num{2.258e-3}$ & $\num{2.818e-2}$ & $\num{6.244e-1}$ & $\num{9.861e-2}$ & $\num{1.350e-1}$ & $\num{1.710e-1}$ & $\num{2.350e-1}$ & & & & $\num{1.320e-2}$\\
                    & 2048 & $\num{5.653e-4}$ & $\num{1.848e-2}$ & $\num{3.409e-2}$ & $\num{5.134e-2}$ & $\num{6.849e-2}$ & $\num{8.570e-2}$ & $\num{1.279e-1}$ & & & & $\num{4.522e-3}$\\
 \cmidrule{1-13}
 \multirow{3}{*}{9} & 512 & $\num{9.745e-3}$ & $\num{1.608e-2}$ & $\num{2.040e-2}$ & $\num{3.916e-2}$ & $\num{6.095e-2}$ & $\num{8.494e-2}$ & $\num{1.107e-1}$ & $\num{1.383e-1}$ & $\num{1.679e-1}$ & $\num{2.102e-1}$ & $\num{1.772e-2}$\\
                    & 1024 & $\num{2.587e-3}$ & $\num{1.522e-2}$ & $\num{2.462e-2}$ & $\num{4.350e-2}$ & $\num{6.525e-2}$ & $\num{8.910e-2}$ & $\num{1.147e-1}$ & $\num{1.422e-1}$ & $\num{1.711e-1}$ & $\num{2.370e-1}$ & $\num{1.245e-2}$\\
                    & 2048 & $\num{6.943e-4}$ & $\num{1.217e-2}$ & $\num{2.043e-2}$ & $\num{3.218e-2}$ & $\num{4.517e-2}$ & $\num{5.899e-1}$ & $\num{7.350e-2}$ & $\num{8.881e-2}$ & $\num{1.042e-1}$ & $\num{1.414e-1}$ & $\num{7.937e-3}$\\
 \bottomrule
\end{tabular}
}
\caption{Full set of metrics $\mathcal{NC}1$, $\mathcal{NC}2$, and $\mathcal{NC}3$ in multilayer perceptron experiment in section \ref{subsec:balanced_data_experiment} with last-layer bias setting.}
\label{table:full_metric_MLP_bias_appendix}
\end{table}

\begin{table}[t]
\resizebox{\textwidth}{!}{\begin{tabular}{cccccccccccccc} 
\toprule
 \scalebox{1}{Model name} &  \scalebox{1}{No.layer} & \scalebox{1}{$\mathcal{NC}1$} & \scalebox{1}{$\mathcal{NC}2_1^{ETF}$} & \scalebox{1}{$\mathcal{NC}2_2^{ETF}$} & \scalebox{1}{$\mathcal{NC}2_3^{ETF}$} & \scalebox{1}{$\mathcal{NC}2_4^{ETF}$} & \scalebox{1}{$\mathcal{NC}2_5^{ETF}$} & \scalebox{1}{$\mathcal{NC}2_6^{ETF}$} &
 \scalebox{1}{$\mathcal{NC}2_7^{ETF}$} &
 \scalebox{1}{$\mathcal{NC}2_8^{ETF}$} &
 \scalebox{1}{$\mathcal{NC}2_9^{ETF}$} &
  \scalebox{1}{$\mathcal{NC}3^{ETF}$}\\ 
 \midrule
  \multirow{4}{*}{ResNet18} & 1 & $\num{1.556e-3}$ & $\num{4.376e-2}$ & & & & & & & & & $\num{3.598e-3}$ \\
                    & 3 & $\num{4.713e-4}$ & $\num{2.191e-2}$ & $\num{4.714e-2}$ & $\num{7.813e-2}$ &  & & & & & & $\num{2.131e-3}$\\
                    & 6 & $\num{1.824e-4}$ & $\num{4.295e-3}$ & $\num{4.868e-3}$ & $\num{7.651e-3}$ & $\num{1.156e-2}$ & $\num{1.681e-2}$ & $\num{2.459e-2}$ & & & & $\num{1.817e-3}$\\
                    & 9 & $\num{2.156e-4}$ & $\num{3.609e-3}$ & $\num{6.459e-3}$ & $\num{7.835e-3}$ & $\num{8.056e-3}$ & $\num{8.096e-3}$ & $\num{8.362e-3}$ & $\num{9.400e-3}$ & $\num{1.212e-2}$ & $\num{1.683e-2}$ & $\num{2.210e-3}$\\
                    
 \cmidrule{1-13}
 \multirow{4}{*}{VGG16} & 1 & $\num{2.447e-2}$ & $\num{6.689e-2}$ &  & & & & & & & & $\num{1.977e-3}$ \\
                    & 3 & $\num{1.347e-3}$ & $\num{3.120e-2}$ & $\num{3.035e-2}$& $\num{4.606e-2}$ & & & & & & & $\num{2.767e-3}$\\
                    & 6 & $\num{5.959e-4}$ & $\num{1.645e-2}$ & $\num{1.266e-2}$ & $\num{1.703e-2}$ & $\num{2.183e-2}$ & $\num{2.473e-2}$ & $\num{3.015e-2}$ & & & & $\num{2.483e-3}$\\
                    & 9 & $\num{6.893e-4}$ & $\num{1.438e-2}$ & $\num{9.511e-3}$ & $\num{1.198e-2}$ & $\num{1.314e-2}$ & $\num{1.619e-2}$ & $\num{1.774e-2}$ & $\num{2.030e-2}$ & $\num{2.218e-2}$ & $\num{2.445e-2}$ & $\num{2.434e-3}$\\
\bottomrule
\end{tabular}
}
\caption{Full set of metrics $\mathcal{NC}1$, $\mathcal{NC}2$, and $\mathcal{NC}3$ described in deep learning experiment in section \ref{subsec:balanced_data_experiment} for ResNet18 and VGG16 backbones with last-layer bias setting.}
\label{table:full_metric_deeplearning_appendix}
\end{table}

\textbf{Verification of Theorem~\ref{thm:CE} for CE loss:} We run two experiments to verify neural collapse for CE loss described in Theorem~\ref{thm:CE} in two settings: MLP backbone model and direct optimization. Our network training procedure is similar to multilayer perceptron experiment and direct optimization experiment for last-layer bias setting described in subsection~\ref{subsec:balanced_data_experiment}. For MLP experiment, we only change the learning rate to $\num{2e-4}$ and substitute cross entropy loss in place of MSE loss. We run the experiment with all depth-width combinations with linear layer depth $\in\{1, 3\}$ and width $\in\{512, 1024, 2048\}$. For direct optimization experiment, we change learning rate to $0.02$, width to $256$, substitute cross entropy loss in place of MSE loss, and keep other settings to be the same. 

Theorem~\ref{thm:CE} indicates that all the features of the same class converge to a single vector, and the alignment between the learned classifier $\mathbf{W}_M \mathbf{W}_{M-1} \ldots \mathbf{W}_{1}$ and the learned class-means $\overline{\mathbf{H}}$ has ETF form. Therefore, we use the same $\mathcal{NC}1$ and $\mathcal{NC}3$ as in the balanced data, last-layer bias case. Theorem~\ref{thm:CE} also indicates that $\mathbf{W}_M \mathbf{W}_{M-1} \ldots \mathbf{W}_{1}$ converges to ETF form. Hence, the metric used for CE loss to measure the convergence of $\mathbf{W}_M \mathbf{W}_{M-1} \ldots \mathbf{W}_{1}$ is defined as  $\mathcal{NC}2_{CE}^{ETF} := \mathcal{NC}2_{M}^{ETF}$, where $\mathcal{NC}2_{M}^{ETF}$ is defined in \ref{subsubsec:metric_balance_appendix}. Fig.~\ref{fig:mlp_balance_CE} and Fig.~\ref{fig:synthetic_balance_CE} demonstrate the convergence of $\mathcal{NC}$ for MLP and direct optimization experiments, respectively. The convergence to 0 of the $\mathcal{NC}$ metrics verifies theorem~\ref{thm:CE}.

\begin{figure}[t!]
	\centering
	\includegraphics[width = 0.7\columnwidth]{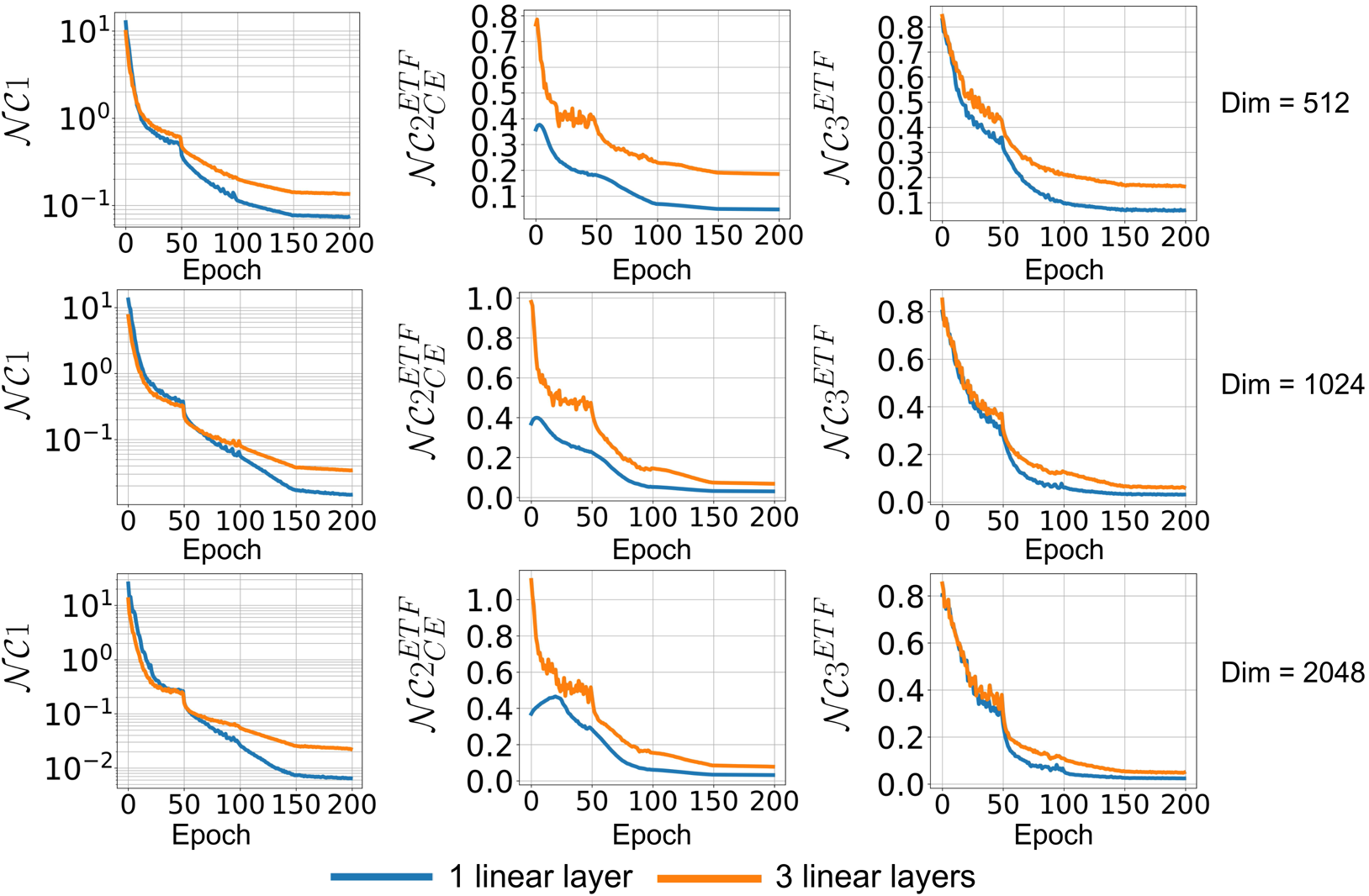}
    \caption{\small \color{black}{Illustration of $\mathcal{NC}$ with 6-layer MLP backbone on CIFAR10 for cross entropy loss, balanced data and last-layer bias setting.}}
    \label{fig:mlp_balance_CE}
\end{figure}

\begin{figure}[t!]
	\centering
	\includegraphics[width = 0.7\columnwidth]{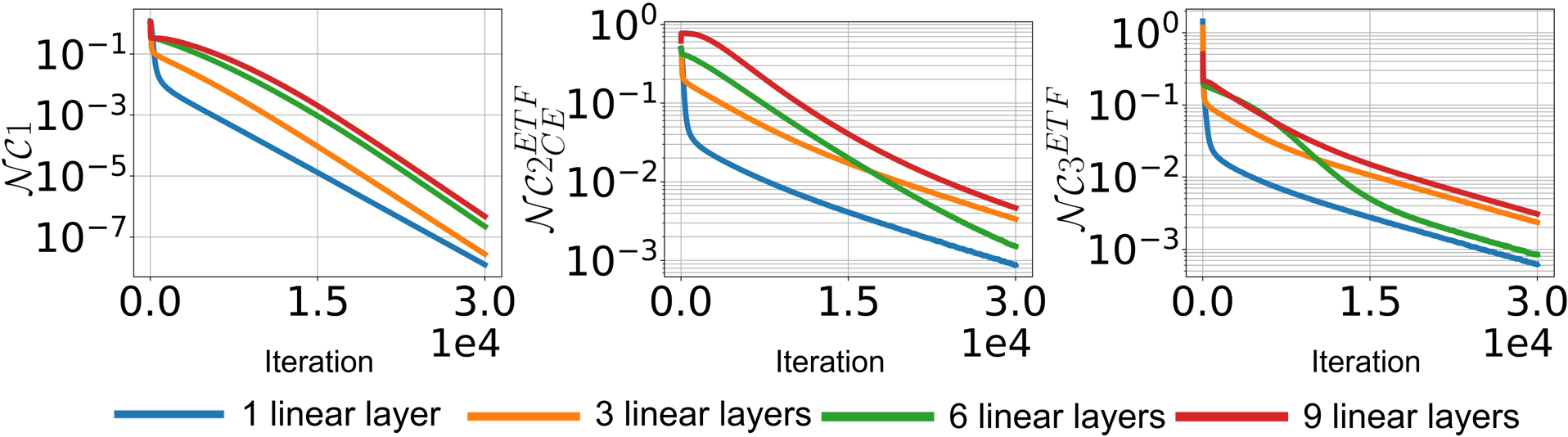}
    \caption{\small \color{black}{Illustration of $\mathcal{NC}$ for direct optmization experiment with cross-entropy loss, balanced data and last-layer bias setting.}}
    \label{fig:synthetic_balance_CE}
\end{figure}

\textbf{Text classification experiment:} To further validate the consistent of $\mathcal{NC}$ through different datasets, we conduct experiments on 4 subsets of text classification datasets including: AG News, IMDB, Sogou News, and Yelp Review Polarity datasets. For each dataset, we randomly choose 3000 samples per class for the training set. We use average word embedding as the backbone model, followed by a linear network with depth=$\{1, 3\}$. The model for AG News dataset has width=$\{2048\}$. Both IMDB and Yelp Review Polarity datasets share width=$\{128\}$, while width=$\{256\}$ is used for Sogou News dataset. All models are trained with MSE loss for until convergence using SGD. Fig.~\ref{fig:AG_News_balance}, Fig.~\ref{fig:IMDB_balance}, Fig.~\ref{fig:Sogou_News_balance}, and Fig.~\ref{fig:Yelp_Review_Polarity_balance} show the convergence to $0$ of $\mathcal{NC}$ metrics. The results demonstrate that the $\mathcal{NC}$ phenomenon described in Theorem~\ref{thm:bias-free} can also be observed in when training with text classification datasets. 

\begin{figure}[t]
    \centering
    \includegraphics[width = 0.7\columnwidth]{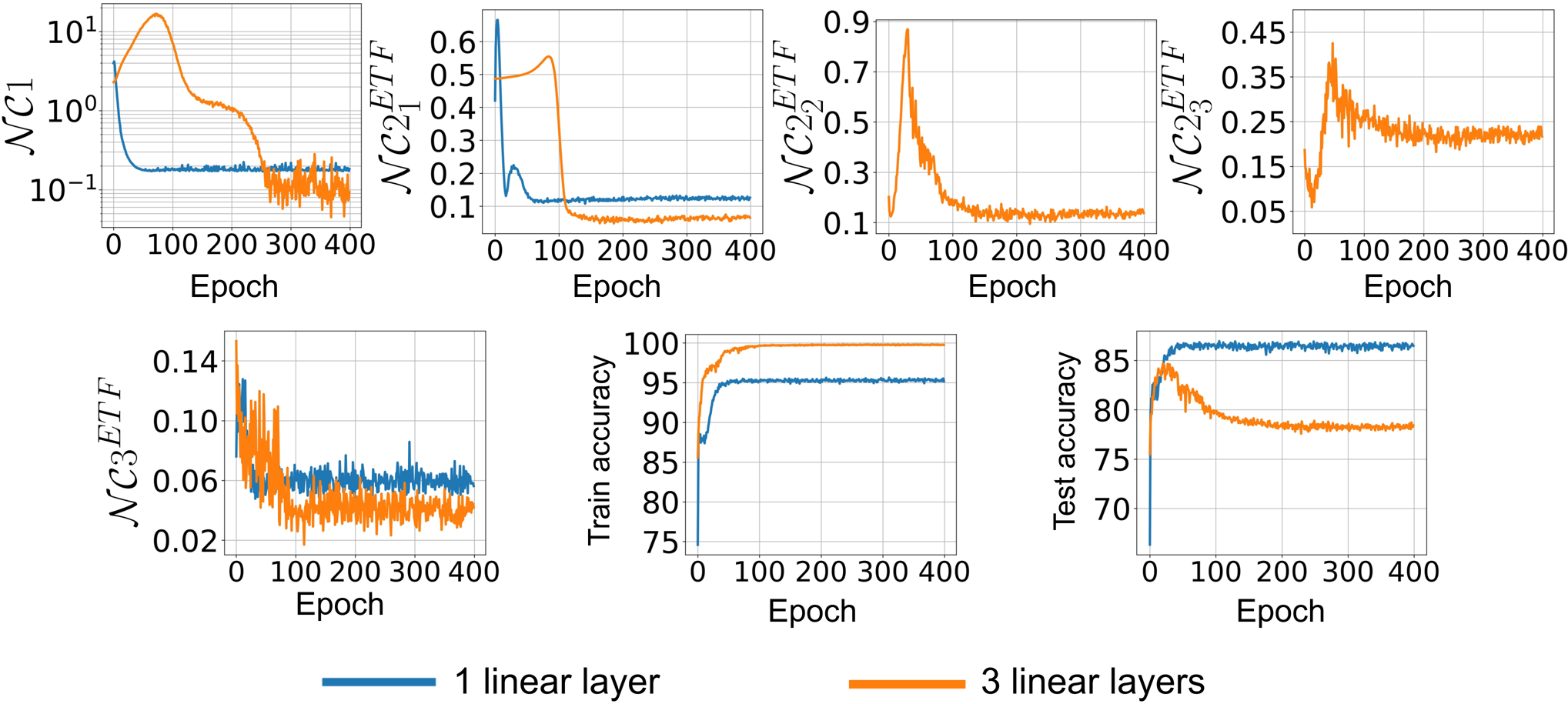}
    \caption{\small \color{black}{Training results with average word embedding backbone on AG News dataset with MSE loss, balanced data and last-layer bias setting.}}
    \label{fig:AG_News_balance}
\end{figure}

\begin{figure}[t]
    \centering
    \vspace{-5pt}
    \includegraphics[width = 
    0.7\columnwidth]{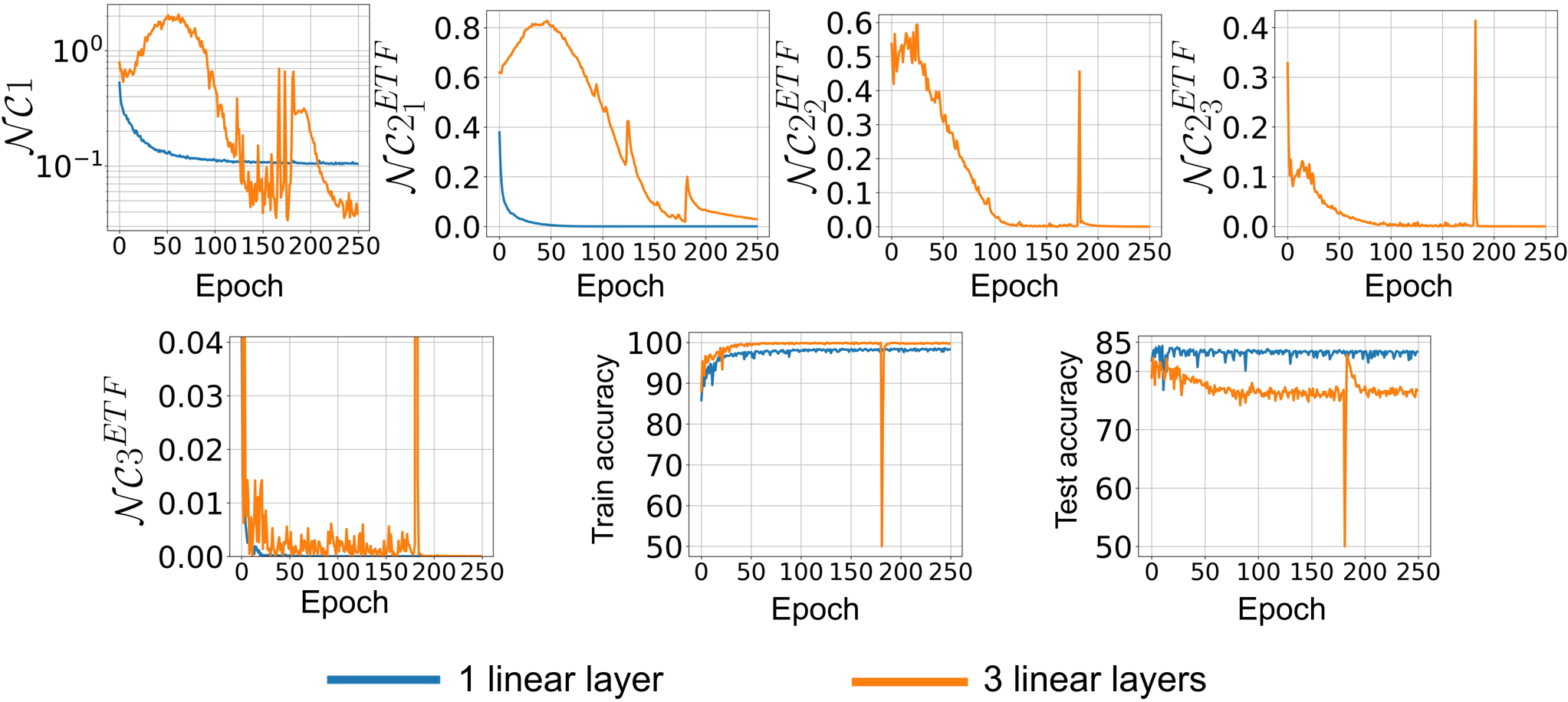}
    \caption{\small \color{black}{Training results with average word embedding backbone on IMDB dataset with MSE loss, balanced data and last-layer bias setting.}}
	\label{fig:IMDB_balance}
\end{figure}

\begin{figure}[t]
    \centering
    \includegraphics[width = 
    0.7\columnwidth]{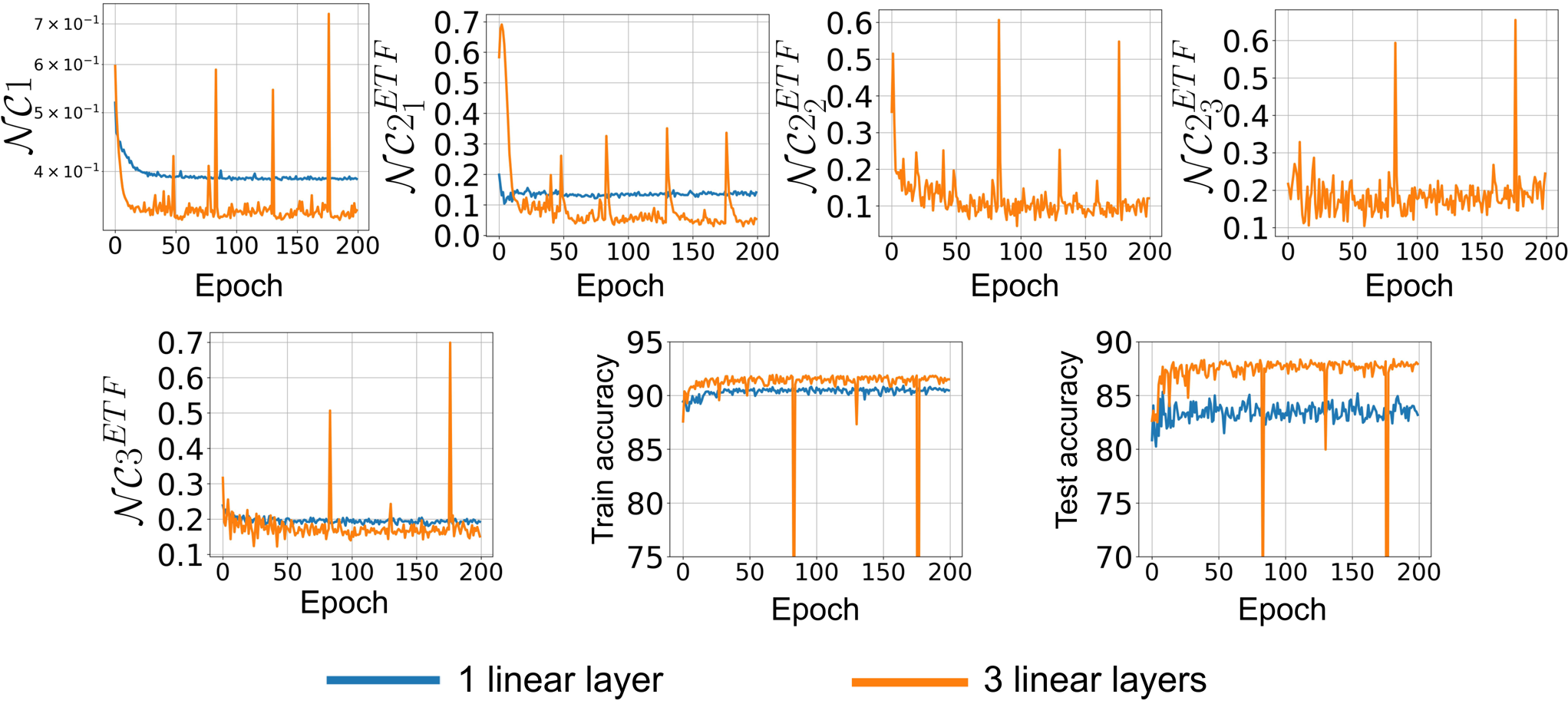}
    \caption{\small \color{black}{Training results with average word embedding backbone on Sogou News dataset with MSE loss, balanced data and last-layer bias setting.}}
	\label{fig:Sogou_News_balance}
\end{figure}

\begin{figure}[t]
    \centering
    \includegraphics[width = 
    0.7\columnwidth]{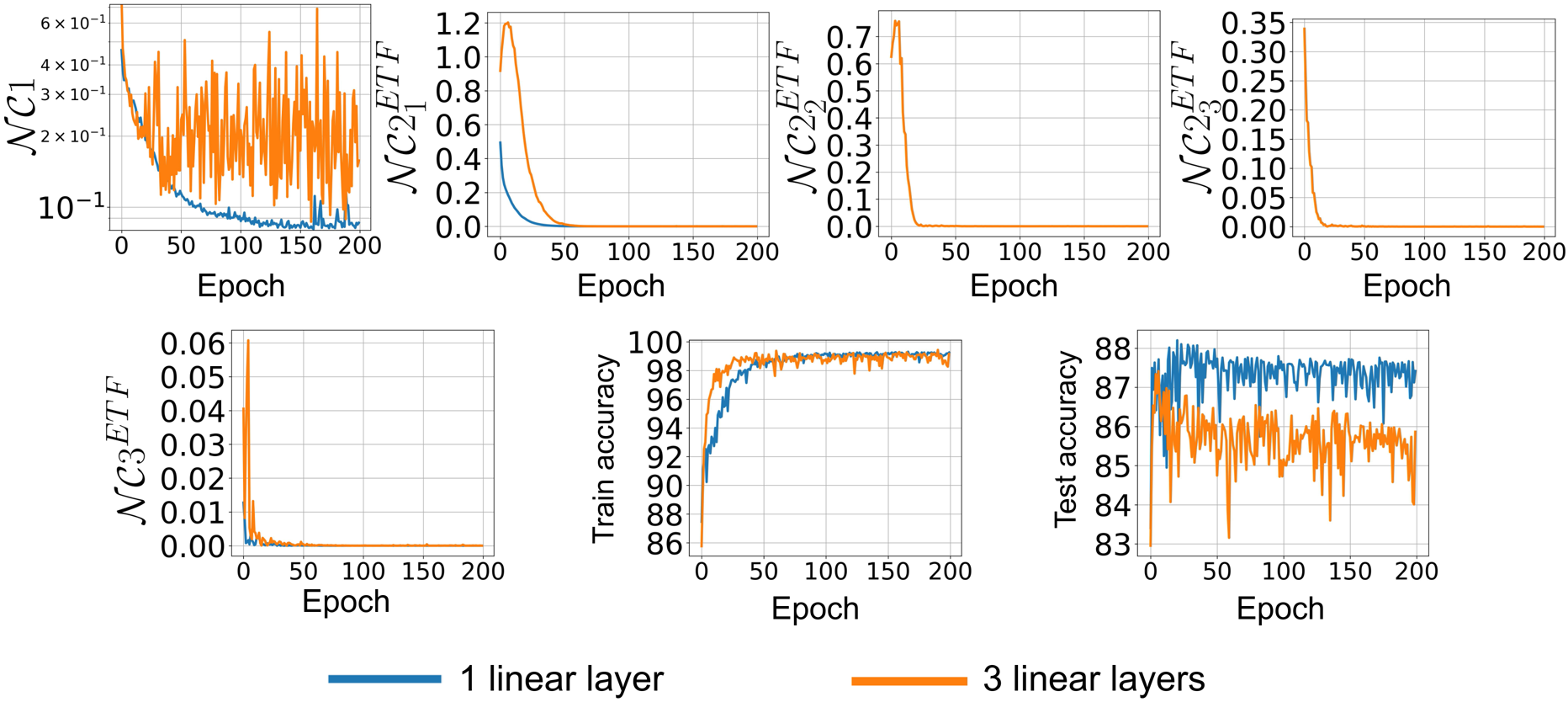}
    \caption{\small \color{black}{Training results with average word embedding backbone on Yelp Review Polarity dataset with MSE loss, balanced data and last-layer bias setting.}}
	\label{fig:Yelp_Review_Polarity_balance}
\end{figure}

\textbf{ReLU experiment:} We conjecture that the occurrence of ETF structure across layers also holds true with nonlinear ReLU activation included. To empirically verify the conjecture, we replace the deep linear network by a deep ReLU network and use batch normalization after each ReLU activation layer. We conduct the experiment on CIFAR10 dataset with ResNet18 backbone under the same setup as the deep learning experiment described in section \ref{subsec:balanced_data_experiment}. Fig.~\ref{fig:ReLU_conjecture} demonstrates that the $\mathcal{NC}$ phenomenon described in Theorem~\ref{thm:bias-free} can still be observed for ReLU network with depth $\in \{2, 3\}$.

\begin{figure}[t]
    \centering
    \includegraphics[width = 
    0.7\columnwidth]{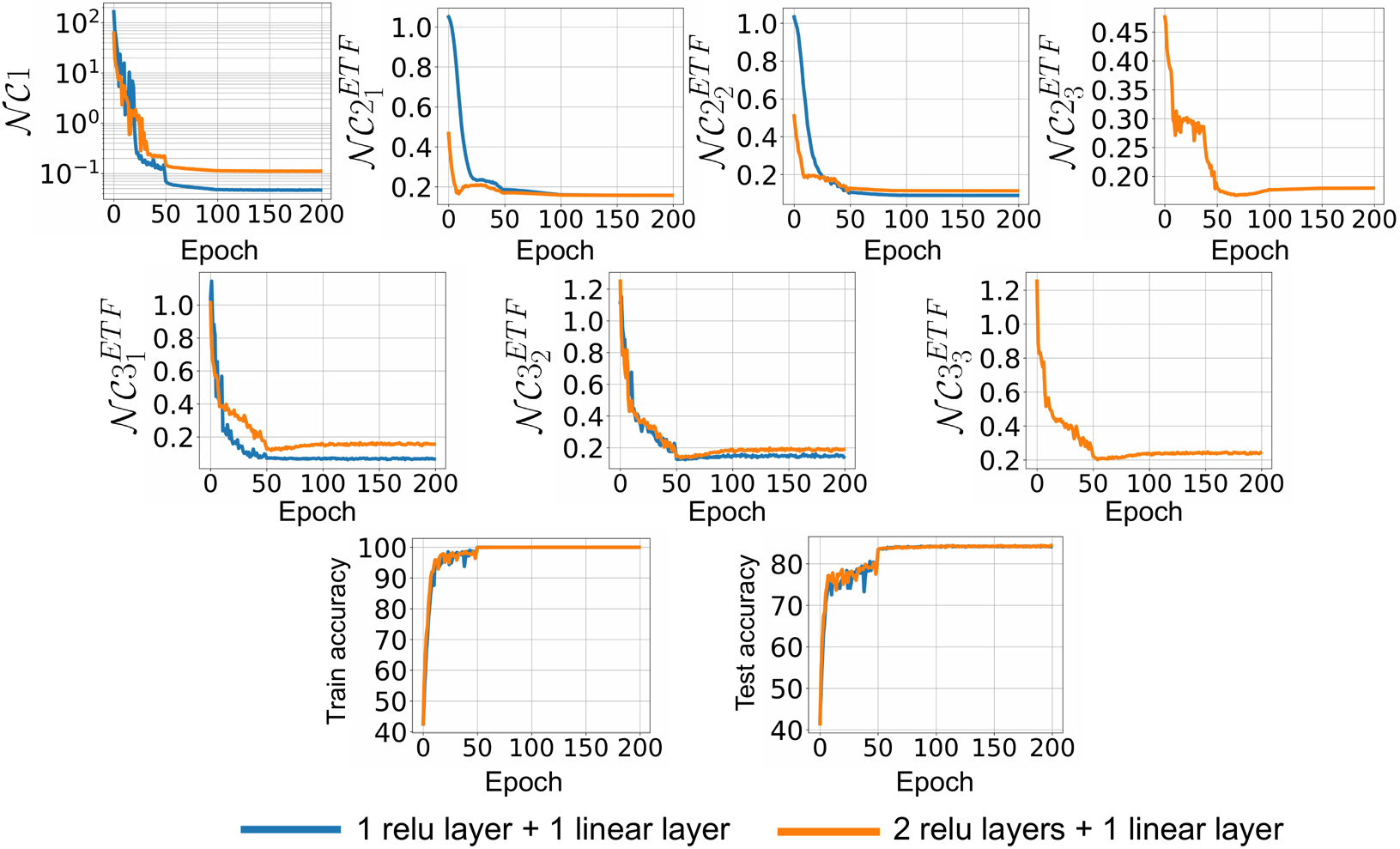}
    \caption{\small \color{black}{Training results with ResNet18 backbone on CIFAR10 dataset with deep ReLU network in place of deep linear network trained with MSE loss, balanced data and last-layer bias setting.}}
	\label{fig:ReLU_conjecture}
\end{figure}

\subsubsection{Details of network training and hyperparameters for balanced data experiments}
\label{subsubsec:detail_training_balanced}
\textbf{Multilayer perceptron experiment with CIFAR10 dataset:} In this experiment, we use a 6-layer MLP model with ReLU activation as the backbone feature extractor. Hidden width of the backbone model and the deep linear network are set to be equal. We cover all depth-width combinations with depth $\in\{1, 3, 6, 9\}$ and width $\in\{512, 1024, 2048\}$ for two settings, bias-free and last-layer bias. All models are trained with Adam optimizer with MSE loss for 200 epochs with batch size $128$ and learning rate $\num{1e-4}$ (divided by 10 every 50 epochs). Weight decay and feature decay are set to $\num{1e-4}$.\

\textbf{Deep learning experiment with CIFAR10 dataset:} In deep learning experiment, we use ResNet18 and VGG16 as backbones feature extractors. We train both models with SGD optimizer with batch size $128$ for MSE loss. Data augmentation is not used in this experiment. The learning rate decays $0.1$ every $50$ epochs for $200$ epochs. Depth of the deep linear layers are selected from the set $\{1, 3, 6, 9\}$. Width of the deep linear layers are set to $512$ to be equal to the last-layer dimension of the backbone model. Weight decay in both models is enforced on all network parameters to align with the typical training protocol. For ResNet18 backbone models, we use the learning rate of $0.05$ and weight decay of $\num{2e-4}$. For VGG16 backbone, the learning rate is $0.02$. Except for VGG16-backbone with $1$ linear layer using weight decay of $\num{5e-4}$, all other VGG16-backbone models shares the weight decay of $\num{3e-4}$.

\textbf{Deep learning experiment with EMNIST letter dataset:} In this experiment, our models and optimization schemes are identical to the deep learning experiment with CIFAR10 dataset. For ResNet18 bacbone models, we use the learning rate of $0.05$ and weight decay of $\num{2e-4}$ for all depths. For all VGG16 backbone models, the learning rate is $0.02$ and weight decay is $\num{3e-4}$.

\textbf{Direct optimization experiment:} In this experiment, we replicate the optimization problem \eqref{eq:UFM_general_linear}. $\mathbf{W}_{M}, \ldots, \mathbf{W}_{1}$ and $\mathbf{H}_{1}$ are initialized with standard normal distribution scaled by $0.1$. We set $K=4, n=100, d_M = \ldots = d_1 = 64$ and all $\lambda$'s are set to be $\num{5e-4}$. Depth of the linear layers are selected from the set $\{1, 3, 6, 9\}$. $\mathbf{W}_{M}, \ldots, \mathbf{W}_{1}$ and $\mathbf{H}_{1}$ are optimized by gradient descent for $30000$ iterations with learning rate $0.1$.

\textbf{Text classification experiment:} In this experiment, we use average word embedding as the backbone feature extractor and train the models on subsets of 4 text classification datasets including AG News, IMDB, Sogou News, and Yelp Review Polarity. Followed the backbone feature extractor is a linear network with depth$=\{1, 3\}$. For each dataset, 3000 samples of each class in the full training set in randomly sampled to create the training subset. Both IMDB and Yelp Review Polarity models share width $=128$, AG News model has width $=2048$, and model for Sogou News dataset has width $=256$. Each model is trained with SGD optimizer, batch size $128$ and MSE loss until convergence. We perform hyperparameter search with learning rate $\in \{ \num{1e-4}, \num{5e-4}, 0.001, 0.005, 0.01\}$. Weight decay for all models is enforced on all network parameters and set to $\num{1e-4}$.

\textbf{ReLU experiment:} In this experiment, we run the experiment on CIFAR10 dataset with ResNet18 backbone and replace the deep linear network by a deep ReLU network with depth $\in \{2, 3\}$. The depth of all models are set to $512$, learning rate is $0.05$ (divided by 10 every 50 epochs) and weight decay is $\num{2e-4}$. We train all models with SGD optimizer with batch size $128$ for MSE loss.

\subsection{Imbalanced Data}
\label{subsec:imbalance_case_appendix}
\subsubsection{Metric for measuring $\mathcal{NC}$ in imbalanced data}
For imbalanced setting,
$\mathcal{NC}1$ metric is identical to the balanced setting's. While for $\mathcal{NC}2$ and $\mathcal{NC}3$, we measure the closeness of learned classifiers and features to GOF structure as follows:
\begin{align}
\begin{gathered}
     \mathcal{NC}2^{GOF} 
    := \left\| 
    \frac{(\mathbf{W}_{M} \mathbf{W}_{M-1} \ldots \mathbf{W}_{1}) (\mathbf{W}_{M} \mathbf{W}_{M-1} \ldots \mathbf{W}_{1})^{\top} }{\| (\mathbf{W}_{M} \mathbf{W}_{M-1} \ldots \mathbf{W}_{1}) (\mathbf{W}_{M} \mathbf{W}_{M-1} \ldots \mathbf{W}_{1})^{\top} \|_{F} } - \frac{\operatorname{diag}
    \{cs_{k}^{2M}\}_{k=1}^{K}}{\| \operatorname{diag}
    \{cs_{k}^{2M}\}_{k=1}^{K} \|_{F}} \right\|_{F}, \nonumber \\
     \mathcal{NC}3^{GOF} 
    := 
    \left\| 
    \frac{\mathbf{W}_{M} \mathbf{W}_{M-1} \ldots \mathbf{W}_{1} \overline{\mathbf{H}}}{ \left\| \mathbf{W}_{M} \mathbf{W}_{M-1} \ldots \mathbf{W}_{1} \overline{\mathbf{H}} \right\|_{F}} 
    - 
    \frac{\operatorname{diag}
    \left\{\frac{c s_{k}^{2M}}{c s_{k}^{2M} + N \lambda_{H_{1}}} \right\}_{k=1}^{K}}
    {\left\| \operatorname{diag}
    \left\{\frac{c s_{k}^{2M}}{c s_{k}^{2M} + N \lambda_{H_{1}}} \right\}_{k=1}^{K} \right\|_{F}}
    \right\|_{F}, \nonumber   
\end{gathered}
\end{align}
where $\overline{\mathbf{H}} = [\mathbf{h}_{1}, \ldots, \mathbf{h}_{K}]$ is the class-means matrix, $c$ and $\{s_{k} \}_{k=1}^{K}$ are as defined in Theorem \ref{thm:deep_imbalance}.

\subsubsection{Additional numerical results for imbalanced data}
\label{subsubsec:addtional_imbalanced}

\begin{figure}[t!]
    \centering
    \includegraphics[width = 0.7\columnwidth]{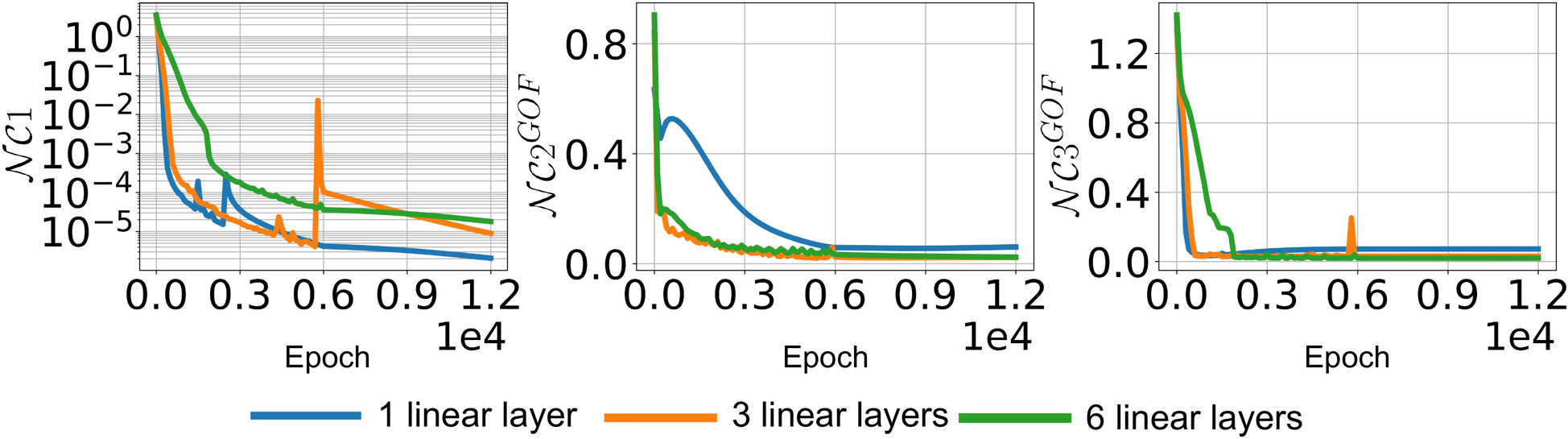}
    \caption{\small \color{black}{Illustration of $\mathcal{NC}$ with 6-layer MLP backbone on an imbalanced subset of EMNIST letter dataset with MSE loss and bias-free setting.}}
    \label{fig:mlp_imbalance_EMNIST}
\end{figure}

\begin{figure}[t!]
	\centering
	\vspace{-5pt}
	\includegraphics[width = 0.7\columnwidth]{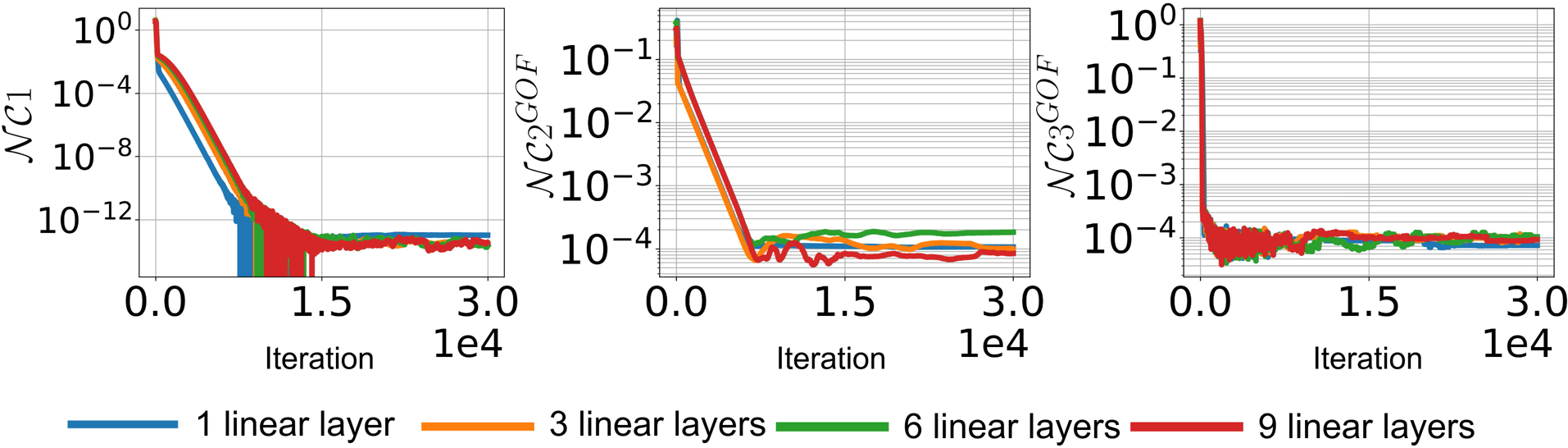}
    \caption{\small \color{black}{Illustration of $\mathcal{NC}$ for direct optimization experiment with MSE loss, imbalanced data and bias-free setting.}}
	\label{fig:synthetic_imbalance_GOF}
\end{figure}

Continue from subsection \ref{subsec:imbalanced_data_experiment}, to empirically validate the Minority Collapse of the problems \eqref{eq:UFM_imbalance} and \eqref{eq:deep_imbalance}, we run two direct optimization schemes similar as Section \ref{subsubsec:direct_im} with heavy imbalanced data of $K = 4$ and $n_{1} = 2000, n_{2} = n_{3} = 495$ and $n_{4} = 10$ for $M = 1$ ($d = 16$) and $M = 3$ ($d = 40$). Both models are trained by gradient descent for 30000 iterations. The final weight matrices of these models are as following (results are rounded to 2 decimal places):
\begin{align}
    \mathbf{W}_{1} = 
    \begin{smallbmatrix}
      -1.55
      & 1.50 & 2.19 & -1.36 & -0.65 & 3.08 & -0.81 &
      -1.76 & -0.96 & -0.48 & -1.21 & -1.06 & 1.01 & 1.72
      & 0.30 & -1.73 \\
      -1.26 & -0.56 & -0.94
      & -1.24 & 0.11 & -1.46 & -0.51 & -1.75 & -0.69 &
      0.11 & 1.09 & -0.89 & -0.56 & 0.57 & 0.48 & 0.27 \\
      0.76 & -0.31 & 0.32
      & -1.30 & -0.42 & 0.09 & 2.22 & -1.07 & 1.15 &
      -0.58 & -0.28 & -0.88 & -0.03 & -0.40 & -1.29 & 0.43
      \\
      0.00 & 0.00 & 0.00
      & 0.00 & 0.00 & 0.00 & 0.00 & 0.00 & 0.00 & 0.00
      & 0.00 & 0.00 & 0.00 & 0.00 & 0.00 & 0.00 \\
    \end{smallbmatrix},
    \nonumber
\end{align}
for case $M = 1$. For case $M = 3$, we have:
\begin{align}
    \mathbf{W}_{3}
    = 
    \begin{smallbmatrix}
      0.65
      & -0.96 & 0.49 & -0.15 & 0.50 & -0.11 & -0.14 &
      0.40 & \ldots & 0.02 & 0.05 & 0.27 & 0.13 & 0.71
      & -0.29 & 0.14 & -0.30 \\
      -0.25 & 0.13 & -0.40
      & -0.33 & 0.14 & 0.11 & -0.32 & 0.15 & \ldots &
      0.40 & -0.10 & -0.86 & 0.34 & 0.20 & 0.54 & 0.66
      & 0.18 \\
      0.36 & -0.15 & -0.04
      & -0.23 & -0.66 & -0.04 & -0.51 & -0.33 & \ldots
      & -0.07 & -0.52 & 0.15 & -0.03 & 0.04 & -0.36 &
      0.35 & 0.02 \\
      0.00 & 0.00 & 0.00
      & 0.00 & 0.00 & 0.00 & 0.00 & 0.00 & \ldots &
      0.00 & 0.00 & 0.00 & 0.00 & 0.00 & 0.00 & 0.00 &
      0.00 \\
    \end{smallbmatrix}.
\end{align}
As can be seen from both cases, the classifier of the fourth class converges to zero vector (with the convergence error are less than 1e-8), due to the heavy imbalance level of the dataset, which align to Theorem \ref{thm:UFM_imbalance} and Theorem \ref{thm:deep_imbalance}.

We further perform an image classification task on a heavy imbalanced subset of the CIFAR-10 dataset using a 6-layer MLP model with ReLU activation, followed by 1-layer linear classifier with the other settings the same as in EMNIST letter experiment described in Section \ref{subsubsec:hyperparam_imbalanced}. The subset includes 10 classes, with 7 major classes with 1000 samples per class and the other 3 minor classes with only 1 sample per class. Thus, the maximum imbalance ratio is $R = 1000$. To measure the Minority Collapse phenomenon, we follow Theorem 5 in~\cite{Fang21} and calculate the L2-norm of $\mathbf{w}_{i} - \mathbf{w}_{j}$ to show that for minority classes, their classifiers $\mathbf{w}_{i}$ are hardly distinguishable. Specifically, we denote $\mathbf{w}_{1}, \ldots, \mathbf{w}_{7}$ as the classifiers of 7 major classes, $\mathbf{w}_{8}, \mathbf{w}_{9}, \mathbf{w}_{10}$ as the classifiers of 3 minor classes. The matrix $\mathbf{W}_{\text{diff}}$ with $i$-th row, $j$-column entries are squared L2-norm of $\mathbf{w}_{i} - \mathbf{w}_{j}$ is as following (results are rounded to 2 decimal places):

\begin{align}
    \mathbf{W}_{\text{diff}} = 
    \begin{bmatrix} 
    0.00 & 61.80 & 61.70 & 61.70 & 61.78 & 61.73 & 61.78 & 31.17 & 31.17 & 31.17 \\ 61.80 & 0.00 & 61.75 & 61.77 & 61.82 & 61.78 & 61.84 & 31.22 & 31.22 & 31.22 \\ 61.70 & 61.75 & 0.00 & 61.66 & 61.68 & 61.69 & 61.74 & 31.13 & 31.13 & 31.13 \\ 61.70 & 61.77 & 61.66 & 0.00 & 61.75 & 61.67 & 61.76 & 31.14 & 31.14 & 31.14 \\ 61.78 & 61.82 & 61.68 & 61.75 & 0.00 & 61.74 & 61.81 & 31.19 & 31.19 & 31.19 \\ 61.73 & 61.78 & 61.69 & 61.67 & 61.74 & 0.00 & 61.77 & 31.16 & 31.16 & 31.16 \\ 61.78 & 61.84 & 61.74 & 61.76 & 61.81 & 61.77 & 0.00 & 31.21 & 31.21 & 31.21 \\ 31.17 & 31.22 & 31.13 & 31.14 & 31.19 & 31.16 & 31.21 & 0.00 & 0.60 & 0.60 \\
    31.17 & 31.22 & 31.13 & 31.14 & 31.19 & 31.16 & 31.21 & 0.60 & 0.00 & 0.60 \\
    31.17 & 31.22 & 31.13 & 31.14 & 31.19 & 31.16 & 31.21 & 0.60 & 0.60 & 0.00
    \end{bmatrix}.
    \nonumber
\end{align}

We observe from matrix $W_{\text{diff}}$ that the distances between minority classes’ classifiers is significantly small (0.60), and thus they are very close to each other. This observation is aligned with “Minority Collapse” phenomenon and our result in Theorem~\ref{thm:UFM_imbalance}.

\subsubsection{Details of network training and hyperparameters for imbalanced data experiments}
\label{subsubsec:hyperparam_imbalanced}
\label{subsubsec:detail_training_imbalanced}
\textbf{Multilayer perceptron experiment with CIFAR10 dataset:} In this experiment, we use a subset of CIFAR10 dataset with training samples of each class in the list $\{500, 500, 400, 400, 300, 300, 200, 200, 100, 100\}$. We use a 6-layer MLP model with ReLU activation with removed activation as the backbone feature extractor. Hidden width of both the backbone model and the deep linear networks are set to be $2048$. Depth of the linear layers are selected from the set $\{1, 3, 6\}$. All models are trained with Adam optimizer and MSE loss for $12000$ epochs, no data augmentation, full batch gradient descent, learning rate $\num{1e-4}$ (divided by $10$ every $6000$ epochs), feature decay and weight decay are set to be $\num{1e-5}$.

\textbf{Multilayer perceptron experiment with EMNIST letter dataset:} In this experiment, we use the same settings as described in MLP experiment on CIFAR10 dataset. The imbalanced training set is randomly sampled from EMNIST letter training set. We sample 1 major class with $5000$ samples, $5$ medium classes with $600$ samples per class, and $20$ minor class with $50$ samples per class. The optimization scheme is identical to the aforementioned MLP experiment on CIFAR10 imbalanced dataset.

\textbf{Direct optimization experiment:} In this experiment, we replicate the optimization problem \eqref{eq:UFM_general_linear} in imbalance data setting. We set $K = 4$ and $n_1 = 200, n_2 = 100, n_3 = n_4 = 50 , d_M = \ldots = d_1 = 64$. Similar to the direct optimization experiment in balance case, all $\lambda$'s are set to be $\num{5e-4}$. $\mathbf{W}_{M}, \ldots, \mathbf{W}_{1}$ and $\mathbf{H}_{1}$ are optimized by stochastic gradient descent for $30000$ iterations, with learning rate $0.1$.

\section{Proof of Theorem \ref{thm:bias-free}}
\label{sec:proofs_balanced}
First we state the proof for UFM bias-free with three layers of weights with same width across layers, as a warm-up for our approach in the next proofs.

\subsection{Warm-up Case: UFM with Three Layers of Weights}
Consider the following bias-free optimization problem:
\begin{align}
    \min _{\mathbf{W}_{3}, \mathbf{W}_{2}, \mathbf{W}_{1}, \mathbf{H}_{1}} \frac{1}{2N} \| \mathbf{W}_{3} \mathbf{W}_{2} \mathbf{W}_{1}
    \mathbf{H}_{1} - \mathbf{Y} \|_F^2 + \frac{\lambda_{W_{3}}}{2} \| \mathbf{W}_{3} \|^2_F +
    \frac{\lambda_{W_{2}}}{2} \| \mathbf{W}_{2} \|^2_F + \frac{\lambda_{W_{1}}}{2} \| \mathbf{W}_{1} \|^2_F +
    \frac{\lambda_{H_{1}}}{2} \| \mathbf{H}_{1} \|^2_F
    \label{eq:3layers}
\end{align}
where $\lambda_{W_{3}}, \lambda_{W_{2}}, \lambda_{W_{1}}, \lambda_{H_{1}}$ are regularization hyperparameters, and $\mathbf{W}_{3} \in \mathbb{R}^{K \times d}$, $\mathbf{W}_{2} \in \mathbb{R}^{d \times d}$,
$\mathbf{W}_{1} \in \mathbb{R}^{d \times d}$, $\mathbf{H}_{1} \in \mathbb{R}^{d \times N}$ and $\mathbf{Y} \in \mathbb{R}^{K \times N}$. We assume $d \geq K$ for this problem.

\begin{proof}[Proof of Theorem \ref{thm:bias-free} with 3 layers of weight and $d \geq K$]
    \noindent By definition, any critical point $(\mathbf{W}_{3}, \mathbf{W}_{2}, \mathbf{W}_{1}, \mathbf{H}_{1})$ of the loss function \eqref{eq:3layers} satisfies the following :
\begin{align}
  \frac{\partial f}{\partial \mathbf{W}_{3}} =
  \frac{1}{N} (\mathbf{W}_{3} \mathbf{W}_{2} \mathbf{W}_{1} \mathbf{H}_{1} - \mathbf{Y})\mathbf{H}_{1}^{\top} \mathbf{W}_{1}^{\top} 
  \mathbf{W}_{2}^{\top} + \lambda_{W_{3}} \mathbf{W}_{3} = \mathbf{0} , \\
  \frac{\partial f}{\partial \mathbf{W}_{2}} =
  \frac{1}{N} \mathbf{W}_{3}^{\top} (\mathbf{W}_{3} \mathbf{W}_{2} \mathbf{W}_{1} \mathbf{H}_{1} - \mathbf{Y}) \mathbf{H}_{1}^{\top} \mathbf{W}_{1}^{\top} + \lambda_{W_{2}} \mathbf{W}_{2} = \mathbf{0} , \\
  \frac{\partial f}{\partial \mathbf{W}_{1}} =
  \frac{1}{N} \mathbf{W}_{2}^{\top} \mathbf{W}_{3}^{\top} (\mathbf{W}_{3} \mathbf{W}_{2} \mathbf{W}_{1} \mathbf{H}_{1} - \mathbf{Y}) \mathbf{H}_{1}^{\top}  + \lambda_{W_{1}} \mathbf{W}_{1} = \mathbf{0} , \\
  \frac{\partial f}{\partial \mathbf{H}_{1}} =
  \frac{1}{N} \mathbf{W}_{1}^{\top} \mathbf{W}_{2}^{\top} \mathbf{W}_{3}^{\top} (\mathbf{W}_{3} \mathbf{W}_{2} \mathbf{W}_{1} \mathbf{H}_{1} - \mathbf{Y})  +  \lambda_{H_{1}} \mathbf{H}_{1} = \mathbf{0} . \label{eq:derivative_H}
\end{align}

\noindent Next, from $\mathbf{W}_{3}^{\top}  \frac{\partial f}{\partial \mathbf{W}_{3}} - \frac{\partial f}{\partial \mathbf{W}_{2}} \mathbf{W}_{2}^{\top} = \mathbf{0}$, we have:
\begin{align}
    \lambda_{W_{3}} \mathbf{W}_{3}^{\top} \mathbf{W}_{3} = \lambda_{W_{2}} \mathbf{W}_{2} \mathbf{W}_{2}^{\top} . \label{eq:14}
\end{align}

Similarly, we also have:
\begin{align}
     \lambda_{W_{2}} \mathbf{W}_{2}^{\top} \mathbf{W}_{2} = \lambda_{W_{1}} \mathbf{W}_{1} \mathbf{W}_{1}^{\top} , \label{eq:15} 
     \\
    \lambda_{W_{1}} \mathbf{W}_{1}^{\top} \mathbf{W}_{1} =  \lambda_{H_{1}}\mathbf{H}_{1} \mathbf{H}_{1}^{\top} .
    \label{eq:16}
\end{align}

\noindent Also, from equation \eqref{eq:derivative_H}, by solving for $\mathbf{H}_{1}$, we have:
\begin{align}
     \mathbf{H}_{1} &= 
    ( \mathbf{W}_{1}^{\top}
    \mathbf{W}_{2}^{\top} 
    \mathbf{W}_{3}^{\top}
    \mathbf{W}_{3}
    \mathbf{W}_{2}
    \mathbf{W}_{1} + N \lambda_{H_{1}} \mathbf{I}
    )^{-1}
    \mathbf{W}_{1}^{\top}
    \mathbf{W}_{2}^{\top}  
    \mathbf{W}_{3}^{\top} \mathbf{Y} \nonumber
    \\
    &= 
     \left( \frac{\lambda_{W_{2}}}{\lambda_{W_{3}}} \mathbf{W}_{1}^{\top}
    (\mathbf{W}_{2}^{\top} 
    \mathbf{W}_{2})^{2}
    \mathbf{W}_{1} + N \lambda_{H_{1}} \mathbf{I}
    \right)^{-1}
    \mathbf{W}_{1}^{\top}
    \mathbf{W}_{2}^{\top}  
    \mathbf{W}_{3}^{\top} \mathbf{Y} \nonumber
    \\
    &= 
    \left( \frac{\lambda_{W_{1}}^{2}}{\lambda_{W_{3}}\lambda_{W_{2}}} (\mathbf{W}_{1}^{\top}
    \mathbf{W}_{1})^{3} + N \lambda_{H_{1}} \mathbf{I}
    \right)^{-1}
    \mathbf{W}_{1}^{\top}
    \mathbf{W}_{2}^{\top}  
    \mathbf{W}_{3}^{\top} \mathbf{Y} ,
    \label{eq:17}
\end{align}
where we use equations \eqref{eq:14} and \eqref{eq:15} for the derivation.
\\

Now, let $\mathbf{W}_{1} = \mathbf{U}_{W_{1}} \mathbf{S}_{W_{1}} \mathbf{V}_{W_{1}}^{\top}$ be the SVD decomposition of $\mathbf{W}_{1}$ with $\mathbf{U}_{W_{1}}, \mathbf{V}_{W_{1}} \in \mathbb{R}^{d \times d}$ are orthonormal matrix and $\mathbf{S}_{W_{1}} \in \mathbb{R}^{d \times d}$ is a diagonal matrix with \textbf{decreasing} non-negative singular values. We note that from equations \eqref{eq:14}-\eqref{eq:16}, we have $\operatorname{rank}(\mathbf{W}_{3}^{\top} \mathbf{W}_{3}) = \operatorname{rank} (\mathbf{W}_{3}) = \operatorname{rank} (\mathbf{W}_{2}) = \operatorname{rank} (\mathbf{W}_{1}) = \operatorname{rank} (\mathbf{H}_{1})$ and is at most $K$. We denote the $K$ singular values (some of them can be $0$'s) of $\mathbf{W}_{1}$ as $\left\{s_{k}\right\}_{k=1}^{K}$.
\\

\noindent From equation \eqref{eq:15}, we have:
\begin{align}
    \mathbf{W}_{2}^{\top} \mathbf{W}_{2}
    = \frac{\lambda_{W_{1}}}{\lambda_{W_{2}}} \mathbf{W}_{1} \mathbf{W}_{1}^{\top} = \frac{\lambda_{W_{1}}}{\lambda_{W_{2}}} 
    \mathbf{U}_{W_{1}} \mathbf{S}_{W_{1}}^{2} \mathbf{U}_{W_{1}}^{\top} = \mathbf{U}_{W_{1}} \mathbf{S}_{W_{2}}^{2} \mathbf{U}_{W_{1}}^{\top}, \nonumber
\end{align}
where $\mathbf{S}_{W_{2}} = \sqrt{\frac{\lambda_{W_{1}}}{\lambda_{W_{2}}}} \mathbf{S}_{W_{1}} \in \mathbb{R}^{d \times d}$. This means that $\mathbf{S}_{W_{2}}^{2}$ contains the eigenvalues and the columns of $\mathbf{U}_{W_{1}}$ are the eigenvectors of $\mathbf{W}_{2}^{\top} \mathbf{W}_{2}$. Hence, we can write the SVD decomposition of $\mathbf{W}_{2}$ as $\mathbf{W}_{2} = \mathbf{U}_{W_{2}} \mathbf{S}_{W_{2}} \mathbf{U}_{W_{1}}^{\top}$ with orthonormal matrix $\mathbf{U}_{W_{2}} \in \mathbb{R}^{d \times d}$. 
\\  

\noindent By making similar arguments for $\mathbf{W}_{3}$, from equation \eqref{eq:14}:
\begin{align}
    \mathbf{W}_{3}^{\top} \mathbf{W}_{3}
    = \frac{\lambda_{W_{2}}}{\lambda_{W_{3}}} \mathbf{W}_{2} \mathbf{W}_{2}^{\top} = \frac{\lambda_{W_{2}}}{\lambda_{W_{3}}} \mathbf{U}_{W_{2}} \mathbf{S}_{W_{2}}^{2} \mathbf{U}_{W_{2}}^{\top} =
    \frac{\lambda_{W_{1}}}{\lambda_{W_{3}}}
    \mathbf{U}_{W_{2}} \mathbf{S}_{W_{1}}^{2} \mathbf{U}_{W_{2}}^{\top}
    = \mathbf{U}_{W_{2}} \mathbf{S}_{W_{3}}^{\top} \mathbf{S}_{W_{3}} \mathbf{U}_{W_{2}}^{\top}, \nonumber
\end{align}
with $\mathbf{S}_{W_{3}} = 
\sqrt{\frac{\lambda_{W_{1}}}{\lambda_{W_{3}}}} \begin{bmatrix}
\operatorname{diag} (s_{1}, s_{2},\ldots, s_{K}) & \mathbf{0}_{K \times (d-K)}
\end{bmatrix}
\in \mathbb{R}^{K \times d}$, we can write SVD decomposition of $\mathbf{W}_{3}$ as $\mathbf{W}_{3} = \mathbf{U}_{W_{3}} \mathbf{S}_{W_{3}} \mathbf{U}_{W_{2}}^{\top}$ with orthonormal matrix $\mathbf{U}_{W_{3}} \in \mathbb{R}^{d \times d}$. 
\\

\noindent Using these SVD in the RHS of equation \eqref{eq:17} yields:
\begin{align}
    \mathbf{H}_{1}
    &= 
    \left( \frac{\lambda_{W_{1}}^{2}}{\lambda_{W_{3}}\lambda_{W_{2}}} (\mathbf{W}_{1}^{\top}
    \mathbf{W}_{1})^{3} + N \lambda_{H_{1}} \mathbf{I}
    \right)^{-1}
    \mathbf{W}_{1}^{\top}
    \mathbf{W}_{2}^{\top}  
    \mathbf{W}_{3}^{\top} \mathbf{Y} \nonumber \\
    &= 
    \left( \frac{\lambda_{W_{1}}^{2}}{\lambda_{W_{3}}\lambda_{W_{2}}} \mathbf{V}_{W_{1}} \mathbf{S}_{W_{1}}^{6} \mathbf{V}_{W_{1}}^{\top} + N \lambda_{H_{1}} \mathbf{I} \right)^{-1}
     \mathbf{W}_{1}^{\top}
    \mathbf{W}_{2}^{\top}  
    \mathbf{W}_{3}^{\top} \mathbf{Y} \nonumber
    \\
    &= \left( \frac{\lambda_{W_{1}}^{2}}{\lambda_{W_{3}}\lambda_{W_{2}}} \mathbf{V}_{W_{1}} \mathbf{S}_{W_{1}}^{6} \mathbf{V}_{W_{1}}^{\top} + N \lambda_{H_{1}} \mathbf{I} \right)^{-1}
    \mathbf{V}_{W_{1}} \mathbf{S}_{W_{1}} 
    \mathbf{S}_{W_{2}} \mathbf{S}_{W_{3}}^{\top} \mathbf{U}_{W_{3}}^{\top} \mathbf{Y} \nonumber \\
    &= \mathbf{V}_{W_{1}} 
    \left( \frac{\lambda_{W_{1}}^{2}}{\lambda_{W_{3}}\lambda_{W_{2}}} \mathbf{S}_{W_{1}}^{6}  + N \lambda_{H_{1}} \mathbf{I} \right)^{-1}
    \mathbf{S}_{W_{1}} 
    \mathbf{S}_{W_{2}} \mathbf{S}_{W_{3}}^{\top}
    \mathbf{U}_{W_{3}}^{\top} \mathbf{Y} \nonumber \\
    &= \mathbf{V}_{W_{1}}
     \left( \frac{\lambda_{W_{1}}^{2}}{\lambda_{W_{3}}\lambda_{W_{2}}} \mathbf{S}_{W_{1}}^{6} + N \lambda_{H_{1}} \mathbf{I} \right)^{-1}
     \sqrt{\frac{\lambda_{W_{1}}^{2}}{\lambda_{W_{3}}\lambda_{W_{2}}}} \begin{bmatrix}
    \operatorname{diag} (s_{1}^{3}, s_{2}^{3},\ldots, s_{K}^{3}) \\ \mathbf{0}_{(d-K) \times K}
    \end{bmatrix} 
    \mathbf{U}_{W_{3}}^{\top} \mathbf{Y} \nonumber \\
    &= \mathbf{V}_{W_{1}}
    \underbrace{
    \begin{bmatrix}
    \operatorname{diag} \left(
        \frac{\sqrt{c} s_{1}^{3}}
        {c s_{1}^{6} + N \lambda_{H_{1}}},\ldots,
        \frac{\sqrt{c} s_{K}^{3}}
        {c s_{K}^{6} + N \lambda_{H_{1}}}
        \right) \\ \mathbf{0}
    \end{bmatrix}}_{\mathbf{C} \in \mathbb{R}^{d \times K}}
    \mathbf{U}_{W_{3}}^{\top} \mathbf{Y} \nonumber \\
    &= \mathbf{V}_{W_{1}} \mathbf{C} \mathbf{U}_{W_{3}}^{\top} \mathbf{Y} , \label{eq:H_form}
\end{align}
with $c :=  \frac{\lambda_{W_{1}}^{2}}{\lambda_{W_{3}}\lambda_{W_{2}}}$. We further have:
\begin{align}
    \mathbf{W}_{3} \mathbf{W}_{2} \mathbf{W}_{1} \mathbf{H} 
    &= \mathbf{U}_{W_{3}} \mathbf{S}_{W_{3}}
    \mathbf{S}_{W_{2}} \mathbf{S}_{W_{1}}
    \mathbf{V}_{W_{1}}^{\top}
    \mathbf{V}_{W_{1}} \mathbf{C} \mathbf{U}_{W_{3}}^{\top} \mathbf{Y} \nonumber \\
    &= \mathbf{U}_{W_{3}} \operatorname{diag}
    \left(\frac{c s_{1}^{6}}
        {c s_{1}^{6} + N \lambda_{H_{1}}},\ldots, 
        \frac{c s_{K}^{6}}
        {c s_{K}^{6} + N \lambda_{H_{1}}}
        \right) \mathbf{U}_{W_{3}}^{\top} \mathbf{Y} \\
    \Rightarrow \mathbf{W}_{3} \mathbf{W}_{2} \mathbf{W}_{1} \mathbf{H} - \mathbf{Y}
    &= \mathbf{U}_{W_{3}} \left( \operatorname{diag}
    \left(\frac{c s_{1}^{6}}
        {c s_{1}^{6} + N \lambda_{H_{1}}},\ldots, 
        \frac{c s_{K}^{6}}
        {c s_{K}^{6} + N \lambda_{H_{1}}}
        \right) - \mathbf{I}_{K}
        \right)
        \mathbf{U}_{W_{3}}^{\top} \mathbf{Y} \nonumber \\
    &= \mathbf{U}_{W_{3}} 
    \underbrace{\operatorname{diag}
    \left(\frac{- N \lambda_{H_{1}} }
        {c s_{1}^{6} + N \lambda_{H_{1}}},\ldots, 
        \frac{- N \lambda_{H_{1}}}
        {c s_{K}^{6} + N \lambda_{H_{1}}}
        \right)}_{\mathbf{D} \in \mathbb{R}^{K \times K}}
    \mathbf{U}_{W_{3}}^{\top} \mathbf{Y} \nonumber \\
    &= \mathbf{U}_{W_{3}} \mathbf{D} \mathbf{U}_{W_{3}}^{\top} \mathbf{Y} .
\end{align} 

\noindent Next, we will calculate the Frobenius norm of $ \mathbf{W}_{3} \mathbf{W}_{2} \mathbf{W}_{1} \mathbf{H} - \mathbf{Y}$:
\begin{align}
    \| \mathbf{W}_{3} \mathbf{W}_{2} \mathbf{W}_{1} \mathbf{H}_{1} - \mathbf{Y}  \|_F^2
     &= \| \mathbf{U}_{W_{3}}
     \mathbf{D}
     \mathbf{U}_{W_{3}}^{\top}
     \mathbf{Y}  \|_F^2
     = \operatorname{trace}
     (\mathbf{U}_{W_{3}}
     \mathbf{D}
     \mathbf{U}_{W_{3}}^{\top}
     \mathbf{Y} (\mathbf{U}_{W_{3}}
     \mathbf{D}
     \mathbf{U}_{W_{3}}^{\top}
     \mathbf{Y})^{\top}  ) \nonumber \\
     &= \operatorname{trace}
     (\mathbf{U}_{W_{3}}
     \mathbf{D}
     \mathbf{U}_{W_{3}}^{\top}
     \mathbf{Y} \mathbf{Y}^{\top} \mathbf{U}_{W_{3}} \mathbf{D}
     \mathbf{U}_{W_{3}}^{\top}
     ) 
     =  \operatorname{trace}
     ( \mathbf{D}^{2} \mathbf{U}_{W_{3}}^{\top}  \mathbf{Y} \mathbf{Y}^{\top}   \mathbf{U}_{W_{3}} )  \nonumber \\
     &= n \operatorname{trace}
     ( \mathbf{D}^{2}) = n \sum_{k=1}^{K} 
     \left( \frac{- N \lambda_{H_{1}} }
    {c s_{k}^{6} + N \lambda_{H_{1}}} \right)^{2} .
    \label{eq:3_W_norm}
\end{align}
where we use the fact $\mathbf{Y} \mathbf{Y}^{\top} = n \mathbf{I}_{K}$ and $\mathbf{U}_{W_{3}}$ is orthonormal matrix.
\\

\noindent Similarly, from the RHS of equation \eqref{eq:H_form}, we have:
\begin{align}
     \| \mathbf{H}_{1} \|_F^2
    &= \operatorname{trace}
    ( \mathbf{V}_{W_{1}} 
    \mathbf{C} \mathbf{U}_{W_{3}}^{\top}
    \mathbf{Y} \mathbf{Y}^{\top}
    \mathbf{U}_{W_{3}} \mathbf{C}^{\top}
    \mathbf{V}_{W_{1}}^{\top}
    ) = 
    \operatorname{trace} 
    (  \mathbf{C}^{\top}  \mathbf{C} \mathbf{U}_{W_{3}}^{\top}
    \mathbf{Y} \mathbf{Y}^{\top}
    \mathbf{U}_{W_{3}}  ) \nonumber \\
    &= n \operatorname{trace} 
    (  \mathbf{C}^{\top} \mathbf{C}) 
    = n \sum_{k=1}^{K} \left( \frac{\sqrt{c} s_{k}^{3}}
        {c s_{k}^{6} + N \lambda_{H_{1}}} \right)^{2} .
    \label{eq:3_H_norm}
\end{align}

\noindent Now, we will plug equations \eqref{eq:3_W_norm}, \eqref{eq:3_H_norm}, and the SVD decomposition of $\mathbf{W}_{2}, \mathbf{W}_{1}, \mathbf{H}$ into the function \eqref{eq:3layers} and note that orthonormal matrix does not change the Frobenius form:
\begin{align}
    &f(\mathbf{W}_{3}, \mathbf{W}_{2}, \mathbf{W}_{1}, \mathbf{H}_{1}) = 
     \frac{1}{2 N}\left\|\mathbf{W}_{3} \mathbf{W}_{2} \mathbf{W}_{1} \mathbf{H}-\mathbf{I}_{K}\right\|_{F}^{2} + \frac{\lambda_{W_{3}}}{2}\left\|\mathbf{W}_{3}\right\|_{F}^{2} + \frac{\lambda_{W_{2}}}{2}\left\|\mathbf{W}_{2}\right\|_{F}^{2}+\frac{\lambda_{W_{1}}}{2}\left\|\mathbf{W}_{1}\right\|_{F}^{2} 
    +\frac{ \lambda_{H_{1}}}{2}\left\|\mathbf{H_{1}}\right\|_{F}^{2} \nonumber
    \\
    &= \frac{1}{2K} \sum_{k=1}^{K} 
     \left( \frac{- N \lambda_{H_{1}} }
    {c s_{k}^{6} + N \lambda_{H_{1}}} \right)^{2} 
    + \frac{\lambda_{W_{3}}}{2}
    \sum_{k=1}^{K} \frac{\lambda_{W_{1}}}{\lambda_{W_{3}}} s_{k}^{2}
    + \frac{\lambda_{W_{2}}}{2}
    \sum_{k=1}^{K} \frac{\lambda_{W_{1}}}{\lambda_{W_{2}}} s_{k}^{2}
    + \frac{\lambda_{W_{1}}}{2}
    \sum_{k=1}^{K} s_{k}^{2} 
    + \frac{n \lambda_{H_{1}}}{2} 
    \sum_{k=1}^{K}  \frac{c s_{k}^{6}}
        {(c s_{k}^{6} + N \lambda_{H_{1}})^{2}} \nonumber \\
    &= \frac{n \lambda_{H_{1}}}{2} \sum_{k=1}^{K} 
     \frac{1}{c s_{k}^{6} + N \lambda_{H_{1}}} + \frac{3 \lambda_{W_{1}}}{2}  \sum_{k=1}^{K} s_{k}^{2} \nonumber \\
    &= \frac{1}{2K}  \sum_{k=1}^{K} \left(
    \frac{1}{ \frac{c s_{k}^{6}}{N \lambda_{H_{1}}} + 1} + 3 K \lambda_{W_{1}}
    \frac{\sqrt[3]{N \lambda_{H_{1}}}}{\sqrt[3]{c}}
    \frac{\sqrt[3]{c} s_{k}^{2}}{\sqrt[3]{N \lambda_{H_{1}}}}
    \right) \nonumber \\
    &= \frac{1}{2K} \sum_{k=1}^{K} \left(
    \frac{1}{x_{k}^{3}+1} + b x_{k}
    \right),
    \label{eq:3_f_form}
\end{align}
with $x_{k} := \frac{\sqrt[3]{c} s_{k}^{2}}{\sqrt[3]{N \lambda_{H_{1}}}}$ and $b:= 3 K \lambda_{W_{1}}
    \frac{\sqrt[3]{N \lambda_{H_{1}}}}{\sqrt[3]{c}} = 3K \sqrt[3]{N \lambda_{W_{3}} \lambda_{W_{2}} \lambda_{W_{1}} \lambda_{H_{1}}}$.
\\

\noindent Next, we consider the function:
\begin{align}
    g(x) = \frac{1}{x^{3} + 1} + bx \text{ with } x \geq 0, b > 0.
\end{align}
Clearly, $g(0) = 1$. As in equation \eqref{eq:3_f_form}, $f(\mathbf{W}_{3}, \mathbf{W}_{2}, \mathbf{W}_{1}, \mathbf{H})$ is the sum of $g(x_{k})$ (with separable $x_{k}$). Hence, if we can minimize $g(x)$, we will finish lower bounding $f(\mathbf{W}_{3}, \mathbf{W}_{2}, \mathbf{W}_{1}, \mathbf{H})$. We consider the following cases for $g(x)$:
\begin{itemize}
    \item If $b > \frac{\sqrt[3]{4}}{3}$: For $x > 0$, we always have $g(x) > \frac{1}{x^{3} + 1} +  \frac{\sqrt[3]{4}}{3}x \geq 1 = g(0)$. Indeed, the second inequality is equivalent to:
    \begin{align}
        &\frac{1}{x^{3} + 1} +  \frac{\sqrt[3]{4}}{3}x \geq 1 \nonumber \\
        \Leftrightarrow \quad &\frac{\sqrt[3]{4}}{3} x^{4} - x^{3} + \frac{\sqrt[3]{4}}{3} x \geq 0 \nonumber \\
        \Leftrightarrow \quad &x (x + \frac{1}{\sqrt[3]{4}}) (x - \sqrt[3]{2})^{2} \geq 0 . \nonumber
    \end{align}
    Therefore, in this case, $g(x)$ is minimized at $x = 0$ with minimal value of $1$.
    
    \item If $b = \frac{\sqrt[3]{4}}{3}$: Similar as above, we have:
    \begin{align}
        &g(x) \geq 1 \nonumber \\
        \Leftrightarrow \quad &x (x + \frac{1}{\sqrt[3]{4}}) (x - \sqrt[3]{2})^{2} \geq 0 . \nonumber
    \end{align}
    In this case, $g(x)$ is minimized at $x = 0$ or $x = \sqrt[3]{2}$.
    
    \item If $b < \frac{\sqrt[3]{4}}{3}$: We take the first and second derivatives of $g(x)$:
    \begin{align}
        g^{\prime}(x) &= b - \frac{3x^{2}}{(x^{3}+1)^{2}}, \nonumber \\
        g^{\prime \prime}(x) &= \frac{12x^4 - 6x}{(x^{3}+1)^{3}} . \nonumber
    \end{align}
    We have: $g^{\prime \prime}(x) = 0 \Leftrightarrow x = 0$ or $x = \sqrt[3]{\frac{1}{2}}$. Therefore, with $x \geq 0$, $g^{\prime}(x) = 0$ has at most two solutions. We also have $g^{\prime} \left(\sqrt[3]{\frac{1}{2}} \right) = b - \frac{2 \sqrt[3]{2}}{3} < 0$ (since $b < \frac{\sqrt[3]{4}}{3}$). Thus, together with the fact that $g^{\prime}(0) = b > 0$ and $g(+ \infty) > 0$, $g^{\prime}(x) = 0$ has exactly two solutions, we call it $x_1$ and $x_2$ ($x_{1} < \sqrt[3]{\frac{1}{2}} < x_{2}$). Next, we note that $g^{\prime} (x_2) = 0$ and $g^{\prime} (x) > 0 \quad \forall x > x_2$ (since $g^{\prime \prime} (x) > 0 \quad  \forall x > x_2$). In the meanwhile, $g^{\prime}(\sqrt[3]{2}) = b - \frac{ \sqrt[3]{4}}{3} < 0$. Hence, we must have $x_{2} > \sqrt[3]{2}$.
    \\
    
    From the variation table, we can see that $g(x_{2}) < g(\sqrt[3]{2}) = \frac{1}{3} + b \sqrt[3]{2} < \frac{1}{3} + \frac{2}{3} = 1 = g(0)$. Hence, the minimizer in this case is the largest solution $x > \sqrt[3]{2}$ of the equation $g^{\prime} (x) = 0$.
\end{itemize}
\begin{table}[h]
\centering
\begin{tabular}{c|cccccc}
$x$                 & 0 & $x_1$    & $\sqrt[3]{\frac{1}{2}}$    & $\sqrt[3]{2}$                & $x_2$    & $\infty$ \\ \hline
$g^{\prime \prime}$ & 0 & -        & 0                       & +                 & +        & +        \\ \hline
$g^{\prime}$        & + & 0        & -                       & -                 & 0        & +        \\ \hline
$g$                 & 1 & $g(x_1)$ & $g\left(\sqrt[3]{\frac{1}{2}}\right)$ & $\frac{1}{3} + b \sqrt[3]{2}$ & $g(x_2)$ & $\infty$ \\ \hline
\end{tabular}
\end{table}    

\noindent From the above result, we can summarize the original problem as follows:
\begin{itemize}
    \item If $b = 3 K  \sqrt[3]{K n \lambda_{W_{3}} \lambda_{W_{2}} \lambda_{W_{1}} \lambda_{H_{1}}} > \frac{\sqrt[3]{4}}{3}$: all the singular values of $\mathbf{W}_{1}^{*}$ are $0$'s. Therefore, the singular values of $\mathbf{W}_{3}^{*}, \mathbf{W}_{1}^{*}, \mathbf{H}^{*}$ are also all $0$'s. In this case, $f(\mathbf{W}_{3}, \mathbf{W}_{2}, \mathbf{W}_{1}, \mathbf{H}_{1})$ is minimized at $(\mathbf{W}_{3}^{\ast}, \mathbf{W}_{2}^{\ast}, \mathbf{W}_{1}^{\ast}, \mathbf{H}_{1}^{\ast}) = (\mathbf{0}, \mathbf{0}, \mathbf{0}, \mathbf{0})$.
    
    \item If $b = 3 K  \sqrt[3]{K n \lambda_{W_{3}} \lambda_{W_{2}} \lambda_{W_{1}} \lambda_{H_{1}}} < \frac{\sqrt[3]{4}}{3}$: In this case, $\mathbf{W}_{1}^{\ast}$ has $K$ singular values, all of which are multiplier of the largest positive solution of the equation $b - \frac{3x^{2}}{(x^{3}+1)^{2}} = 0$, denoted as $s$. Hence, we have the compact SVD form (with a bit of notation abuse) of $\mathbf{W}_{1}^{\ast}$ as $\mathbf{W}_{1}^{\ast} = s \mathbf{U}_{W_{1}} \mathbf{V}_{W_{1}}^{\top}$ with semi-orthonormal matrices $\mathbf{U}_{W_{1}}, \mathbf{V}_{W_{1}} \in \mathbb{R}^{d \times K}$. We also have $\mathbf{U}_{W_{1}}^{\top} \mathbf{U}_{W_{1}} = \mathbf{I}_{K}$ and $\mathbf{V}_{W_{1}}^{\top} \mathbf{V}_{W_{1}} = \mathbf{I}_{K}$.
    \\
    
    Similarly, since the singular matrices of $\mathbf{W}_{3}, \mathbf{W}_{1}$ are aligned to $\mathbf{W}_{1}$'s, we also have:
    \begin{align}
         \mathbf{W}_{3}^{\ast}  &= \sqrt{\frac{\lambda_{W_{1}}}{\lambda_{W_{3}}}} s \mathbf{U}_{W_{3}} \mathbf{U}_{W_{2}}^{T},  \nonumber \\
        \mathbf{W}_{2}^{\ast} &= \sqrt{\frac{\lambda_{W_{1}}}{\lambda_{W_{2}}}} s
        \mathbf{U}_{W_{2}} \mathbf{U}_{W_{1}}^{\top}, \nonumber \\
        \mathbf{W}_{1}^{\ast} &= s \mathbf{U}_{W_{1}} \mathbf{V}_{W_{1}}^{\top}, \nonumber \\
        \mathbf{H}^{\ast}_{1} &= \frac{\sqrt{c} s^{3}}{c s^{6} + N \lambda_{H_{1}}} \mathbf{V}_{W_{1}} \mathbf{U}_{W_{3}}^{\top} \mathbf{Y}, \nonumber
    \end{align}
    with orthonormal matrices $\mathbf{U}_{W_{3}} \in \mathbb{R}^{K \times K}$, semi-orthonormal matrix $\mathbf{U}_{W_{2}}, \mathbf{U}_{W_{1}}, \mathbf{V}_{W_{1}} \in \mathbb{R}^{d \times K}$. 
    Let $\overline{\mathbf{H}}^{*} = \frac{\sqrt{c} s^{3}}{c s^{6} + N \lambda_{H_{1}}} \mathbf{V}_{W_{1}} \mathbf{U}_{W_{3}}^{\top} \in \mathbb{R}^{K \times K}$, we have: $\mathbf{H}^{*}_{1} = \overline{\mathbf{H}}^{*} \mathbf{Y} = \overline{\mathbf{H}}^{*} \otimes \mathbf{1}_{n}^{\top}$.
    \\
    
    We have the geometry of the global solutions as follows:
    \begin{align}
    \begin{gathered}
        \mathbf{W}_{3}^{\ast} \mathbf{W}_{3}^{\top \ast} \propto \mathbf{U}_{W_{3}} \mathbf{U}_{W_{2}}^{\top} \mathbf{U}_{W_{2}} \mathbf{U}_{W_{3}}^{\top} \propto \mathbf{I}_{K}, \\
        \overline{\mathbf{H}}^{\ast \top} \overline{\mathbf{H}}^{\ast} \propto \mathbf{U}_{W_{3}} \mathbf{V}_{W_{1}}^{\top} \mathbf{V}_{W_{1}} \mathbf{U}_{W_{3}}^{\top} \propto  \mathbf{I}_{K}, \\
        (\mathbf{W}_{3}^{\ast} \mathbf{W}_{2}^{\ast}) (\mathbf{W}_{3}^{\ast} \mathbf{W}_{2}^{\ast})^{\top} \propto (\mathbf{U}_{W_{3}} \mathbf{U}_{W_{2}}^{T} \mathbf{U}_{W_{2}} \mathbf{U}_{W_{1}}^{\top})(\mathbf{U}_{W_{3}} \mathbf{U}_{W_{2}}^{T} \mathbf{U}_{W_{2}} \mathbf{U}_{W_{1}}^{\top})^{\top} \propto \mathbf{I}_{K}, \\
        (\mathbf{W}_{1}^{\ast} \overline{\mathbf{H}}^{*})^{\top} (\mathbf{W}_{1}^{\ast} \overline{\mathbf{H}}^{*}) \propto (\mathbf{U}_{W_{1}} \mathbf{V}_{W_{1}}^{\top} \mathbf{V}_{W_{1}} \mathbf{U}_{W_{3}}^{\top})^{\top} (\mathbf{U}_{W_{1}} \mathbf{V}_{W_{1}}^{\top} \mathbf{V}_{W_{1}} \mathbf{U}_{W_{3}}^{\top}) \propto \mathbf{I}_{K}, \\
        (\mathbf{W}_{3}^{\ast} \mathbf{W}_{2}^{\ast} \mathbf{W}_{1}^{\ast}) (\mathbf{W}_{3}^{\ast} \mathbf{W}_{2}^{\ast} \mathbf{W}_{1}^{\ast})^{\top} \propto (\mathbf{U}_{W_{3}} \mathbf{V}_{W_{1}}^{\top})
        (\mathbf{U}_{W_{3}} \mathbf{V}_{W_{1}}^{\top})^{\top} \propto \mathbf{I}_{K}, \\
        (\mathbf{W}_{2}^{\ast} \mathbf{W}_{1}^{\ast} \overline{\mathbf{H}}^{*})^{\top} (\mathbf{W}_{2}^{\ast} \mathbf{W}_{1}^{\ast} \overline{\mathbf{H}}^{*}) \propto (\mathbf{U}_{W_{2}} \mathbf{U}_{W_{3}}^{\top})^{\top} (\mathbf{U}_{W_{2}} \mathbf{U}_{W_{3}}^{\top}) \propto \mathbf{I}_{K}, \\
    \end{gathered}
    \end{align}
    and,
    \begin{align}
        \mathbf{W}_{3}^{\ast} \mathbf{W}_{2}^{\ast} \mathbf{W}_{1}^{\ast} \overline{\mathbf{H}}^{*} \propto \mathbf{U}_{W_{3}} \mathbf{U}_{W_{2}}^{\top} \mathbf{U}_{W_{2}} \mathbf{V}_{W_{2}}^{\top} \mathbf{V}_{W_{2}} \mathbf{V}_{W_{1}}^{\top} \mathbf{V}_{W_{1}} \mathbf{U}_{W_{3}}^{\top} \propto \mathbf{I}_{K}.
    \end{align}
    
    Next, we can derive the alignments between weights and features as following:
    \begin{align}
    \begin{aligned}
         \mathbf{W}_{3}^{\ast} \mathbf{W}_{2}^{\ast} \mathbf{W}_{1}^{\ast} \propto \mathbf{U}_{W_{3}} \mathbf{V}_{W_{1}}^{\top} \propto \overline{\mathbf{H}}^{* \top}, \\
          \mathbf{W}_{2}^{\ast} \mathbf{W}_{1}^{\ast} \overline{\mathbf{H}}^{*} \propto \mathbf{U}_{W_{2}}   \mathbf{U}_{W_{3}}^{\top} \propto  \mathbf{W}_{3}^{\ast \top}, \\
        \mathbf{W}_{3}^{*} \mathbf{W}_{2}^{*} \propto 
        \mathbf{U}_{W_{3}} \mathbf{V}_{W_{2}}^{\top} \propto  (\mathbf{W}_{1}^{*} \overline{\mathbf{H}}^{*})^{\top}.
    \end{aligned}
    \end{align}
    
    \item If $b = 3 K  \sqrt[3]{K n \lambda_{W_{3}} \lambda_{W_{2}} \lambda_{W_{1}} \lambda_{H_{1}}} = \frac{\sqrt[3]{4}}{3}$: For this case, $x_{k}^{*}$ can either be $0$ or $\sqrt[3]{2}$, as long as $\{ x_{k}^{*} \}_{k=1}^{K}$ is a decreasing sequence. If all the singular values are $0$'s, we have the trivial global minima $(\mathbf{W}_{3}^{\ast}, \mathbf{W}_{2}^{\ast}, \mathbf{W}_{1}^{\ast}, \mathbf{H}_{1}^{\ast}) = (\mathbf{0}, \mathbf{0}, \mathbf{0}, \mathbf{0})$. If there are exactly $r \leq K$ positive singular values  $s_{1} = s_{2} = \ldots = s_{r} := s > 0$ and $s_{r+1} = \ldots = s_{K} = 0$, then we can write the compact SVD form of weight matrices and $\mathbf{H}_{1}^{*}$ as following:
     \begin{align}
         \mathbf{W}_{3}^{\ast}  &= \sqrt{\frac{\lambda_{W_{1}}}{\lambda_{W_{3}}}} s \mathbf{U}_{W_{3}} \mathbf{U}_{W_{2}}^{T},  \nonumber \\
        \mathbf{W}_{2}^{\ast} &= \sqrt{\frac{\lambda_{W_{1}}}{\lambda_{W_{2}}}} s
        \mathbf{U}_{W_{2}} \mathbf{U}_{W_{1}}^{\top}, \nonumber \\
        \mathbf{W}_{1}^{\ast} &= s \mathbf{U}_{W_{1}} \mathbf{V}_{W_{1}}^{\top}, \nonumber \\
        \mathbf{H}_{1}^{\ast} &= \frac{\sqrt{c} s^{3}}{c s^{6} + N \lambda_{H_{1}}} \mathbf{V}_{W_{1}} \mathbf{U}_{W_{3}}^{\top} \mathbf{Y} = \overline{\mathbf{H}}^{*} \mathbf{Y}, \nonumber
    \end{align}
    where $\mathbf{U}_{W_{3}}, \mathbf{U}_{W_{2}}, \mathbf{U}_{W_{1}}, \mathbf{V}_{W_{1}} 
   $ are semi-orthonormal matrices consist $r$ orthogonal columns. Additionally, we note that $\mathbf{U}_{W_{3}} \in \mathbb{R}^{K \times r}$ are created from orthonormal matrices size $K \times K$ with the removal of columns corresponding with singular values equal $0$. Thus, $\mathbf{U}_{W_{3}} \mathbf{U}_{W_{3}}^{\top}$ is the best rank-$r$ approximation of $\mathbf{I}_{K}$. From here, we can deduce the geometry of the following:
    \begin{align}
    \begin{gathered}
        \mathbf{W}_{3}^{*}
        \mathbf{W}_{3}^{* \top}
        \propto \overline{\mathbf{H}}^{* \top}
        \overline{\mathbf{H}}^{*} \propto  \mathbf{W}_{3}^{\ast} \mathbf{W}_{2}^{\ast} \mathbf{W}_{1}^{\ast} \overline{\mathbf{H}}^{*} \\
        \propto
        (\mathbf{W}_{3}^{\ast} \mathbf{W}_{2}^{\ast}) (\mathbf{W}_{3}^{\ast} \mathbf{W}_{2}^{\ast})^{\top}
        \propto 
        (\mathbf{W}_{1}^{\ast} \overline{\mathbf{H}})^{\top} (\mathbf{W}_{1}^{\ast} \overline{\mathbf{H}}) \\
        \propto 
         (\mathbf{W}_{3}^{\ast} \mathbf{W}_{2}^{\ast} \mathbf{W}_{1}^{\ast}) (\mathbf{W}_{3}^{\ast} \mathbf{W}_{2}^{\ast} \mathbf{W}_{1}^{\ast})^{\top}
         \propto
           (\mathbf{W}_{2}^{\ast} \mathbf{W}_{1}^{\ast} \overline{\mathbf{H}})^{\top} (\mathbf{W}_{2}^{\ast} \mathbf{W}_{1}^{\ast} \overline{\mathbf{H}})
         \propto  \mathcal{P}_{r}(\mathbf{I}_{K}),
    \end{gathered}
    \nonumber
    \end{align}
    where $\mathcal{P}_{r}(\mathbf{I}_{K})$ denotes the best rank-$r$ approximation of $\mathbf{I}_{K}$.
    The collapse of features $(\mathcal{NC}1)$ and the alignments between weights and features $(\mathcal{NC}3)$ are identical as the case $b < \frac{\sqrt[3]{4}}{3}$.
\end{itemize}
\end{proof}

\subsection{Supporting Lemmas for UFM Deep Linear Networks with M Layers of Weights}
\label{sec:lemmas}
Before deriving the proof for M layers linear network, from the proof of three layers of weights, we generalize some useful results that support the main proof. 

\noindent Consider MSE loss function with M layers linear network and arbitrary target matrix $\mathbf{Y} \in \mathbb{R}^{K \times N}$:
\begin{align}
    f(\mathbf{W}_{M}, \mathbf{W}_{M-1},\ldots, \mathbf{W}_{2}, \mathbf{W}_{1}, \mathbf{H}_{1})
    = \frac{1}{2N} \| \mathbf{W}_{M} \mathbf{W}_{M-1} \ldots \mathbf{W}_{2} \mathbf{W}_{1}
    \mathbf{H}_{1} - \mathbf{Y} \|_F^2 + \frac{\lambda_{W_{M}}}{2} \| \mathbf{W}_{M} \|^2_F \nonumber \\ +
    \frac{\lambda_{W_{M-1}}}{2} \| \mathbf{W}_{M-1} \|^2_F +
    \ldots   +
    \frac{\lambda_{W_{2}}}{2} \| \mathbf{W}_{2} \|^2_F + \frac{\lambda_{W_{1}}}{2} \| \mathbf{W}_{1} \|^2_F +
    \frac{\lambda_{H_{1}}}{2} \| \mathbf{H}_{1} \|^2_F,
\end{align}
with $\mathbf{W}_{M} \in \mathbb{R}^{K \times d_{M}}$, $\mathbf{W}_{M-1} \in \mathbb{R}^{d_{M} \times d_{M-1}}, \mathbf{W}_{M-2} \in \mathbb{R}^{d_{M-1} \times d_{M-2}}, \ldots, \mathbf{W}_{2} \in \mathbb{R}^{d_{3} \times d_{2}}, \mathbf{W}_{1}  \in \mathbb{R}^{d_{2} \times d_{1}}, \mathbf{H}_{1} \in \mathbb{R}^{d_{1} \times K}$ with $d_{M}, d_{M-1}, \ldots, d_{2}, d_{1}$ are arbitrary positive integers.

\begin{lemma}
\label{lm:1}
    The partial derivative of $\| \mathbf{W}_{M} \mathbf{W}_{M-1} \ldots \mathbf{W}_{2} \mathbf{W}_{1}
    \mathbf{H}_{1} - \mathbf{Y} \|_F^2 $ w.r.t $\mathbf{W}_{i}$  $(i= 1,2,\ldots, M)$:
    \begin{align}
    \begin{gathered}
    \frac{1}{2}
    \frac{\partial \| \mathbf{W}_{M} \mathbf{W}_{M-1} \ldots \mathbf{W}_{i} \ldots \mathbf{W}_{2} \mathbf{W}_{1}
    \mathbf{H}_{1} - \mathbf{Y} \|_F^2 }{\partial \mathbf{W}_{i}}
    = \nonumber \\ \mathbf{W}_{i+1}^{\top} \mathbf{W}_{i+2}^{\top} \ldots \mathbf{W}_{M}^{\top}
    (\mathbf{W}_{M} \mathbf{W}_{M-1} \ldots \mathbf{W}_{i} \ldots \mathbf{W}_{2} \mathbf{W}_{1}
    \mathbf{H}_{1} - \mathbf{Y} )\mathbf{H}_{1}^{\top} \mathbf{W}_{1}^{\top} \ldots \mathbf{W}_{i-1}^{\top}. \nonumber
    \end{gathered}
    \end{align}
\end{lemma}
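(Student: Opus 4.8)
The plan is to treat the squared Frobenius norm as a trace and compute its differential when only $\mathbf{W}_{i}$ is perturbed, then read off the gradient by matching against the canonical identity $df = \operatorname{trace}(\mathbf{G}^{\top} d\mathbf{W}_{i})$, which identifies $\mathbf{G} = \partial f / \partial \mathbf{W}_{i}$. First I would abbreviate the residual by $\mathbf{R} := \mathbf{W}_{M} \mathbf{W}_{M-1} \cdots \mathbf{W}_{1} \mathbf{H}_{1} - \mathbf{Y}$ and introduce the prefix and suffix products $\mathbf{A}_{i} := \mathbf{W}_{M} \cdots \mathbf{W}_{i+1}$ and $\mathbf{B}_{i} := \mathbf{W}_{i-1} \cdots \mathbf{W}_{1} \mathbf{H}_{1}$, adopting the convention that $\mathbf{A}_{M}$ and $\mathbf{B}_{1}$ are empty products equal to the identity. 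With this notation the full product factors as $\mathbf{W}_{M} \cdots \mathbf{W}_{1} \mathbf{H}_{1} = \mathbf{A}_{i} \mathbf{W}_{i} \mathbf{B}_{i}$, so that $\mathbf{R} = \mathbf{A}_{i} \mathbf{W}_{i} \mathbf{B}_{i} - \mathbf{Y}$, and the dependence on $\mathbf{W}_{i}$ is isolated to a single linear factor.

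Next I would write $\tfrac{1}{2}\|\mathbf{R}\|_{F}^{2} = \tfrac{1}{2}\operatorname{trace}(\mathbf{R}\mathbf{R}^{\top})$ and differentiate with respect to $\mathbf{W}_{i}$ alone. Since only $\mathbf{W}_{i}$ varies, we have $d\mathbf{R} = \mathbf{A}_{i} (d\mathbf{W}_{i}) \mathbf{B}_{i}$, and hence $d\big(\tfrac{1}{2}\|\mathbf{R}\|_{F}^{2}\big) = \operatorname{trace}(\mathbf{R}^{\top} d\mathbf{R}) = \operatorname{trace}\big(\mathbf{R}^{\top} \mathbf{A}_{i} (d\mathbf{W}_{i}) \mathbf{B}_{i}\big)$. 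Applying the cyclic invariance of the trace to move the factor $d\mathbf{W}_{i}$ to the far right gives $\operatorname{trace}\big(\mathbf{B}_{i} \mathbf{R}^{\top} \mathbf{A}_{i} (d\mathbf{W}_{i})\big)$. Matching this against $\operatorname{trace}(\mathbf{G}^{\top} d\mathbf{W}_{i})$ yields $\mathbf{G}^{\top} = \mathbf{B}_{i} \mathbf{R}^{\top} \mathbf{A}_{i}$, and therefore $\mathbf{G} = \mathbf{A}_{i}^{\top} \mathbf{R} \mathbf{B}_{i}^{\top}$. Substituting back $\mathbf{A}_{i}^{\top} = \mathbf{W}_{i+1}^{\top} \cdots \mathbf{W}_{M}^{\top}$ and $\mathbf{B}_{i}^{\top} = \mathbf{H}_{1}^{\top} \mathbf{W}_{1}^{\top} \cdots \mathbf{W}_{i-1}^{\top}$ reproduces exactly the claimed expression $\mathbf{W}_{i+1}^{\top} \cdots \mathbf{W}_{M}^{\top}\, \mathbf{R}\, \mathbf{H}_{1}^{\top} \mathbf{W}_{1}^{\top} \cdots \mathbf{W}_{i-1}^{\top}$.

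The argument is entirely routine, so there is no genuine obstacle; the only points requiring care are bookkeeping ones. One must correctly reverse the order of the factors and insert the transposes when passing from $\mathbf{B}_{i} \mathbf{R}^{\top} \mathbf{A}_{i}$ to its transpose, and one must handle the boundary cases $i = 1$ and $i = M$, in which the suffix $\mathbf{W}_{1}^{\top} \cdots \mathbf{W}_{i-1}^{\top}$ or the prefix $\mathbf{W}_{i+1}^{\top} \cdots \mathbf{W}_{M}^{\top}$ collapses to the identity. Both are dealt with cleanly by the empty-product convention introduced at the outset. An alternative entrywise derivation via the chain rule would also work but is more cumbersome, so I would prefer the trace-differential route above.
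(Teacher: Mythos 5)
Your derivation is correct: the trace-differential computation, the cyclic permutation to isolate $d\mathbf{W}_{i}$, and the final transposition all check out, and the empty-product convention handles $i=1$ and $i=M$ properly. The paper itself offers no proof of this lemma (it only cites a reference where the identity appears), so your argument simply supplies the standard computation that the authors omitted.
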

\noindent This result is common and the proof can be found in \cite{Yun17}, for example.

\begin{lemma}
\label{lm:2}
    For any critical point $(\mathbf{W}_{M}, \mathbf{W}_{M-1},\ldots, \mathbf{W}_{2}, \mathbf{W}_{1}, \mathbf{H}_{1})$ of $f$, we have the following:
    \begin{align}
    \begin{gathered}
        \lambda_{W_{M}} \mathbf{W}^{\top}_{M} \mathbf{W}_{M} = \lambda_{W_{M-1}} \mathbf{W}_{M-1} \mathbf{W}^{\top}_{M-1}, \\
        \lambda_{W_{M-1}} \mathbf{W}^{\top}_{M-1} \mathbf{W}_{M-1} = \lambda_{W_{M-2}} \mathbf{W}_{M-2} \mathbf{W}^{\top}_{M-2}, \\
        \ldots, \nonumber \\
        \lambda_{W_{2}} \mathbf{W}_{2}^{\top} \mathbf{W}_{2} = \lambda_{W_{1}} \mathbf{W}_{1} \mathbf{W}_{1}^{\top}, \\
        \lambda_{W_{1}} \mathbf{W}_{1}^{\top} \mathbf{W}_{1} =  \lambda_{H_{1}}\mathbf{H}_{1} \mathbf{H}_{1}^{\top},   \nonumber        
    \end{gathered}
    \end{align}
    and:
    \begin{align}
        \mathbf{H}_{1} 
        = (c (\mathbf{W}_{1}^{\top} \mathbf{W}_{1})^{M} + N \lambda_{H_{1}} \mathbf{I} )^{-1} \mathbf{W}_{1}^{\top}
        \mathbf{W}_{2}^{\top}  
        \ldots
        \mathbf{W}_{M}^{\top} \mathbf{Y},
    \end{align}
    with $c := \frac{\lambda_{W_{1}}^{M-1}}
    {\lambda_{W_{M}} \lambda_{W_{M-1}} \ldots \lambda_{W_{2}} }$.
\end{lemma}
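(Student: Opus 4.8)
The plan is to extract everything directly from the first-order stationarity conditions, mirroring the three-layer warm-up but carrying the layer index $i$ abstractly. First I would combine Lemma~\ref{lm:1} with the derivative of the quadratic regularizer to write, at any critical point,
\begin{align}
    \frac{\partial f}{\partial \mathbf{W}_i} &= \tfrac{1}{N}\,\mathbf{W}_{i+1}^{\top}\cdots\mathbf{W}_M^{\top}\,\mathbf{E}\,\mathbf{H}_1^{\top}\mathbf{W}_1^{\top}\cdots\mathbf{W}_{i-1}^{\top} + \lambda_{W_i}\mathbf{W}_i = \mathbf{0}, \nonumber \\
    \frac{\partial f}{\partial \mathbf{H}_1} &= \tfrac{1}{N}\,\mathbf{W}_1^{\top}\cdots\mathbf{W}_M^{\top}\,\mathbf{E} + \lambda_{H_1}\mathbf{H}_1 = \mathbf{0}, \nonumber
\end{align}
where $\mathbf{E} := \mathbf{W}_M\cdots\mathbf{W}_1\mathbf{H}_1 - \mathbf{Y}$ denotes the residual.

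To obtain the balancing relations, I would form the combination $\mathbf{W}_i^{\top}\frac{\partial f}{\partial \mathbf{W}_i} - \frac{\partial f}{\partial \mathbf{W}_{i-1}}\mathbf{W}_{i-1}^{\top}$. The key observation is that both terms share the identical data-dependent factor $\mathbf{W}_i^{\top}\mathbf{W}_{i+1}^{\top}\cdots\mathbf{W}_M^{\top}\mathbf{E}\,\mathbf{H}_1^{\top}\mathbf{W}_1^{\top}\cdots\mathbf{W}_{i-1}^{\top}$, so it cancels and only the regularizer pieces survive, yielding $\lambda_{W_i}\mathbf{W}_i^{\top}\mathbf{W}_i = \lambda_{W_{i-1}}\mathbf{W}_{i-1}\mathbf{W}_{i-1}^{\top}$ for $i=2,\dots,M$. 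The analogous combination $\mathbf{W}_1^{\top}\frac{\partial f}{\partial \mathbf{W}_1} - \frac{\partial f}{\partial \mathbf{H}_1}\mathbf{H}_1^{\top} = \mathbf{0}$ closes the chain with $\lambda_{W_1}\mathbf{W}_1^{\top}\mathbf{W}_1 = \lambda_{H_1}\mathbf{H}_1\mathbf{H}_1^{\top}$, giving the full list of claimed identities.

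For the closed form of $\mathbf{H}_1$, I would solve $\partial f/\partial \mathbf{H}_1 = \mathbf{0}$: writing $\mathbf{A} := \mathbf{W}_M\cdots\mathbf{W}_1$, this reads $(\mathbf{A}^{\top}\mathbf{A} + N\lambda_{H_1}\mathbf{I})\mathbf{H}_1 = \mathbf{A}^{\top}\mathbf{Y}$, so $\mathbf{H}_1 = (\mathbf{A}^{\top}\mathbf{A} + N\lambda_{H_1}\mathbf{I})^{-1}\mathbf{W}_1^{\top}\cdots\mathbf{W}_M^{\top}\mathbf{Y}$. It then remains to prove the algebraic identity $\mathbf{A}^{\top}\mathbf{A} = c\,(\mathbf{W}_1^{\top}\mathbf{W}_1)^M$. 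This I would do by downward induction on the tail products $\mathbf{Q}_j := \mathbf{W}_j^{\top}\cdots\mathbf{W}_M^{\top}\mathbf{W}_M\cdots\mathbf{W}_j$, using the balancing identity in the form $\mathbf{W}_{j+1}^{\top}\mathbf{W}_{j+1} = \tfrac{\lambda_{W_j}}{\lambda_{W_{j+1}}}\mathbf{W}_j\mathbf{W}_j^{\top}$ together with the elementary fact $\mathbf{W}_j^{\top}(\mathbf{W}_j\mathbf{W}_j^{\top})^{p}\mathbf{W}_j = (\mathbf{W}_j^{\top}\mathbf{W}_j)^{p+1}$. Each peel raises the exponent by one, giving $\mathbf{Q}_j = \alpha_{j+1}\big(\tfrac{\lambda_{W_j}}{\lambda_{W_{j+1}}}\big)^{M-j}(\mathbf{W}_j^{\top}\mathbf{W}_j)^{M-j+1}$, and at $j=1$ one reaches $(\mathbf{W}_1^{\top}\mathbf{W}_1)^M$.

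The hard part will be the bookkeeping of the scalar: tracking the telescoping product $\prod_{j=1}^{M-1}(\lambda_{W_j}/\lambda_{W_{j+1}})^{M-j}$ and checking that, after collecting the power of each $\lambda_{W_k}$ (namely $M-1$ for $k=1$, $-1$ for $2\le k\le M-1$, and $-1$ for $k=M$), it simplifies exactly to $c = \lambda_{W_1}^{M-1}/(\lambda_{W_M}\lambda_{W_{M-1}}\cdots\lambda_{W_2})$. Everything else — the cancellation of the residual term and the inverse formula — is routine once the peeling recursion is set up.
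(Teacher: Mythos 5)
Your proposal is correct and follows essentially the same route as the paper: the gradients from Lemma~\ref{lm:1}, the combinations $\mathbf{W}_i^{\top}\partial f/\partial\mathbf{W}_i - (\partial f/\partial\mathbf{W}_{i-1})\mathbf{W}_{i-1}^{\top}$ to cancel the residual term, solving $\partial f/\partial\mathbf{H}_1=\mathbf{0}$, and then peeling the tail product with the balancing identities (the paper writes your downward induction on $\mathbf{Q}_j$ as a chain of substitutions, and your scalar bookkeeping indeed telescopes to $c=\lambda_{W_1}^{M-1}/(\lambda_{W_M}\cdots\lambda_{W_2})$).
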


\begin{proof}[Proof of Lemma \ref{lm:2}]
    By definition and using Lemma \ref{lm:1}, any critical point $(\mathbf{W}_{M}, \mathbf{W}_{M-1},\ldots, \mathbf{W}_{2}, \mathbf{W}_{1},\mathbf{H}_{1})$ satisfies the following :
    \begin{align}
    \begin{aligned}
      &\frac{\partial f}{\partial \mathbf{W}_{M}} =
      \frac{1}{N} (\mathbf{W}_{M} \mathbf{W}_{M-1} \ldots \mathbf{W}_{2} \mathbf{W}_{1}   \mathbf{H}_{1} - \mathbf{Y})\mathbf{H}_{1}^{\top} \mathbf{W}_{1}^{\top} \ldots
      \mathbf{W}_{M-1}^{\top} + \lambda_{W_{M}} \mathbf{W}_{M} = \mathbf{0}, \\
      &\frac{\partial f}{\partial \mathbf{W}_{M-1}} =
      \frac{1}{N} \mathbf{W}_{M}^{\top} (\mathbf{W}_{M} \mathbf{W}_{M-1} \ldots \mathbf{W}_{2} \mathbf{W}_{1}   \mathbf{H}_{1} - \mathbf{Y}) \mathbf{H}_{1}^{\top} \mathbf{W}_{1}^{\top} \ldots \mathbf{W}_{M-2}^{\top}  + \lambda_{W_{M-1}} \mathbf{W}_{M-1} = \mathbf{0}, \\
      &\ldots, \nonumber \\
      &\frac{\partial f}{\partial \mathbf{W}_{1}} =
      \frac{1}{N} \mathbf{W}_{2}^{\top} \mathbf{W}_{3}^{\top} \ldots \mathbf{W}_{M}^{\top} (\mathbf{W}_{M} \mathbf{W}_{M-1} \ldots \mathbf{W}_{2} \mathbf{W}_{1}   \mathbf{H}_{1} - \mathbf{Y}) \mathbf{H}_{1}^{\top}  + \lambda_{W_{1}} \mathbf{W}_{1} = \mathbf{0}, \\
      &\frac{\partial f}{\partial \mathbf{H}_{1}} =
      \frac{1}{N} \mathbf{W}_{1}^{\top} \mathbf{W}_{2}^{\top} \ldots \mathbf{W}_{M}^{\top} (\mathbf{W}_{M} \mathbf{W}_{M-1} \ldots \mathbf{W}_{2} \mathbf{W}_{1}   \mathbf{H}_{1} - \mathbf{Y})  +  \lambda_{H_{1}} \mathbf{H}_{1} = \mathbf{0} .
    \end{aligned} \nonumber
    \end{align}
    
    \noindent Next, we have:
    \begin{align}
        &\mathbf{0} = \mathbf{W}^{\top}_{M} \frac{\partial f}{\partial \mathbf{W}_{M}} - \frac{\partial f}{\partial \mathbf{W}_{M-1}} \mathbf{W}_{M-1}^{\top} = \lambda_{W_{M}} \mathbf{W}^{\top}_{M} \mathbf{W}_{M} - \lambda_{W_{M-1}} \mathbf{W}_{M-1} \mathbf{W}^{\top}_{M-1} \nonumber \\ 
        &\Rightarrow \lambda_{W_{M}} \mathbf{W}^{\top}_{M} \mathbf{W}_{M} = \lambda_{W_{M-1}} \mathbf{W}_{M-1} \mathbf{W}^{\top}_{M-1}. \nonumber \\
        &\mathbf{0} = \mathbf{W}^{\top}_{M-1} \frac{\partial f}{\partial \mathbf{W}_{M-1}} - \frac{\partial f}{\partial \mathbf{W}_{M-2}} \mathbf{W}_{M-2}^{\top} = \lambda_{W_{M-1}} \mathbf{W}^{\top}_{M-1} \mathbf{W}_{M-1} - \lambda_{W_{M-2}} \mathbf{W}_{M-2} \mathbf{W}^{\top}_{M-2} \nonumber \\
        &\Rightarrow \lambda_{W_{M-1}} \mathbf{W}^{\top}_{M-1} \mathbf{W}_{M-1} = \lambda_{W_{M-2}} \mathbf{W}_{M-2} \mathbf{W}^{\top}_{M-2}. \nonumber 
    \end{align}
    Making similar argument for the other derivatives, we have:
    \begin{align}
    \begin{gathered}
        \lambda_{W_{M}} \mathbf{W}^{\top}_{M} \mathbf{W}_{M} = \lambda_{W_{M-1}} \mathbf{W}_{M-1} \mathbf{W}^{\top}_{M-1}, \\
        \lambda_{W_{M-1}} \mathbf{W}^{\top}_{M-1} \mathbf{W}_{M-1} = \lambda_{W_{M-2}} \mathbf{W}_{M-2} \mathbf{W}^{\top}_{M-2}, \\
        \ldots, \nonumber \\
        \lambda_{W_{2}} \mathbf{W}_{2}^{\top} \mathbf{W}_{2} = \lambda_{W_{1}} \mathbf{W}_{1} \mathbf{W}_{1}^{\top}, \\
        \lambda_{W_{1}} \mathbf{W}_{1}^{\top} \mathbf{W}_{1} = \lambda_{H_{1}}\mathbf{H}_{1} \mathbf{H}_{1}^{\top}.    
    \end{gathered}
    \nonumber
    \end{align}
    
   \noindent Also, from $\frac{\partial f}{\partial \mathbf{H}_{1}} = \mathbf{0}$, solving for $\mathbf{H}_{1}$ yields:
    \begin{align}
    \mathbf{H}_{1} &= 
    ( \mathbf{W}_{1}^{\top}
    \mathbf{W}_{2}^{\top} 
    \ldots
    \mathbf{W}_{M-1}^{\top}
    \mathbf{W}_{M}^{\top}
    \mathbf{W}_{M}
    \mathbf{W}_{M-1}
    \ldots
     \mathbf{W}_{2}
    \mathbf{W}_{1} + N \lambda_{H_{1}} \mathbf{I}
    )^{-1}
    \mathbf{W}_{1}^{\top}
    \mathbf{W}_{2}^{\top}  
    \ldots
    \mathbf{W}_{M}^{\top} \mathbf{Y} \nonumber \\
    &= \left( \frac{\lambda_{W_{M-1}}}{\lambda_{W_{M}}}
    \mathbf{W}_{1}^{\top}
    \mathbf{W}_{2}^{\top} 
    \ldots
    (\mathbf{W}_{M-1}^{\top} \mathbf{W}_{M-1})^{2}
    \ldots
    \mathbf{W}_{2}
    \mathbf{W}_{1} + N \lambda_{H_{1}} \mathbf{I}
     \right)^{-1}
    \mathbf{W}_{1}^{\top}
    \mathbf{W}_{2}^{\top}  
    \ldots
    \mathbf{W}_{M}^{\top} \mathbf{Y}
    \nonumber \\
    &= \ldots \nonumber \\
    &= \left(
    \underbrace{\frac{\lambda_{W_{1}}^{M-1}}
    {\lambda_{W_{M}} \lambda_{W_{M-1}} \ldots \lambda_{W_{2}} }}_{c}  
    (\mathbf{W}_{1}^{\top} \mathbf{W}_{1})^{M} + N \lambda_{H_{1}}
    \right)^{-1}
    \mathbf{W}_{1}^{\top}
    \mathbf{W}_{2}^{\top}  
    \ldots
    \mathbf{W}_{M}^{\top} \mathbf{Y} \nonumber \\
    &=
    (c (\mathbf{W}_{1}^{\top} \mathbf{W}_{1})^{M} + N \lambda_{H_{1}} \mathbf{I} )^{-1} \mathbf{W}_{1}^{\top}
    \mathbf{W}_{2}^{\top}  
    \ldots
    \mathbf{W}_{M}^{\top} \mathbf{Y}. \nonumber
    \end{align} 
\end{proof}
\begin{lemma}
\label{lm:3}
    For any critical point $(\mathbf{W}_{M}, \mathbf{W}_{M-1},\ldots, \mathbf{W}_{2}, \mathbf{W}_{1}, \mathbf{H}_{1})$, we have
    $r:= \operatorname{rank}(\mathbf{W}_{M}) = \operatorname{rank}(\mathbf{W}_{M-1}) = \operatorname{rank}(\mathbf{W}_{M-2}) = \ldots = \operatorname{rank}(\mathbf{W}_{1} ) = \operatorname{rank}(\mathbf{H}_{1}) \leq \min(K, d_{M}, d_{M-1},\ldots, d_{1}) := R$.
\end{lemma}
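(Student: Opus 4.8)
The plan is to read everything off the chain of identities already established for critical points in Lemma~\ref{lm:2}, combined with two elementary facts: for any real matrix $\mathbf{A}$ one has $\operatorname{rank}(\mathbf{A}) = \operatorname{rank}(\mathbf{A}^\top \mathbf{A}) = \operatorname{rank}(\mathbf{A}\mathbf{A}^\top)$, and multiplication by a strictly positive scalar leaves the rank unchanged. Since no optimization is involved, the whole statement is a structural consequence of the critical-point equations.

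First I would establish that all the ranks coincide. Take the first identity of Lemma~\ref{lm:2}, namely $\lambda_{W_M} \mathbf{W}_M^\top \mathbf{W}_M = \lambda_{W_{M-1}} \mathbf{W}_{M-1} \mathbf{W}_{M-1}^\top$. Because $\lambda_{W_M}, \lambda_{W_{M-1}} > 0$, the two sides have equal rank; the left side has rank $\operatorname{rank}(\mathbf{W}_M)$ and the right side has rank $\operatorname{rank}(\mathbf{W}_{M-1})$, so these two are equal. The same reasoning applies verbatim to each successive identity in the chain, where on one side a Gram matrix $\mathbf{W}_i^\top \mathbf{W}_i$ appears and on the other the outer product $\mathbf{W}_{i-1} \mathbf{W}_{i-1}^\top$; this propagates the equality layer by layer, and the final identity $\lambda_{W_1} \mathbf{W}_1^\top \mathbf{W}_1 = \lambda_{H_1} \mathbf{H}_1 \mathbf{H}_1^\top$ closes the chain at $\operatorname{rank}(\mathbf{W}_1) = \operatorname{rank}(\mathbf{H}_1)$. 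Denoting the common value by $r$ yields the stated string of rank equalities.

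It then remains to bound $r$. Each factor lives in a rectangular space whose two dimensions are consecutive entries of the list $K, d_M, d_{M-1}, \ldots, d_1, K$, and the rank of a matrix never exceeds the smaller of its two dimensions. Hence $\operatorname{rank}(\mathbf{W}_M) \le \min(K, d_M)$, $\operatorname{rank}(\mathbf{W}_{M-1}) \le \min(d_M, d_{M-1})$, continuing down to $\operatorname{rank}(\mathbf{H}_1) \le \min(d_1, K)$. Since every one of these ranks equals $r$, the value $r$ is bounded by the minimum of all the individual bounds; as each of the indices $K, d_M, \ldots, d_1$ occurs among them, this minimum is exactly $\min(K, d_M, d_{M-1}, \ldots, d_1) = R$, giving $r \le R$.

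I do not expect a genuine obstacle: the entire content is carried by Lemma~\ref{lm:2}, and the remainder is a routine application of rank identities. The only point deserving a little care is bookkeeping — verifying that each identity of Lemma~\ref{lm:2} really pairs the Gram matrix of one factor with the outer product of the adjacent factor, so that the rank transfers in the intended direction, and that the dimension bounds collectively involve every index $K, d_M, \ldots, d_1$ so that their minimum is precisely $R$ rather than something larger.
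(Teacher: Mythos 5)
Your proposal is correct and takes essentially the same route as the paper, which likewise deduces the result from Lemma~\ref{lm:2} together with the identity $\operatorname{rank}(\mathbf{A}) = \operatorname{rank}(\mathbf{A}^{\top}\mathbf{A}) = \operatorname{rank}(\mathbf{A}\mathbf{A}^{\top})$; you have merely written out the layer-by-layer propagation explicitly. (One trivial bookkeeping slip: $\mathbf{H}_1$ is $d_1 \times N$, not $d_1 \times K$, but since $K$ already appears in the bound from $\mathbf{W}_M$ this does not affect the conclusion $r \le R$.)
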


\begin{proof}[Proof of Lemma \ref{lm:3}]
    The result is deduced from Lemma \ref{lm:2} and the matrix rank property $\operatorname{rank}(\mathbf{A}) = \operatorname{rank}(\mathbf{A}^{\top} \mathbf{A}) =  \operatorname{rank}(\mathbf{A} \mathbf{A}^{\top})$.
\end{proof}

\begin{lemma}
\label{lm:4}
    For any critical point $(\mathbf{W}_{M}, \mathbf{W}_{M-1},\ldots, \mathbf{W}_{2}, \mathbf{W}_{1}, \mathbf{H}_{1})$ of $f$, let $\mathbf{W}_{1} = \mathbf{U}_{W_{1}} \mathbf{S}_{W_{1}} \mathbf{V}_{W_{1}}^{\top}$ be the SVD decomposition of $\mathbf{W}_{1}$ with $\mathbf{U}_{W_{1}} \in \mathbb{R}^{d_{2} \times d_{2}}, \mathbf{V}_{W_{1}} \in \mathbb{R}^{d_{1} \times d_{1}}$ are orthonormal matrices and $\mathbf{S}_{W_{1}} \in \mathbb{R}^{d_{2} \times d_{1}}$ is a diagonal matrix with \textbf{decreasing} non-negative singular values. We denote the $r := \operatorname{rank}(\mathbf{W}_{1})$ singular values of $\mathbf{W}_{1}$  as $\left\{s_{k}\right\}_{k=1}^{r}$ ($r \leq R := \min(K, d_{M}, \ldots, d_{1})$, from Lemma \ref{lm:3}).
    \\
    
    Then, we can write the SVD of weight matrices as:
    \begin{align}
    \begin{gathered}
        \mathbf{W}_{M} = \mathbf{U}_{W_{M}} \mathbf{S}_{W_{M}}
        \mathbf{U}_{W_{M-1}}^{\top},
        \\
        \mathbf{W}_{M-1} = \mathbf{U}_{W_{M-1}} \mathbf{S}_{W_{M-1}} \mathbf{U}_{W_{M-2}}^{\top},     \\
        \mathbf{W}_{M-2} = \mathbf{U}_{W_{M-2}} \mathbf{S}_{W_{M-2}} \mathbf{U}_{W_{M-3}}^{\top}, \\
        \mathbf{W}_{M-3} = \mathbf{U}_{W_{M-3}} \mathbf{S}_{W_{M-3}} \mathbf{U}_{W_{M-4}}^{\top}, \\
        \ldots, \\
        \mathbf{W}_{2} = \mathbf{U}_{W_{2}} \mathbf{S}_{W_{2}} \mathbf{U}_{W_{1}}^{\top},
        \\
        \mathbf{W}_{1} = \mathbf{U}_{W_{1}} \mathbf{S}_{W_{1}} \mathbf{V}_{W_{1}}^{\top}, 
        \nonumber
    \end{gathered}
    \end{align}
    with:
    \begin{align}
       \mathbf{S}_{W_{j}} = \sqrt{\frac{\lambda_{W_{1}}}{\lambda_{W_{j}}}}
        \begin{bmatrix}
        \operatorname{diag}(s_{1},\ldots, s_{r}) & \mathbf{0}_{r \times (d_{j} - r)}  \\
        \mathbf{0}_{(d_{j+1} - r) \times r} & \mathbf{0}_{(d_{j+1} - r) \times (d_{j} - r)}  \\
        \end{bmatrix} \in \mathbb{R}^{d_{j+1} \times d_{j}}, \quad \forall \: j \in [M],
        \nonumber
    \end{align} 
    and $\mathbf{U}_{W_{M}}, \mathbf{U}_{W_{M-1}}, \mathbf{U}_{W_{M-2}}, \mathbf{U}_{W_{M-3}},\ldots, \mathbf{U}_{W_{1}}, \mathbf{V}_{W_{1}} $ are all orthonormal matrices.
\end{lemma}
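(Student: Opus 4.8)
The plan is to prove the claim by induction on the layer index, propagating the singular value structure upward from $\mathbf{W}_1$ and using the ``balancing'' identities of Lemma~\ref{lm:2} as the engine. The base case is the given decomposition $\mathbf{W}_1 = \mathbf{U}_{W_1} \mathbf{S}_{W_1} \mathbf{V}_{W_1}^\top$ (adopting the convention $\mathbf{U}_{W_0} := \mathbf{V}_{W_1}$). For the inductive step, I would suppose we have already established $\mathbf{W}_j = \mathbf{U}_{W_j} \mathbf{S}_{W_j} \mathbf{U}_{W_{j-1}}^\top$ with $\mathbf{S}_{W_j}$ of the stated shape, and then compute, using orthonormality of $\mathbf{U}_{W_{j-1}}$ to cancel the inner factor, $\mathbf{W}_j \mathbf{W}_j^\top = \mathbf{U}_{W_j} \mathbf{S}_{W_j} \mathbf{S}_{W_j}^\top \mathbf{U}_{W_j}^\top$.

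The crux is then the relation $\lambda_{W_{j+1}} \mathbf{W}_{j+1}^\top \mathbf{W}_{j+1} = \lambda_{W_j} \mathbf{W}_j \mathbf{W}_j^\top$ from Lemma~\ref{lm:2}. Substituting the expression above and telescoping the scalars via $\tfrac{\lambda_{W_j}}{\lambda_{W_{j+1}}} \cdot \tfrac{\lambda_{W_1}}{\lambda_{W_j}} = \tfrac{\lambda_{W_1}}{\lambda_{W_{j+1}}}$ yields
\begin{align}
\mathbf{W}_{j+1}^\top \mathbf{W}_{j+1} = \mathbf{U}_{W_j} \left( \frac{\lambda_{W_1}}{\lambda_{W_{j+1}}} \operatorname{diag}(s_1^2, \ldots, s_r^2, 0, \ldots, 0) \right) \mathbf{U}_{W_j}^\top . \nonumber
\end{align}
This is a valid eigendecomposition of the positive semidefinite Gram matrix $\mathbf{W}_{j+1}^\top \mathbf{W}_{j+1}$ with the columns of $\mathbf{U}_{W_j}$ as eigenvectors, so by the standard construction of an SVD from the eigendecomposition of $\mathbf{A}^\top\mathbf{A}$, the matrix $\mathbf{W}_{j+1}$ admits an SVD whose right singular vector matrix is exactly $\mathbf{U}_{W_j}$ and whose singular values are $\sqrt{\lambda_{W_1}/\lambda_{W_{j+1}}}\, s_k$; that is, $\mathbf{W}_{j+1} = \mathbf{U}_{W_{j+1}} \mathbf{S}_{W_{j+1}} \mathbf{U}_{W_j}^\top$ with $\mathbf{S}_{W_{j+1}}$ of the claimed form, completing the induction. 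The rank equality in Lemma~\ref{lm:3} guarantees that every $\mathbf{W}_j$ carries exactly $r$ nonzero singular values, so the block structure of each $\mathbf{S}_{W_j}$ is consistent across all layers.

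The step I expect to require the most care is the non-uniqueness of the SVD: because the zero singular values (and any repeated $s_k$) leave the singular vectors undetermined on the corresponding subspaces, I must argue that one may \emph{choose} the right singular vectors of $\mathbf{W}_{j+1}$ to coincide with the left singular vectors $\mathbf{U}_{W_j}$ of $\mathbf{W}_j$, not merely to span the same eigenspaces. The resolution is that the displayed identity already exhibits $\mathbf{U}_{W_j}$ itself as a full orthonormal eigenbasis of $\mathbf{W}_{j+1}^\top \mathbf{W}_{j+1}$, so setting the right factor equal to $\mathbf{U}_{W_j}$ and recovering the left factor $\mathbf{U}_{W_{j+1}}$ from $\mathbf{W}_{j+1} \mathbf{U}_{W_j}$ (normalizing on the range and completing arbitrarily to an orthonormal basis on the kernel) produces a legitimate SVD with the desired shared basis. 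Once this consistency of adjacent bases is secured, the product $\mathbf{W}_M \cdots \mathbf{W}_1$ telescopes cleanly into $\mathbf{U}_{W_M} (\mathbf{S}_{W_M} \cdots \mathbf{S}_{W_1}) \mathbf{V}_{W_1}^\top$, which is precisely the simplification the subsequent proof exploits.
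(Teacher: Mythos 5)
Your proposal is correct and follows essentially the same route as the paper: both use the balancing identities of Lemma~\ref{lm:2} to exhibit $\mathbf{U}_{W_j}$ as an orthonormal eigenbasis of $\mathbf{W}_{j+1}^{\top}\mathbf{W}_{j+1}$ and then build the SVD of $\mathbf{W}_{j+1}$ layer by layer with shared adjacent bases. Your explicit treatment of the SVD non-uniqueness on the kernel and on repeated singular values is a point the paper's proof leaves implicit, but it is the same argument, not a different one.
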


\begin{proof}[Proof of Lemma \ref{lm:4}]
    From Lemma \ref{lm:2}, we have:
    \begin{align}
        \mathbf{W}_{2}^{\top} \mathbf{W}_{2}
        = \frac{\lambda_{W_{1}}}{\lambda_{W_{2}}} \mathbf{W}_{1} \mathbf{W}_{1}^{\top} = \frac{\lambda_{W_{1}}}{\lambda_{W_{2}}} 
        \mathbf{U}_{W_{1}} \mathbf{S}_{W_{1}}
        \mathbf{S}_{W_{1}}^{\top}
        \mathbf{U}_{W_{1}}^{\top} = \mathbf{U}_{W_{1}} \mathbf{S}_{W_{2}}^{\top}
        \mathbf{S}_{W_{2}} \mathbf{U}_{W_{1}}^{\top}, \nonumber
    \end{align}
    where:
    \begin{align}
        \mathbf{S}_{W_{2}} := \sqrt{\frac{\lambda_{W_{1}}}{\lambda_{W_{2}}}} 
        \begin{bmatrix}
        \operatorname{diag}(s_{1},\ldots, s_{r}) & \mathbf{0}_{r \times (d_{2} - r)}  \\
        \mathbf{0}_{(d_{3} - r) \times r} & \mathbf{0}_{(d_{3} - r) \times (d_{2} - r)}  \\
        \end{bmatrix} \in \mathbb{R}^{d_{3} \times d_{2}}. \nonumber
    \end{align}
    This means the diagonal matrix $\mathbf{S}_{W_{2}}^{\top}
    \mathbf{S}_{W_{2}}$ contains the eigenvalues and the columns of $\mathbf{U}_{W_{1}}$ are the eigenvectors of $\mathbf{W}_{2}^{\top} \mathbf{W}_{2}$. Hence, we can write the SVD decomposition of $\mathbf{W}_{2}$ as $\mathbf{W}_{2} = \mathbf{U}_{W_{2}} \mathbf{S}_{W_{2}} \mathbf{U}_{W_{1}}^{\top}$ with orthonormal matrix $\mathbf{U}_{W_{2}} \in \mathbb{R}^{d_{3} \times d_{3}}$. 
    \\  
    
    \noindent By making similar arguments as above for $\mathbf{W}_{3}$, from:
    \begin{align}
        \mathbf{W}_{3}^{\top} \mathbf{W}_{3}
        = \frac{\lambda_{W_{2}}}{\lambda_{W_{3}}} \mathbf{W}_{2} \mathbf{W}_{2}^{\top} = \frac{\lambda_{W_{2}}}{\lambda_{W_{3}}} \mathbf{U}_{W_{2}} \mathbf{S}_{W_{2}}
        \mathbf{S}_{W_{2}}^{\top}
        \mathbf{U}_{W_{2}}^{\top} = \mathbf{U}_{W_{2}} \mathbf{S}_{W_{3}}^{\top}
        \mathbf{S}_{W_{3}} \mathbf{U}_{W_{2}}^{\top},  \nonumber
    \end{align}
    where:
    \begin{align}
        \mathbf{S}_{W_{3}} := \sqrt{\frac{\lambda_{W_{1}}}{\lambda_{W_{3}}}} 
        \begin{bmatrix}
        \operatorname{diag}(s_{1},\ldots, s_{r}) & \mathbf{0}_{r \times (d_{3} - r)}  \\
        \mathbf{0}_{(d_{4} - r) \times r} & \mathbf{0}_{(d_{4} - r) \times (d_{3} - r)}  \\
        \end{bmatrix} \in \mathbb{R}^{d_{4} \times d_{3}}, \nonumber
    \end{align}
    and thus, we can write SVD decomposition of $\mathbf{W}_{3}$ as $\mathbf{W}_{3} = \mathbf{U}_{W_{3}} \mathbf{S}_{W_{3}} \mathbf{U}_{W_{2}}^{\top}$ with orthonormal matrix $\mathbf{U}_{W_{3}} \in \mathbb{R}^{d_{4} \times d_{4}}$. 
    Repeating the process for other weight matrices, we got the desired result.
\end{proof}
\begin{lemma}
\label{lm:5}
    Continue from the setting and result of Lemma \ref{lm:4}, we have:
    \begin{align}
    \mathbf{H}_{1} 
        &= \mathbf{V}_{W_{1}}
        \underbrace{
         \begin{bmatrix}
        \operatorname{diag}\left(
            \frac{\sqrt{c} s_{1}^{M} }{c s_{1}^{2M} + N \lambda_{H_{1}}}, \ldots, \frac{\sqrt{c} s_{r}^{M}}{c s_{r}^{2M} + N \lambda_{H_{1}}}
            \right) &  \mathbf{0}_{r \times (K-r)} \\
        \mathbf{0}_{(d_{1}-r) \times r} & \mathbf{0}_{(d_{1}-r) \times (K - r)}\\
        \end{bmatrix}}_{\mathbf{C} \in \mathbb{R}^{d_{1} \times K}}
        \mathbf{U}_{W_{M}}^{\top}
        \mathbf{Y}, \nonumber 
        \\
     \mathbf{W}_{M} \mathbf{W}_{M-1} \ldots \mathbf{W}_{2} \mathbf{W}_{1}   \mathbf{H} - \mathbf{Y} 
    &= \mathbf{U}_{W_{M}} 
        \underbrace{
        \begin{bmatrix}
        \operatorname{diag}\left(
            \frac{- N \lambda_{H_{1}} }{c s_{1}^{2M} + N \lambda_{H_{1}}}, \ldots, \frac{- N \lambda_{H_{1}}}{c s_{r}^{2M} + N \lambda_{H_{1}}}
            \right ) &  \mathbf{0}_{r \times (K-r)}\\
        \mathbf{0}_{(K-r) \times r} &  -\mathbf{I}_{K-r}\\
        \end{bmatrix}}_{\mathbf{D} \in \mathbb{R}^{K \times K}}
        \mathbf{U}_{W_{M}}^{\top}
        \mathbf{Y}, \nonumber
    \end{align}
    with $c := \frac{\lambda_{W_{1}}^{M-1}}
    {\lambda_{W_{M}} \lambda_{W_{M-1}} \ldots \lambda_{W_{2}} }$.
\end{lemma}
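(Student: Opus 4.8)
The plan is to chain Lemma \ref{lm:4}, which supplies the aligned SVD of every weight matrix, with Lemma \ref{lm:2}, which gives the closed form of $\mathbf{H}_1$ at any critical point, and then simply read off the diagonal entries. First I would form the product $\mathbf{W}_M \mathbf{W}_{M-1}\cdots\mathbf{W}_1$ using the decompositions of Lemma \ref{lm:4}. Because consecutive factors share their inner orthonormal basis, each interior product $\mathbf{U}_{W_j}^\top\mathbf{U}_{W_j}=\mathbf{I}$ telescopes, leaving
\begin{align}
\mathbf{W}_M\mathbf{W}_{M-1}\cdots\mathbf{W}_1 = \mathbf{U}_{W_M}\,\mathbf{S}_{W_M}\mathbf{S}_{W_{M-1}}\cdots\mathbf{S}_{W_1}\,\mathbf{V}_{W_1}^\top. \nonumber
\end{align}
The product of the singular-value matrices is again a rectangular diagonal matrix whose $k$-th nonzero entry is $\prod_{j=1}^M \sqrt{\lambda_{W_1}/\lambda_{W_j}}\, s_k = \sqrt{c}\, s_k^M$ for $k \le r$ and $0$ otherwise; here I would verify $\prod_{j=1}^M \sqrt{\lambda_{W_1}/\lambda_{W_j}} = \sqrt{\lambda_{W_1}^{M-1}/(\lambda_{W_M}\cdots\lambda_{W_2})} = \sqrt{c}$ directly from the definition of $c$.

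Next I would substitute this into the Lemma \ref{lm:2} identity $\mathbf{H}_1 = (c(\mathbf{W}_1^\top\mathbf{W}_1)^M + N\lambda_{H_1}\mathbf{I})^{-1}\mathbf{W}_1^\top\cdots\mathbf{W}_M^\top\mathbf{Y}$. Writing $\mathbf{W}_1^\top\mathbf{W}_1 = \mathbf{V}_{W_1}(\mathbf{S}_{W_1}^\top\mathbf{S}_{W_1})\mathbf{V}_{W_1}^\top$, the resolvent diagonalizes in the $\mathbf{V}_{W_1}$ basis as $\mathbf{V}_{W_1}(c(\mathbf{S}_{W_1}^\top\mathbf{S}_{W_1})^M + N\lambda_{H_1}\mathbf{I})^{-1}\mathbf{V}_{W_1}^\top$, while $\mathbf{W}_1^\top\cdots\mathbf{W}_M^\top = (\mathbf{W}_M\cdots\mathbf{W}_1)^\top = \mathbf{V}_{W_1}(\mathbf{S}_{W_M}\cdots\mathbf{S}_{W_1})^\top\mathbf{U}_{W_M}^\top$. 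The inner $\mathbf{V}_{W_1}^\top\mathbf{V}_{W_1}=\mathbf{I}$ cancels, and the surviving diagonal factor has $k$-th entry $\sqrt{c}\,s_k^M/(c s_k^{2M}+N\lambda_{H_1})$ for $k\le r$ and $0$ otherwise, which is precisely the matrix $\mathbf{C}$; hence $\mathbf{H}_1 = \mathbf{V}_{W_1}\mathbf{C}\,\mathbf{U}_{W_M}^\top\mathbf{Y}$, as claimed.

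Finally, for the residual I would left-multiply to get $\mathbf{W}_M\cdots\mathbf{W}_1\mathbf{H}_1 = \mathbf{U}_{W_M}(\mathbf{S}_{W_M}\cdots\mathbf{S}_{W_1}\mathbf{C})\mathbf{U}_{W_M}^\top\mathbf{Y}$ after again cancelling $\mathbf{V}_{W_1}^\top\mathbf{V}_{W_1}$, whose diagonal entries collapse to $c s_k^{2M}/(c s_k^{2M}+N\lambda_{H_1})$ for $k\le r$ and $0$ for $k>r$. Since $\mathbf{U}_{W_M}\in\mathbb{R}^{K\times K}$ is square orthonormal, I can write $\mathbf{Y} = \mathbf{U}_{W_M}\mathbf{I}_K\mathbf{U}_{W_M}^\top\mathbf{Y}$ and subtract, yielding $\mathbf{U}_{W_M}\mathbf{D}\,\mathbf{U}_{W_M}^\top\mathbf{Y}$ with $\mathbf{D}$ carrying $-N\lambda_{H_1}/(c s_k^{2M}+N\lambda_{H_1})$ on the first $r$ slots and, from $0-1$ in the vanishing directions, the block $-\mathbf{I}_{K-r}$ on the remaining $K-r$. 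The only delicate point — and the main obstacle — is the rectangular zero-padding bookkeeping together with the rank split $r\le R$: one must confirm that the off-diagonal and overflow blocks of the various $\mathbf{S}_{W_j}$ multiply to zero and that the $K-r$ zero singular directions produce exactly the $-\mathbf{I}_{K-r}$ block, so that $\mathbf{C}$ and $\mathbf{D}$ genuinely have the stated block-diagonal form.
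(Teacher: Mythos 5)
Your proposal is correct and follows essentially the same route as the paper's proof: substitute the aligned SVDs from Lemma \ref{lm:4} into the closed form for $\mathbf{H}_1$ from Lemma \ref{lm:2}, diagonalize the resolvent in the $\mathbf{V}_{W_1}$ basis, telescope the orthonormal factors, and read off the diagonal entries (including the identity $\prod_{j=1}^M\sqrt{\lambda_{W_1}/\lambda_{W_j}}=\sqrt{c}$ and the $-\mathbf{I}_{K-r}$ block arising from the vanishing singular directions). The block-padding bookkeeping you flag is handled in the paper exactly as you describe, so no gap remains.
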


\begin{proof}[Proof of Lemma \ref{lm:5}]
    From Lemma \ref{lm:2}, together with the SVD of weight matrices and the form of singular matrix $\mathbf{S}_{W_{j}}$ derived in Lemma \ref{lm:4}, we have:
 \begin{align}
    \begin{aligned}
    \mathbf{H}_{1} &= 
    (c (\mathbf{W}_{1}^{\top} \mathbf{W}_{1})^{M} + N \lambda_{H_{1}} \mathbf{I} )^{-1} \mathbf{W}_{1}^{\top}
    \mathbf{W}_{2}^{\top}  
    \ldots
    \mathbf{W}_{M}^{\top} \mathbf{Y} \\
    &= 
    (c \mathbf{V}_{W_{1}} (\mathbf{S}_{W_{1}}^{\top}
    \mathbf{S}_{W_{1}})^{M}
    \mathbf{V}_{W_{1}}^{\top}
    + N \lambda_{H_{1}} \mathbf{I} )^{-1}
    \mathbf{V}_{W_{1}}
    \mathbf{S}_{W_{1}}^{\top}
    \mathbf{S}_{W_{2}}^{\top} 
    \ldots
    \mathbf{S}_{W_{M}}^{\top}
    \mathbf{U}_{W_{M}}^{\top}
    \mathbf{Y} \\
    &= 
    \mathbf{V}_{W_{1}} 
    (c  (\mathbf{S}_{W_{1}}^{\top}
    \mathbf{S}_{W_{1}})^{M}
    + N \lambda_{H_{1}} \mathbf{I} )^{-1}
    \mathbf{S}_{W_{1}}^{\top}
    \mathbf{S}_{W_{2}}^{\top} 
    \ldots
    \mathbf{S}_{W_{M}}^{\top}
    \mathbf{U}_{W_{M}}^{\top}
    \mathbf{Y} \\
    &= 
    \mathbf{V}_{W_{1}} 
     (c  (\mathbf{S}_{W_{1}}^{\top}
    \mathbf{S}_{W_{1}})^{M}
    + N \lambda_{H_{1}} \mathbf{I} )^{-1}
    \sqrt{c} 
    \begin{bmatrix}
    \operatorname{diag}(s_{1}^{M},\ldots, s_{r}^{M}) & \mathbf{0}_{r \times (K-r)}  \\
    \mathbf{0}_{(d_{1}-r) \times r} & \mathbf{0}_{(d_{1}-r) \times (K - r)} \\
    \end{bmatrix}
    \mathbf{U}_{W_{M}}^{\top}
    \mathbf{Y} \\
    &= 
    \mathbf{V}_{W_{1}}
    \underbrace{
    \begin{bmatrix}
    \operatorname{diag}\left(
        \frac{\sqrt{c} s_{1}^{M} }{c s_{1}^{2M} + N \lambda_{H_{1}}}, \ldots, \frac{\sqrt{c} s_{r}^{M}}{c s_{r}^{2M} + N \lambda_{H_{1}}}
        \right) &  \mathbf{0}_{r \times (K-r)} \\
    \mathbf{0}_{(d_{1}-r) \times r} & \mathbf{0}_{(d_{1}-r) \times (K - r)}\\
    \end{bmatrix}}_{\mathbf{C} \in \mathbb{R}^{d_{1} \times K}}
    \mathbf{U}_{W_{M}}^{\top}
    \mathbf{Y} \\
    &= \mathbf{V}_{W_{1}}
    \mathbf{C}
    \mathbf{U}_{W_{M}}^{\top}
    \mathbf{Y} 
    \end{aligned} \nonumber
    \end{align}
    
    \begin{align}
        &\Rightarrow \mathbf{W}_{M} \mathbf{W}_{M-1} \ldots \mathbf{W}_{2} \mathbf{W}_{1} \mathbf{H}_{1} 
        = \mathbf{U}_{W_{M}}
        \mathbf{S}_{W_{M}}
        \mathbf{S}_{W_{M-1}}
        \ldots
        \mathbf{S}_{W_{1}}
        \mathbf{C}
        \mathbf{U}_{W_{M}}^{\top}
        \mathbf{Y}  \nonumber \\
        &=
        \sqrt{\frac{\lambda_{W_{1}}}{\lambda_{W_{M}}}}
        \mathbf{U}_{W_{M}}
        \begin{bmatrix}
        \operatorname{diag}(s_{1},\ldots, s_{r}) & \mathbf{0}  \\
        \mathbf{0} & \mathbf{0}  \\
        \end{bmatrix}
        \mathbf{S}_{W_{M-1}}
        \ldots
        \mathbf{S}_{W_{1}}
        \mathbf{C}
        \mathbf{U}_{W_{M}}^{\top}
        \mathbf{Y}
        \nonumber \\
        &= \ldots \nonumber \\
        &= \mathbf{U}_{W_{M}} \sqrt{c} 
        \begin{bmatrix}
        \operatorname{diag}(s_{1}^{M},\ldots, s_{r}^{M}) & \mathbf{0}  \\
        \mathbf{0} & \mathbf{0}  \\
        \end{bmatrix} 
        \mathbf{C} \mathbf{U}_{W_{M}}^{\top}
        \mathbf{Y}
        \nonumber \\
        &=  \mathbf{U}_{W_{M}}
        \begin{bmatrix}
        \operatorname{diag}\left(
            \frac{c s_{1}^{2M} }{c s_{1}^{2M} + N \lambda_{H_{1}}}, \ldots, \frac{c s_{r}^{2M}}{c s_{r}^{2M} + N \lambda_{H_{1}}}
            \right ) &  \mathbf{0}\\
        \mathbf{0} &  \mathbf{0}\\
        \end{bmatrix} 
        \mathbf{U}_{W_{M}}^{\top}
        \mathbf{Y}
        \nonumber \\ %
        &\Rightarrow 
        \mathbf{W}_{M} \ldots \mathbf{W}_{1}   \mathbf{H}_{1} - \mathbf{Y} 
        = 
        \mathbf{U}_{W_{M}} 
        \left(  
        \begin{bmatrix}
        \operatorname{diag}\left(
            \frac{c s_{1}^{2M} }{c s_{1}^{2M} + N \lambda_{H_{1}}}, \ldots, \frac{c s_{r}^{2M}}{c s_{r}^{2M} + N \lambda_{H_{1}}}
            \right ) &  \mathbf{0}_{r \times (K-r)}\\
        \mathbf{0}_{(K-r) \times r} &  \mathbf{0}_{(K-r) \times (K-r)} \\
        \end{bmatrix}  - \mathbf{I}_{K} \right)
        \mathbf{U}_{W_{M}}^{\top}
        \mathbf{Y}  \nonumber \\
        &= \mathbf{U}_{W_{M}} 
        \underbrace{
        \begin{bmatrix}
        \operatorname{diag}\left(
            \frac{- N \lambda_{H_{1}} }{c s_{1}^{2M} + N \lambda_{H_{1}}}, \ldots, \frac{- N \lambda_{H_{1}}}{c s_{r}^{2M} + N \lambda_{H_{1}}}
            \right ) &  \mathbf{0}_{r \times (K-r)}\\
        \mathbf{0}_{(K-r) \times r} &  -\mathbf{I}_{K-r} \\
        \end{bmatrix}}_{\mathbf{D} \in \mathbb{R}^{K \times K}}
        \mathbf{U}_{W_{M}}^{\top}
        \mathbf{Y} \nonumber  \\
        &= \mathbf{U}_{W_{M}}
        \mathbf{D}
        \mathbf{U}_{W_{M}}^{\top}
        \mathbf{Y}. \nonumber
    \end{align}
\end{proof}

\subsubsection{Minimizer of the function $g(x) = \frac{1}{x^{M} + 1} + bx$}
\label{sec:study_g}
\noindent Next,  we study the minimization problem of the following function, this result will be used frequently in proofs of theorems in the main paper:
    \begin{align}
    g(x) = \frac{1}{x^{M} + 1} + bx \text{ with } x \geq 0, b > 0, M 
    \geq 2. \nonumber
    \end{align}

\noindent Clearly, $g(0) = 1$. We consider the following cases for parameter $b$:

\begin{itemize}
    \item If $b > \frac{(M-1)^{\frac{M-1}{M}}}{M}$:
    We have with $x > 0$: $g(x) > \frac{1}{x^{M}+1} + \frac{(M-1)^{\frac{M-1}{M}}}{M} x$. We will prove:
    \begin{align}
    \begin{aligned}
        &\frac{1}{x^{M}+1} + \frac{(M-1)^{\frac{M-1}{M}}}{M} x \geq 1 \\
        &\Leftrightarrow\frac{(M-1)^{\frac{M-1}{M}}}{M} x^{M+1} - x^{M} +  \frac{(M-1)^{\frac{M-1}{M}}}{M} x \geq 0 \\
        &\Leftrightarrow x (x^{M} - \frac{M}{(M-1)^{\frac{M-1}{M}}} x^{M-1} + 1) \geq 0 \\
        &\Leftrightarrow x^{M} - \frac{M}{(M-1)^{\frac{M-1}{M}}} x^{M-1} + 1 \geq 0.
    \end{aligned}
    \end{align}
    Let $h(x) =  x^{M} - \frac{M}{(M-1)^{\frac{M-1}{M}}} x^{M-1} + 1$ with $x \geq 0$, we have:
    \begin{align}
        h^{\prime}(x) = M x^{M-1} - M (M-1)^{1/M} x^{M-2}, \nonumber \\
        h^{\prime}(x) = 0 \Leftrightarrow x = 0 \text{ or } x = (M-1)^{1/M}.
    \end{align}
    We also have: $h(0) = 1$ and $h((M-1)^{1/M}) = M - 1 - M + 1 = 0$. From the variation table, we clearly have $h(x) \geq 0 \: \forall \: x \geq 0$. 
    
    \begin{table}[ht]
    \centering
    \begin{tabular}{c|ccc}
    $x$             & 0 & $(M-1)^{1/M}$ & $\infty$ \\ \hline
    $h^{\prime}(x)$ & - & 0             & +        \\ \hline
    $h(x)$          & 1 & 0             & $\infty$
    \end{tabular}
    \end{table}
    
    Hence, in this case, $g(x) > 1 \: \forall \: x > 0$, therefore, $g(x)$ is minimized at $x = 0$.
    
    \item If $b = \frac{(M-1)^{\frac{M-1}{M}}}{M}$: We have $g(x) = \frac{1}{x^{M}+1} + \frac{(M-1)^{\frac{M-1}{M}}}{M} x \geq 1$. Thus, $g(x)$ is minimized at $x = 0$ or $x = (M-1)^{1/M}$.
    
    \item If $b < \frac{(M-1)^{\frac{M-1}{M}}}{M}$: We take the first and second derivatives of $g(x)$:
    \begin{align}
        g^{\prime}(x) &= b - \frac{M x^{M-1}}{(x^{M} + 1)^{2}}, \nonumber \\
        g^{\prime \prime}(x) &= -M \left(
        \frac{(M-1) x^{M-2}}{(x^{M}+1)^{2}} - \frac{2 M x^{2M-2}}{(x^{M}+1)^{3}}
        \right). \nonumber
        \\
        &= 
        \frac{(M^{2}+M)x^{2M-2} - (M^{2}-M) x^{M-2}}
        {(x^{M}+1)^{3}} \nonumber
    \end{align}
    We have: $g^{\prime \prime}(x) = 0 \Leftrightarrow x = 0$ or $x = \sqrt[M]{\frac{M-1}{M+1}}$. Therefore, with $x \geq 0$, $g^{\prime}(x) = 0$ has at most 2 solutions. We further have $g^{\prime} (\sqrt[M]{\frac{M-1}{M+1}}) = b -
    M (\frac{M-1}{M+1})^{\frac{M-1}{M}}  / 
    (\frac{M-1}{M+1} + 1)^{2} < (M-1)^{\frac{M-1}{M}} / M -  M (\frac{M-1}{M+1})^{\frac{M-1}{M}}  / 
    (\frac{M-1}{M+1} + 1)^{2} $. Actually, we have:
    \begin{align}
        &\frac{(M-1)^{\frac{M-1}{M}}}{M} < 
        \frac{M (\frac{M-1}{M+1})^{\frac{M-1}{M}}}
        {(\frac{M-1}{M+1} + 1)^{2}} \nonumber \\
        \Leftrightarrow \: &\left(\frac{M-1}{M+1} + 1 \right)^{2} < \frac{M^{2}}{ (M+1)^{\frac{M-1}{M}} } \nonumber \\
        \Leftrightarrow \: &\frac{4M^2}{(M+1)^{2} } < \frac{M^{2}}{ (M+1)^{\frac{M-1}{M}} } \nonumber \\
        \Leftrightarrow \: &4 < (M+1)^{2 - \frac{M-1}{M}} \nonumber \\
        \Leftrightarrow \: &4 < (M+1)^{1 +  \frac{1}{M}} \quad (\text{true } \forall M \geq 2). \nonumber
    \end{align}
    Therefore, $g^{\prime} (\sqrt[M]{\frac{M-1}{M+1}}) < 0$.
    Together with the fact that $g^{\prime}(0) = b > 0$ and $g^{\prime}(+ \infty) > 0$ , $g^{\prime}(x) = 0$ has exactly two solutions, we call it $x_1$ and $x_2$ ($x_{1} < \sqrt[M]{\frac{M-1}{M+1}} < x_{2}$). Next, we note that $g^{\prime} (x_2) = 0$ and $g^{\prime} (x) > 0 \quad \forall x > x_2$ (since $g^{\prime \prime} (x) > 0 \quad  \forall x > x_2$). In the meanwhile, $g^{\prime}(\sqrt[M]{M-1}) = b - \frac{ M (M-1)^{\frac{M-1}{M}}}{M^{2}} = b - \frac{(M-1)^{\frac{M-1}{M}}}{M} < 0$. Hence, we must have $x_{2} > \sqrt[M]{M-1}$.
    
    \begin{table}[h]
    \centering
    \begin{tabular}{c|cccccc}
    $x$    & 0 & $x_{1}$    & $\sqrt[M]{\frac{M-1}{M+1}}$    & $\sqrt[M]{M-1}$                 & $x_{2}$    & $+ \infty$ \\ \hline
    $g^{\prime \prime}(x)$ & 0 & - & 0 & + & + & + \\ \hline
    $g^{\prime}(x)$        & + & 0 & - & - & 0 & + \\ \hline
    $g(x)$ & 1 & $g(x_{1})$ & $g(\sqrt[M]{\frac{M-1}{M+1}})$ & $\frac{1}{M} + b \sqrt[M]{M-1}$ & $g(x_{2})$ & $+ \infty$
    \end{tabular}
    \end{table}
    
    From the variation table, we can see that $g(x_{2}) < g(\sqrt[M]{M-1}) = \frac{1}{M} + b \sqrt[M]{M-1} < \frac{1}{M} + \frac{(M-1)^{\frac{M-1}{M}}}{M} \sqrt[M]{M-1} = \frac{1}{M} + \frac{M-1}{M} = 1 = g(0)$. 
    
    In conclusion, in this case, $g(x)$ is minimized at $x_{2} > \sqrt[M]{M-1}$, i.e. the largest solution of the equation $b - \frac{M x^{M-1}}{(x^{M} + 1)^{2}} = 0$.
\end{itemize}

\subsection{Full Proof of Theorem \ref{thm:bias-free} with Bias-Free}
Now, we state the proof of Theorem \ref{thm:bias-free} for general setting with $M$ layers of weight with no bias (i.e., excluding $\mathbf{b}$) with arbitrary widths $d_{M}, d_{M-1},\ldots, d_{1}$.

\begin{proof}[Proof of Theorem \ref{thm:bias-free} (bias-free)]
    First, by using Lemma \ref{lm:2}, we have for any critical point  $(\mathbf{W}_{M}, \mathbf{W}_{M-1},\ldots, \mathbf{W}_{2}, \mathbf{W}_{1}, \mathbf{H}_{1})$ of $f$, we have the following:
    \begin{align}
    \begin{gathered}
        \lambda_{W_{M}} \mathbf{W}^{\top}_{M} \mathbf{W}_{M} = \lambda_{W_{M-1}} \mathbf{W}_{M-1} \mathbf{W}^{\top}_{M-1}, \\
        \lambda_{W_{M-1}} \mathbf{W}^{\top}_{M-1} \mathbf{W}_{M-1} = \lambda_{W_{M-2}} \mathbf{W}_{M-2} \mathbf{W}^{\top}_{M-2}, \\
        \ldots, \nonumber \\
        \lambda_{W_{2}} \mathbf{W}_{2}^{\top} \mathbf{W}_{2} = \lambda_{W_{1}} \mathbf{W}_{1} \mathbf{W}_{1}^{\top}, \\
        \lambda_{W_{1}} \mathbf{W}_{1}^{\top} \mathbf{W}_{1} =  \lambda_{H_{1}}\mathbf{H}_{1} \mathbf{H}_{1}^{\top}. 
    \end{gathered}
    \end{align}
    
    \noindent Let $\mathbf{W}_{1} = \mathbf{U}_{W_{1}} \mathbf{S}_{W_{1}} \mathbf{V}_{W_{1}}^{\top}$ be the SVD decomposition of $\mathbf{W}_{1}$ with $\mathbf{U}_{W_{1}} \in \mathbb{R}^{d_{2} \times d_{2}}, \mathbf{V}_{W_{1}} \in \mathbb{R}^{d_{1} \times d_{1}}$ are orthonormal matrices and $\mathbf{S}_{W_{1}} \in \mathbb{R}^{d_{2} \times d_{1}}$ is a diagonal matrix with \textbf{decreasing} non-negative singular values. We denote the $r$ singular values of $\mathbf{W}_{1}$  as $\left\{s_{k}\right\}_{k=1}^{r}$ ($r \leq R:= \min(K, d_{M}, \ldots, d_{1})$, from Lemma \ref{lm:3}). From Lemma \ref{lm:4}, we have the SVD of other weight matrices as:
    \begin{align}
    \begin{gathered}
        \mathbf{W}_{M} = \mathbf{U}_{W_{M}} \mathbf{S}_{W_{M}}
        \mathbf{U}_{W_{M-1}}^{\top},
        \\
        \mathbf{W}_{M-1} = \mathbf{U}_{W_{M-1}} \mathbf{S}_{W_{M-1}} \mathbf{U}_{W_{M-2}}^{\top},     \\
        \mathbf{W}_{M-2} = \mathbf{U}_{W_{M-2}} \mathbf{S}_{W_{M-2}} \mathbf{U}_{W_{M-3}}^{\top}, \\
        \mathbf{W}_{M-3} = \mathbf{U}_{W_{M-3}} \mathbf{S}_{W_{M-3}} \mathbf{U}_{W_{M-4}}^{\top}, \\
        \ldots \\
        \mathbf{W}_{2} = \mathbf{U}_{W_{2}} \mathbf{S}_{W_{2}} \mathbf{U}_{W_{1}}^{\top},
        \\
        \mathbf{W}_{1} = \mathbf{U}_{W_{1}} \mathbf{S}_{W_{1}} \mathbf{V}_{W_{1}}^{\top},
        \nonumber
    \end{gathered}
    \end{align}
    where:
    \begin{align}
       \mathbf{S}_{W_{j}} = \sqrt{\frac{\lambda_{W_{1}}}{\lambda_{W_{j}}}}
        \begin{bmatrix}
        \operatorname{diag}(s_{1},\ldots, s_{r}) & \mathbf{0}_{r \times (d_{j} - r)}  \\
        \mathbf{0}_{(d_{j+1} - r) \times r} & \mathbf{0}_{(d_{j+1} - r) \times (d_{j} - r)}  \\
        \end{bmatrix} \in \mathbb{R}^{d_{j+1} \times d_{j}}, \quad \forall \: j \in [M],
        \nonumber
    \end{align} 
    and $\mathbf{U}_{W_{M}}, \mathbf{U}_{W_{M-1}}, \mathbf{U}_{W_{M-2}}, \mathbf{U}_{W_{M-3}},\ldots, \mathbf{U}_{W_{1}}, \mathbf{V}_{W_{1}} $ are all orthonormal matrices.
    \\
    
    \noindent From Lemma \ref{lm:5}, denote $c := \frac{\lambda_{W_{1}}^{M-1}}
    {\lambda_{W_{M}} \lambda_{W_{M-1}} \ldots \lambda_{W_{2}} }$, we have:
    \begin{align}
    \begin{aligned}
    \mathbf{H}_{1} &= 
    \mathbf{V}_{W_{1}}
    \underbrace{
    \begin{bmatrix}
    \operatorname{diag}\left(
        \frac{\sqrt{c} s_{1}^{M} }{c s_{1}^{2M} + N \lambda_{H_{1}}}, \ldots, \frac{\sqrt{c} s_{r}^{M}}{c s_{r}^{2M} + N \lambda_{H_{1}}}
        \right) &  \mathbf{0}\\
    \mathbf{0} &  \mathbf{0}\\
    \end{bmatrix}}_{\mathbf{C} \in \mathbb{R}^{d_{1} \times K}}
    \mathbf{U}_{W_{M}}^{\top}
    \mathbf{Y} \\
    &= \mathbf{V}_{W_{1}}
    \mathbf{C}
    \mathbf{U}_{W_{M}}^{\top}
    \mathbf{Y},
    \end{aligned}
    \end{align} 
    \begin{align}
    \begin{aligned}
        \mathbf{W}_{M} \mathbf{W}_{M-1} \ldots \mathbf{W}_{2} \mathbf{W}_{1}   \mathbf{H} - \mathbf{Y} 
        &= \mathbf{U}_{W_{M}} 
        \underbrace{
        \begin{bmatrix}
        \operatorname{diag}\left(
            \frac{- N \lambda_{H_{1}} }{c s_{1}^{2M} + N \lambda_{H_{1}}}, \ldots, \frac{- N \lambda_{H_{1}}}{c s_{r}^{2M} + N \lambda_{H_{1}}}
            \right ) &  \mathbf{0}\\
        \mathbf{0} &  -\mathbf{I}_{K-r}\\
        \end{bmatrix}}_{\mathbf{D} \in \mathbb{R}^{K \times K}}
        \mathbf{U}_{W_{M}}^{\top}
        \mathbf{Y} \\
        &= \mathbf{U}_{W_{M}}
        \mathbf{D}
        \mathbf{U}_{W_{M}}^{\top}
        \mathbf{Y}.
    \end{aligned}
    \end{align}
    
    \noindent Next, we will calculate the Frobenius norm of $\mathbf{W}_{M} \mathbf{W}_{M-1} \ldots \mathbf{W}_{2} \mathbf{W}_{1} \mathbf{H} - \mathbf{Y}$:
    \begin{align}
         \| \mathbf{W}_{M} \mathbf{W}_{M-1} \ldots \mathbf{W}_{2} \mathbf{W}_{1} \mathbf{H}_{1} - \mathbf{Y}  \|_F^2
         &= \| \mathbf{U}_{W_{M}}
         \mathbf{D}
         \mathbf{U}_{W_{M}}^{\top}
         \mathbf{Y}  \|_F^2 \nonumber \\
         &= \operatorname{trace}
         (\mathbf{U}_{W_{M}}
         \mathbf{D}
         \mathbf{U}_{W_{M}}^{\top}
         \mathbf{Y} (\mathbf{U}_{W_{M}}
         \mathbf{D}
         \mathbf{U}_{W_{M}}^{\top}
         \mathbf{Y})^{\top}  ) \nonumber \\
         &= \operatorname{trace}
         (\mathbf{U}_{W_{M}}
         \mathbf{D}
         \mathbf{U}_{W_{M}}^{\top}
         \mathbf{Y} \mathbf{Y}^{\top} \mathbf{U}_{W_{M}} \mathbf{D}
         \mathbf{U}_{W_{M}}^{\top}
         ) \nonumber \\
         &=  \operatorname{trace}
         ( \mathbf{D}^{2} \mathbf{U}_{W_{M}}^{\top}  \mathbf{Y} \mathbf{Y}^{\top}   \mathbf{U}_{W_{M}} )
         \nonumber \\
         &= n \operatorname{trace}
         (\mathbf{D}^{2})
         = n \left[ \sum_{k=1}^{r} 
          \left( \frac{- N \lambda_{H_{1}} }{c s_{1}^{2M} + N \lambda_{H_{1}}} \right)^{2}
          + K - r
         \right].
        \label{eq:WH_norm_no_bias}
     \end{align}
     where we use the fact $\mathbf{Y} \mathbf{Y}^{\top} = (\mathbf{I}_{K} \otimes \mathbf{1}_{n}^{\top}) (\mathbf{I}_{K} \otimes \mathbf{1}_{n}^{\top})^{\top} = n \mathbf{I}_{K}$ and $\mathbf{U}_{W_{M}}$ is an orthonormal matrix.
     \\
     
     \noindent Similarly, for $\mathbf{H}_{1}$, we have:
     \begin{align}
         \| \mathbf{H}_{1} \|_F^2
        &= \operatorname{trace}
        ( \mathbf{V}_{W_{1}} 
        \mathbf{C} \mathbf{U}_{W_{M}}^{\top}
        \mathbf{Y} \mathbf{Y}^{\top}
        \mathbf{U}_{W_{M}} \mathbf{C}^{\top}
        \mathbf{V}_{W_{1}}^{\top}
        ) = 
        \operatorname{trace} 
        (  \mathbf{C}^{\top}  \mathbf{C} \mathbf{U}_{W_{M}}^{\top}
        \mathbf{Y} \mathbf{Y}^{\top}
        \mathbf{U}_{W_{M}}  ) \nonumber \\
        &=
        n \sum_{k=1}^{r} \frac{c s_{k}^{2M}}{c s_{k}^{2M} + N \lambda_{H_{1}}}.
        \label{eq:H_norm_no_bias}
     \end{align}
    
    Now, we plug equations \eqref{eq:WH_norm_no_bias}, \eqref{eq:H_norm_no_bias} and the SVD of weight matrices into the function $f$ and note that orthonormal matrix does not change Frobenius norm, we got:
    \begin{align}
        f\left(\mathbf{W}_{M},\ldots, \mathbf{W}_{1}, \mathbf{H}_{1} \right) 
        &=
        \frac{1}{2N} 
        \| \mathbf{W}_{M} \mathbf{W}_{M-1} \ldots \mathbf{W}_{2} \mathbf{W}_{1}   \mathbf{H} - \mathbf{Y} \|_F^2   
        + \frac{\lambda_{W_{M}}}{2} \| \mathbf{W}_{M} \|^2_F + \ldots
        + \frac{\lambda_{W_{1}}}{2} \| \mathbf{W}_{1} \|^2_F +\frac{ \lambda_{H_{1}}}{2}\left\|\mathbf{H}_{1}
        \right\|_{F}^{2} \nonumber \\
        &= 
        \frac{1}{2K} \sum_{k=1}^{r}
         \frac{(-N \lambda_{H_{1}})^{2}  }{ ( c s_{k}^{2M} + N \lambda_{H_{1}})^{2}} 
         + 
         \frac{K - r}{2K}
        + 
        \frac{\lambda_{W_{M}}}{2} \sum_{k = 1}^{r}
        \frac{\lambda_{W_{1}}}{\lambda_{W_{M}}} s_{k}^{2}
        + \frac{\lambda_{W_{M-1}}}{2} \sum_{k=1}^{r}  \frac{\lambda_{W_{1}}}{\lambda_{W_{M-1}}} s_{k}^{2}
        \nonumber \\
        &+ \ldots + \frac{\lambda_{W_{1}}}{2} \sum_{k=1}^{r} s_{k}^{2} + \frac{n \lambda_{H_{1}}}{2} 
        \sum_{k=1}^{r} 
        \frac{c s_{k}^{2M}  }{ ( cs_{k}^{2M} + N \lambda_{H_{1}})^{2}}
        \nonumber \\
        &= \frac{n \lambda_{H_{1}}}{2} \sum_{k=1}^{r}
        \frac{1}
        {c s_{k}^{2M} + N \lambda_{H_{1}}} 
        + \frac{K-r}{2K} 
        + \frac{M \lambda_{W_{1}}}{2} \sum_{k=1}^{r} s_{k}^{2} \nonumber \\
        &= \frac{1}{2K} 
        \sum_{k=1}^{r} 
        \left( 
        \frac{1}{\frac{c s_{k}^{2M}}{N \lambda_{H_{1}}} + 1} + M N \lambda_{W_{1}} \sqrt[M]{\frac{N \lambda_{H_{1}}}{c}} \left(\sqrt[M]{\frac{c s_{k}^{2M}}{N \lambda_{H_{1}}}} \right) 
        \right)
        + \frac{K-r}{2K} \nonumber \\
        &=  \frac{1}{2K} 
        \sum_{k=1}^{r} 
        \left( \frac{1}{x^{M}_{k} +1} + bx_{k} 
        \right)
        + \frac{K-r}{2K},
        \label{eq:f_no_bias}
    \end{align}
    with $x_{k} := \sqrt[M]{\frac{c s_{k}^{2M}}{N \lambda_{H_{1}}}} $ and $b:= M K \lambda_{W_{1}} \sqrt[M]{\frac{N \lambda_{H_{1}}}{c}} = M K \lambda_{W_{1}}
    \sqrt[M]{\frac{N \lambda_{W_{M}} \lambda_{W_{M-2}} \ldots \lambda_{W_{1}} \lambda_{H_{1}}}{ \lambda_{W_{1}}^{M-1}}}  =
    M K \sqrt[M]{K n  \lambda_{W_{M}} \lambda_{W_{M-1}} \ldots \lambda_{W_{1}} \lambda_{H_{1}}}$.
    \\
    
    \noindent Recall that we have studied the minimizer of function $g(x) = \frac{1}{x^{M}+1} + bx$  in Section \ref{sec:study_g}. From equation \eqref{eq:f_no_bias}, $f$ can be written as $\frac{1}{2K} \sum_{k=1}^{r} g(x_{k}) + \frac{K-r}{2N}$. By applying the result from Section \ref{sec:study_g} for each $g(x_{k})$, we finish bounding $f$ and the equality conditions are as following:
    
    \begin{itemize}
        \item If $b = MK \sqrt[M]{K n  \lambda_{W_{M}} \lambda_{W_{M-1}} \ldots \lambda_{W_{1}} \lambda_{H_{1}}} > \frac{(M-1)^{\frac{M-1}{M}}}{M}$: all the singular values of $\mathbf{W}_{1}$ are zeros. Therefore, the singular values of $\mathbf{W}_{M}, \mathbf{W}_{M-1}, \ldots, \mathbf{H}_{1}$ are also all zeros. In this case, $f(\mathbf{W}_{M}, \mathbf{W}_{M-1},\ldots, \mathbf{W}_{2}, \mathbf{W}_{1}, \mathbf{H}_{1})$ is minimized at $(\mathbf{W}_{M}^{\ast}, \mathbf{W}_{M-1}^{\ast}, \ldots, \mathbf{W}_{1}^{\ast}, \mathbf{H}_{1}^{\ast}) = (\mathbf{0}, \mathbf{0},\ldots \mathbf{0}, \mathbf{0})$.
        
        \item If $b = MK \sqrt[M]{K n  \lambda_{W_{M}} \lambda_{W_{M-1}} \ldots \lambda_{W_{1}} \lambda_{H_{1}}} < \frac{(M-1)^{\frac{M-1}{M}}}{M}$: In this case, $\mathbf{W}_{1}^{\ast}$ have $r$ singular values, all of which are equal a multiplier of the largest positive solution of the equation $b - \frac{M x^{M-1}}{(x^{M} + 1)^{2}} = 0$, we denote that singular value as $s$. Hence, we can write the compact SVD form (with a bit of notation abuse) of $\mathbf{W}_{M-1}^{\ast}$ as $\mathbf{W}_{1}^{\ast} = s \mathbf{U}_{W_{1}} \mathbf{V}_{W_{1}}^{\top}$ with semi-orthonormal matrices $\mathbf{U}_{W_{1}} \in \mathbb{R}^{d_{2} \times r}, \mathbf{V}_{W_{1}} \in \mathbb{R}^{d_{1} \times r}$. (note that $\mathbf{U}_{W_{1}}^{\top} \mathbf{U}_{W_{1}} = \mathbf{I}$ and $\mathbf{V}_{W_{1}}^{\top} \mathbf{V}_{W_{1}} = \mathbf{I}$). Since $\frac{1}{x^{* M} + 1} + bx^{*} < 1$, we have $r = R = \min(K, d_{M}, \ldots, d_{1})$ in this case.
        \\
            
        Similarly, we also have the compact SVD form of other weight matrices and feature matrix as:
            \begin{align}
                \mathbf{W}_{M}^{\ast}  &= \sqrt{\frac{\lambda_{W_{1}}}{\lambda_{W_{M}}}} s \mathbf{U}_{W_{M}} \mathbf{U}_{W_{M-1}}^{T}, \nonumber \\
                \mathbf{W}_{M-1}^{\ast} &= \sqrt{\frac{\lambda_{W_{1}}}{\lambda_{W_{M-1}}}} s 
                \mathbf{U}_{W_{M-1}} \mathbf{U}_{W_{M-2}}^{\top}, \nonumber \\
                \ldots \nonumber \\
                \mathbf{W}_{1}^{\ast} &= s \mathbf{U}_{W_{1}} \mathbf{V}_{W_{1}}^{\top}, \nonumber \\
                \mathbf{H}_{1}^{\ast} &= \frac{\sqrt{c} s^{M}}{c s^{2M} + N \lambda_{H_{1}}} \mathbf{V}_{W_{1}} \mathbf{U}_{W_{M}}^{\top}
                \mathbf{Y} \quad (\text{from equation } \eqref{eq:H_norm_no_bias}),
                \nonumber
            \end{align}
            with semi-orthonormal matrices $\mathbf{U}_{W_{M}}, \mathbf{U}_{W_{M-1}}, \mathbf{U}_{W_{M-2}}, \ldots, \mathbf{U}_{W_{1}}, \mathbf{V}_{W_{1}}$ that each has $R$ orthogonal columns, i.e. $\mathbf{U}_{W_{M}}^{\top} \mathbf{U}_{W_{M}} = \mathbf{U}_{W_{M-1}}^{\top} \mathbf{U}_{W_{M-1}} =  \ldots = \mathbf{U}_{W_{1}}^{\top} \mathbf{U}_{W_{1}} = \mathbf{V}_{W_{1}}^{\top} \mathbf{V}_{W_{1}} = \mathbf{I}_{R}$. Furthermore, $\mathbf{U}_{W_{M}}, \mathbf{U}_{W_{M-1}}, \ldots, \mathbf{U}_{W_{1}}, \mathbf{V}_{W_{1}}$ are truncated matrices from orthonormal matrices (remove columns that do not correspond with non-zero singular values), hence $\mathbf{U}_{W_{M}} \mathbf{U}_{W_{M}}^{\top}, \mathbf{U}_{W_{M-1}} \mathbf{U}_{W_{M-1}}^{\top}, \ldots,  \mathbf{U}_{W_{1}} \mathbf{U}_{W_{1}}^{\top}, \mathbf{V}_{W_{1}} \mathbf{V}_{W_{1}}^{\top} $ are the best rank-$R$ approximations of the identity matrix of the same size.
            \\
            
            Let $\overline{\mathbf{H}}^{*} = \frac{\sqrt{c} s^{M}}{c s^{2M} + N \lambda_{H_{1}}} \mathbf{V}_{W_{1}} \mathbf{U}_{W_{M}}^{\top} \in \mathbb{R}^{d_{1} \times K}$, then we have $(\mathcal{NC}1)$ $\mathbf{H}_{1}^{*} = \overline{\mathbf{H}}^{*} \mathbf{Y} = \overline{\mathbf{H}}^{*} \otimes \mathbf{1}_{n}^{\top}$, thus we conclude the features within the same class collapse to their class-mean and $\overline{\mathbf{H}}^{*}$ is the class-means matrix.
            \\
            
            From above arguments, we can deduce the geometry of the following $(\mathcal{NC}2)$:
            \begin{align}
            \begin{gathered}
                 \mathbf{W}_{M}^{\ast} \mathbf{W}_{M}^{\top \ast} \propto 
                 \mathbf{U}_{W_{M}} \mathbf{U}_{W_{M}}^{\top} \propto \mathcal{P}_{R}(\mathbf{I}_{K}), \\
                \overline{\mathbf{H}}^{\ast \top} \overline{\mathbf{H}}^{\ast} \propto 
                 \mathbf{U}_{W_{M}} \mathbf{U}_{W_{M}}^{\top} \propto \mathcal{P}_{R}(\mathbf{I}_{K}),
                \\
                \mathbf{W}_{M}^{\ast} \mathbf{W}_{M-1}^{\ast}  \mathbf{W}_{M-2}^{\ast} 
                \ldots \mathbf{W}_{2}^{\ast} 
                \mathbf{W}_{1}^{\ast} \overline{\mathbf{H}}^{*} \propto 
                 \mathbf{U}_{W_{M}} \mathbf{U}_{W_{M}}^{\top} \propto \mathcal{P}_{R}(\mathbf{I}_{K}),  \\
                (\mathbf{W}_{M}^{\ast} \mathbf{W}_{M-1}^{\ast} \ldots \mathbf{W}_{j}^{\ast})(\mathbf{W}_{M}^{\ast} \mathbf{W}_{M-1}^{\ast} \ldots \mathbf{W}_{j}^{\ast})^{\top} \propto 
                 \mathbf{U}_{W_{M}} \mathbf{U}_{W_{M}}^{\top} \propto \mathcal{P}_{R}(\mathbf{I}_{K}), \quad \forall \: j \in [M].
            \end{gathered}
            \end{align}
            Note that if $R = K$, we have $\mathcal{P}_{R}(\mathbf{I}_{K}) = \mathbf{I}_{K}$.
            \\
            
            Also, the product of each weight matrix or features with its transpose will be the multiplier of one of the best rank-$r$ approximations of the identity matrix of the same size. For example, $\mathbf{W}_{M-1}^{\ast \top} \mathbf{W}_{M-1}^{\ast} \propto \mathbf{U}_{W_{M-2}} \mathbf{U}_{W_{M-2}}^{\top}$ and $\mathbf{W}_{M-1}^{\ast} \mathbf{W}_{M-1}^{\ast \top} \propto \mathbf{U}_{W_{M-1}} \mathbf{U}_{W_{M-1}}^{\top}$ are two best rank-$R$ approximations of $\mathbf{I}_{d_{M-1}}$ and $\mathbf{I}_{d_{M}}$, respectively.
            \\

            Next, we can derive the alignments between weights and features as following $(\mathcal{NC}3)$:
            \begin{align}
            \begin{gathered}
                \mathbf{W}_{M}^{\ast} \mathbf{W}_{M-1}^{\ast} \ldots \mathbf{W}_{1}^{\ast} \propto \mathbf{U}_{W_{M}} \mathbf{V}_{W_{1}}^{\top} \propto \overline{\mathbf{H}}^{* \top}, \\
                \mathbf{W}_{M-1}^{\ast} \mathbf{W}_{M-2}^{\ast} \ldots \mathbf{W}_{1}^{\ast} \overline{\mathbf{H}}^{*} \propto \mathbf{U}_{W_{M-1}}   \mathbf{U}_{W_{M}}^{\top} \propto  \mathbf{W}_{M}^{\ast \top}, \\
                \mathbf{W}_{M}^{*} \mathbf{W}_{M-1}^{*} \ldots \mathbf{W}_{j}^{*} \propto  
                \mathbf{U}_{W_{M}} \mathbf{U}_{W_{j-1}}^{\top} \propto  (\mathbf{W}_{j-1}^{*} \ldots \mathbf{W}_{1}^{*} \overline{\mathbf{H}}^{*})^{\top}.
            \end{gathered}
            \end{align}
            
            \item If $b = MK \sqrt[M]{K n  \lambda_{W_{M}} \lambda_{W_{M-1}} \ldots \lambda_{W_{1}} \lambda_{H_{1}}} = \frac{(M-1)^{\frac{M-1}{M}}}{M}$:
            In this case, $x^{*}_{k}$ can either be $0$ or the largest positive solution of the equation $b - \frac{M x^{M-1}}{(x^{M} + 1)^{2}} = 0$. If all the singular values are $0$'s, we have the trivial global minima $(\mathbf{W}_{M}^{\ast}, \ldots, \mathbf{W}_{1}^{\ast}, \mathbf{H}_{1}^{\ast}) = (\mathbf{0},\ldots, \mathbf{0}, \mathbf{0})$.
            \\
            
            If there are exactly $0 < r \leq R$ positive singular values  $s_{1} = s_{2} = \ldots = s_{r} := s > 0$ and $s_{r+1} = \ldots = s_{R} = 0$, then similar as the case $b < \frac{(M-1)^{\frac{M-1}{M}}}{M}$, we also have similar compact SVD form (with exactly $r$ singular vectors, instead of $R$ as the above case). Thus, the nontrivial solutions exhibit  $(\mathcal{NC}1)$ and $(\mathcal{NC}3)$ property similarly as the case $b < \frac{(M-1)^{\frac{M-1}{M}}}{M}$ above.
            \\
            
            For $(\mathcal{NC}2)$ property, for $j = 1, \ldots, M$, we have:
            \begin{align}
                \begin{gathered}
                   \mathbf{W}_{M}^{\ast} \mathbf{W}_{M}^{\ast \top} \propto 
                   \overline{\mathbf{H}}^{\ast \top} \overline{\mathbf{H}}^{\ast}
                    \propto
                   \mathbf{W}_{M}^{\ast} \mathbf{W}_{M-1}^{\ast}  \mathbf{W}_{M-2}^{\ast} 
                    \ldots \mathbf{W}_{2}^{\ast} 
                   \mathbf{W}_{1}^{\ast} \overline{\mathbf{H}}^{*}
                    \\ \propto 
                   (\mathbf{W}_{M}^{\ast} \mathbf{W}_{M-1}^{\ast} \ldots \mathbf{W}_{j}^{\ast})(\mathbf{W}_{M}^{\ast} \mathbf{W}_{M-1}^{\ast} \ldots \mathbf{W}_{j}^{\ast})^{\top}
                    \propto \mathcal{P}_{r}(\mathbf{I}_{K}).
                   \nonumber
                \end{gathered}
                \end{align}
            \end{itemize}
    We finish the proof of Theorem \ref{thm:bias-free} for bias-free case.
    \end{proof}

\subsection{Full Proof of Theorem \ref{thm:bias-free} with Last-layer Unregularized Bias}
\label{sec:proofs_balanced_bias}
Now, we state the proof of Theorem \ref{thm:bias-free} for general setting with $M$ layers of weight with last-layer bias (i.e., including $\mathbf{b}$) with arbitrary widths $d_{M}, d_{M-1},\ldots, d_{1}$.

\begin{proof}[Proof of Theorem \ref{thm:bias-free} (last-layer bias)]
    First, we have that the objective function $f$ is convex w.r.t $\mathbf{b}$. Hence, we can derive the optimal $\mathbf{b}^{*}$ through its derivative w.r.t $\mathbf{b}$ (note that $N = K n$):
        \begin{align}
            &\frac{1}{N} (\mathbf{W}_{M} \mathbf{W}_{M-1} \ldots \mathbf{W}_{2} \mathbf{W}_{1} \mathbf{H}_{1} + \mathbf{b}^{*} \mathbf{1}_{N}^{\top} - \mathbf{Y}) \mathbf{1}_{N} = \mathbf{0} \nonumber \\
            \Rightarrow \: &\mathbf{b}^{*} = \frac{1}{N} ( \mathbf{Y} - \mathbf{W}_{M} \mathbf{W}_{M-1} \ldots \mathbf{W}_{2} \mathbf{W}_{1}   \mathbf{H}_{1}) \mathbf{1}_{N}
            = \frac{1}{N} \sum_{k=1}^{K} \sum_{i=1}^{n} (\mathbf{y}_{k} - \mathbf{W}_{M} \mathbf{W}_{M-1} \ldots \mathbf{W}_{2} \mathbf{W}_{1}   \mathbf{h}_{k,i}).
        \end{align} 
        
        \noindent Since $\{ \mathbf{y}_{k} \}$ are one-hot vectors, we have:
        \begin{align}
            \mathbf{b}^{*}_{k^{\prime}} =
            \frac{n}{N} - \frac{1}{N}
            \sum_{k=1}^{K} \sum_{i=1}^{n} 
            (\mathbf{W}_{M} \mathbf{W}_{M-1} \ldots \mathbf{W}_{2} \mathbf{W}_{1}  )_{k^{\prime}}^{\top} \mathbf{h}_{k,i} = \frac{1}{K} - (\mathbf{W}_{M} \mathbf{W}_{M-1} \ldots \mathbf{W}_{2} \mathbf{W}_{1}  )_{k^{\prime}}^{\top} \mathbf{h_{G}},
            \label{eq:bias_form}
        \end{align}
        where $\mathbf{h}_{G}:= \frac{1}{N} \sum_{k=1}^{K} \sum_{i=1}^{n}  \mathbf{h}_{k,i}$ is the features' global-mean and $(\mathbf{W}_{M} \mathbf{W}_{M-1} \ldots \mathbf{W}_{2} \mathbf{W}_{1}  )_{k^{\prime}}$ is $k^{\prime}$-th row of $\mathbf{W}_{M} \mathbf{W}_{M-1} \ldots \mathbf{W}_{2} \mathbf{W}_{1}  $.

    Next, we plug $\mathbf{b}^{*}$ into $f$:
    \allowdisplaybreaks
    \begin{align}
    \begin{aligned}
    f &= \frac{1}{2Kn} \| \mathbf{W}_{M} \mathbf{W}_{M-1} \ldots \mathbf{W}_{2} \mathbf{W}_{1}  
    \mathbf{H}_{1} + \mathbf{b}^{*} \mathbf{1}_{N}^{\top} - \mathbf{Y} \|_F^2 + \frac{\lambda_{W_{M}}}{2} \| \mathbf{W}_{M} \|^2_F +  \ldots +   \frac{\lambda_{W_{2}}}{2} \| \mathbf{W}_{2} \|^2_F +
    \frac{\lambda_{W_{1}}}{2} \| \mathbf{W}_{1} \|^2_F \\
    &+ \frac{\lambda_{H_{1}}}{2} \| \mathbf{H}_{1} \|^2_F \\
    &= \frac{1}{2 K n} \sum_{k=1}^{K} \sum_{i=1}^{n}
    \| \mathbf{W}_{M} \mathbf{W}_{M-1} \ldots \mathbf{W}_{2} \mathbf{W}_{1} \mathbf{h}_{k,i} + \mathbf{b}^{*} - \mathbf{y}_{k} \|_{2}^{2}  + \frac{\lambda_{W_{M}}}{2} \| \mathbf{W}_{M} \|^2_F +   \ldots +   \frac{\lambda_{W_{2}}}{2} \| \mathbf{W}_{2} \|^2_F 
    + \frac{\lambda_{W_{1}}}{2} \| \mathbf{W}_{1} \|^2_F \\
    &+
    \sum_{k=1}^{K} \sum_{i=1}^{n} \| \mathbf{h}_{k,i} \|_{2}^{2} \\
    &= 
    \frac{1}{2 K n} \sum_{k=1}^{K} \sum_{i=1}^{n} \sum_{k^{\prime}=1}^{K}
    \left( (\mathbf{W}_{M} \mathbf{W}_{M-1} \ldots \mathbf{W}_{2} \mathbf{W}_{1})_{k^{\prime}}^{\top} (\mathbf{h}_{k,i} - \mathbf{h}_{G}) + \frac{1}{K} - \mathbf{1}_{k = k^{\prime}} 
    \right)^{2}
    + \frac{\lambda_{W_{M}}}{2} \| \mathbf{W}_{M} \|^2_F +  \ldots 
    \\
    &+ \frac{\lambda_{W_{1}}}{2} \| \mathbf{W}_{1} \|^2_F 
    +
    \sum_{k=1}^{K} \sum_{i=1}^{n} \| \mathbf{h}_{k,i} \|_{2}^{2}
    \end{aligned}
    \nonumber
    \end{align}
    
    \begin{align}
    \begin{aligned}
    &\geq 
    \frac{1}{2 K n} \sum_{k=1}^{K} \sum_{i=1}^{n} \sum_{k^{\prime}=1}^{K}
    \left( (\mathbf{W}_{M} \mathbf{W}_{M-1} \ldots \mathbf{W}_{2} \mathbf{W}_{1})_{k^{\prime}}^{\top} (\mathbf{h}_{k,i} - \mathbf{h}_{G}) + \frac{1}{K} - \mathbf{1}_{k = k^{\prime}} 
    \right)^{2}
    + \frac{\lambda_{W_{M}}}{2} \| \mathbf{W}_{M} \|^2_F +  \ldots 
    \\
    &+ \frac{\lambda_{W_{1}}}{2} \| \mathbf{W}_{1} \|^2_F 
    +
    \sum_{k=1}^{K} \sum_{i=1}^{n} \| \mathbf{h}_{k,i} - \mathbf{h}_{G} \|_{2}^{2} \\
    &= 
    \frac{1}{2Kn} \| \mathbf{W}_{M} \mathbf{W}_{M-1} \ldots \mathbf{W}_{2} \mathbf{W}_{1}  
    \mathbf{H}_{1}^{'}  - (\mathbf{Y} - \frac{1}{K} \mathbf{1}_{K} \mathbf{1}_{N}^{\top}) \|_F^2 + \frac{\lambda_{W_{M}}}{2} \| \mathbf{W}_{M} \|^2_F +  \ldots +   \frac{\lambda_{W_{2}}}{2} \| \mathbf{W}_{2} \|^2_F 
    \\
    &+ \frac{\lambda_{W_{1}}}{2} \| \mathbf{W}_{1} \|^2_F +  \frac{\lambda_{H_{1}}}{2} \| \mathbf{H}_{1}^{'} \|^2_F := f^{'} (\mathbf{W}_{M}, \mathbf{W}_{M-1},\ldots, \mathbf{W}_{2}, \mathbf{W}_{1}, \mathbf{H}_{1}^{'}),
    \end{aligned}
    \nonumber
    \end{align}
    where $\mathbf{H}_{1}^{'} = [\mathbf{h}_{1,1} - \mathbf{h}_{G}, \ldots, \mathbf{h}_{K,n} - \mathbf{h}_{G}] \in \mathbb{R}^{d \times N}$ and the inequality is from:
    \begin{align}
        \sum_{k=1}^{K} \sum_{i=1}^{n} \| \mathbf{h}_{k,i} \|_{2}^{2} &= \sum_{k=1}^{K} \sum_{i=1}^{n} 
        \left( \| \mathbf{h}_{k,i} - \mathbf{h}_{G} \|_{2}^{2} 
        + 2 (\mathbf{h}_{k,i} - \mathbf{h}_{G})^{\top} \mathbf{h}_{G} + \|  \mathbf{h}_{G} \|_{2}^{2} \right) \nonumber \\
        &= \sum_{k=1}^{K} \sum_{i=1}^{n} \| \mathbf{h}_{k,i} - \mathbf{h}_{G} \|_{2}^{2} + N \| \mathbf{h}_{G} \|_2^2 \nonumber \\
        &\geq \sum_{k=1}^{K} \sum_{i=1}^{n} \| \mathbf{h}_{k,i} - \mathbf{h}_{G} \|_{2}^{2}
        \label{eq:hg=0},
    \end{align}
    where the equality happens when $\mathbf{h}_{G} = 0$.
    \\
    
    \noindent Noting that $f^{'}$ has similar form as function $f$ for bias-free case (except the difference of the target matrix $\mathbf{Y}$), we can use the lemmas derived at Section \ref{sec:lemmas} for $f^{'}$. First, by using Lemma \ref{lm:2}, we have for any critical point $(\mathbf{W}_{M}, \mathbf{W}_{M-1},\ldots, \mathbf{W}_{2}, \mathbf{W}_{1}, \mathbf{H}_{1}^{'})$ of $f^{'}$, we have the following:
    \begin{align}
    \begin{gathered}
        \lambda_{W_{M}} \mathbf{W}^{\top}_{M} \mathbf{W}_{M} = \lambda_{W_{M-1}} \mathbf{W}_{M-1} \mathbf{W}^{\top}_{M-1}, \\
        \lambda_{W_{M-1}} \mathbf{W}^{\top}_{M-1} \mathbf{W}_{M-1} = \lambda_{W_{M-2}} \mathbf{W}_{M-2} \mathbf{W}^{\top}_{M-2}, \\
        \ldots, \nonumber \\
        \lambda_{W_{2}} \mathbf{W}_{2}^{\top} \mathbf{W}_{2} = \lambda_{W_{1}} \mathbf{W}_{1} \mathbf{W}_{1}^{\top}, \\
        \lambda_{W_{1}} \mathbf{W}_{1}^{\top} \mathbf{W}_{1} =  \lambda_{H_{1}}\mathbf{H}_{1}^{'} \mathbf{H}^{' \top}_{1}.
    \end{gathered}
    \end{align}
    
    \noindent Let $\mathbf{W}_{1} = \mathbf{U}_{W_{1}} \mathbf{S}_{W_{1}} \mathbf{V}_{W_{1}}^{\top}$ be the SVD decomposition of $\mathbf{W}_{1}$ with $\mathbf{U}_{W_{1}} \in \mathbb{R}^{d_{2} \times d_{2}}, \mathbf{V}_{W_{1}} \in \mathbb{R}^{d_{1} \times d_{1}}$ are orthonormal matrices and $\mathbf{S}_{W_{1}} \in \mathbb{R}^{d_{2} \times d_{1}}$ is a diagonal matrix with \textbf{decreasing} non-negative singular values. We denote the $r$ singular values of $\mathbf{W}_{1}$ as $\left\{s_{k}\right\}_{k=1}^{r}$
    ($r \leq R:= \min(K, d_{M}, \ldots, d_{1})$, from Lemma \ref{lm:3})
    . From Lemma \ref{lm:4}, we have the SVD of other weight matrices as:
\begin{align}
    \begin{gathered}
        \mathbf{W}_{M} = \mathbf{U}_{W_{M}} \mathbf{S}_{W_{M}}
        \mathbf{U}_{W_{M-1}}^{\top},
        \\
        \mathbf{W}_{M-1} = \mathbf{U}_{W_{M-1}} \mathbf{S}_{W_{M-1}} \mathbf{U}_{W_{M-2}}^{\top},     \\
        \mathbf{W}_{M-2} = \mathbf{U}_{W_{M-2}} \mathbf{S}_{W_{M-2}} \mathbf{U}_{W_{M-3}}^{\top}, \\
        \mathbf{W}_{M-3} = \mathbf{U}_{W_{M-3}} \mathbf{S}_{W_{M-3}} \mathbf{U}_{W_{M-4}}^{\top}, \\
        \ldots, \\
        \mathbf{W}_{2} = \mathbf{U}_{W_{2}} \mathbf{S}_{W_{2}} \mathbf{U}_{W_{1}}^{\top},
        \\
        \mathbf{W}_{1} = \mathbf{U}_{W_{1}} \mathbf{S}_{W_{1}} \mathbf{V}_{W_{1}}^{\top} ,
        \nonumber
    \end{gathered}
    \end{align}
    where:
    \begin{align}
       \mathbf{S}_{W_{j}} = \sqrt{\frac{\lambda_{W_{1}}}{\lambda_{W_{j}}}}
        \begin{bmatrix}
        \operatorname{diag}(s_{1},\ldots, s_{r}) & \mathbf{0}_{r \times (d_{j} - r)}  \\
        \mathbf{0}_{(d_{j+1} - r) \times r} & \mathbf{0}_{(d_{j+1} - r) \times (d_{j} - r)}  \\
        \end{bmatrix} \in \mathbb{R}^{d_{j+1} \times d_{j}}, \quad \forall \: j \in [M],
        \nonumber
    \end{align} 
    and $\mathbf{U}_{W_{M}}, \mathbf{U}_{W_{M-1}}, \mathbf{U}_{W_{M-2}}, \mathbf{U}_{W_{M-3}},\ldots, \mathbf{U}_{W_{1}}, \mathbf{V}_{W_{1}} $ are all orthonormal matrices.
    \\
    
    \noindent From Lemma \ref{lm:5}, denote $c := \frac{\lambda_{W_{1}}^{M-1}}
    {\lambda_{W_{M}} \lambda_{W_{M-1}} \ldots \lambda_{W_{2}} }$, we have:
    \begin{align}
    \begin{aligned}
    \mathbf{H}_{1}^{'} &= 
    \mathbf{V}_{W_{1}}
    \underbrace{
    \begin{bmatrix}
    \operatorname{diag}\left(
        \frac{\sqrt{c} s_{1}^{M} }{c s_{1}^{2M} + N \lambda_{H_{1}}}, \ldots, \frac{\sqrt{c} s_{r}^{M}}{c s_{r}^{2M} + N \lambda_{H_{1}}}
        \right) &  \mathbf{0}\\
    \mathbf{0} &  \mathbf{0}\\
    \end{bmatrix}}_{\mathbf{C} \in \mathbb{R}^{d_{1} \times K}}
    \mathbf{U}_{W_{M}}^{\top}
    \left( \mathbf{Y} - \frac{1}{K} \mathbf{1}_{K} \mathbf{1}_{N}^{\top} \right) \\
    &= \mathbf{V}_{W_{1}}
    \mathbf{C}
    \mathbf{U}_{W_{M}}^{\top}
    \left( \mathbf{Y} - \frac{1}{K} \mathbf{1}_{K} \mathbf{1}_{N}^{\top} \right).
    \label{eq:no_bias_H_form}
    \end{aligned}
    \end{align} 
    
    \begin{align}
    \begin{aligned}
        &\mathbf{W}_{M} \mathbf{W}_{M-1} \ldots \mathbf{W}_{2} \mathbf{W}_{1} \mathbf{H}^{'}_{1} - \mathbf{Y} \nonumber \\
        &= \mathbf{U}_{W_{M}} 
        \underbrace{
        \begin{bmatrix}
        \operatorname{diag}\left(
            \frac{- N \lambda_{H_{1}} }{c s_{1}^{2M} + N \lambda_{H_{1}}}, \ldots, \frac{- N \lambda_{H_{1}}}{c s_{r}^{2M} + N \lambda_{H_{1}}}
            \right ) &  \mathbf{0}\\
        \mathbf{0} &  -\mathbf{I}_{K-r}\\
        \end{bmatrix}}_{\mathbf{D} \in \mathbb{R}^{K \times K}}
        \mathbf{U}_{W_{M}}^{\top}
        \left( \mathbf{Y} - \frac{1}{K} \mathbf{1}_{K} \mathbf{1}_{N}^{\top} \right) \\
        &= \mathbf{U}_{W_{M}}
        \mathbf{D}
        \mathbf{U}_{W_{M}}^{\top}
        \left( \mathbf{Y} - \frac{1}{K} \mathbf{1}_{K} \mathbf{1}_{N}^{\top} \right).
    \end{aligned}
    \end{align}
    
    \noindent Next, we will calculate the Frobenius norm of $\mathbf{W}_{M} \mathbf{W}_{M-1} \ldots \mathbf{W}_{2} \mathbf{W}_{1} \mathbf{H}^{'}_{1} - \mathbf{Y}$:
    \begin{align}
         &\| \mathbf{W}_{M} \mathbf{W}_{M-1} \ldots \mathbf{W}_{2} \mathbf{W}_{1} \mathbf{H}_{1}^{'} - \mathbf{Y}  \|_F^2
         = \left\| \mathbf{U}_{W_{M}}
         \mathbf{D}
         \mathbf{U}_{W_{M}}^{\top}
         \left( \mathbf{Y} - \frac{1}{K} \mathbf{1}_{K} \mathbf{1}_{N}^{\top} \right)  \right \|_F^2 \nonumber \\
         &= \operatorname{trace}
         \left(\mathbf{U}_{W_{M}}
         \mathbf{D}
         \mathbf{U}_{W_{M}}^{\top}
         \left( \mathbf{Y} - \frac{1}{K} \mathbf{1}_{K} \mathbf{1}_{N}^{\top} \right) \left(\mathbf{U}_{W_{M}}
         \mathbf{D}
         \mathbf{U}_{W_{M}}^{\top}
         \left( \mathbf{Y} - \frac{1}{K} \mathbf{1}_{K} \mathbf{1}_{N}^{\top} \right) \right)^{\top} \right) \nonumber \\
         &= \operatorname{trace}
         \left(\mathbf{U}_{W_{M}}
         \mathbf{D}
         \mathbf{U}_{W_{M}}^{\top}
         \left( \mathbf{Y} - \frac{1}{K} \mathbf{1}_{K} \mathbf{1}_{N}^{\top} \right) \left( \mathbf{Y} - \frac{1}{K} \mathbf{1}_{K} \mathbf{1}_{N}^{\top} \right)^{\top} \mathbf{U}_{W_{M}} \mathbf{D}
         \mathbf{U}_{W_{M}}^{\top}
         \right)          \nonumber \\
         &= \operatorname{trace}
         \left( \mathbf{D}^{2} \mathbf{U}_{W_{M}}^{\top}  \left( \mathbf{Y} - \frac{1}{K} \mathbf{1}_{K} \mathbf{1}_{N}^{\top} \right) \left( \mathbf{Y} - \frac{1}{K} \mathbf{1}_{K} \mathbf{1}_{N}^{\top} \right)^{\top}   \mathbf{U}_{W_{M}} \right).
         \label{eq:bias_WH_form}
    \end{align}
    
    \noindent Note that:
    \begin{align}
        \mathbf{Y} - \frac{1}{K} \mathbf{1}_{K} \mathbf{1}_{N}^{\top} &= \left( \mathbf{I}_{K} - \frac{1}{K} \mathbf{1}_{K} \mathbf{1}_{K}^{\top} \right) \otimes \mathbf{1}_{n}^{\top}, \nonumber \\
        \left( \mathbf{Y} - \frac{1}{K} \mathbf{1}_{K} \mathbf{1}_{N}^{\top} \right) \left( \mathbf{Y} - \frac{1}{K} \mathbf{1}_{K} \mathbf{1}_{N}^{\top} \right)^{\top}  
        &= \left( \left( \mathbf{I}_{K} - \frac{1}{K} \mathbf{1}_{K} \mathbf{1}_{K}^{\top} \right) \otimes \mathbf{1}_{n}^{\top} \right) \left( \left( \mathbf{I}_{K} - \frac{1}{K} \mathbf{1}_{K} \mathbf{1}_{K}^{\top} \right) \otimes \mathbf{1}_{n}^{\top} \right)^{\top}
        \nonumber \\
        &= \left( \left( \mathbf{I}_{K} - \frac{1}{K} \mathbf{1}_{K} \mathbf{1}_{K}^{\top} \right) \otimes \mathbf{1}_{n}^{\top} \right) \left( \left( \mathbf{I}_{K} - \frac{1}{K} \mathbf{1}_{K} \mathbf{1}_{K}^{\top} \right) \otimes \mathbf{1}_{n} \right)
        \nonumber \\
        &= \left( \left( \mathbf{I}_{K} - \frac{1}{K} \mathbf{1}_{K} \mathbf{1}_{K}^{\top} \right) \left( \mathbf{I}_{K} - \frac{1}{K} \mathbf{1}_{K} \mathbf{1}_{K}^{\top} \right)  \right) \otimes \left( \mathbf{1}_{n}^{\top} \mathbf{1}_{n}  \right) \nonumber \\
        &= n \left( \mathbf{I}_{K} - \frac{1}{K} \mathbf{1}_{K} \mathbf{1}_{K}^{\top} \right), \nonumber
    \end{align}
    since $\mathbf{I}_{K} - \frac{1}{K} \mathbf{1}_{K} \mathbf{1}_{K}^{\top}$ is an idempotent matrix.
    \\
    
    \noindent Next, we have:
    \begin{align}
        \mathbf{U}_{W_{M}}^{\top}  \left( \mathbf{Y} - \frac{1}{K} \mathbf{1}_{K} \mathbf{1}_{N}^{\top} \right) \left( \mathbf{Y} - \frac{1}{K} \mathbf{1}_{K} \mathbf{1}_{N}^{\top} \right)^{\top}   \mathbf{U}_{W_{M}} 
        &= 
        n \mathbf{U}_{W_{M}}^{\top}
        \left( \mathbf{I}_{K} - \frac{1}{K} \mathbf{1}_{K} \mathbf{1}_{K}^{\top} \right)
        \mathbf{U}_{W_{M}} \nonumber \\
        &= n \left( \mathbf{I}_{K} - \frac{1}{K}  \mathbf{U}_{W_{M}}^{\top}
        \mathbf{1}_{K} \mathbf{1}_{K}^{\top}
        \mathbf{U}_{W_{M}}
        \right). \nonumber
    \end{align}
    
    \noindent We denote $\mathbf{q} = \mathbf{U}_{W_{M}}^{\top}
    \mathbf{1}_{K} 
    = [q_{1},\ldots, q_{K}]^{\top} \in \mathbb{R}^{K}$, then $q_{k}$ will equal the sum of entries of the $k$-th column of $\mathbf{U}_{W_{M}}$. Hence, $\mathbf{U}_{W_{M}}^{\top}
    \mathbf{1}_{K} \mathbf{1}_{K}^{\top}
    \mathbf{U}_{W_{M}} = \mathbf{q} \mathbf{q}^{\top} = (q_{i} q_{j})_{i,j}$. Note that from the orthonormality of $\mathbf{U}_{W_{M}}$, we can deduce $\sum_{k=1}^{K} q_{k}^{2} = K$. Thus, continue from equation \eqref{eq:bias_WH_form}:
    \begin{align}
         \| \mathbf{W}_{M} \mathbf{W}_{M-1} \ldots \mathbf{W}_{2} \mathbf{W}_{1} \mathbf{H}_{1}^{'} - \mathbf{Y}  \|_F^2
         &= n
         \operatorname{trace}
         \left(
         \mathbf{D}^{2}
         \left(\mathbf{I}_{K} - \frac{1}{K} \mathbf{q} \mathbf{q}^{\top}\right)
         \right) \nonumber \\
         &= n \left( \sum_{k=1}^{r}
         \left (1 - \frac{1}{K} q_{k}^{2} \right)
         \frac{(-N \lambda_{H_{1}})^{2}}{(c s_{k}^{2M} + N \lambda_{H_{1}})^{2}}
        + \sum_{h= r +1 }^{K}
        \left (1 - \frac{1}{K} q_{h}^{2} \right)
         \right).
         \label{eq:bias_WH_norm}
    \end{align}
    
    \noindent Similarly, we calculate the Frobenius norm for $\mathbf{H}_{1}^{'}$, continue from the RHS of equation \eqref{eq:no_bias_H_form}:
    \begin{align}
        \| \mathbf{H}_{1}^{'} \|_F^2
        &= \operatorname{trace}
        \left( \mathbf{V}_{W_{1}} 
        \mathbf{C} \mathbf{U}_{W_{M}}^{\top}
        \left( \mathbf{Y} - \frac{1}{K} \mathbf{1}_{K} \mathbf{1}_{N}^{\top} \right)
        \left( \mathbf{Y} - \frac{1}{K} \mathbf{1}_{K} \mathbf{1}_{N}^{\top} \right)^{\top}
        \mathbf{U}_{W_{M}} \mathbf{C}^{\top}
        \mathbf{V}_{W_{1}}^{\top}
        \right) \nonumber \\
        &= n \operatorname{trace}
        \left(
        \mathbf{C}^{\top} \mathbf{C}
        \left( \mathbf{I}_{K} - \frac{1}{K} \mathbf{q} \mathbf{q}^{\top}
        \right)
        \right) \nonumber \\
        &= n \sum_{k=1}^{r}
        \left( 1 - \frac{1}{K} q_{k}^{2} 
        \right)
        \frac{c s_{k}^{2M}}{(c s_{k}^{2M} + N \lambda_{H_{1}})^{2}}.
        \label{eq:bias_H_norm}
    \end{align}
    
    \noindent Plug the equations \eqref{eq:bias_WH_norm}, \eqref{eq:bias_H_norm} and the SVD of weight matrices into $f^{'}$ yields:
    \begin{align}
        &\frac{1}{2 Kn}\left\|\mathbf{W}_{M} \mathbf{W}_{M-1} \ldots \mathbf{W}_{1}   \mathbf{H}_{1}^{'} 
        - (\mathbf{Y} - \frac{1}{K} \mathbf{1}_{K} \mathbf{1}_{N}^{T}) \right\|_{F}^{2} + \frac{\lambda_{W_{M}}}{2} \| \mathbf{W}_{M} \|^2_F + \ldots \frac{\lambda_{W_{1}}}{2} \| \mathbf{W}_{1} \|^2_F +  \frac{\lambda_{H_{1}}}{2} \| \mathbf{H}_{1}^{'} \|^2_F \nonumber \\
        &= \frac{1}{2K} \sum_{k=1}^{r} \left(1 - \frac{1}{K} q_{k}^{2} \right) 
        \left( \frac{-N \lambda_{H_{1}} } {c
        s_{k}^{2M}  + N \lambda_{H_{1}}} \right)^{2} 
        + 
        \frac{1}{2K}
        \sum_{h=r+1}^{K} \left(1 - \frac{1}{K} q_{h}^{2} \right)
        + \frac{\lambda_{W_{M}}}{2}
        \sum_{k=1}^{r} \frac{\lambda_{W_{1}}}{\lambda_{W_{M}}} s_{k}^{2} \nonumber \\ &
        +
        \frac{\lambda_{W_{M-1}}}{2} \sum_{k=1}^{r} 
        \frac{\lambda_{W_{1}}}{\lambda_{W_{M-1}}} s_{k}^{2}
        + \ldots + \frac{\lambda_{W_{1}}}{2} \sum_{k=1}^{r}  s_{k}^{2} 
        +
        \frac{n \lambda_{H_{1}}}{2} \sum_{k=1}^{r} 
        \left( 1 - \frac{1}{K} q_{k}^{2} 
        \right)
        \frac{c s_{k}^{2M}}{(c s_{k}^{2M} + N \lambda_{H_{1}})^{2}}
        \nonumber \\
        &= \frac{1}{2K} \sum_{k=1}^{r}
        \left(1 - \frac{1}{K} q_{k}^{2} \right)
        \frac{(N \lambda_{H_{1}})^{2}}{( c s_{k}^{2M} + N \lambda_{H_{1}} )^{2}} 
        + \frac{n \lambda_{H_{1}}}{2} \sum_{k=1}^{r} 
        \left( 1 - \frac{1}{K} q_{k}^{2} 
        \right)
        \frac{c s_{k}^{2M}}{(c s_{k}^{2M} + N \lambda_{H_{1}})^{2}}
        + \frac{M \lambda_{W_{1}}}{2} \sum_{k=1}^{r} s_{k}^{2} \nonumber \\
        &+ \frac{1}{2K} \sum_{h=r+1}^{K} \left(1 - \frac{1}{K} q_{h}^{2} \right)
         \nonumber \\
        &=  \frac{n \lambda_{H_{1}}}{2}
        \sum_{k=1}^{r}
        \frac{1 - \frac{1}{K} q_{k}^{2}}{c s_{k}^{2M} + N \lambda_{H_{1}}}  + \frac{M \lambda_{W_{1}}}{2} \sum_{k=1}^{r} s_{k}^{2} + 
        \frac{1}{2K} \sum_{h=r+1}^{K} \left(1 - \frac{1}{K} q_{h}^{2} \right)
        \nonumber \\ 
        &= \frac{1}{2K} 
        \sum_{k=1}^{r} 
        \left( 
        \frac{1 - \frac{1}{K} q_{k}^{2}}{\frac{c s_{k}^{2M}}{N \lambda_{H_{1}}} + 1} + M K \lambda_{W_{1}} \sqrt[M]{\frac{N \lambda_{H_{1}}}{c}} \left(\sqrt[M]{\frac{c s^{2M}_{k}}{N \lambda_{H_{1}}}} \right) 
        \right) + 
        \frac{1}{2K} \sum_{h=r+1}^{K} \left(1 - \frac{1}{K} q_{h}^{2} \right) \nonumber \\
        &=  \frac{1}{2K} 
        \sum_{k=1}^{r} 
        \left( \frac{1 - \frac{1}{K} q_{k}^{2}}{x^{M}_{k} +1} + bx_{k}
        \right) + 
         \frac{1}{2K} \sum_{h=r+1}^{K} \left(1 - \frac{1}{K} q_{h}^{2} \right),
         \label{eq:bias_f_form}
    \end{align}
    with $x_{k} := \sqrt[M]{\frac{c s_{k}^{2M}}{N \lambda_{H_{1}}}} $ and $b:= M K \lambda_{W_{1}} \sqrt[M]{\frac{N \lambda_{H_{1}}}{c}} = MK \lambda_{W_{1}}  \sqrt[M]{\frac{K n \lambda_{W_{M}} \lambda_{W_{M-2}} \ldots \lambda_{W_{1}} \lambda_{H_{1}}}{ \lambda_{W_{1}}^{M-1}}}  =
    MK \sqrt[M]{K n  \lambda_{W_{M}} \lambda_{W_{M-1}} \ldots \lambda_{W_{1}} \lambda_{H_{1}}}$.
    \\
    
    \noindent Before continue optimizing the RHS of equation \eqref{eq:bias_f_form}, we first simplify it by proving if $s_{k} > 0$ then $q_{k} = 0$, i.e. sum of entries of $k$-th column of $\mathbf{U}_{W_{M}}$ equals 0. To prove this, we will utilize a property of $\mathbf{H}_{1}^{'}  = [\mathbf{h}_{1,1} - \mathbf{h}_{G}, \ldots, \mathbf{h}_{K,n} - \mathbf{h}_{G}]$, which is the sum of entries on every row equals $0$. First, we connect $\mathbf{W}_{M}$ and $\mathbf{H}_{1}^{'}$ through:
    \begin{align}
       &\frac{\partial f^{'}}{\partial \mathbf{W}_{M}} =
      \frac{1}{N} \left(\mathbf{W}_{M} \mathbf{W}_{M-1} \ldots  \mathbf{W}_{1}   \mathbf{H}_{1}^{'} - \left( \mathbf{Y} - \frac{1}{K} \mathbf{1}_{K} \mathbf{1}_{N}^{\top} \right) \right)\mathbf{H}_{1}^{' \top} \mathbf{W}_{1}^{\top} \ldots
      \mathbf{W}_{M-1}^{\top} + \lambda_{W_{M}} \mathbf{W}_{M} = \mathbf{0} \nonumber \\
      \Rightarrow & \mathbf{W}_{M} = 
      \left( \mathbf{Y} - \frac{1}{K} \mathbf{1}_{K} \mathbf{1}_{N}^{\top} \right) 
      \mathbf{H}_{1}^{' \top}
      \underbrace{
      \mathbf{W}_{1}^{\top} \ldots
      \mathbf{W}_{M-1}^{\top}
      \left( \mathbf{W}_{M-1} \ldots  \mathbf{W}_{1}  \mathbf{H}_{1}^{'}  \mathbf{H}_{1}^{' \top} \mathbf{W}_{1}^{\top}  \ldots \mathbf{W}_{M-1}^{\top} + N \lambda_{W_{M}} \mathbf{I}_{K} \right)^{-1}}_{\mathbf{G}}.
    \end{align}
    
    \noindent From the definition of $\mathbf{H}_{1}^{'}$, we know that the sum of entries of every column of $\mathbf{H}_{1}^{' \top}$ is $0$. Recall the class-mean definition $\mathbf{h}_{k} = \frac{1}{n} \sum_{i=1}^{n} \mathbf{h}_{k,i}$, we have:
    \begin{align}
        &\left( \mathbf{Y} - \frac{1}{K} \mathbf{1}_{K} \mathbf{1}_{N}^{\top} \right) 
        \mathbf{H}_{1}^{' \top} =
        \mathbf{Y}  \mathbf{H}_{1}^{' \top} = 
        n \begin{bmatrix}
        (\mathbf{h}_{1} - \mathbf{h}_{G})^{\top}  \\ (\mathbf{h}_{2} - \mathbf{h}_{G})^{\top}
        \\ \ldots 
        \\  (\mathbf{h}_{K} - \mathbf{h}_{G})^{\top}
        \end{bmatrix} \nonumber \\
        \Rightarrow &\mathbf{W}_{M} = 
        n \begin{bmatrix}
        (\mathbf{h}_{1} - \mathbf{h}_{G})^{\top}  \\ (\mathbf{h}_{2} - \mathbf{h}_{G})^{\top}
        \\ \ldots 
        \\  (\mathbf{h}_{K} - \mathbf{h}_{G})^{\top}
        \end{bmatrix} \mathbf{G}, \nonumber
    \end{align}
    and thus, the sum of entries of every column of $\mathbf{W}_{M}$ equals $0$. From the SVD $\mathbf{W}_{M} = \mathbf{U}_{W_{M}} \mathbf{S}_{W_{M}} \mathbf{V}_{W_{M}}^{\top}$, denote $\mathbf{u}_{j}$ and $\mathbf{v}_{j}$ the $j$-th column of $\mathbf{U}_{W_{M}}$ and $\mathbf{V}_{W_{M}}$, respectively. We have from the definition of left and right singular vectors:
    \begin{align}
        \mathbf{W}_{M} \mathbf{v}_{j} = s_{j} \mathbf{u}_{j} ,
    \end{align}
    and since the sum of entries of every column of $\mathbf{W}_{M}$ equals $0$, we have the sum of entries of vector $\mathbf{W}_{M} \mathbf{v}_{j}$ equals $0$. Thus, if $s_{j} > 0$, we have $q_{j} = 0$.
    \\
    
    \noindent Return to the expression of $f^{'}$ as the RHS of equation \eqref{eq:bias_f_form}, notice that it is separable w.r.t each singular value $s_{j}$, we will analyze how each singular value contribute to the value of the expression \eqref{eq:bias_f_form}. For every singular value $s_{j}$ with $j = 1,\ldots, r$, if $s_{j} > 0$, then $q_{j} = 0$, and its contribution to the expression \eqref{eq:bias_f_form} will be $\frac{1}{2K} (  \frac{1}{x_{j}^{M} + 1} + bx_{j} ) = \frac{1}{2K} g(x_{j})$ (with the minimizer of $g(x)$ has been studied in Section \ref{sec:study_g}).
    Otherwise, if $s_{j} = 0$ (hence $x_{j} = 0$), its contribution to the value of the expression \eqref{eq:bias_f_form} will be $\frac{1 - \frac{1}{K} q_{j}^{2}}{2K}$, and it eventually be $\frac{1}{2K}$ because $\sum_{k=1}^{K} \frac{1}{K} q_{j}^{2}$ always equal $1$, thus $\frac{1}{K} q_{j}^{2}$ has no additional contribution to the expression \eqref{eq:bias_f_form}. Therefore, it is a comparision between $\frac{1}{2K}$ and $\frac{1}{2K} \min_{x_{j} > 0} g(x_{j})$ to decide whether $s_{j}^{*} = 0$ or $s_{j}^{*} = \sqrt[2M]{\frac{N \lambda_{H_{1}}}{c}} \sqrt{x_{j}^{*}}$ with $x_{j}^{*} =  \arg \min_{x > 0} g(x)$. Therefore, we consider three cases:
    \begin{itemize}
        \item If $b > \frac{(M-1)^{\frac{M-1}{M}}}{M}$: In this case, $g(x)$ is minimized at $x = 0$ and $g(0) = 1$. Hence, $\frac{1}{2K} < \frac{1}{2K} \min_{x_{j} > 0} g(x_{j})$ and thus, $s_{j}^{*} = 0 \: \forall j = 1, \ldots, r$.
        
        \item If $b < \frac{(M-1)^{\frac{M-1}{M}}}{M}$: In this case, $g(x)$ is minimized at some $x_{0} > \sqrt[M]{M-1}$ and $g(x_{0}) < 1$. Hence, $\frac{1}{2K} \min_{x_{j} > 0} g(x_{j}) < \frac{1}{2K}$ and thus, $s_{j}^{*} = \sqrt[2M]{\frac{N \lambda_{H_{1}}}{c}} \sqrt{x_{0}} \: \forall \: j = 1, \ldots, r$. 
        
        We also note that in this case, we have $q_{j} = 0 \: \forall j = 1,\ldots, r$ (meaning the sum of entries of every column in the first $r$ columns of $\mathbf{U}_{W_{M}}$ is equal $0$). 
        
        \item If  $b = \frac{(M-1)^{\frac{M-1}{M}}}{M}$: In this case, $g(x)$ is minimized at $x = 0$ or some $x = x_{0} > \sqrt[M]{M-1}$ with $g(0) = g(x_{0}) = 1$. Therefore, $s_{j}^{*}$ can either be $0$ or $x_{0}$ as long as $\{ s_{k} \}_{k=1}^{r}$ is a decreasing sequence. 
    \end{itemize}
    
    \noindent To help for the conclusion of the geometry properties of weight matrices and features, we state a lemma as following:
    \begin{lemma}
    \label{lm:6}
        Let $\mathbf{W} \in \mathbb{R}^{K \times d_{M}}$ be a matrix with $r \leq K - 1$ singular values equal a positive constant $s > 0$. If there exists a compact SVD form of $\mathbf{W}$ as $\mathbf{W} = s \mathbf{U} \mathbf{V}^{\top}$ with semi-orthonormal matrices $\mathbf{U} \in \mathbb{R}^{K \times r}, \mathbf{V} \in \mathbb{R}^{d_{M} \times r}$ such that the sum of entries of every column of $\mathbf{U}$ equals 0. Then, 
        $\mathbf{W} \mathbf{W}^{\top} \propto \mathbf{U} \mathbf{U}^{\top}$ and
        $\mathbf{U} \mathbf{U}^{\top}$ is a best rank-$r$ approximation of the simplex ETF $(\mathbf{I}_{K} - \frac{1}{K} \mathbf{1}_{K} \mathbf{1}_{K}^{\top})$.
    \end{lemma}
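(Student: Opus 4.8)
The plan is to dispatch the proportionality claim by a one-line computation and then identify $\mathbf{U}\mathbf{U}^{\top}$ as an orthogonal projector onto a subspace of $\mathbf{1}_{K}^{\perp}$, which I will show is a best rank-$r$ approximation of $\mathbf{P} := \mathbf{I}_{K} - \frac{1}{K}\mathbf{1}_{K}\mathbf{1}_{K}^{\top}$ through a short Frobenius-norm calculation that exploits the idempotence of $\mathbf{P}$ rather than an abstract appeal to Eckart--Young.

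First I would use the semi-orthonormality $\mathbf{V}^{\top}\mathbf{V} = \mathbf{I}_{r}$ to obtain $\mathbf{W}\mathbf{W}^{\top} = s^{2}\mathbf{U}\mathbf{V}^{\top}\mathbf{V}\mathbf{U}^{\top} = s^{2}\mathbf{U}\mathbf{U}^{\top}$, which immediately gives $\mathbf{W}\mathbf{W}^{\top}\propto \mathbf{U}\mathbf{U}^{\top}$. Next, the hypothesis that every column of $\mathbf{U}$ sums to zero is precisely $\mathbf{1}_{K}^{\top}\mathbf{U} = \mathbf{0}$, so each column of $\mathbf{U}$ lies in $\mathbf{1}_{K}^{\perp}$ and $\operatorname{range}(\mathbf{U})$ is an $r$-dimensional subspace of $\mathbf{1}_{K}^{\perp}$. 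Since $\mathbf{U}^{\top}\mathbf{U} = \mathbf{I}_{r}$, the matrix $\mathbf{U}\mathbf{U}^{\top}$ is the orthogonal projector onto $\operatorname{range}(\mathbf{U})$.

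Then I would recall the eigenstructure of $\mathbf{P}$: it is symmetric and idempotent, with eigenvalue $1$ on the $(K-1)$-dimensional space $\mathbf{1}_{K}^{\perp}$ and eigenvalue $0$ on $\operatorname{span}(\mathbf{1}_{K})$. As $\mathbf{P}$ acts as the identity on $\mathbf{1}_{K}^{\perp}$, we have $\mathbf{P}\mathbf{U} = \mathbf{U}$. For $r \le K-1$ the Eckart--Young optimal rank-$r$ approximation error equals $\sum_{i=r+1}^{K}\sigma_{i}(\mathbf{P})^{2} = K-1-r$, and because the top $K-1$ singular values of $\mathbf{P}$ are all equal to $1$, this optimum is not attained by a unique approximant --- this degeneracy is exactly the source of the article ``a'' in the statement.

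The key step, and the one needing care, is to confirm that the \emph{particular} projector $\mathbf{U}\mathbf{U}^{\top}$ attains this optimum rather than merely projecting onto some $r$-dimensional subspace. Instead of matching eigenvectors (which is ambiguous under the degeneracy), I would compute the error directly: expanding $\|\mathbf{P} - \mathbf{U}\mathbf{U}^{\top}\|_{F}^{2} = \|\mathbf{P}\|_{F}^{2} - 2\operatorname{trace}(\mathbf{P}\mathbf{U}\mathbf{U}^{\top}) + \|\mathbf{U}\mathbf{U}^{\top}\|_{F}^{2}$ and using $\|\mathbf{P}\|_{F}^{2} = \operatorname{trace}(\mathbf{P}) = K-1$, the identity $\mathbf{P}\mathbf{U} = \mathbf{U}$ to get $\operatorname{trace}(\mathbf{P}\mathbf{U}\mathbf{U}^{\top}) = \operatorname{trace}(\mathbf{U}^{\top}\mathbf{U}) = r$, and $\|\mathbf{U}\mathbf{U}^{\top}\|_{F}^{2} = \operatorname{trace}(\mathbf{U}\mathbf{U}^{\top}) = r$, yields $\|\mathbf{P} - \mathbf{U}\mathbf{U}^{\top}\|_{F}^{2} = K-1-r$. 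Since this matches the Eckart--Young optimum, $\mathbf{U}\mathbf{U}^{\top}$ is a best rank-$r$ approximation $\mathcal{P}_{r}(\mathbf{P})$ of the simplex ETF, completing the proof. The principal obstacle throughout is the degeneracy of the top eigenvalue of $\mathbf{P}$; the direct Frobenius computation sidesteps it cleanly, certifying optimality of the candidate projector independently of which subspace of $\mathbf{1}_{K}^{\perp}$ it happens to span.
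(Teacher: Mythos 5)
Your proof is correct, and it takes a genuinely different route from the paper's. The paper proves the lemma constructively: it appends $\frac{1}{\sqrt{K}}\mathbf{1}_{K}$ to the columns of $\mathbf{U}$ (legitimate because $\mathbf{1}_{K}^{\top}\mathbf{U}=\mathbf{0}$), completes the result to a full orthonormal basis $\overline{\mathbf{U}}$ of $\mathbb{R}^{K}$ whose last column is $\frac{1}{\sqrt{K}}\mathbf{1}_{K}$, and then reads off the explicit eigendecomposition $\mathbf{I}_{K}-\frac{1}{K}\mathbf{1}_{K}\mathbf{1}_{K}^{\top}=\overline{\mathbf{U}}\operatorname{diag}(\mathbf{I}_{K-1},0)\overline{\mathbf{U}}^{\top}$, so that $\mathbf{U}\mathbf{U}^{\top}$ is literally a rank-$r$ truncation of a valid SVD of the simplex ETF. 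You instead avoid the basis completion entirely: you identify $\mathbf{U}\mathbf{U}^{\top}$ as the orthogonal projector onto an $r$-dimensional subspace of $\mathbf{1}_{K}^{\perp}$, compute $\|\mathbf{P}-\mathbf{U}\mathbf{U}^{\top}\|_{F}^{2}=(K-1)-2r+r=K-1-r$ using idempotence of $\mathbf{P}$ and $\mathbf{P}\mathbf{U}=\mathbf{U}$, and observe that this matches the Eckart--Young lower bound $\sum_{i>r}\sigma_{i}(\mathbf{P})^{2}=K-1-r$, certifying optimality. Both arguments are complete; the paper's construction has the side benefit of exhibiting the approximant as an actual truncated SVD (making the notation $\mathcal{P}_{r}(\cdot)$ used elsewhere in the theorem statements transparent), while yours is shorter, makes the eigenvalue degeneracy of $\mathbf{P}$ explicit as the reason the approximant is only \emph{a} (not \emph{the}) best rank-$r$ approximation, and verifies optimality without having to argue that a particular choice of eigenvectors within the degenerate eigenspace yields a valid SVD.
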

    \begin{proof}
    Let's denote $\mathbf{U} = [\mathbf{u}_{1}, \ldots, \mathbf{u}_{r}]$ with $\mathbf{u}_{1}, \ldots, \mathbf{u}_{r}$ are $r$ orthonormal vectors. Since the sum of entries in each $\mathbf{u}_{i}$ equals $0$, $\frac{1}{\sqrt{K}} \mathbf{1}_{K}$ can be added to the set $\{\mathbf{u}_{1}, \ldots, \mathbf{u}_{r} \}$ to form $r+1$ orthonormal vectors. Let $\hat{\mathbf{U}} = [\mathbf{u}_{1}, \ldots, \mathbf{u}_{r}, \frac{1}{\sqrt{K}} \mathbf{1}_{K}]$, we have $\operatorname{dim (Col} \hat{\mathbf{U}} ) = r + 1$. Hence, $\operatorname{dim (Null} \hat{\mathbf{U}}^{\top} ) = K - r -1$ and thus, we can choose an orthonormal basis of $\operatorname{Null} \hat{\mathbf{U}}^{\top}$ including $K - r - 1$ orthonormal vectors $\{\mathbf{u}_{r+1}, \mathbf{u}_{r+2},\ldots, \mathbf{u}_{K-1} \}$. And because these $K - r - 1$ orthonormal vectors are in $\operatorname{Null} \hat{\mathbf{U}}^{\top}$, we can add these vectors to the set $\{ \mathbf{u}_{1}, \ldots, \mathbf{u}_{r}, \frac{1}{\sqrt{K}} \mathbf{1}_{K}\}$ to form a basis of $\mathbb{R}^{K}$ including $K$ orthonormal vectors $\{ \mathbf{u}_{1}, \ldots, \mathbf{u}_{r}, \mathbf{u}_{r+1}, \mathbf{u}_{r+2},\ldots, \mathbf{u}_{K-1}, \frac{1}{\sqrt{K}} \mathbf{1}_{K} \}$. We denote $\overline{\mathbf{U}} = [\mathbf{u}_{1}, \ldots, \mathbf{u}_{r}, \mathbf{u}_{r+1}, \mathbf{u}_{r+2},\ldots, \mathbf{u}_{K-1}, \frac{1}{\sqrt{K}} \mathbf{1}_{K}] \in \mathbb{R}^{K \times K}$. We have $\overline{\mathbf{U}}^{\top} \overline{\mathbf{U}} = \mathbf{I}_{K}$. From the Inverse Matrix Theorem, we deduce that $\overline{\mathbf{U}}^{-1} = \overline{\mathbf{U}}^{\top}$ and thus, $\overline{\mathbf{U}}$ is an orthonormal matrix. We have $\overline{\mathbf{U}}$ is an orthonormal matrix with the last column $\frac{1}{\sqrt{K}} \mathbf{1}_{K}$, hence by simple matrix multiplication, we have:
    \begin{align}
        &[\mathbf{u}_{1}, \ldots, \mathbf{u}_{r}, \mathbf{u}_{r+1}, \mathbf{u}_{r+2},\ldots, \mathbf{u}_{K-1}]
        [\mathbf{u}_{1}, \ldots, \mathbf{u}_{r}, \mathbf{u}_{r+1}, \mathbf{u}_{r+2},\ldots, \mathbf{u}_{K-1}]^{\top} = \mathbf{I}_{K} - \frac{1}{K} \mathbf{1}_{K} \mathbf{1}_{K}^{\top}
        \nonumber \\
        &\Rightarrow \overline{\mathbf{U}} \begin{bmatrix}
        \mathbf{I}_{K-1} & \mathbf{0}  \\
        \mathbf{0} & 0 \\
        \end{bmatrix} \overline{\mathbf{U}}^{\top} = \mathbf{I}_{K} - \frac{1}{K} \mathbf{1}_{K} \mathbf{1}_{K}^{\top}.
    \end{align}
    Therefore, $\mathbf{U} \mathbf{U}^{\top}$ is the best rank-$r$ approximation of $\mathbf{I}_{K} - \frac{1}{K} \mathbf{1}_{K} \mathbf{1}_{K}^{\top}$, and the proof for the lemma is finished.
    \end{proof}
    
    \noindent Thus, we finish bounding $f$ and the equality conditions are as following:
    
    \begin{itemize}
        \item If $b = MK \sqrt[M]{K n  \lambda_{W_{M}} \lambda_{W_{M-1}} \ldots \lambda_{W_{1}} \lambda_{H_{1}}} > \frac{(M-1)^{\frac{M-1}{M}}}{M}$: all the singular values of $\mathbf{W}_{1}$ are zeros. Therefore, the singular values of $\mathbf{W}_{M}, \mathbf{W}_{M-1}, \ldots, \mathbf{H}_{1}^{'}$ are also all zeros. In this case, $f(\mathbf{W}_{M}, \mathbf{W}_{M-1},\ldots, \mathbf{W}_{2}, \mathbf{W}_{1}, \mathbf{H}_{1}, \mathbf{b})$ is minimized at $(\mathbf{W}_{M}^{\ast}, \mathbf{W}_{M-1}^{\ast}, \ldots, \mathbf{W}_{1}^{\ast}, \mathbf{H}_{1}^{\ast}, \mathbf{b}^{*}) = (\mathbf{0}, \mathbf{0},\ldots \mathbf{0}, \mathbf{0}, \frac{1}{K} \mathbf{1}_{K})$.
        
        \item If $b = MK \sqrt[M]{K n  \lambda_{W_{M}} \lambda_{W_{M-1}} \ldots \lambda_{W_{1}} \lambda_{H_{1}}} < \frac{(M-1)^{\frac{M-1}{M}}}{M}$: In this case, $\mathbf{W}_{1}^{\ast}$ will have the its $r$ ($r$ will be specified later) singular values all equal a multiplier of the largest positive solution of the equation $b - \frac{M x^{M-1}}{(x^{M} + 1)^{2}} = 0$, denoted as $s$. Hence, we can write the compact SVD form (with a bit of notation abuse) of $\mathbf{W}_{M-1}^{\ast}$ as $\mathbf{W}_{1}^{\ast} = s \mathbf{U}_{W_{1}} \mathbf{V}_{W_{1}}^{\top}$ with semi-orthonormal matrices $\mathbf{U}_{W_{1}} \in \mathbb{R}^{d_{2} \times r}, \mathbf{V}_{W_{1}} \in \mathbb{R}^{d_{1} \times r}$ (note that $\mathbf{U}_{W_{1}}^{\top} \mathbf{U}_{W_{1}} = \mathbf{I}$ and $\mathbf{V}_{W_{1}}^{\top} \mathbf{V}_{W_{1}} = \mathbf{I}$).
            \\
            
        Similarly, we also have the compact SVD form of other weight matrices and feature matrix as:
            \begin{align}
                \mathbf{W}_{M}^{\ast}  &= \sqrt{\frac{\lambda_{W_{1}}}{\lambda_{W_{M}}}} s \mathbf{U}_{W_{M}} \mathbf{U}_{W_{M-1}}^{\top},  \nonumber \\
                \mathbf{W}_{M-1}^{\ast} &= \sqrt{\frac{\lambda_{W_{1}}}{\lambda_{W_{M-1}}}} s 
                \mathbf{U}_{W_{M-1}} \mathbf{U}_{W_{M-2}}^{\top}, \nonumber \\
                \ldots \nonumber \\
                \mathbf{W}_{1}^{\ast} &= s \mathbf{U}_{W_{1}} \mathbf{V}_{W_{1}}^{\top}, \nonumber \\
                \mathbf{H}_{1}^{' \ast} &= \frac{\sqrt{c} s^{M}}{c s^{2M} + N \lambda_{H_{1}}} \mathbf{V}_{W_{1}} \mathbf{U}_{W_{M}}^{\top}
                \left( \mathbf{Y} - \frac{1}{K} \mathbf{1}_{K} \mathbf{1}_{N}^{\top} \right),
                \nonumber
            \end{align}
            with semi-orthonormal matrices $\mathbf{U}_{W_{M}}, \mathbf{U}_{W_{M-1}}, \ldots, \mathbf{U}_{W_{1}}, \mathbf{V}_{W_{1}}$ that each has $r$ orthogonal columns, i.e., $\mathbf{U}_{W_{M}}^{\top} \mathbf{U}_{W_{M}} = \mathbf{U}_{W_{M-1}}^{\top} \mathbf{U}_{W_{M-1}} =  \ldots = \mathbf{U}_{W_{1}}^{\top} \mathbf{U}_{W_{1}} = \mathbf{V}_{W_{1}}^{T} \mathbf{V}_{W_{1}} = \mathbf{I}_{r}$. Furthermore, $\mathbf{U}_{W_{M}}, \mathbf{U}_{W_{M-1}}, \ldots, \mathbf{U}_{W_{1}}, \mathbf{V}_{W_{1}}$ are truncated matrices from orthonormal matrices (remove columns that does not correspond with non-zero singular values), hence  $\mathbf{U}_{W_{M}} \mathbf{U}_{W_{M}}^{\top}, \mathbf{U}_{W_{M-1}} \mathbf{U}_{W_{M-1}}^{\top}, \ldots,  \mathbf{U}_{W_{1}} \mathbf{U}_{W_{1}}^{\top}, \mathbf{V}_{W_{1}} \mathbf{V}_{W_{1}}^{\top} $ are the best rank-$r$ approximations of the identity matrix of the same size.
            \\
            
            Since $\left( \mathbf{Y} - \frac{1}{K} \mathbf{1}_{K} \mathbf{1}_{N}^{\top} \right) = \left( \mathbf{I}_{K} - \frac{1}{K} \mathbf{1}_{K} \mathbf{1}_{K}^{\top} \right) \mathbf{Y} = \left( \mathbf{I}_{K} - \frac{1}{K} \mathbf{1}_{K} \mathbf{1}_{K}^{\top} \right) \otimes \mathbf{1}_{n}^{\top}$ , let  $\overline{\mathbf{H}}^{*} = \frac{\sqrt{c} s^{M}}{c s^{2M} + N \lambda_{H_{1}}} \mathbf{V}_{W_{1}} \mathbf{U}_{W_{M}}^{\top} \left( \mathbf{I}_{K} - \frac{1}{K} \mathbf{1}_{K} \mathbf{1}_{K}^{\top} \right) \in \mathbb{R}^{d_{1} \times K}$, then we have $(\mathcal{NC}1)$ $\mathbf{H}_{1}^{' *} = \overline{\mathbf{H}}^{*} \mathbf{Y} = \overline{\mathbf{H}}^{*} \otimes \mathbf{1}_{n}^{\top}$, thus we conclude the features within the same class collapse to their class-mean and $\overline{\mathbf{H}}^{*}$ is the class-means matrix. We also have $\mathbf{h}_{G} = \mathbf{0}$ (the equality condition of inequality \eqref{eq:hg=0}), hence $\mathbf{H}^{*}_{1} = \mathbf{H}^{' *}_{1}$. Furthermore, clearly we have $\operatorname{rank}(\mathbf{H}_{1}^{' *})
            = \operatorname{rank}(\overline{\mathbf{H}}^{*})$ and since $\mathbf{h}_{G} = 0$, we have $r = \operatorname{rank}(\mathbf{H}_{1}^{' *})
            = \operatorname{rank}(\overline{\mathbf{H}}^{*}) \leq K - 1$. Hence, $r = \min(R, K-1)$.
            \\
            
            By using Lemma \ref{lm:6} for $\mathbf{W}_{M}$ with the note $q_{j} = 0 \: \forall \: j \leq r$, we have $\mathbf{U}_{W} \mathbf{U}_{W}^{\top}$ is a best rank-$r$ approximation of the simplex ETF $\mathbf{I}_{K} - \frac{1}{K} \mathbf{1}_{K} \mathbf{1}_{K}^{\top}$. Thus, we can deduce the geometry of the following $(\mathcal{NC}2)$:
            \begin{align}
            \begin{gathered}
                 \mathbf{W}_{M}^{\ast} \mathbf{W}_{M}^{\top \ast} \propto 
                 \mathbf{U}_{W_{M}} \mathbf{U}_{W_{M}}^{\top} \propto \mathcal{P}_{r}(\mathbf{I}_{K} - \frac{1}{K} \mathbf{1}_{K} \mathbf{1}_{K}^{\top}), \\
                \overline{\mathbf{H}}^{\ast \top} \overline{\mathbf{H}}^{\ast} \propto 
                 (\mathbf{I}_{K} - \frac{1}{K} \mathbf{1}_{K} \mathbf{1}_{K}^{\top}) \mathbf{U}_{W_{M}} \mathbf{U}_{W_{M}}^{\top} (\mathbf{I}_{K} - \frac{1}{K} \mathbf{1}_{K} \mathbf{1}_{K}^{\top})
                 \propto
                 \mathbf{U}_{W_{M}} \mathbf{U}_{W_{M}}^{\top}
                 \propto \mathcal{P}_{r}(\mathbf{I}_{K} - \frac{1}{K} \mathbf{1}_{K} \mathbf{1}_{K}^{\top}),
                \\
                \mathbf{W}_{M}^{\ast} \mathbf{W}_{M-1}^{\ast}
                \ldots \mathbf{W}_{2}^{\ast} 
                \mathbf{W}_{1}^{\ast} \overline{\mathbf{H}}^{*} \propto 
                 \mathbf{U}_{W_{M}} \mathbf{U}_{W_{M}}^{\top} (\mathbf{I}_{K} - \frac{1}{K} \mathbf{1}_{K} \mathbf{1}_{K}^{\top}) 
                 \propto  \mathbf{U}_{W_{M}} \mathbf{U}_{W_{M}}^{\top}
                 \propto \mathcal{P}_{r}(\mathbf{I}_{K} - \frac{1}{K} \mathbf{1}_{K} \mathbf{1}_{K}^{\top}),  \\
                (\mathbf{W}_{M}^{\ast} \mathbf{W}_{M-1}^{\ast} \ldots \mathbf{W}_{j}^{\ast})(\mathbf{W}_{M}^{\ast} \mathbf{W}_{M-1}^{\ast} \ldots \mathbf{W}_{j}^{\ast})^{\top} \propto 
                 \mathbf{U}_{W_{M}} \mathbf{U}_{W_{M}}^{\top} \propto \mathcal{P}_{r}(\mathbf{I}_{K} - \frac{1}{K} \mathbf{1}_{K} \mathbf{1}_{K}^{\top}) \quad \forall \: j \in [M].
            \end{gathered}
            \end{align}
            Note that if $r = K - 1$, we have $\mathcal{P}_{r}(\mathbf{I}_{K} - \frac{1}{K} \mathbf{1}_{K} \mathbf{1}_{K}^{\top}) = \mathbf{I}_{K} - \frac{1}{K} \mathbf{1}_{K} \mathbf{1}_{K}^{\top}$.
            \\
            
            Also, the product of each weight matrix or features with its transpose will be the multiplier of one of the best rank-$r$ approximations of the identity matrix of the same size. For example, $\mathbf{W}_{M-1}^{\ast \top} \mathbf{W}_{M-1}^{\ast} \propto \mathbf{U}_{W_{M-2}} \mathbf{U}_{W_{M-2}}^{\top}$ and $\mathbf{W}_{M-1}^{\ast} \mathbf{W}_{M-1}^{\ast \top} \propto \mathbf{U}_{W_{M-1}} \mathbf{U}_{W_{M-1}}^{\top}$ are two best rank-$r$ approximations of $\mathbf{I}_{d_{M-1}}$ and $\mathbf{I}_{d_{M}}$, respectively.
            \\

            Next, we can derive the alignments between weights and features as following $(\mathcal{NC}3)$:
            \begin{align}
            \begin{gathered}
                \mathbf{W}_{M}^{\ast} \mathbf{W}_{M-1}^{\ast} \ldots \mathbf{W}_{1}^{\ast} \propto \mathbf{U}_{W_{M}} \mathbf{V}_{W_{1}}^{\top} \propto \overline{\mathbf{H}}^{* \top}, \\
                \mathbf{W}_{M-1}^{\ast} \mathbf{W}_{M-2}^{\ast} \ldots \mathbf{W}_{1}^{\ast} \overline{\mathbf{H}}^{*} \propto \mathbf{U}_{W_{M-1}}   \mathbf{U}_{W_{M}}^{\top} \propto  \mathbf{W}_{M}^{\ast \top}, \\
                \mathbf{W}_{M}^{*} \mathbf{W}_{M-1}^{*} \ldots \mathbf{W}_{j}^{*} \propto  
                \mathbf{U}_{W_{M}} \mathbf{U}_{W_{j-1}}^{\top} \propto  (\mathbf{W}_{j-1}^{*} \ldots \mathbf{W}_{1}^{*} \overline{\mathbf{H}}^{*})^{\top}.
            \end{gathered}
            \end{align}
            
            \item If $b = MK \sqrt[M]{K n  \lambda_{W_{M}} \lambda_{W_{M-1}} \ldots \lambda_{W_{1}} \lambda_{H_{1}}} = \frac{(M-1)^{\frac{M-1}{M}}}{M}$:
            In this case, $x^{*}_{k}$ can either be $0$ or the largest positive solution of the equation $b - \frac{M x^{M-1}}{(x^{M} + 1)^{2}} = 0$. If all the singular values are $0$'s, we have the trivial global minima $(\mathbf{W}_{M}^{\ast}, \ldots, \mathbf{W}_{1}^{\ast}, \mathbf{H}_{1}^{\ast}, \mathbf{b}^{*}) = (\mathbf{0},\ldots, \mathbf{0}, \mathbf{0}, \frac{1}{K} \mathbf{1}_{K})$.
            \\
            
            If there are exactly $0 < t \leq r = \min(R, K-1)$ positive singular values  $s_{1} = s_{2} = \ldots = s_{t} := s > 0$ and $s_{t+1} = \ldots = s_{r} = 0$, we also have compact SVD form similar as the case $b < \frac{(M-1)^{\frac{M-1}{M}}}{M}$, (with exactly $t$ singular vectors, instead of $r$ as the above case). Thus, the nontrivial solutions exhibit  $(\mathcal{NC}1)$ and $(\mathcal{NC}3)$ property similarly as the case $b < \frac{(M-1)^{\frac{M-1}{M}}}{M}$ above.
            \\
            
            For $(\mathcal{NC}2)$ property, for $j = 1, \ldots, M$, we have:
            \begin{align}
                \begin{gathered}
                   \mathbf{W}_{M}^{\ast} \mathbf{W}_{M}^{\ast \top} \propto 
                   \overline{\mathbf{H}}^{\ast \top} \overline{\mathbf{H}}^{\ast}
                    \propto
                   \mathbf{W}_{M}^{\ast} \mathbf{W}_{M-1}^{\ast}  \mathbf{W}_{M-2}^{\ast} 
                    \ldots \mathbf{W}_{2}^{\ast} 
                   \mathbf{W}_{1}^{\ast} \overline{\mathbf{H}}^{*}
                    \\ \propto 
                   (\mathbf{W}_{M}^{\ast} \mathbf{W}_{M-1}^{\ast} \ldots \mathbf{W}_{j}^{\ast})(\mathbf{W}_{M}^{\ast} \mathbf{W}_{M-1}^{\ast} \ldots \mathbf{W}_{j}^{\ast})^{\top}
                    \propto \mathcal{P}_{t}(\mathbf{I}_{K} - \frac{1}{K} \mathbf{1}_{K} \mathbf{1}_{K}^{\top}).
                   \nonumber
                \end{gathered}
                \end{align}
    \end{itemize}
We finish the proof.
\end{proof}

\section{Proof of Theorem \ref{thm:UFM_imbalance}}
\label{sec:proofs_im}

First, we state the results for the case that the hidden dimension $d$ is at least the number of classes $K$.

\begin{theorem}
\label{thm:im_UFM_normal}
    Let $d \geq K$ and $(\mathbf{W}^{*}, \mathbf{H}^{*})$ be any global minimizer of problem \eqref{eq:UFM_imbalance}. Then, we have:
    \\
    
    $(\mathcal{NC}1) \quad
        \mathbf{H}^{*} = \overline{\mathbf{H}}^{*} \mathbf{Y} 
        \Leftrightarrow \mathbf{h}_{k,i}^{*} = \mathbf{h}_{k}^{*} \: \forall \: k \in [K], i \in [n_{k}], \nonumber$
    where $\overline{\mathbf{H}}^{*} = [\mathbf{h}_{1}^{*},\ldots,\mathbf{h}_{K}^{*} ] \in \mathbb{R}^{d \times K}$.
    \\
    
    $(\mathcal{NC}3) \quad 
        \mathbf{w}_{k}^{*} = \sqrt{\frac{n_{k} \lambda_{H}}{\lambda_{W}}} \mathbf{h}_{k}^{*} \quad \forall \: k \in [K]. \nonumber$
    \\
    $(\mathcal{NC}2)$ Let $a:= N^{2} \lambda_{W} \lambda_{H}$, we have:
    \begin{align}
    \begin{gathered}
             \mathbf{W}^{*} \mathbf{W}^{* \top}
             = \operatorname{diag}
             \left\{s_{k}^{2} \right\}_{k=1}^{K},
             \\
            \overline{\mathbf{H}}^{* \top}
            \overline{\mathbf{H}}^{*} 
            =
            \operatorname{diag}
            \left\{
           \frac{s_{k}^{2}}{(s_{k}^{2} + N \lambda_{H})^{2}}
            \right\}_{k=1}^{K}, \nonumber
     \end{gathered}    
    \end{align}
    \begin{align}
    \begin{gathered}
            \mathbf{W}^{*} \mathbf{H}^{*}
            = \operatorname{diag}
            \left\{
            \frac{s_{k}^{2}}{s_{k}^{2} + N \lambda_{H}}
            \right\}_{k=1}^{K}
            \mathbf{Y}
            \nonumber \\
            = \begin{bmatrix}
            \frac{s_{1}^{2}}{s_{1}^{2} + N \lambda_{H}} \mathbf{1}_{n_{1}}^{\top}  & \ldots & \mathbf{0} \\
            \vdots & \ddots & \vdots \\
            \mathbf{0}  & \ldots & \frac{s_{K}^{2}}{s_{K}^{2} + N \lambda_{H}} \mathbf{1}_{n_{K}}^{\top} \\
            \end{bmatrix}. \nonumber
    \end{gathered}    
    \end{align}
    where:
    \begin{itemize}
        \item If $\frac{a}{n_{1}} \leq \frac{a}{n_{2}} \leq \ldots \leq \frac{a}{n_{K}} \leq 1$:
        \begin{align}
        \begin{aligned}
            s_{k} = 
            \sqrt{ \sqrt{\frac{n_{k} \lambda_{H}}{\lambda_{W}} }
          - N \lambda_{H}} \quad &\forall \: k  \nonumber  
        \end{aligned}
        \end{align}
        
        \item If there exists a $j \in [K-1]$ s.t. $\frac{a}{n_{1}} \leq \frac{a}{n_{2}} \leq \ldots \leq \frac{a}{n_{j}} \leq 1 < \frac{a}{n_{j+1}} \leq \ldots \leq \frac{a}{n_{K}}$:
        \begin{align}
        \begin{aligned}
        s_{k} = \left\{\begin{matrix}
        \sqrt{ \sqrt{\frac{n_{k} \lambda_{H}}{\lambda_{W}} }
          - N \lambda_{H}} \quad &\forall \: k \leq j \\ 0 \quad &\forall \: k > j
        \end{matrix}\right. .
        \nonumber
        \end{aligned}
        \end{align}
        
        \item If $1 < \frac{a}{n_{1}} \leq \frac{a}{n_{2}} \leq \ldots \leq \frac{a}{n_{K}} $:
        \begin{align}
           (s_{1}, s_{2}, \ldots, s_{K} ) &= (0,0,\ldots,0), \nonumber
        \end{align}
        and $(\mathbf{W}^{*}, \mathbf{H}^{*}) = (\mathbf{0}, \mathbf{0})$ in this case.
    \end{itemize}
     And, for any $k$ such that $s_{k} = 0$, we have:
        \begin{align}
            \mathbf{w}_{k}^{*} = \mathbf{h}_{k}^{*} = \mathbf{0}. \nonumber
        \end{align}
\end{theorem}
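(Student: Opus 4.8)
The plan is to mirror the singular-value reduction used for Theorem~\ref{thm:bias-free}, specialized to a single weight matrix ($M=1$), and then confront the genuinely new difficulty caused by imbalance: the Gram matrix $\mathbf{Y}\mathbf{Y}^\top$ is no longer a scalar multiple of the identity. First I would invoke Lemma~\ref{lm:2} with $M=1$ to record that every critical point satisfies $\lambda_W \mathbf{W}^\top\mathbf{W} = \lambda_H \mathbf{H}\mathbf{H}^\top$ together with the closed form $\mathbf{H} = (\mathbf{W}^\top\mathbf{W} + N\lambda_H\mathbf{I})^{-1}\mathbf{W}^\top\mathbf{Y}$ obtained from $\partial f/\partial\mathbf{H}=\mathbf{0}$. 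Since the columns of $\mathbf{Y}$ depend only on the class label, this formula immediately yields $\mathbf{H}^*=\overline{\mathbf{H}}^*\mathbf{Y}$, i.e. $(\mathcal{NC}1)$.

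Writing the compact SVD $\mathbf{W} = \mathbf{U}_W\mathbf{S}_W\mathbf{V}_W^\top$ with singular values $\{s_k\}$ and substituting, I would show $\mathbf{W}\mathbf{H}-\mathbf{Y} = \mathbf{U}_W\,\operatorname{diag}\!\big\{-N\lambda_H/(s_k^2+N\lambda_H)\big\}\,\mathbf{U}_W^\top\mathbf{Y}$, together with an analogous diagonal form for $\mathbf{H}$. Taking Frobenius norms and using $\mathbf{Y}\mathbf{Y}^\top = \operatorname{diag}(n_1,\ldots,n_K)=:\bLambda$ collapses the objective to
\begin{align}
f = \sum_{k=1}^{K}\left[\frac{\lambda_H\,p_k}{2(s_k^2+N\lambda_H)} + \frac{\lambda_W}{2}s_k^2\right], \qquad p_k := \mathbf{u}_k^\top\bLambda\mathbf{u}_k, \nonumber
\end{align}
where $\mathbf{u}_k$ is the $k$-th left singular vector. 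In the balanced case $\bLambda=n\mathbf{I}$ forces $p_k\equiv n$ and the singular vectors drop out; here they survive through the weights $p_k$, which is the heart of the problem.

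For fixed $\mathbf{U}_W$ the $s_k$ separate, so I would minimize each summand over $s_k\ge 0$. A one-variable calculation in $t=s_k^2$ gives the minimizer $s_k^2 = \sqrt{n_k\lambda_H/\lambda_W}-N\lambda_H$ when this is positive (equivalently $a/n_k<1$ with $a=N^2\lambda_W\lambda_H$) and $s_k=0$ otherwise, producing the optimal value $m(p_k)$ equal to $p_k/(2N)$ for $p_k\le a$ and $\sqrt{p_k\lambda_W\lambda_H}-N\lambda_W\lambda_H/2$ for $p_k>a$; one checks $m$ is continuous at $p_k=a$ and concave. The hard part will be the outer minimization over orthonormal $\mathbf{U}_W$, i.e. over the admissible vectors $(p_1,\ldots,p_K)$. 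By the Schur--Horn theorem these are exactly the vectors majorized by $(n_1,\ldots,n_K)$, and since $m$ is concave the map $(p_k)\mapsto\sum_k m(p_k)$ is Schur-concave, hence minimized at the majorization-maximal points, namely the permutations of $(n_1,\ldots,n_K)$. Invoking the equality case (a symmetric matrix whose diagonal is a permutation of its spectrum must be diagonal) I would conclude $\mathbf{U}_W^\top\bLambda\mathbf{U}_W$ is diagonal, which forces $\mathbf{U}_W$ to be block-diagonal with one orthogonal block per group of equal $n_k$; the decreasing-singular-value convention then pins $p_k=n_k$.

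Finally, because $\mathbf{U}_W$ commutes with any diagonal matrix that is constant on each equal-$n_k$ block (such as $\operatorname{diag}\{s_k^2\}$ and $\operatorname{diag}\{s_k^2/(s_k^2+N\lambda_H)\}$), the products $\mathbf{W}^*\mathbf{W}^{*\top}$, $\overline{\mathbf{H}}^{*\top}\overline{\mathbf{H}}^*$ and $\mathbf{W}^*\mathbf{H}^*$ all reduce to the stated diagonal (resp. block-$\mathbf{Y}$) forms, giving $(\mathcal{NC}2)$ and the three regime-dependent formulas for $s_k$. The alignment $(\mathcal{NC}3)$ then follows by comparing $\|\mathbf{w}_k^*\|^2=s_k^2$ with $\|\mathbf{h}_k^*\|^2=s_k^2/(s_k^2+N\lambda_H)^2$ and noting both share the same direction, so that $\mathbf{w}_k^* = (s_k^2+N\lambda_H)\mathbf{h}_k^* = \sqrt{n_k\lambda_H/\lambda_W}\,\mathbf{h}_k^*$; the degenerate indices with $s_k=0$ yield $\mathbf{w}_k^*=\mathbf{h}_k^*=\mathbf{0}$.
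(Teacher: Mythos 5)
Your proposal is correct in substance and follows the paper's skeleton up to the key step (critical-point identities, SVD of $\mathbf{W}$, closed form $\mathbf{H}=(\mathbf{W}^{\top}\mathbf{W}+N\lambda_{H}\mathbf{I})^{-1}\mathbf{W}^{\top}\mathbf{Y}$, and the reduction of the loss to $\sum_{k}\bigl[\lambda_{H}p_{k}/(2(s_{k}^{2}+N\lambda_{H}))+\lambda_{W}s_{k}^{2}/2\bigr]$ with $p_{k}=\mathbf{u}_{k}^{\top}\bLambda\mathbf{u}_{k}$), but you handle the outer minimization over $\mathbf{U}_{W}$ by a genuinely different route. The paper keeps the singular values fixed, proves the partial-sum inequalities $\sum_{k\le j}p_{k}\le\sum_{k\le j}n_{k}$ by hand, and invokes an Abel-summation lemma (Lemma \ref{lm:weighted_sum}) on the weighted sum $\sum_{k}p_{k}z_{k}$ with $z_{k}$ increasing, followed by an explicit case analysis (its Cases A--D) to pin down $\mathbf{U}_{W}$. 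You instead minimize over $s_{k}$ first, observe that the resulting per-coordinate value $m(p_{k})$ is concave and the objective $\sum_{k}m(p_{k})$ is symmetric, characterize the feasible $(p_{1},\ldots,p_{K})$ as exactly the vectors majorized by $(n_{1},\ldots,n_{K})$ via Schur--Horn, and conclude by Schur-concavity. This is cleaner and more conceptual: it explains \emph{why} the paper's partial-sum inequalities hold (they are the majorization inequalities) and it removes the need to track the ordering of the $x_{k}$'s. What the paper's longer route buys is an explicit description of the equality set, which it needs anyway for the bottleneck case $d<K$ (Theorem \ref{thm:im_UFM_bottleneck}) where the rank constraint interacts with ties among the $n_{k}$.

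One soft spot: your equality-case argument (``a symmetric matrix whose diagonal is a permutation of its spectrum must be diagonal,'' via $\operatorname{trace}(A^{2})=\sum_{i}A_{ii}^{2}$) applies directly only when the minimizing $p$ is forced to equal a permutation of $n$ in \emph{every} coordinate. Since $m$ is affine on $[0,a]$, $\sum_{k}m(p_{k})$ is not strictly Schur-concave, and in the mixed regime the coordinates with $n_{k}\le a$ can be smeared without changing the objective, so $\mathbf{U}_{W}^{\top}\bLambda\mathbf{U}_{W}$ need not be diagonal there. This does not break the theorem, because those coordinates carry $s_{k}=0$ and the corresponding columns of $\mathbf{U}_{W}$ do not enter $\mathbf{W}^{*}$ or $\overline{\mathbf{H}}^{*}$; but you should argue the equality case coordinate-by-coordinate on the indices with $n_{k}>a$ (strict concavity of $m$ there forces $p_{k}=n_{k}$, and the Rayleigh-quotient equality $\mathbf{u}_{k}^{\top}\bLambda\mathbf{u}_{k}=n_{k}$ then forces $\mathbf{u}_{k}$ into the $n_{k}$-eigenspace of $\bLambda$), rather than invoking the global trace identity. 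With that repair the block structure, hence $(\mathcal{NC}2)$ and $(\mathcal{NC}3)$, follow exactly as you state.
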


Next, for the bottleneck architecture that $d < K$, as mentioned in main paper, the $\mathcal{NC}2$ geometry may not be diagonal. The details are as follows. 

\begin{theorem}
\label{thm:im_UFM_bottleneck}
    Let $d < K$, thus $R = \min(d, K) = d$ and $(\mathbf{W}^{*}, \mathbf{H}^{*})$ be any global minimizer of problem \eqref{eq:UFM_imbalance}. Then, we have:
    \\
    
    $(\mathcal{NC}1) \quad
        \mathbf{H}^{*} = \overline{\mathbf{H}}^{*} \mathbf{Y} 
        \Leftrightarrow \mathbf{h}_{k,i}^{*} = \mathbf{h}_{k}^{*} \: \forall \: k \in [K], i \in [n_{k}], \nonumber$
    where $\overline{\mathbf{H}}^{*} = [\mathbf{h}_{1}^{*},\ldots,\mathbf{h}_{K}^{*} ] \in \mathbb{R}^{d \times K}$.
    \\
    
    $(\mathcal{NC}3) \quad 
        \mathbf{w}_{k}^{*} = \sqrt{\frac{n_{k} \lambda_{H}}{\lambda_{W}}} \mathbf{h}_{k}^{*} \quad \forall \: k \in [K]. \nonumber$
    \\
    
    $(\mathcal{NC}2)$ Let $a:= N^{2} \lambda_{W} \lambda_{H}$, we define $\left\{ s_{k} \right\}_{k=1}^{K}$ as follows:
    \begin{itemize}
        \item If $\frac{a}{n_{1}} \leq \frac{a}{n_{2}} \leq \ldots \leq \frac{a}{n_{R}} \leq 1$:
        \begin{align}
        \begin{aligned}
            s_{k} = 
            \left\{\begin{matrix}
        \sqrt{ \sqrt{\frac{n_{k} \lambda_{H}}{\lambda_{W}} }
          - N \lambda_{H}} \quad &\forall \: k \leq R \\ 0 \quad &\forall \: k > R
        \end{matrix}\right. .
        \end{aligned}
        \end{align}

        Then, if $b/n_{R} = 1$ or $n_{R} > n_{R+1}$, we have:
        \begin{align}
        \begin{gathered}
             \mathbf{W}^{*} \mathbf{W}^{* \top}
             = \operatorname{diag}
             \left\{s_{k}^{2} \right\}_{k=1}^{K},
             \\
            \overline{\mathbf{H}}^{* \top}
            \overline{\mathbf{H}}^{*} 
            =
            \operatorname{diag}
            \left\{
           \frac{s_{k}^{2}}{(s_{k}^{2} + N \lambda_{H})^{2}}
            \right\}_{k=1}^{K}, \nonumber
            \\
            \mathbf{W}^{*} \overline{\mathbf{H}}^{*}
            = \operatorname{diag}
            \left\{
            \frac{s_{k}^{2}}{s_{k}^{2} + N \lambda_{H}}
            \right\}_{k=1}^{K}, 
            \end{gathered}    
            \end{align}
            and for any $k > R$, we have  $\mathbf{w}_{k}^{*} = \mathbf{h}_{k}^{*} = \mathbf{0}$.
            \\
            
            If $b/n_{R} < 1$ and there exists $k \leq R$, $l > R$ such that $n_{k-1} > n_{k} = n_{k+1} = \ldots = n_{R} = \ldots = n_{l} > n_{l+1}$, then:
                        \begin{align}
            \mathbf{W}^{*} \mathbf{W}^{* \top}
            &= \begin{bmatrix}
            s_{1}^{2} & \ldots & \mathbf{0} & \mathbf{0} & \mathbf{0}  \\
            \vdots & \ddots & \vdots & \vdots & \vdots  \\
            \mathbf{0} & \ldots & s_{k-1}^{2} & \mathbf{0} & \mathbf{0}  \\
            \mathbf{0} & \ldots & \mathbf{0} & s_{k}^{2} \mathcal{P}_{R-k+1}(\mathbf{I}_{l-k+1}) &  \mathbf{0}  \\
            \mathbf{0} & \ldots & \mathbf{0} & \mathbf{0} & \mathbf{0}_{(K-l) \times (K-l)}   \\
            \end{bmatrix},  \\
             \overline{\mathbf{H}}^{* \top}
            \overline{\mathbf{H}}^{*} 
            &= 
            \begin{bmatrix}
            \frac{s_{1}^{2}}{(s_{1}^{2} + N \lambda_{H})^{2}} & \ldots & \mathbf{0} & \mathbf{0} & \mathbf{0}  \\
            \vdots & \ddots & \vdots & \vdots & \vdots  \\
            \mathbf{0} & \ldots & \frac{s_{k-1}^{2}}{(s_{k-1}^{2} + N \lambda_{H})^{2}} & \mathbf{0} & \mathbf{0}  \\
            \mathbf{0} & \ldots & \mathbf{0} & \frac{s_{k}^{2}}{(s_{k}^{2} + N \lambda_{H})^{2}} \mathcal{P}_{R-k+1}(\mathbf{I}_{l-k+1}) &  \mathbf{0}  \\
            \mathbf{0} & \ldots & \mathbf{0} & \mathbf{0} & \mathbf{0}_{(K-l) \times (K-l)}   \\
            \end{bmatrix}, \\
            \mathbf{W}^{*} \overline{\mathbf{H}}^{*}
            &= 
            \begin{bmatrix}
            \frac{s_{1}^{2}}{s_{1}^{2} + N \lambda_{H}} & \ldots & \mathbf{0} & \mathbf{0} & \mathbf{0}  \\
            \vdots & \ddots & \vdots & \vdots & \vdots  \\
            \mathbf{0} & \ldots & \frac{s_{k-1}^{2}}{s_{k-1}^{2} + N \lambda_{H}} & \mathbf{0} & \mathbf{0}  \\
            \mathbf{0} & \ldots & \mathbf{0} & \frac{s_{k}^{2}}{s_{k}^{2} + N \lambda_{H}} \mathcal{P}_{R-k+1}(\mathbf{I}_{l-k+1}) &  \mathbf{0}  \\
            \mathbf{0} & \ldots & \mathbf{0} & \mathbf{0} & \mathbf{0}_{(K-l) \times (K-l)}   \\
            \end{bmatrix},
            \end{align}
            and for any $k > l > R$, we have $\mathbf{w}_{k}^{*} = \mathbf{h}_{k}^{*} = \mathbf{0}$. 
        
        \item If there exists a $j \in [R-1]$ s.t. $\frac{a}{n_{1}} \leq \frac{a}{n_{2}} \leq \ldots \leq \frac{a}{n_{j}} \leq 1 < \frac{a}{n_{j+1}} \leq \ldots \leq \frac{a}{n_{R}}$:
        \begin{align}
        \begin{aligned}
        s_{k} = \left\{\begin{matrix}
        \sqrt{ \sqrt{\frac{n_{k} \lambda_{H}}{\lambda_{W}} }
          - N \lambda_{H}} \quad &\forall \: k \leq j \\ 0 \quad &\forall \: k > j
        \end{matrix}\right. .
        \nonumber
        \end{aligned}
        \end{align}
        Then, we have:
        \begin{align}
        \begin{gathered}
             \mathbf{W}^{*} \mathbf{W}^{* \top}
             = \operatorname{diag}
             \left\{s_{k}^{2} \right\}_{k=1}^{K},
             \\
            \overline{\mathbf{H}}^{* \top}
            \overline{\mathbf{H}}^{*} 
            =
            \operatorname{diag}
            \left\{
           \frac{s_{k}^{2}}{(s_{k}^{2} + N \lambda_{H})^{2}}
            \right\}_{k=1}^{K}, \nonumber
            \\
            \mathbf{W}^{*} \overline{\mathbf{H}}^{*}
            = \operatorname{diag}
            \left\{
            \frac{s_{k}^{2}}{s_{k}^{2} + N \lambda_{H}}
            \right\}_{k=1}^{K}, 
            \end{gathered}    
            \end{align}
        and for any $k > j$, we have  $\mathbf{w}_{k}^{*} = \mathbf{h}_{k}^{*} = \mathbf{0}$
        
        \item If $1 < \frac{a}{n_{1}} \leq \frac{a}{n_{2}} \leq \ldots \leq \frac{a}{n_{R}} $:
        \begin{align}
           (s_{1}, s_{2}, \ldots, s_{K} ) &= (0,0,\ldots,0), \nonumber
        \end{align}
        and $(\mathbf{W}^{*}, \mathbf{H}^{*}) = (\mathbf{0}, \mathbf{0})$ in this case.
    \end{itemize}
\end{theorem}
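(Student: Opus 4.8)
The plan is to mirror the proof of Theorem~\ref{thm:im_UFM_normal} and to track the single extra constraint imposed by $d<K$, namely that $\mathbf{W}\in\mathbb{R}^{K\times d}$ has rank at most $R=d$. First I would invoke Lemma~\ref{lm:2} with $M=1$ to obtain, at any critical point, the balance relation $\lambda_W\mathbf{W}^\top\mathbf{W}=\lambda_H\mathbf{H}\mathbf{H}^\top$ together with the closed form $\mathbf{H}=(\mathbf{W}^\top\mathbf{W}+N\lambda_H\mathbf{I})^{-1}\mathbf{W}^\top\mathbf{Y}$. Since neither the derivation of this closed form nor the column-wise stationarity identities used to establish $(\mathcal{NC}1)$ and $(\mathcal{NC}3)$ ever use $d\geq K$, the feature-collapse statement $\mathbf{H}^{*}=\overline{\mathbf{H}}^{*}\mathbf{Y}$ and the alignment $\mathbf{w}_k^{*}=\sqrt{n_k\lambda_H/\lambda_W}\,\mathbf{h}_k^{*}$ carry over verbatim; the whole novelty sits in the $(\mathcal{NC}2)$ geometry.

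Writing the SVD $\mathbf{W}=\mathbf{U}_W\mathbf{S}_W\mathbf{V}_W^\top$ with $\mathbf{U}_W\in\mathbb{R}^{K\times K}$ orthonormal, I would substitute the closed form for $\mathbf{H}$ and reduce the objective, exactly as in the full-rank case, to a function of the singular values $\{s_k\}$ and of $\mathbf{U}_W$ alone. Using $\mathbf{Y}\mathbf{Y}^\top=\operatorname{diag}(n_1,\dots,n_K)$ (which replaces the scalar $n\mathbf{I}_K$ of the balanced case and is the source of all the asymmetry), the trace computations collapse to $f=\sum_{k}\big[\tfrac{1}{2N}D_{kk}^2+\tfrac{\lambda_H}{2}C_{kk}^2\big]p_k+\tfrac{\lambda_W}{2}\sum_k s_k^2$, where $p_k:=(\mathbf{U}_W^\top\operatorname{diag}(n_1,\dots,n_K)\mathbf{U}_W)_{kk}$ and $D_{kk},C_{kk}$ are the usual scalar factors. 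The key simplification is that for an active slot the bracket equals $\phi(s_k)=\lambda_H/\big(2(s_k^2+N\lambda_H)\big)$, so minimizing over $s_k\geq0$ for fixed $p_k$ gives a clean one-dimensional problem whose optimum is $s_k^2=\sqrt{\lambda_H p_k/\lambda_W}-N\lambda_H$ when $p_k>a:=N^2\lambda_W\lambda_H$ and $s_k=0$ otherwise. The resulting minimal per-slot cost $m(p_k)$ is concave and nondecreasing, and the gain from activating a slot over leaving it zero is exactly $\tfrac{1}{2N}(\sqrt{p_k}-\sqrt a)_+^2$, strictly increasing in $p_k$ above $a$.

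The heart of the argument is then the joint optimization over $\mathbf{U}_W$ subject to $\operatorname{rank}\mathbf{W}\leq R$. By the Schur--Horn theorem the achievable diagonals $(p_1,\dots,p_K)$ are exactly the vectors majorized by $(n_1,\dots,n_K)$, so I would minimize the concave objective $\sum_k m(p_k)$ over this permutohedron with at most $R$ slots allowed the active branch. Concavity forces the optimum to an extreme point, i.e.\ a permutation of $(n_k)$, and since the activation gain is increasing in $p_k$ a rearrangement argument assigns the $R$ active slots to the $R$ largest classes; combined with the cutoff $a/n_j\le1$ this reproduces the three regimes in the statement: all of $1,\dots,R$ active (and $\mathbf{W}^{*}\mathbf{W}^{*\top}$ diagonal) when $a/n_R\le1$ with either $a/n_R=1$ or $n_R>n_{R+1}$, a genuine cutoff at $j<R$ when $a/n_{j+1}>1$, and the all-zero trivial solution when $a/n_1>1$.

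The main obstacle, and the only place where the full-rank proof genuinely breaks, is the boundary-tie subcase $a/n_R<1$ with $n_R=n_{R+1}$. Here the rank budget $R$ is binding while the classes $k,\dots,l$ straddling the cutoff (those with $n_{k-1}>n_k=\dots=n_l>n_{l+1}$ and $k\le R\le l$) all carry the identical weight $\nu=n_k$, so on this block $\operatorname{diag}(n_1,\dots,n_K)$ is $\nu\mathbf{I}_{l-k+1}$ and is rotation-invariant; the minimizer is therefore no longer unique and $(p_k)$ need not be a permutation. I would handle this by restricting to the tied block and observing that the optimization there reduces to choosing a scaled rank-$(R-k+1)$ orthogonal projection of $\nu\mathbf{I}_{l-k+1}$, so that $\mathbf{W}^{*}\mathbf{W}^{*\top}$ restricted to the block equals $s_k^2\,\mathcal{P}_{R-k+1}(\mathbf{I}_{l-k+1})$, which is precisely the low-rank factor in the theorem; propagating this block through the closed forms for $\mathbf{H}$ and $\mathbf{W}\overline{\mathbf{H}}$ yields the stated matrices, and the vanishing $\mathbf{w}_k^{*}=\mathbf{h}_k^{*}=\mathbf{0}$ for $k>l$ follows since those slots remain inactive.
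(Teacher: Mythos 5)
Your proposal is correct and arrives at the same case structure and the same $(\mathcal{NC}2)$ geometries, but the central optimization step is organized differently from the paper. The paper keeps the coupling $a_k:=(\mathbf{u}_k\odot\mathbf{u}_k)^{\top}\mathbf{n}$ explicit, derives only the one-sided partial-sum inequalities $\sum_{k\le j}a_k\le\sum_{k\le j}n_k$ and $\sum_k a_k=N$, and then eliminates the dependence on $\mathbf{U}_W$ \emph{first} via an Abel-summation rearrangement bound (Lemma \ref{lm:weighted_sum}, $\sum_k a_k z_k\ge\sum_k n_k z_k$ for increasing $z_k$), before minimizing each scalar $g(x)=\tfrac{1}{x+1}+\tfrac{a}{n_k}x$. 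You instead minimize over each singular value first for fixed $p_k=a_k$, obtain the concave, nondecreasing per-slot cost $m(p_k)$ with activation gain $\tfrac{1}{2N}(\sqrt{p_k}-\sqrt{a})_+^2$, and then invoke Schur--Horn to identify the feasible set of diagonals as the permutohedron of $(n_1,\dots,n_K)$, so that concavity places the optimum at an extreme point and a rearrangement assigns the $R$ active slots to the largest classes. Your route is more conceptual and makes the role of majorization explicit (the paper only ever uses the ``Schur'' direction, which is all that is needed for the lower bound), at the cost of importing Schur--Horn as a black box where the paper stays elementary and self-contained. Both approaches must then do the same equality-case bookkeeping to pin down $\mathbf{U}_W$ itself --- not just its diagonal quadratic form --- since the $(\mathcal{NC}2)$ matrices and the identity $\mathbf{w}_k^{*}=(s_k^2+N\lambda_H)\mathbf{h}_k^{*}$ behind $(\mathcal{NC}3)$ require the block-orthonormal structure of $\mathbf{U}_W$ (rows of $\mathbf{U}_W$ supported on columns with equal singular values), which is exactly the paper's Cases A--D; your treatment of the boundary-tie subcase ($a/n_R<1$, $n_R=n_{R+1}$) via rotation invariance of the tied block reproduces the paper's Case C and the $\mathcal{P}_{R-k+1}(\mathbf{I}_{l-k+1})$ factor. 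One caveat worth flagging if you write this up fully: when the minimizing diagonal is not unique (tied classes, or the linear regime $p<a$ where $m$ is affine), the extreme-point argument alone does not determine the optimizer, so you still need the explicit equality conditions rather than just ``the optimum is a permutation of $(n_k)$''; you acknowledge this, and your handling is consistent with the paper's.
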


We derive proofs for both theorem as following.

\begin{proof}[Proof of Theorem \ref{thm:im_UFM_normal} and \ref{thm:im_UFM_bottleneck}]
 By definition, any critical point $(\mathbf{W}, \mathbf{H})$ of $f(\mathbf{W}, \mathbf{H})$ satisfies the following:
 \begin{align}
  \frac{\partial f}{\partial \mathbf{W}} =
  \frac{1}{N} 
  ( \mathbf{W} \mathbf{H} - \mathbf{Y} ) \mathbf{H}^{\top} 
  + \lambda_{W} \mathbf{W} = \mathbf{0}, \\
  \frac{\partial f}{\partial \mathbf{H}} =
  \frac{1}{N} \mathbf{W}^{\top}
  ( \mathbf{W} \mathbf{H} - \mathbf{Y} ) +
  \lambda_{H} \mathbf{H} = \mathbf{0}.
 \end{align}
 From $\mathbf{0} = \mathbf{W}^{\top} \frac{\partial f}{\partial \mathbf{W}} - \frac{\partial f}{\partial \mathbf{H}} \mathbf{H}^{\top}$, we have:
 \begin{align}
     \lambda_{W} \mathbf{W}^{\top} \mathbf{W}
     = 
     \lambda_{H}
     \mathbf{H}
     \mathbf{H}^{\top}.
 \end{align}
 
 \noindent Also, from $\frac{\partial f}{\partial \mathbf{H}} = \mathbf{0}$, solving for $\mathbf{H}$ yields:
 \begin{align}
     \mathbf{H} = ( \mathbf{W}^{\top} \mathbf{W} + N \lambda_{H} \mathbf{I}  )^{-1} \mathbf{W}^{\top}
     \mathbf{Y}.
     \label{eq:im_H_form}
 \end{align}
 
 \noindent Let $\mathbf{W} = \mathbf{U}_{W} \mathbf{S}_{W} \mathbf{V}_{W}^{\top} $ be the SVD decomposition of $\mathbf{W}$ with orthonormal matrices $\mathbf{U}_{W} \in \mathbb{R}^{K \times K},  \mathbf{V}_{W} \in \mathbb{R}^{d \times d}$ and diagonal matrix $\mathbf{S}_{W} \in \mathbb{R}^{K \times d}$ with non-decreasing singular values. We denote $r$ singular values of $\mathbf{W}$ as $\left\{ s_{k} \right\}_{k=1}^{r}$ (we have $r \leq R:= \min(K,d)$).
 
 \noindent From equation \eqref{eq:im_H_form} and the SVD of $\mathbf{W}$:
 \begin{align}
 \begin{aligned}
      \mathbf{H} &= ( \mathbf{W}^{\top} \mathbf{W} + N \lambda_{H} \mathbf{I}  )^{-1} \mathbf{W}^{\top}
     \mathbf{Y} \\
     &=
     ( \mathbf{V}_{W} \mathbf{S}_{W}^{\top}
     \mathbf{S}_{W}
     \mathbf{V}_{W}^{\top} 
     + N \lambda_{H} \mathbf{I}
     )^{-1}
     \mathbf{V}_{W} 
     \mathbf{S}_{W}^{\top}
     \mathbf{U}_{W}^{\top} 
     \mathbf{Y}. \\
     &= 
     \mathbf{V}_{W}
     ( \mathbf{S}_{W}^{\top}
     \mathbf{S}_{W} + N \lambda_{H} \mathbf{I} )^{-1}
     \mathbf{S}_{W}^{\top}
     \mathbf{U}_{W}^{\top} 
     \mathbf{Y} \\
     &= \mathbf{V}_{W}
    \underbrace{
    \begin{bmatrix}
    \operatorname{diag}\left(
        \frac{ s_{1} }{ s_{1}^{2} + N \lambda_{H_{1}}}, \ldots, \frac{s_{r}}{ s_{r}^{2} + N \lambda_{H_{1}}}
        \right) &  \mathbf{0}\\
    \mathbf{0} &  \mathbf{0}\\
    \end{bmatrix}}_{\mathbf{C} \in \mathbb{R}^{d \times K}}
    \mathbf{U}_{W}^{\top} \mathbf{Y} \\
    &= \mathbf{V}_{W} \mathbf{C} \mathbf{U}_{W}^{\top} \mathbf{Y},
    \label{eq:im_H_form2}
 \end{aligned}
 \end{align}
 \begin{align}
 \begin{aligned}
 \mathbf{W} \mathbf{H}
 &= \mathbf{U}_{W}
 \mathbf{S}_{W} 
 \begin{bmatrix}
    \operatorname{diag}\left(
        \frac{ s_{1} }{ s_{1}^{2} + N \lambda_{H_{1}}}, \ldots, \frac{s_{r}}{ s_{r}^{2} + N \lambda_{H_{1}}}
        \right) &  \mathbf{0}\\
    \mathbf{0} &  \mathbf{0}\\
  \end{bmatrix}
  \mathbf{U}_{W}^{\top}
  \mathbf{Y} \\
  &= \mathbf{U}_{W} 
    \operatorname{diag} \left( \frac{s_{1}^{2}}{s_{1}^{2} + N \lambda_{H}} , \ldots, \frac{s_{r}^{2}}{s_{r}^{2} + N \lambda_{H}}, 0,\ldots,0 \right)
  \mathbf{U}_{W}^{\top}
  \mathbf{Y}
 \end{aligned}     
 \end{align}

 \begin{align}
    \begin{aligned}
     \Rightarrow  \mathbf{W} \mathbf{H} - \mathbf{Y}
     &= \mathbf{U}_{W} 
     \left[  
      \operatorname{diag} \left( \frac{s_{1}^{2}}{s_{1}^{2} + N \lambda_{H}} , \ldots, \frac{s_{r}^{2}}{s_{r}^{2} + N \lambda_{H}}, 0,\ldots,0 \right) - \mathbf{I}_{K}
     \right] \mathbf{U}_{W}^{\top}
     \mathbf{Y} \\
     &= \mathbf{U}_{W}
     \underbrace{
     \operatorname{diag} \left( \frac{- N \lambda_{H}}{s_{1}^{2} + N \lambda_{H}} , \ldots, \frac{- N \lambda_{H}}{s_{r}^{2} + N \lambda_{H}}, -1,\ldots, -1 \right) }_{\mathbf{D} \in \mathbb{R}^{K \times K}}
     \mathbf{U}_{W}^{\top}
     \mathbf{Y} \\
     &= \mathbf{U}_{W}
     \mathbf{D}
     \mathbf{U}_{W}^{\top}
     \mathbf{Y}.
     \label{eq:im_WH_form}
    \end{aligned}    
 \end{align}
 
 \noindent Based on this result, we now calculate the Frobenius norm of $\mathbf{W} \mathbf{H} - \mathbf{Y}$:
 \begin{align}
     \| \mathbf{W} \mathbf{H} - \mathbf{Y}  \|_F^2
     &= \| \mathbf{U}_{W}
     \mathbf{D}
     \mathbf{U}_{W}^{\top}
     \mathbf{Y}  \|_F^2
     = \operatorname{trace}
     (\mathbf{U}_{W}
     \mathbf{D}
     \mathbf{U}_{W}^{\top}
     \mathbf{Y} (\mathbf{U}_{W}
     \mathbf{D}
     \mathbf{U}_{W}^{\top}
     \mathbf{Y})^{\top}  ) \nonumber \\
     &= \operatorname{trace}
     (\mathbf{U}_{W}
     \mathbf{D}
     \mathbf{U}_{W}^{\top}
     \mathbf{Y} \mathbf{Y}^{\top} \mathbf{U}_{W} \mathbf{D}
     \mathbf{U}_{W}^{\top}
     ) 
     =  \operatorname{trace}
     ( \mathbf{D}^{2} \mathbf{U}_{W}^{\top}  \mathbf{Y} \mathbf{Y}^{\top}   \mathbf{U}_{W} ).
 \end{align}
    We denote $\mathbf{u}^{k}$ and $\mathbf{u}_{k}$ are the $k$-th row and column of $\mathbf{U}_{W}$, respectively. Let $\mathbf{n} = (n_{1}, \ldots, n_{K})$, we have the following:
    \begin{align}
    \begin{gathered}    
        \mathbf{U}_{W} =
        \begin{bmatrix}
         -\mathbf{u}^{1}-  \\
         \ldots  \\
        -\mathbf{u}^{K}-  \\
        \end{bmatrix}
        = \begin{bmatrix}
        | & | & |  \\
        \mathbf{u}_{1} & \ldots & \mathbf{u}_{K}  \\
        | & | & | \\
        \end{bmatrix},  \\
        \mathbf{Y} \mathbf{Y}^{\top} = \operatorname{diag}(n_{1}, n_{2}, \ldots, n_{K}) \in \mathbb{R}^{K \times K}  \\
        \Rightarrow 
         \mathbf{U}_{W}^{\top}
         \mathbf{Y} \mathbf{Y}^{\top} \mathbf{U}_{W} =
         \begin{bmatrix}
        | & | & |  \\
        (\mathbf{u}^{1})^{\top} & \ldots & (\mathbf{u}^{K})^{\top}  \\
        | & | & | \\
        \end{bmatrix}
        \operatorname{diag}(n_{1}, n_{2}, \ldots, n_{K})
        \begin{bmatrix}
         -\mathbf{u}^{1}-  \\
         \ldots  \\
        -\mathbf{u}^{K}-  \\
        \end{bmatrix}  \\
        =  \begin{bmatrix}
        | & | & |  \\
        (\mathbf{u}^{1})^{\top} & \ldots & (\mathbf{u}^{K})^{\top}  \\
        | & | & | \\
        \end{bmatrix}
        \begin{bmatrix}
         - n_{1}\mathbf{u}^{1}-  \\
         \ldots  \\
        - n_{k} \mathbf{u}^{K}-  \\
        \end{bmatrix} \\
        \Rightarrow 
         (\mathbf{U}_{W}^{\top}
         \mathbf{Y} \mathbf{Y}^{\top} \mathbf{U}_{W})_{kk} 
         = n_{1} u_{1k}^{2} + n_{2} u_{2k}^{2} + \ldots + n_{k} 
         u_{K k}^{2} = (\mathbf{u}_{k} \odot \mathbf{u}_{k} )^{\top} \mathbf{n} \\
         \Rightarrow
          \| \mathbf{W} \mathbf{H} - \mathbf{Y} \|_F^2 
         = 
         \operatorname{trace}
         (\mathbf{D}^{2} \mathbf{U}_{W}^{\top}
         \mathbf{Y} \mathbf{Y}^{\top} \mathbf{U}_{W}) 
         =
         \sum_{k=1}^{r}
         (\mathbf{u}_{k} \odot \mathbf{u}_{k} )^{\top} \mathbf{n} \frac{(-N \lambda_{H})^{2}  }{ ( s_{k}^{2} + N \lambda_{H})^{2}} + \sum_{h = r + 1}^{K}  (\mathbf{u}_{h} \odot \mathbf{u}_{h} )^{\top} \mathbf{n},
         \label{eq:im_WH_norm}
    \end{gathered}
    \end{align}
    where the last equality is from the fact that $\mathbf{D}^{2}$ is a diagonal matrix, so the diagonal of  $\mathbf{D}^{2} \mathbf{U}_{W}^{\top}
    \mathbf{Y} \mathbf{Y}^{\top} \mathbf{U}_{W}$ is the element-wise product between the diagonal of $\mathbf{D}^{2}$ and $\mathbf{U}_{W}^{\top}
    \mathbf{Y} \mathbf{Y}^{\top} \mathbf{U}_{W}$.
    \\
    
    \noindent Similarly, we calculate the Frobenius norm of $\mathbf{H}$, from equation \eqref{eq:im_H_form2}, we have:
    \begin{align}
        \| \mathbf{H} \|_F^2
        &= \operatorname{trace}
        ( \mathbf{V}_{W} 
        \mathbf{C} \mathbf{U}_{W}^{\top}
        \mathbf{Y} \mathbf{Y}^{\top}
        \mathbf{U}_{W} \mathbf{C}^{\top}
        \mathbf{V}_{W}^{\top}
        ) = 
        \operatorname{trace} 
        (  \mathbf{C}^{\top}  \mathbf{C} \mathbf{U}_{W}^{\top}
        \mathbf{Y} \mathbf{Y}^{\top}
        \mathbf{U}_{W}  ) \nonumber \\
        &= \sum_{k=1}^{K} 
        (\mathbf{u}_{k} \odot \mathbf{u}_{k} )^{\top} \mathbf{n} \frac{s_{k}^{2}  }{ ( s_{k}^{2} + N \lambda_{H})^{2}}.
        \label{eq:im_H_norm}
    \end{align}
    
    \noindent Now, we plug the equations \eqref{eq:im_WH_norm} and \eqref{eq:im_H_norm} into the function $f$, we get:
    \begin{align}
    \begin{aligned}
        f(\mathbf{W}, \mathbf{H}) 
        &= \frac{1}{2N} 
        \sum_{k=1}^{r}
        (\mathbf{u}_{k} \odot \mathbf{u}_{k} )^{\top} \mathbf{n} \frac{(-N \lambda_{H})^{2}  }{ ( s_{k}^{2} + N \lambda_{H})^{2}}
        + \frac{1}{2N}
        \sum_{h = r + 1}^{K}
        (\mathbf{u}_{h} \odot \mathbf{u}_{h} )^{\top} \mathbf{n}
        + \frac{\lambda_{W}}{2} \sum_{k=1}^{r} s_{k}^{2} \\
        &+ \frac{\lambda_{H}}{2}
        \sum_{k=1}^{K} 
        (\mathbf{u}_{k} \odot \mathbf{u}_{k} )^{\top} \mathbf{n} \frac{s_{k}^{2}  }{ ( s_{k}^{2} + N \lambda_{H})^{2}} \\
        &= 
        \frac{\lambda_{H}}{2}
        \sum_{k=1}^{r}
        \frac{ (\mathbf{u}_{k} \odot \mathbf{u}_{k} )^{\top} \mathbf{n}}{s_{k}^{2} + N \lambda_{H}}
        + \frac{\lambda_{W}}{2} \sum_{k=1}^{r} s_{k}^{2} + \frac{1}{2N} 
        \sum_{h = r + 1}^{K}
        (\mathbf{u}_{h} \odot \mathbf{u}_{h} )^{\top} \mathbf{n}
        \\
        &= \frac{1}{2N} \sum_{k=1}^{r}
        \left(  
        \frac{(\mathbf{u}_{k} \odot \mathbf{u}_{k} )^{\top} \mathbf{n}}{ \frac{s_{k}^{2}}{N \lambda_{H}} + 1} 
        + N^{2} \lambda_{W} \lambda_{H} \left( \frac{s_{k}^{2}}{N \lambda_{H}} \right)
        \right)
        +  \frac{1}{2N} 
        \sum_{h = r + 1}^{K}
        (\mathbf{u}_{h} \odot \mathbf{u}_{h} )^{\top} \mathbf{n}
        \\
        &= \frac{1}{2N} \sum_{k=1}^{r}
        \left( \frac{(\mathbf{u}_{k} \odot \mathbf{u}_{k} )^{\top} \mathbf{n}}{ x_{k} + 1} + b x_{k} 
        \right) + \frac{1}{2N} 
        \sum_{h = r + 1}^{K}
        (\mathbf{u}_{h} \odot \mathbf{u}_{h} )^{\top} \mathbf{n}
        \\
        &= \frac{1}{2N} \sum_{k=1}^{r}
        \left( \frac{a_{k}}{ x_{k} + 1} + b x_{k} 
        \right) + \frac{1}{2N} 
        \sum_{h = r + 1}^{K}
        a_{h},
        \label{eq:im_f_form}
    \end{aligned}
    \end{align}
    with $x_{k} := \frac{s_{k}^{2}}{ N \lambda_{H}}$, $a_{k} := (\mathbf{u}_{k} \odot \mathbf{u}_{k} )^{\top} \mathbf{n}$ and $b:= N^{2} \lambda_{W} \lambda_{H}$. 
    \\
    
    \noindent From the fact that $\mathbf{U}_{W}$ is an orthonormal matrix, we have:
    \begin{align}
        \sum_{k=1}^{K} a_{k} = 
        \sum_{k=1}^{K} 
        \left(\mathbf{u}_{k} \odot \mathbf{u}_{k} \right)^{\top} \mathbf{n} = 
        \left(\sum_{k=1}^{K} 
        \mathbf{u}_{k} \odot \mathbf{u}_{k} \right)^{\top} \mathbf{n}
        = \mathbf{1}^{\top} \mathbf{n}
        = \sum_{k=1}^{K} n_{k} = N,
        \label{eq:sum_a}
    \end{align}
    and, for any $j \in [K]$, denote $p_{i, j} := u_{i1}^{2} + u_{i2}^{2} + \ldots + u_{ij}^{2} \: \forall \: i \in [K]$, we have:
    \begin{align}
        \sum_{k=1}^{j} a_{k}
        &= 
       \sum_{k=1}^{j} (\mathbf{u}_{k} \odot \mathbf{u}_{k} )^{\top} \mathbf{n} 
        = 
        n_{1}( u_{11}^{2} + u_{12}^{2} + \ldots + u_{1j}^{2})
        + n_{2}( u_{21}^{2} + 
        u_{22}^{2} + \ldots + u_{2j}^{2})
        + \ldots \nonumber \\
        &+ n_{K} ( u_{K1}^{2} + 
        u_{K2}^{2} + \ldots + u_{K j}^{2}) 
        \nonumber \\
        &= \sum_{k=1}^{K} p_{k, j} n_{k} \leq p_{1, j}n_{1} + p_{2, j}n_{2} + \ldots + p_{j,j}n_{j} + (p_{j+1, j} + p_{j+2, j} + \ldots + p_{K, j})n_{j} \nonumber \\
        &= p_{1, j}n_{1} + p_{2, j}n_{2} + \ldots + p_{j-1, j}n_{j-1} + (j - p_{1,j} - \ldots - p_{j-1}, j)n_{j}
        \nonumber \\
        &= \sum_{k=1}^{j} n_{k} + \sum_{h=1}^{j-1} (n_{h}-n_{j})(p_{h, j} - 1) \leq \sum_{k=1}^{j} n_{k} \nonumber \\
        \Rightarrow \sum_{k = j+1}^{K} a_{k} &\geq N - \sum_{k=1}^{j} n_{k} = \sum_{k = j+1}^{K} n_{k} \quad \forall \: j \in [K],
        \label{eq:partial_a_1}
        \end{align}
    where we used the fact that $\sum_{k=1}^{K} p_{k, j} = j$ since it is the sum of squares of all entries of the first $j$ columns of an orthonormal matrix, and $p_{i, j} \leq 1 \: \forall \: i$ because it is the sum of squares of some entries on the $i$-th row of $\mathbf{U}_{W}$.
    \\
    
    We state a lemma regarding minimizing a weighted sum as following.
    \begin{lemma}
    \label{lm:weighted_sum}
    Consider a weighted sum $ \sum_{k=1}^{K} a_{k} z_{k}$ with $\{a_{k} \}_{k=1}^{K}$ satisfies \eqref{eq:sum_a} and \eqref{eq:partial_a_1} and $0 < z_{1} \leq z_{2} \leq \ldots \leq z_{K}$. Then, we have: 
    \begin{align}
        \min_{a_{1},\ldots,a_{K}} 
        \sum_{k=1}^{K} a_{k} z_{k}
        = \sum_{k=1}^{K} n_{k} z_{k} \nonumber.
    \end{align}
    The equality happens when for any $k \geq 1, z_{k+1} = z_{k}$ or $a_{k+1} + a_{k+2} + \ldots + a_{K} = n_{k+1} + n_{k+2} + \ldots + n_{K}$ (equivalently, $a_{1} + a_{2} + \ldots + a_{k} = n_{1} + n_{2} + \ldots + n_{k}$).
    \end{lemma}
    \begin{proof}[Proof of Lemma \ref{lm:weighted_sum}]
        We have:
        \begin{align}
            \sum_{k=1}^{K} a_{k} z_{k}
            &= 
            ( a_{1} + a_{2} + \ldots + a_{K} ) z_{1} 
            +
            (a_{2} + \ldots + a_{K} ) (z_{2} - z_{1}) 
            + \ldots 
            + a_{K} (z_{K} - z_{K-1}) \nonumber \\
            &\geq 
            ( n_{1} + n_{2} + \ldots + n_{K} ) z_{1} 
            +
            (n_{2} + \ldots + n_{K} ) (z_{2} - z_{1}) 
            + \ldots 
            + n_{K} (z_{K} - z_{K-1}) \nonumber \\
            &= \sum_{k=1}^{K} n_{k} z_{k}. \nonumber
        \end{align}
    \end{proof}

    By applying Lemma \ref{lm:weighted_sum} to the RHS of equation \eqref{eq:im_f_form} with $z_{k} = \frac{1}{x_{k} + 1} \: \forall \: k \leq r$ and $z_{k} = 1$ otherwise, we obtain:
    \begin{align}
        f(\mathbf{W}, \mathbf{H}) &\geq 
        \frac{1}{2N} \sum_{k=1}^{r}
        \left( 
        \frac{n_{k}}{x_{k} + 1} + bx_{k}
        \right) 
        +  \frac{1}{2N} 
        \sum_{h=r+1}^{K} n_{h} \label{eq:im_ine} \\
        &= \frac{1}{2N} \sum_{k=1}^{r}
        n_{k} \left( 
        \frac{1}{x_{k} + 1} + \frac{b}{n_{k}}  x_{k}
        \right) 
        + \frac{1}{2N} 
        \sum_{h=r+1}^{K} n_{h}.
        \label{eq:im_f_final}
    \end{align}

    \noindent Consider the function:
    \begin{align}
        g(x) = \frac{1}{x + 1} + ax \: \text{ with } x \geq 0, a > 0.
    \end{align}
    We consider two cases:
    \begin{itemize}
        \item If $a > 1$, $g(0) = 1$ and $g(x) > g(0) \: \forall x > 0$. Hence, $g(x)$ is minimized at $x = 0$ in this case.
        \item If $a \leq 1$, by using AM-GM, we have $g(x) = \frac{1}{x + 1} + a(x + 1) - a \geq 2\sqrt{a} - a$ with the equality holds iff $x = \sqrt{\frac{1}{a}} - 1$.
    \end{itemize}
    
    By applying this result to each term in the lower bound \eqref{eq:im_f_final}, we finish bounding $f(\mathbf{W}, \mathbf{H})$.
    \\

    Now, we study the equality conditions. In the lower bound \eqref{eq:im_f_final}, by letting $x_{k}^{*}$ be the minimizer of $\frac{1}{x_{k} + 1} + \frac{b}{n_{k}}x_{k}$ for all $k \leq r$ and $x_{k}^{*} = 0$ for all $k > r$ , there are only four possibilities as following:
    \begin{itemize}
        \item \textbf{Case A:} If $x^{*}_{1} > 0$ and $n_{1} > n_{2}$: we have $x^{*}_{1} = \sqrt{\frac{n_{1}}{b}} - 1 > \max(0, \sqrt{\frac{n_{2}}{b}} - 1) \geq x^{*}_{2}$ and therefore from the equality condition of Lemma \ref{lm:weighted_sum}, we have $a_{1} = n_{1}$. From the orthonormal property of $\mathbf{u}_{k}$, we have:
            \begin{align}
                a_{1} = (\mathbf{u}_{1} \odot \mathbf{u}_{1} )^{\top} \mathbf{n} = n_{1} u_{11}^{2} + 
                n_{2} u_{21}^{2} + \ldots + n_{k} u_{K1}^{2} \leq n_{1} (u_{11}^{2} + u_{21}^{2} + \ldots +  u_{K1}^{2}) = n_{1}.
                \nonumber
            \end{align}
        The equality holds when and only when $u_{11}^{2} = 1$ and $u_{21} = \ldots = u_{K1} = 0$.
        
        \item \textbf{Case B:} If $x^{*}_{1} > 0$ and there exists $1 < j \leq r$ such that $n_{1} = n_{2} = \ldots = n_{j} > n_{j+1}$, we have:
       \begin{align}
           \frac{1}{{x} + 1} + \frac{b}{n_{1}}x
           = \frac{1}{{x} + 1} + \frac{b}{n_{2}}x = \ldots
           = \frac{1}{{x} + 1} + \frac{b}{n_{j}}x, \nonumber
       \end{align}
       and thus, $x^{*}_{1} = x^{*}_{2} = \ldots = x^{*}_{j} > x^{*}_{j+1}$.
       Hence, from the equality condition of Lemma \ref{lm:weighted_sum}, we have $a_{1} + a_{2} + \ldots + a_{j} = n_{1} + \ldots + n_{j}$. 
       We have:
        \begin{align}
        \begin{aligned}
            \sum_{k=1}^{j} (\mathbf{u}_{k} \odot \mathbf{u}_{k} )^{\top} \mathbf{n} 
            &= 
            n_{1}( u_{11}^{2} + u_{12}^{2} + \ldots + u_{1j}^{2})
            + n_{2}( u_{21}^{2} + 
            u_{22}^{2} + \ldots + u_{2j}^{2})
            \nonumber \\ 
            &+ \ldots
            + n_{K} ( u_{K1}^{2} + 
            u_{K2}^{2} + \ldots + u_{K j}^{2}) 
            \leq \sum_{k=1}^{j} n_{k},
        \end{aligned}
        \end{align}
        where the inequality is from the fact that for any $k \in [K]$, $(u_{k1}^{2} + 
        u_{k2}^{2} + \ldots + u_{k j}^{2}) \leq 1$ and $\sum_{k=1}^{K} (u_{k1}^{2} +
        u_{k2}^{2} + \ldots + u_{k j}^{2}) = j$ and $n_{j} > n_{j+1}$. The equality holds iff $u_{k1}^{2} + 
        u_{k2}^{2} + \ldots + u_{k j}^{2} = 1 \: \forall \: k = 1, 2, \ldots, j$ and $u_{k1} = u_{k2} = \ldots = u_{kj} = 0 \: \forall \: k = j+1, \ldots, K$, i.e. the upper left sub-matrix size $j \times j$ of $\mathbf{U}_{W}$ is an orthonormal matrix and other entries of $\mathbf{U}_{W}$ lie on the same rows or columns with this sub-matrix must all equal $0$'s.

        \item \textbf{Case C:} If $x_{1}^{*} > 0$, $r < K$
        and there exists $r < j \leq K$ such that $n_{1} = n_{2} = \ldots = n_{r} = \ldots = n_{j} > n_{j+1}$, thus we have $x_{1}^{*} = x_{2}^{*} = \ldots = x_{r}^{*} > 0$ and $x_{r+1}^{*} = \ldots = x_{K}^{*} = 0$. Hence, from the equality condition of Lemma \ref{lm:weighted_sum}, we have $a_{1} + a_{2} + \ldots + a_{r} = n_{1} + \ldots + n_{r}$. 
       We have:
        \begin{align}
        \begin{aligned}
            \sum_{k=1}^{r} (\mathbf{u}_{k} \odot \mathbf{u}_{k} )^{\top} \mathbf{n} 
            &= 
            n_{1}( u_{11}^{2} + u_{12}^{2} + \ldots + u_{1r}^{2})
            + n_{2}( u_{21}^{2} + 
            u_{22}^{2} + \ldots + u_{2r}^{2})
            \nonumber \\ 
            &+ \ldots
            + n_{K} ( u_{K1}^{2} + 
            u_{K2}^{2} + \ldots + u_{K r}^{2}) 
            \leq \sum_{k=1}^{r} n_{k},
        \end{aligned}
        \end{align}
        where the inequality is from the fact that for any $k \in [K]$, $(u_{k1}^{2} + 
        u_{k2}^{2} + \ldots + u_{k r}^{2}) \leq 1$ and $\sum_{k=1}^{K} (u_{k1}^{2} +
        u_{k2}^{2} + \ldots + u_{k r}^{2}) = r$. 
        The equality holds iff $u_{k1} = u_{k2} = \ldots = u_{kr} = 0 \: \forall \: k = j+1, \ldots, K$, i.e., the upper left sub-matrix size $j \times r$ of $\mathbf{U}_{W}$ includes $r$ orthonormal vectors in $\mathbb{R}^{j}$ and the bottom left sub-matrix size $(K-j) \times r$ are all zeros. The other $K-r$ columns of $\mathbf{U}_{W}$ does not matter because $\mathbf{W}^{*}$ can be written as:
        \begin{align}
            \mathbf{W}^{*} =
            \sum_{k=1}^{r} s_{k}^{*} \mathbf{u}_{k} \mathbf{v}_{k}^{\top}, \nonumber
        \end{align}
        with $\mathbf{v}_{k}$ is the right singular vector that satisfies $\mathbf{W}^{* \top} \mathbf{u}_{k} = s_{k}^{*} \mathbf{v}_{k}$. Note that since $s_{1}^{*} = s_{2}^{*} = \ldots = s_{r}^{*} := s^{*}$, we have the compact SVD form as follows:
        \begin{align}
             \mathbf{W}^{*} = s^{*} \mathbf{U}_{W}^{'} \mathbf{V}_{W}^{' \top},
        \end{align}
        where $\mathbf{U}_{W}^{'} \in \mathbb{R}^{K \times r}$ and $\mathbf{V}_{W}^{'} \in \mathbb{R}^{d \times r}$. Especially, the last $K-j$ rows of $\mathbf{W}^{*}$ will be zeros since the last $K-j$ rows of $\mathbf{U}_{W}^{'}$ are zeros. Furthermore, tbhe matrix $\mathbf{U}_{W}^{'} \mathbf{U}_{W}^{' \top}$ after removing the last $K-j$ zero rows and the last $K-j$ zero columns  is the best rank-$r$ approximation of $\mathbf{I}_{j}$.
        \\
        
        We note that if \textbf{Case C} happens, then the number of positive singular values are limited by the matrix rank $r$ (e.g., by $r \leq R = \min(d,K) = d$ when $d < K$), and $n_{r} = n_{r+1}$, thus $x_{r}^{*} > 0$ and $x_{r+1}^{*} = 0$ ($x_{r+1}^{*}$ should equal $x_{r}^{*} > 0$ if it is not forced to be zero).
        
        \item \textbf{Case D:} If $x_{1}^{*} = 0$, we must have $x_{2}^{*} = \ldots = x_{K}^{*} = 0$, $\sum_{k=1}^{K} (\mathbf{u}_{k} \odot \mathbf{u}_{k} )^{\top} \mathbf{n}$ always equal $N$ and thus, $\mathbf{U}_{W}$ can be an arbitrary size $K \times K$ orthonormal matrix. 
    \end{itemize}

    We perform similar arguments as above  for all subsequent $x^{*}_{k}$'s, after we finish reasoning for prior ones. Before going to the conclusion, we first study the matrix $\mathbf{U}_{W}$. If $\textbf{Case C}$ does not happen for any $x_{k}^{*}$'s, we have:
            \begin{align}
            \mathbf{U}_{W} = \begin{bmatrix}
            \mathbf{A}_{1} & \mathbf{0} & \mathbf{0} & \mathbf{0} \\
            \mathbf{0} & \mathbf{A}_{2} & \mathbf{0} & \mathbf{0} \\
            \vdots & \vdots &  \ddots & \vdots \\
             \mathbf{0}& \mathbf{0} & \mathbf{0} & \mathbf{A}_{l} \\
            \end{bmatrix},
            \label{eq:U_W_normal}
            \end{align}
            where each $\mathbf{A}_{i}$ is an orthonormal block which corresponds with one or a group of classes that have the same number of training samples and their $x^{*} > 0$ (\textbf{Case A} and \textbf{Case B}) or corresponds with all classes with $x^{*} = 0$ (\textbf{Case D}). If \textbf{Case C} happens, we have:
            \begin{align}
            \mathbf{U}_{W} = \begin{bmatrix}
            \mathbf{A}_{1} & \mathbf{0} & \mathbf{0} & \mathbf{0} \\
            \mathbf{0} & \mathbf{A}_{2} & \mathbf{0} & \mathbf{0} \\
            \vdots & \vdots &  \ddots & \vdots \\
             \mathbf{0}& \mathbf{0} & \mathbf{0} & \mathbf{A}_{l} \\
            \end{bmatrix},
            \label{eq:U_W_abnormal}
            \end{align}
             where each $\mathbf{A}_{i}, i \in [l-1] $ is an orthonormal block which corresponds with one or a group of classes that have the same number of training samples and their $x^{*} > 0$ (\textbf{Case A} and \textbf{Case B}). $\mathbf{A}_{l}$ is the orthonormal block has the same property as $\mathbf{U}_{W}$ in \textbf{Case C}. 
    \\
    
    We consider \textbf{the case $d \geq K$} from now on. By using arguments about the minimizer of $g(x)$ applied to the lower bound \eqref{eq:im_f_final}, we consider three cases as following:
        \begin{itemize}
            \item \textbf{Case 1a:} $\frac{b}{n_{1}} \leq \frac{b}{n_{2}} \leq \ldots \leq \frac{b}{n_{K}} \leq 1$. \\

            Then, the lower bound \eqref{eq:im_f_final} is minimized at $(x_{1}^{*},x_{2}^{*},\ldots, x_{K}^{*}) 
            =   \left(\sqrt{\frac{n_1}{b}} - 1, \sqrt{\frac{n_2}{b}} - 1,\ldots,\sqrt{\frac{n_K}{b}} - 1 \right)$. 
            Therefore:
            \begin{align}
            (s_{1}^{*}, s_{2}^{*}, \ldots, s_{K}^{*} ) 
            =
            \left(
            \sqrt{ \sqrt{\frac{n_{1} \lambda_{H}}{\lambda_{W}} }
            - N \lambda_{H}},
            \sqrt{ \sqrt{\frac{n_{2} \lambda_{H}}{\lambda_{W}} }
            - N \lambda_{H}}, \ldots,
             \sqrt{ \sqrt{\frac{n_{K} \lambda_{H}}{\lambda_{W}} }
            - N \lambda_{H}}   \right).
            \end{align}

            First, we have the property that the features in each class $\mathbf{h}_{k,i}^{*}$ collapsed to their class-mean $\mathbf{h}_{k}^{*}$ $(\mathcal{NC}1)$. Let $\overline{\mathbf{H}}^{*} = \mathbf{V}_{W} \mathbf{C} \mathbf{U}_{W}^{\top}$, we know that $\mathbf{H}^{*} = \overline{\mathbf{H}}^{*} \mathbf{Y}$ from equation \eqref{eq:im_H_form2}. Then, columns from the $(n_{k-1} + 1)$-th until $(n_{k})$-th of $\mathbf{H}$ will all equals the $k$-th column of $\overline{\mathbf{H}}^{*}$, thus the features in class $k$ are collapsed to their class-mean $\mathbf{h}_{k}^{*}$ (which is the $k$-th column of $\overline{\mathbf{H}}^{*}$), i.e., $ \mathbf{h}_{k,1}^{*} = \mathbf{h}_{k,2}^{*} = \ldots =  \mathbf{h}_{k,n_{k}}^{*} \forall k \in [K]$.

            \textbf{Case C} never happens because if we assume we have $r < K$ positive singular values, meaning $s_{r}^{*} > 0$. Then, if $n_{r+1} = n_{r}$, we must have $s_{r+1}^{*} > 0$ (contradiction!).  Hence, $\mathbf{U}_{W}$ must have the form as in equation \eqref{eq:U_W_normal}, thus we can conclude the geometry of the following :
            \begin{align}
                \mathbf{W}^{*} \mathbf{W}^{* \top}
                &= \mathbf{U}_{W} \mathbf{S}_{W} \mathbf{S}_{W}^{\top} \mathbf{U}_{W}^{\top}
                =
                \operatorname{diag} \left\{ \sqrt{\frac{n_{1} \lambda_{H}}{\lambda_{W}} }
                    - N \lambda_{H},\sqrt{\frac{n_{2} \lambda_{H}}{\lambda_{W}} }
                    - N \lambda_{H}, \ldots, \sqrt{\frac{n_{K} \lambda_{H}}{\lambda_{W}} }
                    - N \lambda_{H} \right\}
                \in \mathbb{R}^{K \times K},
                \\
                \mathbf{W}^{*} \mathbf{H}^{*}
                &= \mathbf{U}_{W} 
                \operatorname{diag} \left\{\frac{s_{1}^{2}}{s_{1}^{2} + N \lambda_{H}} , \ldots, \frac{s_{K}^{2}}{s_{K}^{2}+ N \lambda_{H} } \right\}
              \mathbf{U}_{W}^{\top}
              \mathbf{Y} \nonumber \\
              &=
              \begin{bmatrix}
                \frac{s_{1}^{2}}{s_{1}^{2} + N \lambda_{H}} & 0 & \ldots & 0 \\
                0 & \frac{s_{2}^{2}}{s_{2}^{2} + N \lambda_{H}} &  \ldots & 0 \\
                \vdots & \vdots & \ddots & \vdots \\
                0 & 0 & \ldots & \frac{s_{K}^{2}}{s_{K}^{2} + N \lambda_{H}}
                \\
               \end{bmatrix}
               \begin{bmatrix}
                1 & \ldots & 1 & 0 & \ldots & 0 & \ldots & 0 & \ldots & 0 \\
                0 & \ldots & 0 & 1 & \ldots & 1 & \ldots & 0 & \ldots & 0 \\
                \vdots & \ddots & \vdots & \vdots & \ddots & \vdots  & \ldots & \vdots & \ddots & \vdots \\
                0 & \ldots & 0 & 0 & \ldots & 0 & \ldots & 1 & \ldots & 1 \\
                \end{bmatrix} \nonumber \\
                &= 
                \begin{bmatrix}
                 \frac{s_{1}^{2}}{s_{1}^{2} + N \lambda_{H}} \mathbf{1}_{n_{1}}^{\top} & \ldots & \mathbf{0}  \\
                 \vdots & \ddots & \vdots \\
                \mathbf{0} &  \ldots&  \frac{s_{K}^{2}}{s_{K}^{2} + N \lambda_{H}} \mathbf{1}_{n_{K}}^{\top} \\
                \end{bmatrix},
                \nonumber 
                \\
                \mathbf{H}^{* \top} \mathbf{H}^{*}
                &= \mathbf{Y}^{\top} \mathbf{U}_{W} \mathbf{C}^{T} \mathbf{C}
                \mathbf{U}_{W}^{\top}
                \mathbf{Y} 
                \nonumber 
                \\
                &= \mathbf{Y}^{\top}
                \begin{bmatrix}
                \frac{s_{1}^{2}}{(s_{1}^{2} + N \lambda_{H})^{2}} & 0 & \ldots & 0 \\
                0 & \frac{s_{2}^{2}}{(s_{2}^{2} + N \lambda_{H})^{2}} &  \ldots & 0 \\
                \vdots & \vdots & \ddots & \vdots \\
                0 & 0 & \ldots & \frac{s_{K}^{2}}{(s_{K}^{2} + N \lambda_{H})^{2}} \\
               \end{bmatrix}
                \mathbf{Y} \nonumber \\
                &= \begin{bmatrix}
                \frac{s_{1}^{2}}{(s_{1}^{2} + N \lambda_{H})^{2}} \mathbf{1}_{n_{1}}  \mathbf{1}_{n_{1}}^{\top} & \mathbf{0} & \ldots & \mathbf{0} \\
                \mathbf{0} &\frac{s_{2}^{2}}{(s_{2}^{2} + N \lambda_{H})^{2}} \mathbf{1}_{n_{2}}  \mathbf{1}_{n_{2}}^{\top}  & \ldots & \mathbf{0} \\
                \vdots & \vdots & \ddots & \vdots \\
                \mathbf{0} & \mathbf{0} & \ldots & \frac{s_{K}^{2}}{(s_{K}^{2} + N \lambda_{H})^{2}} \mathbf{1}_{n_{K}}  \mathbf{1}_{n_{K}}^{\top}  \\
                \end{bmatrix} \in \mathbb{R}^{N \times N},
            \end{align}
            where $\mathbf{1}_{n_{k}} \mathbf{1}_{n_{k}}^{\top}$ is a $n_{k} \times n_{k}$ matrix will all entries are $1$'s.
            \\
            
            We additionally have the structure of the class-means matrix:
            \begin{align}
                \overline{\mathbf{H}}^{* \top}
                \overline{\mathbf{H}}^{*} &= \mathbf{U}_{W}^{\top} \mathbf{C}^{\top} 
                \mathbf{C}
                \mathbf{U}_{W}
                = \begin{bmatrix}
                \frac{s_{1}^{2}}{(s_{1}^{2} + N \lambda_{H})^{2}} & 0 & \ldots & 0 \\
                0 & \frac{s_{2}^{2}}{(s_{2}^{2} + N \lambda_{H})^{2}} &  \ldots & 0 \\
                \vdots & \vdots & \ddots & \vdots \\
                0 & 0 & \ldots & \frac{s_{K}^{2}}{(s_{K}^{2} + N \lambda_{H})^{2}} \\
               \end{bmatrix} \in \mathbb{R}^{K \times K}, \\
               \mathbf{W}^{*} \overline{\mathbf{H}}^{*} 
               &= \mathbf{U}_{W} \mathbf{S}_{W} \mathbf{C} \mathbf{U_{W}}^{\top}
               = \begin{bmatrix}
                \frac{s_{1}^{2}}{s_{1}^{2} + N \lambda_{H}} & 0 & \ldots & 0 \\
                0 & \frac{s_{2}^{2}}{s_{2}^{2} + N \lambda_{H}} &  \ldots & 0 \\
                \vdots & \vdots & \ddots & \vdots \\
                0 & 0 & \ldots &  \frac{s_{K}^{2}}{s_{K}^{2} + N \lambda_{H}} \\
               \end{bmatrix} \in \mathbb{R}^{K \times K}.
            \end{align}
            
            And the alignment between the linear classifier and features are as following. For any $k \in [K]$, denote $\mathbf{w}_{k}$ the $k$-th row of $\mathbf{W}^{*}$:
            \begin{align}
               \mathbf{W}^{*} &= \mathbf{U}_{W} \mathbf{S}_{W} \mathbf{V}_{W}^{\top}, \nonumber \\
                \overline{\mathbf{H}}^{*} &= \mathbf{V}_{W} 
                \mathbf{C}
                \mathbf{U}_{W}^{\top}
                 \nonumber \\
                \Rightarrow \mathbf{w}_{k}^{*} &= (s_{k}^{2} + N \lambda_{H}) \mathbf{h}_{k}^{*} 
                = \sqrt{\frac{n_{k} \lambda_{H} }{\lambda_{W}}} \mathbf{h}_{k}^{*}.
            \end{align}
            
            \item \textbf{Case 2a:} There exists $j \in [K-1]$ s.t. $ \frac{b}{n_{1}} \leq \frac{b}{n_{2}} \leq \ldots \leq \frac{b}{n_{j}} \leq 1 < \frac{b}{n_{j+1}} \leq \ldots \leq \frac{b}{n_{K}} $ \\
            
            Then, the lower bound \eqref{eq:im_f_final} is minimized at:
            \begin{align}
            (s_{1}^{*}, \ldots, s_{j}^{*}, s_{j+1}^{*} \ldots, s_{K}^{*} ) =
            \left(
            \sqrt{ \sqrt{\frac{n_{1} \lambda_{H}}{\lambda_{W}} }
            - N \lambda_{H}}, \ldots,
            \sqrt{ \sqrt{\frac{n_{j} \lambda_{H}}{\lambda_{W}} }
            - N \lambda_{H}}, 0,  \ldots,
             0
            \right).
            \end{align}
            
             First, we have the property that the features in each class $\mathbf{h}_{k,i}^{*}$ collapsed to their class-mean $\mathbf{h}_{k}^{*}$ $(\mathcal{NC}1)$. Let $\overline{\mathbf{H}}^{*} = \mathbf{V}_{W} \mathbf{C} \mathbf{U}_{W}^{\top}$, we know that $\mathbf{H}^{*} = \overline{\mathbf{H}}^{*}$ from equation \eqref{eq:im_H_form2}. Then, columns from the $(n_{k-1} + 1)$-th until $(n_{k})$-th of $\mathbf{H}^{*}$ will all equals the $k$-th column of $\overline{\mathbf{H}}^{*}$, thus the features in class $k$ are collapsed to their class-mean $\mathbf{h}_{k}^{*}$ (which is the $k$-th column of $\overline{\mathbf{H}}$), i.e $ \mathbf{h}_{k,1}^{*} = \mathbf{h}_{k,2}^{*} = \ldots =  \mathbf{h}_{k,n_{k}}^{*} \: \forall k \in [K]$. 
             \\
            
            Recall $\mathbf{U}_{W}$ with the form \eqref{eq:U_W_normal} (\textbf{Case C} cannot happen with the same reason as in \textbf{Case 1a}). From equations \eqref{eq:im_H_form2} and \eqref{eq:im_WH_form}, we can conclude the geometry of the following:
            \begin{align}
                \mathbf{W^{*}} \mathbf{W}^{* \top} 
                &= \mathbf{U}_{W} \mathbf{S}_{W}
                \mathbf{S}_{W}^{\top} \mathbf{U}_{W}^{\top} \nonumber \\
                &= \operatorname{diag} \left(\sqrt{\frac{n_{1} \lambda_{H}}{\lambda_{W}} }
                    - N \lambda_{H}, \sqrt{\frac{n_{2} \lambda_{H}}{\lambda_{W}} }
                    - N \lambda_{H}, \ldots, \sqrt{\frac{n_{j} \lambda_{H}}{\lambda_{W}} }
                    - N \lambda_{H}, 0,\ldots,0 \right),
                \\
                 \mathbf{W}^{*} \mathbf{H}^{*}
                &= \mathbf{U}_{W} 
                \operatorname{diag} \left( \frac{s_{1}^{2}}{s_{1}^{2} + N \lambda_{H}} , \ldots, \frac{s_{j}^{2}}{s_{j}^{2} + N \lambda_{H}}, 0,\ldots,0 \right)
              \mathbf{U}_{W}^{\top}
              \mathbf{Y} \nonumber \\
                &= 
                \begin{bmatrix}
                \frac{s_{1}^{2}}{s_{1}^{2} + N \lambda_{H}} \mathbf{1}_{n_{1}}^{\top} & \mathbf{0} & \ldots & \mathbf{0} \\
                \mathbf{0} & \frac{s_{2}^{2}}{s_{2}^{2} + N \lambda_{H}} \mathbf{1}_{n_{2}}^{\top} &  \ldots & \mathbf{0} \\
                \vdots & \vdots & \ddots & \vdots \\
                \mathbf{0} & \mathbf{0} & \ldots & \mathbf{0}_{n_{K}}^{\top} \\
                \end{bmatrix} \in \mathbb{R}^{K \times N}, 
                \nonumber \\
                \mathbf{H}^{* \top} \mathbf{H}^{*}
                &=
                \begin{bmatrix}
                \frac{s_{1}^{2}}{(s_{1}^{2} + N \lambda_{H})^{2}} \mathbf{1}_{n_{1}}  \mathbf{1}_{n_{1}}^{\top} & \mathbf{0} & \ldots & \mathbf{0} \\
                \mathbf{0} &\frac{s_{2}^{2}}{(s_{2}^{2} + N \lambda_{H})^{2}} \mathbf{1}_{n_{2}}  \mathbf{1}_{n_{2}}^{\top}  & \ldots & \mathbf{0} \\
                \vdots & \vdots & \ddots & \vdots \\
                \mathbf{0} & \mathbf{0} & \ldots & \mathbf{0}_{n_{K} \times n_{K}} \\
                \end{bmatrix} \in \mathbb{R}^{N \times N},
            \end{align}
            where $\mathbf{1}_{n_{k}} \mathbf{1}_{n_{k}}^{\top}$ is a $n_{k} \times n_{k}$ matrix will all entries are $1$'s.
            \\

            For any $k \in [K]$, denote $\mathbf{w}_{k}^{*}$ the $k$-th row of $\mathbf{W}^{*}$ and $\mathbf{v}_{k}$ the $k$-th column of $\mathbf{V}_{W}$, we have:
            \begin{align}
               \mathbf{W}^{*} &= \mathbf{U}_{W} \mathbf{S}_{W} \mathbf{V}_{W}^{\top}, \nonumber \\
                \overline{\mathbf{H}}^{*} &= \mathbf{V}_{W} 
                \mathbf{C}
                \mathbf{U}_{W}^{\top}
                 \nonumber \\
                \Rightarrow \mathbf{w}_{k}^{*} &= (s_{k}^{2} + N \lambda_{H}) \mathbf{h}_{k}^{*} 
                = \sqrt{\frac{n_{k} \lambda_{H} }{\lambda_{W}}} \mathbf{h}_{k}^{*}.
            \end{align}
            And, for $k > j$, we have $\mathbf{w}_{k}^{*} = \mathbf{h}_{k}^{*} = \mathbf{0}$, which means the optimal classifiers and features of class $k > j$ will be $\mathbf{0}$.
            
            \item \textbf{Case 3a:} $1 < \frac{b}{n_{1}} \leq \frac{b}{n_{2}} \leq \ldots 
            \leq \frac{b}{n_{R}}$
            \\
            
            Then, the lower bound \eqref{eq:im_f_final} is minimized at:
            \begin{align}
                (s_{1}^{*}, s_{2}^{*}, \ldots, s_{K}^{*} ) = (0,0,\ldots,0).
            \end{align}
            Hence, the global minimizer of $f$ in this case is $(\mathbf{W}^{*},\mathbf{H}^{*}) = (\mathbf{0}, \mathbf{0})$.
        \end{itemize}

    Now, we turn to consider \textbf{the bottleneck case $d < K$}, and thus, $r \leq R = d < K$. Again, we consider the following cases:
    \begin{itemize}
        \item \textbf{Case 1b:} $\frac{b}{n_{1}} \leq \frac{b}{n_{2}} \leq \ldots \leq \frac{b}{n_{R}} \leq 1$. \\

            Then, the lower bound \eqref{eq:im_f_final} is minimized at $(x_{1}^{*},x_{2}^{*},\ldots, x_{K}^{*}) 
            =   (\sqrt{\frac{n_1}{b}} - 1, \sqrt{\frac{n_2}{b}} - 1,\ldots,\sqrt{\frac{n_R}{b}} - 1, 0,\ldots,0)
            = (\sqrt{\frac{n_1}{N^{2} \lambda_{W} \lambda_{H}}} - 1,\sqrt{\frac{n_2}{N^{2} \lambda_{W} \lambda_{H}}} - 1,\ldots,\sqrt{\frac{n_R}{N^{2} \lambda_{W} \lambda_{H}}} - 1,0,\ldots,0)$. 
            Therefore:
            \begin{align}
            &(s_{1}^{*}, s_{2}^{*}, \ldots, s_{R}^{*}, s_{R+1}^{*},\ldots s_{K}^{*} ) \nonumber \\
            &=
            \left(
            \sqrt{ \sqrt{\frac{n_{1} \lambda_{H}}{\lambda_{W}} }
            - N \lambda_{H}},
            \sqrt{ \sqrt{\frac{n_{2} \lambda_{H}}{\lambda_{W}} }
            - N \lambda_{H}}, \ldots,
             \sqrt{ \sqrt{\frac{n_{R} \lambda_{H}}{\lambda_{W}} }
            - N \lambda_{H}}
            ,0,\ldots,0    \right).
            \end{align}

            We have $(\mathcal{NC}1)$ and $(\mathcal{NC}3)$ properties are the same as \textbf{Case 1a}.
            
            We have \textbf{Case C} happens iff $b / n_{R} < 1$ (i.e., $x_{R}^{*} > 0$) and $n_{R} = n_{R+1}$.
            Then, if $b/ n_{R} = 1$ or $n_{R} > n_{R+1}$, we have:
            \begin{align}
                \mathbf{W}^{*} \mathbf{W}^{* \top}
                &= \mathbf{U}_{W} \mathbf{S}_{W} \mathbf{S}_{W}^{\top} \mathbf{U}_{W}^{\top}
                =
                \begin{bmatrix}
                 \sqrt{\frac{n_{1} \lambda_{H}}{\lambda_{W}} }
                    - N \lambda_{H} & \ldots & 0 & \ldots & 0 &  \\
                 \vdots & \ddots & \vdots & \ddots & \vdots  \\
                0 &  \ldots&  \sqrt{\frac{n_{R} \lambda_{H}}{\lambda_{W}} }
                    - N \lambda_{H} & \ldots & 0 \\
                 \vdots & \ddots & \vdots & \ddots & \vdots  \\
                0 & \ldots & 0 & \ldots & 0 \\
                \end{bmatrix}
                \in \mathbb{R}^{K \times K}, \\
                  \overline{\mathbf{H}}^{* \top}
                \overline{\mathbf{H}}^{*} &= \mathbf{U}_{W}^{\top} \mathbf{C}^{\top} 
                \mathbf{C}
                \mathbf{U}_{W}
                = \begin{bmatrix}
                \frac{s_{1}^{2}}{(s_{1}^{2} + N \lambda_{H})^{2}} & 0 & \ldots & 0 \\
                0 & \frac{s_{2}^{2}}{(s_{2}^{2} + N \lambda_{H})^{2}} &  \ldots & 0 \\
                \vdots & \vdots & \ddots & \vdots \\
                0 & 0 & \ldots & 0 \\
               \end{bmatrix} \in \mathbb{R}^{K \times K}, \\
               \mathbf{W}^{*} \overline{\mathbf{H}}^{*} 
               &= \mathbf{U}_{W} \mathbf{S}_{W} \mathbf{C} \mathbf{U_{W}}^{\top}
               = \begin{bmatrix}
                \frac{s_{1}^{2}}{s_{1}^{2} + N \lambda_{H}} & 0 & \ldots & 0 \\
                0 & \frac{s_{2}^{2}}{s_{2}^{2} + N \lambda_{H}} &  \ldots & 0 \\
                \vdots & \vdots & \ddots & \vdots \\
                0 & 0 & \ldots & 0 \\
               \end{bmatrix} \in \mathbb{R}^{K \times K}.
            \end{align}
            Furthermore, we have $\mathbf{w}_{k}^{*} =\mathbf{h}_{k}^{*} = \mathbf{0}$ for $k > R$.
            \\
            
            If \textbf{Case C} happens, there exists $k \leq R$, $l > R$ such that $n_{k-1} > n_{k} = n_{k+1} = \ldots = n_{R} = \ldots = n_{l} > n_{l+1}$. Recall the form of $\mathbf{U}_{W}$ as in equation \eqref{eq:U_W_abnormal}, then:
            \begin{align}
            \mathbf{W}^{*} \mathbf{W}^{* \top}
            &= \begin{bmatrix}
            \sqrt{\frac{n_{1} \lambda_{H}}{\lambda_{W}} }
                                - N \lambda_{H} & \ldots & \mathbf{0} & \mathbf{0} & \mathbf{0}  \\
            \vdots & \ddots & \vdots & \vdots & \vdots  \\
            \mathbf{0} & \ldots & \sqrt{\frac{n_{k-1} \lambda_{H}}{\lambda_{W}} }
                                - N \lambda_{H} & \mathbf{0} & \mathbf{0}  \\
            \mathbf{0} & \ldots & \mathbf{0} & \left( \sqrt{\frac{n_{k} \lambda_{H}}{\lambda_{W}} }
            - N \lambda_{H} \right) \mathcal{P}_{R-k+1}(\mathbf{I}_{l-k+1}) &  \mathbf{0}  \\
            \mathbf{0} & \ldots & \mathbf{0} & \mathbf{0} & \mathbf{0}_{(K-l) \times (K-l)}   \\
            \end{bmatrix},  \\
             \overline{\mathbf{H}}^{* \top}
            \overline{\mathbf{H}}^{*} 
            &= 
            \begin{bmatrix}
            \frac{s_{1}^{2}}{(s_{1}^{2} + N \lambda_{H})^{2}} & \ldots & \mathbf{0} & \mathbf{0} & \mathbf{0}  \\
            \vdots & \ddots & \vdots & \vdots & \vdots  \\
            \mathbf{0} & \ldots & \frac{s_{k-1}^{2}}{(s_{k-1}^{2} + N \lambda_{H})^{2}} & \mathbf{0} & \mathbf{0}  \\
            \mathbf{0} & \ldots & \mathbf{0} & \frac{s_{k}^{2}}{(s_{k}^{2} + N \lambda_{H})^{2}} \mathcal{P}_{R-k+1}(\mathbf{I}_{l-k+1}) &  \mathbf{0}  \\
            \mathbf{0} & \ldots & \mathbf{0} & \mathbf{0} & \mathbf{0}_{(K-l) \times (K-l)}   \\
            \end{bmatrix}, \\
            \mathbf{W}^{*} \overline{\mathbf{H}}^{*}
            &= 
            \begin{bmatrix}
            \frac{s_{1}^{2}}{s_{1}^{2} + N \lambda_{H}} & \ldots & \mathbf{0} & \mathbf{0} & \mathbf{0}  \\
            \vdots & \ddots & \vdots & \vdots & \vdots  \\
            \mathbf{0} & \ldots & \frac{s_{k-1}^{2}}{s_{k-1}^{2} + N \lambda_{H}} & \mathbf{0} & \mathbf{0}  \\
            \mathbf{0} & \ldots & \mathbf{0} & \frac{s_{k}^{2}}{s_{k}^{2} + N \lambda_{H}} \mathcal{P}_{R-k+1}(\mathbf{I}_{l-k+1}) &  \mathbf{0}  \\
            \mathbf{0} & \ldots & \mathbf{0} & \mathbf{0} & \mathbf{0}_{(K-l) \times (K-l)}   \\
            \end{bmatrix},
            \end{align}
            and for any $k > l > R$, we have $\mathbf{w}_{k}^{*} = \mathbf{h}_{k}^{*} = \mathbf{0}$. 
            
            \item \textbf{Case 2b:} There exists $j \in [R-1]$ s.t. $ \frac{b}{n_{1}} \leq \frac{b}{n_{2}} \leq \ldots \leq \frac{b}{n_{j}} \leq 1 < \frac{b}{n_{j+1}} \leq \ldots \leq \frac{b}{n_{R}} $ \\
            
            Then, the lower bound \eqref{eq:im_f_final} is minimized at:
            \begin{align}
            (s_{1}^{*}, \ldots, s_{j}^{*}, s_{j+1}^{*} \ldots, s_{K}^{*} ) =
            \left(
            \sqrt{ \sqrt{\frac{n_{1} \lambda_{H}}{\lambda_{W}} }
            - N \lambda_{H}}, \ldots,
            \sqrt{ \sqrt{\frac{n_{j} \lambda_{H}}{\lambda_{W}} }
            - N \lambda_{H}}, 0,  \ldots,
             0
            \right).
            \end{align}

            We have $(\mathcal{NC}1)$ and $(\mathcal{NC}3)$ properties are the same as \textbf{Case 2a}.\\
            
          \textbf{Case C} does not happen in this case because $b/n_{R} > 1$ and thus, $x^{*}_{R} = 0$.  Thus, we can conclude the geometry of the following:
            \begin{align}
                \mathbf{W^{*}} \mathbf{W}^{* \top} 
                &= \mathbf{U}_{W} \mathbf{S}_{W} \mathbf{S}_{W}^{\top} \mathbf{U}_{W}^{\top} \nonumber \\
                &= \operatorname{diag} \left(\sqrt{\frac{n_{1} \lambda_{H}}{\lambda_{W}} }
                    - N \lambda_{H}, \sqrt{\frac{n_{2} \lambda_{H}}{\lambda_{W}} }
                    - N \lambda_{H}, \ldots, \sqrt{\frac{n_{j} \lambda_{H}}{\lambda_{W}} }
                    - N \lambda_{H}, 0,\ldots,0 \right),
                \\
                 \mathbf{W}^{*} \mathbf{H}^{*}
                &= \mathbf{U}_{W} 
                \operatorname{diag} \left( \frac{s_{1}^{2}}{s_{1}^{2} + N \lambda_{H}} , \ldots, \frac{s_{j}^{2}}{s_{j}^{2} + N \lambda_{H}}, 0,\ldots,0 \right)
              \mathbf{U}_{W}^{\top}
              \mathbf{Y} \nonumber \\
                &= 
                \begin{bmatrix}
                \frac{s_{1}^{2}}{s_{1}^{2} + N \lambda_{H}} \mathbf{1}_{n_{1}}^{\top} & \mathbf{0} & \ldots & \mathbf{0} \\
                \mathbf{0} & \frac{s_{2}^{2}}{s_{2}^{2} + N \lambda_{H}} \mathbf{1}_{n_{2}}^{\top} &  \ldots & \mathbf{0} \\
                \vdots & \vdots & \ddots & \vdots \\
                \mathbf{0} & \mathbf{0} & \ldots & \mathbf{0}_{n_{K}}^{\top} \\
                \end{bmatrix} \in \mathbb{R}^{K \times N}, 
                \nonumber \\
                \mathbf{H}^{* \top} \mathbf{H}^{*}
                &=
                \begin{bmatrix}
                \frac{s_{1}^{2}}{(s_{1}^{2} + N \lambda_{H})^{2}} \mathbf{1}_{n_{1}}  \mathbf{1}_{n_{1}}^{\top} & \mathbf{0} & \ldots & \mathbf{0} \\
                \mathbf{0} &\frac{s_{2}^{2}}{(s_{2}^{2} + N \lambda_{H})^{2}} \mathbf{1}_{n_{2}}  \mathbf{1}_{n_{2}}^{\top}  & \ldots & \mathbf{0} \\
                \vdots & \vdots & \ddots & \vdots \\
                \mathbf{0} & \mathbf{0} & \ldots & \mathbf{0}_{n_{K} \times n_{K}} \\
                \end{bmatrix} \in \mathbb{R}^{N \times N},
            \end{align}
            where $\mathbf{1}_{n_{k}} \mathbf{1}_{n_{k}}^{\top}$ is a $n_{k} \times n_{k}$ matrix will all entries are $1$'s. And for any $k > j$, $\mathbf{w}_{k}^{*} = \mathbf{h}_{k}^{*} = \mathbf{0}$. 
            
            \item \textbf{Case 3b:} $1 < \frac{b}{n_{1}} \leq \frac{b}{n_{2}} \leq \ldots 
            \leq \frac{b}{n_{R}}$
            \\
            
            Then, the lower bound \eqref{eq:im_f_final} is minimized at:
            \begin{align}
                (s_{1}^{*}, s_{2}^{*}, \ldots, s_{K}^{*} ) = (0,0,\ldots,0).
            \end{align}
            Hence, the global minimizer of $f$ in this case is $(\mathbf{W}^{*},\mathbf{H}^{*}) = (\mathbf{0}, \mathbf{0})$.
        \end{itemize}
\end{proof}
    
\section{Proof of Theorem \ref{thm:deep_imbalance}}
\label{sec:proofs_im_deep}

First, we state the results for the case that the hidden dimension at every linear layer $d_m$ is at least the number of classes $K$. The $\mathcal{NC}2$ geometry follows GOF structure except a special case where there are some $i$'s such that $a / n_{i}$ equal exactly $(M-1)^{\frac{M-1}{M}} / M^2$.

\begin{theorem}
\label{thm:im_deep_appen}
    Let $d_{m} \geq K \: \forall \: m \in [M]$ and $(\mathbf{W}_{M}^{*}, \mathbf{W}_{M-1}^{*},\ldots, \mathbf{W}_{2}^{*}, \mathbf{W}_{1}^{*}, \mathbf{H}_{1}^{*})$ be any global minimizer of problem \eqref{eq:deep_imbalance}. We have:
    \\
    
   ($\mathcal{NC}1) \quad
        \mathbf{H}_{1}^{*} = \overline{\mathbf{H}}^{*} \mathbf{Y} \nonumber
        \Leftrightarrow \mathbf{h}_{k,i}^{*} = \mathbf{h}_{k}^{*} \: \forall \: k \in [K], i \in [n_{k}], $
    where $\overline{\mathbf{H}}^{*} = [\mathbf{h}_{1}^{*},\ldots,\mathbf{h}_{K}^{*}]  \in \mathbb{R}^{d_{1} \times K}$.
    \\
    
    $(\mathcal{NC}2)$ Let $c:= \frac{\lambda_{W_{1}}^{M-1}}
    {\lambda_{W_{M}} \lambda_{W_{M-1}} \ldots \lambda_{W_{2}} }$, $a:= N \sqrt[M]{N  \lambda_{W_{M}} \lambda_{W_{M-1}} \ldots \lambda_{W_{1}} \lambda_{H_{1}}}$ and $\forall k \in [K]$,  $x^{*}_{k}$ is the largest positive solution of the equation $\frac{a}{n_{k}} - \frac{ x^{M-1}}{ (x^{M} + 1)^{2}} = 0$, we have the following:
    \begin{align}
    \begin{aligned}
             &\mathbf{W}^{*}_{M} \mathbf{W}^{* \top}_{M}
             = \frac{\lambda_{W_{1}}}{\lambda_{W_{M}}} \operatorname{diag}
             \left\{s_{k}^{2} \right\}_{k=1}^{K},
             \\
           &\overline{\mathbf{H}}^{* \top}
            \overline{\mathbf{H}}^{*} 
            =
            \operatorname{diag}
            \left\{
            \frac{c s_{k}^{2M}}{(c s_{k}^{2M} + N \lambda_{H_{1}})^{2}}
            \right\}_{k=1}^{K}, \nonumber
            \\
             &\mathbf{W}_{M}^{*} \mathbf{W}_{M-1}^{*} \ldots  \mathbf{W}_{1}^{*}   \mathbf{H}_{1}^{*} 
             = 
             \left\{
            \frac{c s_{k}^{2M}}{c s_{k}^{2M} + N \lambda_{H_{1}}}
            \right\}_{k=1}^{K} \mathbf{Y},
    \end{aligned}    
    \end{align}
    
    ($\mathcal{NC}3$) We have, $\forall \: k \in [K]$:
    \begin{align}
        (\mathbf{W}_{M}^{*} \mathbf{W}_{M-1}^{*} \ldots \mathbf{W}_{2}^{*} \mathbf{W}_{1}^{*})_{k} = (c s_{k}^{2M} + N \lambda_{H_{1}}) \mathbf{h}_{k}^{*}, \nonumber
    \end{align}
    where:
    \begin{itemize}
        \item If $\frac{a}{n_{1}} \leq \frac{a}{n_{2}} \leq \ldots \leq \frac{a}{n_{K}} < \frac{(M-1)^{\frac{M-1}{M}}}{M^{2}}$, we have:
        \begin{align}
        \begin{aligned}
        s_{k} = 
        \sqrt[2M]{\frac{N \lambda_{H_{1}} x_{k}^{* M}}{c}} \quad \forall \: k 
        .
        \nonumber
        \end{aligned}
        \end{align}
     
        \item If there exists a $j \in [K-1]$ s.t. $\frac{a}{n_{1}} \leq \frac{a}{n_{2}} \leq \ldots \leq \frac{a}{n_{j}} < \frac{(M-1)^{\frac{M-1}{M}}}{M^{2}} < \frac{a}{n_{j+1}} \leq \ldots \leq \frac{a}{n_{K}}$, we have:
        \begin{align}
        \begin{aligned}
        s_{k} = \left\{\begin{matrix}
        \sqrt[2M]{\frac{N \lambda_{H_{1}} x_{k}^{* M}}{c}} \quad \forall \: k \leq j \\ 0 \quad \forall \: k > j
        \end{matrix}\right. .
        \nonumber
        \end{aligned}
        \end{align}
        And, for any $k$ such that $s_{k} = 0$, we have:
        \begin{align}
            (\mathbf{W}_{M}^{*})_{k} = \mathbf{h}_{k}^{*} = \mathbf{0}. \nonumber
        \end{align}  

        \item If $\frac{(M-1)^{\frac{M-1}{M}}}{M^{2}} < \frac{a}{n_{1}} \leq \frac{a}{n_{2}} \leq \ldots \leq \frac{a}{n_{K}}$, we have:
        \begin{align}
           (s_{1}, s_{2}, \ldots, s_{K} ) &= (0,0,\ldots,0), \nonumber
        \end{align}
        and $(\mathbf{W}_{M}^{*}, \ldots, \mathbf{W}_{1}^{*}, \mathbf{H}_{1}^{*}) = (\mathbf{0}, \ldots, \mathbf{0}, \mathbf{0})$ in this case.
    \end{itemize}
     The only case left is if there exists $i, j \in [K]$ ($i \leq j \leq K$) such that $\frac{a}{n_{1}} \leq \frac{a}{n_{2}} \leq \ldots \leq \frac{a}{n_{i-1}} <
        \frac{a}{n_{i}} = \frac{a}{n_{i+1}} = \ldots =
        \frac{a}{n_{j}} = \frac{(M-1)^{\frac{M-1}{M}}}{M^{2}} < \frac{a}{n_{j+1}} \leq \frac{a}{n_{j+2}} \leq \ldots \leq \frac{a}{n_{K}}$, we have:
        \begin{align}
         s_{k} = \left\{\begin{matrix}
        \sqrt[2M]{N \lambda_{H_{1}} x_{k}^{* M}/ c} \quad \forall \: k \leq i - 1 \\  \sqrt[2M]{N \lambda_{H_{1}} x_{k}^{* M}/ c} \: \text{ or } \: 0 \quad \forall \: i \leq k \leq j
        \\ 0 \quad \forall \: k \geq j+1
        \end{matrix}\right. , \nonumber
        \end{align}
        furthermore, let $r$ is the largest index that $s_{r} > 0$, we must have $s_{r+1} = s_{r+2} = \ldots = s_{K} = 0$. $(\mathcal{NC}1)$ and $(\mathcal{NC}3)$ are the same as above but for $(\mathcal{NC}2)$:
            \begin{align}
                \mathbf{W}_{M}^{*} 
                 \mathbf{W}_{M}^{* \top}
                 &= 
                 \frac{\lambda_{W_{1}}}{\lambda_{W_{M}}}
                 \begin{bmatrix}
                    s_{1}^{2} & \ldots & \mathbf{0} & \mathbf{0} & \mathbf{0}  \\
                    \vdots & \ddots & \vdots & \vdots & \vdots  \\
                    \mathbf{0} & \ldots & s_{i-1}^{2}  & \mathbf{0} & \mathbf{0}  \\
                    \mathbf{0} & \ldots & \mathbf{0} & s_{i}^{2}  \mathcal{P}_{r-i+1}(\mathbf{I}_{j-i+1}) &  \mathbf{0}  \\
                    \mathbf{0} & \ldots & \mathbf{0} & \mathbf{0} & \mathbf{0}_{(K-j) \times (K-j)}   \\
                    \end{bmatrix}, 
                    \\
                \overline{\mathbf{H}}^{* \top}
                \overline{\mathbf{H}}^{*} &= \begin{bmatrix}
                \frac{c s_{1}^{2M}}{(c s_{1}^{2M} + N \lambda_{H_{1}})^{2}} & \ldots & \mathbf{0} & \mathbf{0} & \mathbf{0}  \\
                \vdots & \ddots & \vdots & \vdots & \vdots  \\
                \mathbf{0} & \ldots & \frac{c s_{i-1}^{2M}}{(c s_{i-1}^{2M} + N \lambda_{H_{1}})^{2}}  & \mathbf{0} & \mathbf{0}  \\
                \mathbf{0} & \ldots & \mathbf{0} & \frac{c s_{i}^{2M}}{(c s_{i}^{2M} + N \lambda_{H_{1}})^{2}}  \mathcal{P}_{r-i+1}(\mathbf{I}_{j-i+1}) &  \mathbf{0}  \\
                \mathbf{0} & \ldots & \mathbf{0} & \mathbf{0} & \mathbf{0}_{(K-j) \times (K-j)}   \\
                \end{bmatrix}, 
                \\
               \mathbf{W}_{M}^{*} \mathbf{W}_{M-1}^{*} \ldots \mathbf{W}_{2}^{*} \mathbf{W}_{1}^{*}  \overline{\mathbf{H}}^{*} 
               &= 
               \begin{bmatrix}
                 \frac{c s_{1}^{2M}}{c s_{1}^{2M} + N \lambda_{H_{1}}} & \ldots & \mathbf{0} & \mathbf{0} & \mathbf{0}  \\
                \vdots & \ddots & \vdots & \vdots & \vdots  \\
                \mathbf{0} & \ldots &  \frac{c s_{i-1}^{2M}}{c s_{i-1}^{2M} + N \lambda_{H_{1}}}  & \mathbf{0} & \mathbf{0}  \\
                \mathbf{0} & \ldots & \mathbf{0} &  \frac{c s_{i}^{2M}}{c s_{i}^{2M} + N \lambda_{H_{1}}}  \mathcal{P}_{r-i+1}(\mathbf{I}_{j-i+1}) &  \mathbf{0}  \\
                \mathbf{0} & \ldots & \mathbf{0} & \mathbf{0} & \mathbf{0}_{(K-j) \times (K-j)}   \\
                \end{bmatrix},
            \end{align}
            and, for any $h > j$, $(\mathbf{W}_{M}^{*} \mathbf{W}_{M-1}^{*} \ldots \mathbf{W}_{2}^{*} \mathbf{W}_{1}^{*})_{h} = \mathbf{h}_{h}^{*} = \mathbf{0}$.
    
\end{theorem}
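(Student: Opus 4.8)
The plan is to combine the deep-network SVD reduction used for the balanced case (Theorem~\ref{thm:bias-free}) with the weighted-sum argument developed for the plain imbalanced UFM (Theorem~\ref{thm:im_UFM_normal}). First I would invoke Lemma~\ref{lm:2} at an arbitrary critical point to obtain the balance relations $\lambda_{W_{m+1}}\mathbf{W}_{m+1}^\top \mathbf{W}_{m+1} = \lambda_{W_m}\mathbf{W}_m \mathbf{W}_m^\top$ together with the closed form $\mathbf{H}_1 = (c(\mathbf{W}_1^\top \mathbf{W}_1)^M + N\lambda_{H_1}\mathbf{I})^{-1}\mathbf{W}_1^\top \cdots \mathbf{W}_M^\top \mathbf{Y}$. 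Lemmas~\ref{lm:3} and~\ref{lm:4} then let me write every weight matrix in a chained SVD sharing orthonormal factors, so that the whole network collapses to the singular values $\{s_k\}$ of $\mathbf{W}_1$ and the left singular matrix $\mathbf{U}_{W_M}$, while Lemma~\ref{lm:5} delivers the resulting forms of $\mathbf{H}_1$ and of the residual $\mathbf{W}_M\cdots\mathbf{W}_1\mathbf{H}_1 - \mathbf{Y} = \mathbf{U}_{W_M}\mathbf{D}\mathbf{U}_{W_M}^\top \mathbf{Y}$.

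The decisive difference from the balanced case is that $\mathbf{Y}\mathbf{Y}^\top = \operatorname{diag}(n_1,\dots,n_K)$ rather than a multiple of the identity. Computing the Frobenius norms through this diagonal, the objective separates as
\begin{align}
f = \frac{1}{2N}\sum_{k=1}^{r}\Big(\frac{a_k}{x_k^M+1} + b\,x_k\Big) + \frac{1}{2N}\sum_{h=r+1}^{K} a_h, \nonumber
\end{align}
where $a_k := (\mathbf{u}_k \odot \mathbf{u}_k)^\top \mathbf{n}$ for the columns $\mathbf{u}_k$ of $\mathbf{U}_{W_M}$, $x_k := \sqrt[M]{c\,s_k^{2M}/(N\lambda_{H_1})}$, and $b = M a = N M \sqrt[M]{N\lambda_{W_M}\cdots\lambda_{W_1}\lambda_{H_1}}$. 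Because $\sum_k a_k = N$ by \eqref{eq:sum_a} and the partial sums obey the majorization bound \eqref{eq:partial_a_1}, Lemma~\ref{lm:weighted_sum} applies with weights $z_k = 1/(x_k^M+1)$ and lower-bounds $f$ by substituting each $a_k$ with $n_k$, converting the $k$-th term into $n_k\,g(x_k)$ with $g(x) = 1/(x^M+1) + (b/n_k)x$. At this point the single-variable analysis of Section~\ref{sec:study_g} pins down each optimal $x_k^*$: it is the largest root of $b/n_k - Mx^{M-1}/(x^M+1)^2 = 0$ when $b/n_k < (M-1)^{(M-1)/M}/M$ and zero otherwise. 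Rewriting $b/n_k = M a/n_k$ turns this threshold into $a/n_k$ versus $(M-1)^{(M-1)/M}/M^2$, which produces the three stated regimes and the explicit $s_k = \sqrt[2M]{N\lambda_{H_1}x_k^{*M}/c}$.

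Next I would extract the geometry from the equality conditions. Since $n_k$ is non-increasing, the coefficient $b/n_k$ is non-decreasing, so the minimizers $x_k^*$ are non-increasing and the weights $z_k = 1/(x_k^{*M}+1)$ are non-decreasing; this is exactly the ordering required by Lemma~\ref{lm:weighted_sum}, and its equality clause then forces $\mathbf{U}_{W_M}$ to be block diagonal with one orthogonal block per group of equal $n_k$ carrying positive $x^*$, precisely as in Cases~A--D of the proof of Theorem~\ref{thm:im_UFM_normal}. This block-orthogonality is what makes $\mathbf{W}_M^*\mathbf{W}_M^{*\top}$, $\overline{\mathbf{H}}^{*\top}\overline{\mathbf{H}}^*$ and the collapsed product diagonal, giving the GOF structure of $(\mathcal{NC}2)$; the $cs_k^{2M}/(cs_k^{2M}+N\lambda_{H_1})$-type ratios come straight from Lemma~\ref{lm:5}. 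The feature collapse $(\mathcal{NC}1)$ is immediate from $\mathbf{H}_1 = \overline{\mathbf{H}}^*\mathbf{Y}$, and the self-duality alignment $(\mathbf{W}_M^*\cdots\mathbf{W}_1^*)_k = (cs_k^{2M}+N\lambda_{H_1})\mathbf{h}_k^*$ of $(\mathcal{NC}3)$ is read off from the chained SVD.

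The hardest part is the borderline regime where some $a/n_k$ equal the threshold exactly. Here, in contrast to the one-layer setting where the critical coefficient yields a unique minimizer, $g$ has \emph{two} global minimizers, $x=0$ and $x=(M-1)^{1/M}$, so among the classes sitting at the threshold an arbitrary subset may carry a positive singular value while the rest vanish. Since $d_m \ge K$ removes any dimensional rank bottleneck, this threshold ambiguity is the sole source of rank deficiency: I would let $r$ denote the largest index with $s_r>0$, argue that the tied block admits any rank $r-i+1$, and conclude that the corresponding diagonal block of $\mathbf{U}_{W_M}$ realizes a best rank-$(r-i+1)$ approximation $\mathcal{P}_{r-i+1}(\mathbf{I}_{j-i+1})$ of the identity, which yields the final displayed formulas. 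Keeping the ordering constraint of Lemma~\ref{lm:weighted_sum} compatible with this freedom---so that all positive singular values precede the zero ones---is the main bookkeeping obstacle, while the remaining case analysis is inherited essentially verbatim from Theorems~\ref{thm:bias-free} and~\ref{thm:im_UFM_normal}.
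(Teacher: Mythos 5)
Your proposal is correct and follows essentially the same route as the paper's own proof: Lemmas~\ref{lm:2}--\ref{lm:5} for the chained SVD reduction, the Frobenius-norm computation through $\mathbf{Y}\mathbf{Y}^{\top}=\operatorname{diag}(n_1,\dots,n_K)$ leading to the separable objective in the $a_k$ and $x_k$, Lemma~\ref{lm:weighted_sum} with $z_k = 1/(x_k^{M}+1)$, the single-variable analysis of Section~\ref{sec:study_g} rescaled so that $b/n_k$ becomes $a/n_k$ against $(M-1)^{(M-1)/M}/M^2$, and the equality-condition case analysis forcing $\mathbf{U}_{W_M}$ to be block diagonal. Your treatment of the borderline/tied regime (two global minimizers of $g$, arbitrary rank in the tied block, yielding $\mathcal{P}_{r-i+1}(\mathbf{I}_{j-i+1})$) matches the paper's Case~C/Case~4a argument.
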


Next, we state the results for bottleneck case where there exists a $m$ such that $d_m < K$. 
\begin{theorem}
\label{thm:im_deep_bottleneck}
    Let $R = \min(d_{M}, \ldots, d_{1}, K) < K$ and $(\mathbf{W}_{M}^{*}, \mathbf{W}_{M-1}^{*},\ldots, \mathbf{W}_{2}^{*}, \mathbf{W}_{1}^{*}, \mathbf{H}_{1}^{*})$ be any global minimizer of problem \eqref{eq:deep_imbalance}. We have:
    \\
    
   ($\mathcal{NC}1) \quad
        \mathbf{H}_{1}^{*} = \overline{\mathbf{H}}^{*} \mathbf{Y} \nonumber
        \Leftrightarrow \mathbf{h}_{k,i}^{*} = \mathbf{h}_{k}^{*} \: \forall \: k \in [K], i \in [n_{k}], $
    where $\overline{\mathbf{H}}^{*} = [\mathbf{h}_{1}^{*},\ldots,\mathbf{h}_{K}^{*}]  \in \mathbb{R}^{d_{1} \times K}$.
    \\

     ($\mathcal{NC}3) \quad \text{We have, }  \forall \: k \in [K]$:
    \begin{align}
        (\mathbf{W}_{M}^{*} \mathbf{W}_{M-1}^{*} \ldots \mathbf{W}_{2}^{*} \mathbf{W}_{1}^{*})_{k} = (c s_{k}^{2M} + N \lambda_{H_{1}}) \mathbf{h}_{k}^{*}, \nonumber
    \end{align}
    
    $(\mathcal{NC}2) \quad \text{Let } c:= \frac{\lambda_{W_{1}}^{M-1}}
    {\lambda_{W_{M}} \lambda_{W_{M-1}} \ldots \lambda_{W_{2}} }$, $a:= N \sqrt[M]{N  \lambda_{W_{M}} \lambda_{W_{M-1}} \ldots \lambda_{W_{1}} \lambda_{H_{1}}}$ and $\forall k \in [K]$,  $x^{*}_{k}$ is the largest positive solution of the equation $\frac{a}{n_{k}} - \frac{ x^{M-1}}{ (x^{M} + 1)^{2}} = 0$, we define $\left\{ s_{k} \right\}_{k=1}^{K}$ as follows:
    \begin{itemize}
        \item If $\frac{a}{n_{1}} \leq \frac{a}{n_{2}} \leq \ldots \leq \frac{a}{n_{R}} < \frac{(M-1)^{\frac{M-1}{M}}}{M^{2}}$, we have:
        \begin{align}
        \begin{aligned}
        s_{k} = \left\{\begin{matrix}
        \sqrt[2M]{\frac{N \lambda_{H_{1}} x_{k}^{* M}}{c}} \quad \forall \: k \leq R \\ 0 \quad \forall \: k > R
        \end{matrix}\right. .
        \nonumber
        \end{aligned}
        \end{align}

        Then, if $n_{R} > n_{R+1}$, we have:
        \begin{align}
        \begin{aligned}
             &\mathbf{W}^{*}_{M} \mathbf{W}^{* \top}_{M}
             = \frac{\lambda_{W_{1}}}{\lambda_{W_{M}}} \operatorname{diag}
             \left\{s_{k}^{2} \right\}_{k=1}^{K},
             \\
           &\overline{\mathbf{H}}^{* \top}
            \overline{\mathbf{H}}^{*} 
            =
            \operatorname{diag}
            \left\{
            \frac{c s_{k}^{2M}}{(c s_{k}^{2M} + N \lambda_{H_{1}})^{2}}
            \right\}_{k=1}^{K}, \nonumber
            \\
             &\mathbf{W}_{M}^{*} \mathbf{W}_{M-1}^{*} \ldots  \mathbf{W}_{1}^{*}   \overline{\mathbf{H}_{1}}^{*} 
             = 
             \left\{
            \frac{c s_{k}^{2M}}{c s_{k}^{2M} + N \lambda_{H_{1}}}
            \right\}_{k=1}^{K},
        \end{aligned}    
        \end{align}
        and for any $k > R$, we have $(\mathbf{W}_{M}^{*} \mathbf{W}_{M-1}^{*} \ldots \mathbf{W}_{2}^{*} \mathbf{W}_{1}^{*})_{k} = \mathbf{h}_{k}^{*} = \mathbf{0}$.
        \\
        
        Otherwise, if $n_{R} = n_{R+1}$, and there exists $k \leq R$, $l > R$ such that $n_{k-1} > n_{k} = n_{k+1} = \ldots = n_{R} = \ldots = n_{l} > n_{l+1}$, we have:
            \begin{align}
                &\mathbf{W}_{M}^{*} 
                 \mathbf{W}_{M}^{* \top}
                 = 
                 \frac{\lambda_{W_{1}}}{\lambda_{W_{M}}}
                 \begin{bmatrix}
                    s_{1}^{2} & \ldots & \mathbf{0} & \mathbf{0} & \mathbf{0}  \\
                    \vdots & \ddots & \vdots & \vdots & \vdots  \\
                    \mathbf{0} & \ldots & s_{k-1}^{2}  & \mathbf{0} & \mathbf{0}  \\
                    \mathbf{0} & \ldots & \mathbf{0} & s_{k}^{2}  \mathcal{P}_{R-k+1}(\mathbf{I}_{l-k+1}) &  \mathbf{0}  \\
                    \mathbf{0} & \ldots & \mathbf{0} & \mathbf{0} & \mathbf{0}_{(K-l) \times (K-l)}   \\
                    \end{bmatrix}, 
                    \\
                &\overline{\mathbf{H}}^{* \top}
                \overline{\mathbf{H}}^{*} = \begin{bmatrix}
                \frac{c s_{1}^{2M}}{(c s_{1}^{2M} + N \lambda_{H_{1}})^{2}} & \ldots & \mathbf{0} & \mathbf{0} & \mathbf{0}  \\
                \vdots & \ddots & \vdots & \vdots & \vdots  \\
                \mathbf{0} & \ldots & \frac{c s_{k-1}^{2M}}{(c s_{k-1}^{2M} + N \lambda_{H_{1}})^{2}}  & \mathbf{0} & \mathbf{0}  \\
                \mathbf{0} & \ldots & \mathbf{0} & \frac{c s_{k}^{2M}}{(c s_{k}^{2M} + N \lambda_{H_{1}})^{2}}  \mathcal{P}_{R-k+1}(\mathbf{I}_{l-k+1}) &  \mathbf{0}  \\
                \mathbf{0} & \ldots & \mathbf{0} & \mathbf{0} & \mathbf{0}_{(K-l) \times (K-l)}   \\
                \end{bmatrix}, 
                \\
               &\mathbf{W}_{M}^{*} \mathbf{W}_{M-1}^{*} \ldots \mathbf{W}_{1}^{*}  \overline{\mathbf{H}}^{*} 
               = 
               \begin{bmatrix}
                 \frac{c s_{1}^{2M}}{c s_{1}^{2M} + N \lambda_{H_{1}}} & \ldots & \mathbf{0} & \mathbf{0} & \mathbf{0}  \\
                \vdots & \ddots & \vdots & \vdots & \vdots  \\
                \mathbf{0} & \ldots &  \frac{c s_{k-1}^{2M}}{c s_{k-1}^{2M} + N \lambda_{H_{1}}}  & \mathbf{0} & \mathbf{0}  \\
                \mathbf{0} & \ldots & \mathbf{0} &  \frac{c s_{k}^{2M}}{c s_{k}^{2M} + N \lambda_{H_{1}}}  \mathcal{P}_{R-k+1}(\mathbf{I}_{l-k+1}) &  \mathbf{0}  \\
                \mathbf{0} & \ldots & \mathbf{0} & \mathbf{0} & \mathbf{0}_{(K-l) \times (K-l)}   \\
                \end{bmatrix},
            \end{align}
            and, for any $h > l > R$, $(\mathbf{W}_{M}^{*} \mathbf{W}_{M-1}^{*} \ldots \mathbf{W}_{2}^{*} \mathbf{W}_{1}^{*})_{h} = \mathbf{h}_{h}^{*} = \mathbf{0}$.
     
        \item If there exists a $j \in [R-1]$ s.t. $\frac{a}{n_{1}} \leq \frac{a}{n_{2}} \leq \ldots \leq \frac{a}{n_{j}} < \frac{(M-1)^{\frac{M-1}{M}}}{M^{2}} < \frac{a}{n_{j+1}} \leq \ldots \leq \frac{a}{n_{R}}$, we have:
        \begin{align}
        \begin{aligned}
        s_{k} = \left\{\begin{matrix}
        \sqrt[2M]{\frac{N \lambda_{H_{1}} x_{k}^{* M}}{c}} \quad \forall \: k \leq j \\ 0 \quad \forall \: k > j
        \end{matrix}\right. .
        \nonumber
        \end{aligned}
        \end{align}
        Then, we have:
        \begin{align}
        \begin{aligned}
             &\mathbf{W}^{*}_{M} \mathbf{W}^{* \top}_{M}
             = \frac{\lambda_{W_{1}}}{\lambda_{W_{M}}} \operatorname{diag}
             \left\{s_{k}^{2} \right\}_{k=1}^{K},
             \\
           &\overline{\mathbf{H}}^{* \top}
            \overline{\mathbf{H}}^{*} 
            =
            \operatorname{diag}
            \left\{
            \frac{c s_{k}^{2M}}{(c s_{k}^{2M} + N \lambda_{H_{1}})^{2}}
            \right\}_{k=1}^{K}, \nonumber
            \\
             &\mathbf{W}_{M}^{*} \mathbf{W}_{M-1}^{*} \ldots  \mathbf{W}_{1}^{*}   \overline{\mathbf{H}_{1}}^{*} 
             = 
             \left\{
            \frac{c s_{k}^{2M}}{c s_{k}^{2M} + N \lambda_{H_{1}}}
            \right\}_{k=1}^{K},
        \end{aligned}    
        \end{align}
        and for any $k > j$, we have $(\mathbf{W}_{M}^{*} \mathbf{W}_{M-1}^{*} \ldots \mathbf{W}_{2}^{*} \mathbf{W}_{1}^{*})_{k} = \mathbf{h}_{k}^{*} = \mathbf{0}$.

        \item If $\frac{(M-1)^{\frac{M-1}{M}}}{M^{2}} < \frac{a}{n_{1}} \leq \frac{a}{n_{2}} \leq \ldots \leq \frac{a}{n_{R}}$, we have:
        \begin{align}
           (s_{1}, s_{2}, \ldots, s_{K} ) &= (0,0,\ldots,0), \nonumber
        \end{align}
        and $(\mathbf{W}_{M}^{*}, \ldots, \mathbf{W}_{1}^{*}, \mathbf{H}_{1}^{*}) = (\mathbf{0}, \ldots, \mathbf{0}, \mathbf{0})$ in this case.
    \end{itemize}
     The only case left is if there exists $i, j \in [R]$ ($i \leq j \leq R$) such that $\frac{a}{n_{1}} \leq \frac{a}{n_{2}} \leq \ldots \leq \frac{a}{n_{i-1}} <
        \frac{a}{n_{i}} = \frac{a}{n_{i+1}} = \ldots =
        \frac{a}{n_{j}} = \frac{(M-1)^{\frac{M-1}{M}}}{M^{2}} < \frac{a}{n_{j+1}} \leq \frac{a}{n_{j+2}} \leq \ldots \leq \frac{a}{n_{R}}$, we have:
        \begin{align}
         s_{k} = \left\{\begin{matrix}
        \sqrt[2M]{N \lambda_{H_{1}} x_{k}^{* M}/ c} \quad \forall \: k \leq i - 1 \\  \sqrt[2M]{N \lambda_{H_{1}} x_{k}^{* M}/ c} \: \text{ or } \: 0 \quad \forall \: i \leq k \leq j
        \\ 0 \quad \forall \: k \geq j+1
        \end{matrix}\right. , \nonumber
        \end{align}
        furthermore, let $r$ is the largest index that $s_{r} > 0$, we must have $r \leq R$ and $s_{r+1} = s_{r+2} = \ldots = s_{K} = 0$. $(\mathcal{NC}1)$ and $(\mathcal{NC}3)$ are the same as above but for $(\mathcal{NC}2)$, we have:
            \begin{align}
                \mathbf{W}_{M}^{*} 
                 \mathbf{W}_{M}^{* \top}
                 &= 
                 \frac{\lambda_{W_{1}}}{\lambda_{W_{M}}}
                 \begin{bmatrix}
                    s_{1}^{2} & \ldots & \mathbf{0} & \mathbf{0} & \mathbf{0}  \\
                    \vdots & \ddots & \vdots & \vdots & \vdots  \\
                    \mathbf{0} & \ldots & s_{i-1}^{2}  & \mathbf{0} & \mathbf{0}  \\
                    \mathbf{0} & \ldots & \mathbf{0} & s_{i}^{2}  \mathcal{P}_{r-i+1}(\mathbf{I}_{j-i+1}) &  \mathbf{0}  \\
                    \mathbf{0} & \ldots & \mathbf{0} & \mathbf{0} & \mathbf{0}_{(K-j) \times (K-j)}   \\
                    \end{bmatrix}, 
                    \\
                \overline{\mathbf{H}}^{* \top}
                \overline{\mathbf{H}}^{*} &= \begin{bmatrix}
                \frac{c s_{1}^{2M}}{(c s_{1}^{2M} + N \lambda_{H_{1}})^{2}} & \ldots & \mathbf{0} & \mathbf{0} & \mathbf{0}  \\
                \vdots & \ddots & \vdots & \vdots & \vdots  \\
                \mathbf{0} & \ldots & \frac{c s_{i-1}^{2M}}{(c s_{i-1}^{2M} + N \lambda_{H_{1}})^{2}}  & \mathbf{0} & \mathbf{0}  \\
                \mathbf{0} & \ldots & \mathbf{0} & \frac{c s_{i}^{2M}}{(c s_{i}^{2M} + N \lambda_{H_{1}})^{2}}  \mathcal{P}_{r-i+1}(\mathbf{I}_{j-i+1}) &  \mathbf{0}  \\
                \mathbf{0} & \ldots & \mathbf{0} & \mathbf{0} & \mathbf{0}_{(K-j) \times (K-j)}   \\
                \end{bmatrix}, 
                \\
               \mathbf{W}_{M}^{*} \mathbf{W}_{M-1}^{*} \ldots \mathbf{W}_{2}^{*} \mathbf{W}_{1}^{*}  \overline{\mathbf{H}}^{*} 
               &= 
               \begin{bmatrix}
                 \frac{c s_{1}^{2M}}{c s_{1}^{2M} + N \lambda_{H_{1}}} & \ldots & \mathbf{0} & \mathbf{0} & \mathbf{0}  \\
                \vdots & \ddots & \vdots & \vdots & \vdots  \\
                \mathbf{0} & \ldots &  \frac{c s_{i-1}^{2M}}{c s_{i-1}^{2M} + N \lambda_{H_{1}}}  & \mathbf{0} & \mathbf{0}  \\
                \mathbf{0} & \ldots & \mathbf{0} &  \frac{c s_{i}^{2M}}{c s_{i}^{2M} + N \lambda_{H_{1}}}  \mathcal{P}_{r-i+1}(\mathbf{I}_{j-i+1}) &  \mathbf{0}  \\
                \mathbf{0} & \ldots & \mathbf{0} & \mathbf{0} & \mathbf{0}_{(K-j) \times (K-j)}   \\
                \end{bmatrix},
            \end{align}
            and, for any $h > j$, $(\mathbf{W}_{M}^{*} \mathbf{W}_{M-1}^{*} \ldots \mathbf{W}_{2}^{*} \mathbf{W}_{1}^{*})_{h} = \mathbf{h}_{h}^{*} = \mathbf{0}$.
    
\end{theorem}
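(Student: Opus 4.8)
The plan is to run the same reduction pipeline used for the deep imbalanced full-rank case (Theorem~\ref{thm:im_deep_appen}), and graft onto it the low-rank bookkeeping that appears in the plain-UFM bottleneck analysis (Theorem~\ref{thm:im_UFM_bottleneck}). First I would invoke Lemma~\ref{lm:2} to obtain at any critical point the chain of balancing identities $\lambda_{W_{m}}\mathbf{W}_{m}^{\top}\mathbf{W}_{m}=\lambda_{W_{m-1}}\mathbf{W}_{m-1}\mathbf{W}_{m-1}^{\top}$ together with the closed form for $\mathbf{H}_{1}$, and Lemma~\ref{lm:3} to fix the common rank $r:=\operatorname{rank}(\mathbf{W}_{M})=\cdots=\operatorname{rank}(\mathbf{H}_{1})\le R=\min(d_{M},\ldots,d_{1},K)$. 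The essential difference from Theorem~\ref{thm:im_deep_appen} is that in the bottleneck regime $R<K$ this rank cap is binding, so the number of nonzero singular values cannot reach $K$. Lemma~\ref{lm:4} then supplies the aligned SVDs of all weight matrices that share consecutive orthonormal bases, and Lemma~\ref{lm:5} expresses both $\mathbf{H}_{1}$ and the residual $\mathbf{W}_{M}\cdots\mathbf{W}_{1}\mathbf{H}_{1}-\mathbf{Y}$ purely through the singular values $\{s_{k}\}$ of $\mathbf{W}_{1}$ and the left singular matrix $\mathbf{U}_{W_{M}}$.

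Next I would substitute these Frobenius-norm expressions into the objective, using $\mathbf{Y}\mathbf{Y}^{\top}=\operatorname{diag}(n_{1},\ldots,n_{K})$ exactly as in the plain imbalanced proof, to collapse $f$ into a separable expression of the form $\frac{1}{2N}\sum_{k\le r}\bigl(a_{k}/(x_{k}^{M}+1)+bx_{k}\bigr)+\frac{1}{2N}\sum_{h>r}a_{h}$, where $x_{k}:=c\,s_{k}^{2M}/(N\lambda_{H_{1}})$ and the $a_{k}:=(\mathbf{u}_{k}\odot\mathbf{u}_{k})^{\top}\mathbf{n}$ are the $\mathbf{n}$-weighted column norms of $\mathbf{U}_{W_{M}}$, with $\mathbf{n}=(n_{1},\ldots,n_{K})$. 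The partial-sum inequalities on the $a_{k}$ derived in the plain case carry over unchanged, so applying the weighted-sum lemma (Lemma~\ref{lm:weighted_sum}) lower-bounds $f$ by the same expression with each $a_{k}$ replaced by $n_{k}$; the resulting per-coordinate function is precisely the $g(x)=1/(x^{M}+1)+bx$ analyzed in Section~\ref{sec:study_g}, whose minimizer is $0$ when $a/n_{k}$ exceeds the threshold $(M-1)^{\frac{M-1}{M}}/M^{2}$ and otherwise the largest root of $a/n_{k}=x^{M-1}/(x^{M}+1)^{2}$. Matching the equality conditions then pins $\mathbf{U}_{W_{M}}$ to a block-diagonal orthonormal matrix whose blocks correspond to groups of classes sharing a common $n_{k}$, recovering the diagonal $(\mathcal{NC}2)$ forms together with $(\mathcal{NC}1)$ and $(\mathcal{NC}3)$.

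The hard part is the genuinely new bottleneck subcase, which is the analog of ``Case C'' in the proof of Theorem~\ref{thm:im_UFM_bottleneck}: when $a/n_{R}$ lies below the threshold but $n_{R}=n_{R+1}$, the rank cap forces strictly fewer positive singular values than the number of tied smallest active classes. Here the equality condition of Lemma~\ref{lm:weighted_sum} no longer concentrates each weight $a_{k}$ onto a single canonical direction; instead the $R-k+1$ available orthonormal directions must be distributed among the $l-k+1$ tied classes indexed $k,\ldots,l$. I would argue that the objective is invariant under which rank-$(R-k+1)$ orthonormal frame in $\mathbb{R}^{l-k+1}$ is selected, so that $\mathbf{W}_{M}^{*}\mathbf{W}_{M}^{*\top}$ restricted to this block is proportional to an arbitrary best rank-$(R-k+1)$ approximation $\mathcal{P}_{R-k+1}(\mathbf{I}_{l-k+1})$, while classes with $n_{k}<n_{R}$ are driven to the zero vector. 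This is the step where the clean GOF geometry degenerates into the low-rank-block form asserted in the statement, and where I expect the most delicate case analysis --- including the boundary situation in which some $a/n_{k}$ equal the threshold exactly, which produces the extra freedom between $s_{k}^{*}=0$ and a positive value recorded in the final displayed case of the theorem.
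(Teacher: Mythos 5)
Your proposal reproduces the paper's own proof essentially step for step: the same chain of Lemmas \ref{lm:2}--\ref{lm:5} to reduce the objective to a separable function of the singular values of $\mathbf{W}_1$ and the $\mathbf{n}$-weighted column norms $a_k=(\mathbf{u}_k\odot\mathbf{u}_k)^{\top}\mathbf{n}$ of $\mathbf{U}_{W_M}$, the same application of Lemma \ref{lm:weighted_sum} and the analysis of $g(x)=1/(x^{M}+1)+bx$ from Section \ref{sec:study_g}, and the same treatment of the binding rank cap via the ``Case C'' tie-breaking that produces the $\mathcal{P}_{R-k+1}(\mathbf{I}_{l-k+1})$ block and the threshold-equality freedom. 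The only slip is notational: you should set $x_k:=\sqrt[M]{c\,s_k^{2M}/(N\lambda_{H_1})}$ rather than $x_k:=c\,s_k^{2M}/(N\lambda_{H_1})$ so that the denominator $x_k^{M}+1$ in your separable expression is consistent with the linear penalty $b x_k$.
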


We derive the proofs of both cases as following.

\begin{proof}[Proof of Theorem \ref{thm:im_deep_appen} and \ref{thm:im_deep_bottleneck}]
First, by using lemma \ref{lm:2}, we have for any critical point  $(\mathbf{W}_{M}, \mathbf{W}_{M-1},\ldots, \mathbf{W}_{2}, \mathbf{W}_{1}, \mathbf{H}_{1})$ of $f$, we have the following:
    \begin{align}
    \begin{gathered}
        \lambda_{W_{M}} \mathbf{W}^{\top}_{M} \mathbf{W}_{M} = \lambda_{W_{M-1}} \mathbf{W}_{M-1} \mathbf{W}^{\top}_{M-1}, \\
        \lambda_{W_{M-1}} \mathbf{W}^{\top}_{M-1} \mathbf{W}_{M-1} = \lambda_{W_{M-2}} \mathbf{W}_{M-2} \mathbf{W}^{\top}_{M-2}, \\
        \ldots \nonumber \\
        \lambda_{W_{2}} \mathbf{W}_{2}^{\top} \mathbf{W}_{2} = \lambda_{W_{1}} \mathbf{W}_{1} \mathbf{W}_{1}^{\top}, \\
        \lambda_{W_{1}} \mathbf{W}_{1}^{\top} \mathbf{W}_{1} =  \lambda_{H_{1}}\mathbf{H}_{1} \mathbf{H}_{1}^{\top}. 
    \end{gathered}
    \end{align}
    
    \noindent Let $\mathbf{W}_{1} = \mathbf{U}_{W_{1}} \mathbf{S}_{W_{1}} \mathbf{V}_{W_{1}}^{\top}$ be the SVD decomposition of $\mathbf{W}_{1}$ with $\mathbf{U}_{W_{1}} \in \mathbb{R}^{d_{2} \times d_{2}}, \mathbf{V}_{W_{1}} \in \mathbb{R}^{d_{1} \times d_{1}}$ are orthonormal matrices and $\mathbf{S}_{W_{1}} \in \mathbb{R}^{d_{2} \times d_{1}}$ is a diagonal matrix with \textbf{decreasing} non-negative singular values. We denote the $r$ singular values of $\mathbf{W}_{1}$  as $\left\{s_{k}\right\}_{k=1}^{r}$ ($r \leq R:= \min(K, d_{M}, \ldots, d_{1})$). From Lemma \ref{lm:4}, we have the SVD of other weight matrices as:
\begin{align}
    \begin{gathered}
        \mathbf{W}_{M} = \mathbf{U}_{W_{M}} \mathbf{S}_{W_{M}}
        \mathbf{U}_{W_{M-1}}^{\top},
        \\
        \mathbf{W}_{M-1} = \mathbf{U}_{W_{M-1}} \mathbf{S}_{W_{M-1}} \mathbf{U}_{W_{M-2}}^{\top},     \\
        \mathbf{W}_{M-2} = \mathbf{U}_{W_{M-2}} \mathbf{S}_{W_{M-2}} \mathbf{U}_{W_{M-3}}^{\top}, \\
        \mathbf{W}_{M-3} = \mathbf{U}_{W_{M-3}} \mathbf{S}_{W_{M-3}} \mathbf{U}_{W_{M-4}}^{\top}, \\
        \ldots, \\
        \mathbf{W}_{2} = \mathbf{U}_{W_{2}} \mathbf{S}_{W_{2}} \mathbf{U}_{W_{1}}^{\top},
        \\
        \mathbf{W}_{1} = \mathbf{U}_{W_{1}} \mathbf{S}_{W_{1}} \mathbf{V}_{W_{1}}^{\top} ,
        \nonumber
    \end{gathered}
    \end{align}
    with:
    \begin{align}
       \mathbf{S}_{W_{j}} = \sqrt{\frac{\lambda_{W_{1}}}{\lambda_{W_{j}}}}
        \begin{bmatrix}
        \operatorname{diag}(s_{1},\ldots, s_{r}) & \mathbf{0}_{r \times (d_{j} - r)}  \\
        \mathbf{0}_{(d_{j+1} - r) \times r} & \mathbf{0}_{(d_{j+1} - r) \times (d_{j} - r)}  \\
        \end{bmatrix} \in \mathbb{R}^{d_{j+1} \times d_{j}} \quad \forall \: j \in [M],
        \nonumber
    \end{align} 
    and $\mathbf{U}_{W_{M}}, \mathbf{U}_{W_{M-1}}, \mathbf{U}_{W_{M-2}}, \mathbf{U}_{W_{M-3}},\ldots, \mathbf{U}_{W_{1}}, \mathbf{V}_{W_{1}} $ are all orthonormal matrices.
    \\
    
    \noindent From Lemma \ref{lm:5}, denote $c := \frac{\lambda_{W_{1}}^{M-1}}
    {\lambda_{W_{M}} \lambda_{W_{M-1}} \ldots \lambda_{W_{2}} }$, we have:
    \begin{align}
    \begin{aligned}
    \mathbf{H}_{1} &= 
    \mathbf{V}_{W_{1}}
    \underbrace{
    \begin{bmatrix}
    \operatorname{diag}\left(
        \frac{\sqrt{c} s_{1}^{M} }{c s_{1}^{2M} + N \lambda_{H_{1}}}, \ldots, \frac{\sqrt{c} s_{r}^{M}}{c s_{r}^{2M} + N \lambda_{H_{1}}}
        \right) &  \mathbf{0}\\
    \mathbf{0} &  \mathbf{0}\\
    \end{bmatrix}}_{\mathbf{C} \in \mathbb{R}^{d_{1} \times K}}
    \mathbf{U}_{W_{M}}^{\top}
    \mathbf{Y} \\
    &= \mathbf{V}_{W_{1}}
    \mathbf{C}
    \mathbf{U}_{W_{M}}^{\top}
    \mathbf{Y}.
    \label{eq:im_H1_form}
    \end{aligned}
    \end{align} 
    
    \begin{align}
    \begin{aligned}
        \mathbf{W}_{M} \mathbf{W}_{M-1} \ldots \mathbf{W}_{2} \mathbf{W}_{1}   \mathbf{H} - \mathbf{Y} 
        &= \mathbf{U}_{W_{M}} 
        \underbrace{
        \begin{bmatrix}
        \operatorname{diag}\left(
            \frac{- N \lambda_{H_{1}} }{c s_{1}^{2M} + N \lambda_{H_{1}}}, \ldots, \frac{- N \lambda_{H_{1}}}{c s_{r}^{2M} + N \lambda_{H_{1}}}
            \right ) &  \mathbf{0}\\
        \mathbf{0} &  -\mathbf{I}_{K-r}\\
        \end{bmatrix}}_{\mathbf{D} \in \mathbb{R}^{K \times K}}
        \mathbf{U}_{W_{M}}^{\top}
        \mathbf{Y} \\
        &= \mathbf{U}_{W_{M}}
        \mathbf{D}
        \mathbf{U}_{W_{M}}^{\top}
        \mathbf{Y}.
        \label{eq:im_WH1_form}
    \end{aligned}
    \end{align}
    
    \noindent Next, we will calculate the Frobenius norm of $ \mathbf{W}_{M} \mathbf{W}_{M-1} \ldots \mathbf{W}_{2} \mathbf{W}_{1} \mathbf{H}_{1} - \mathbf{Y}$:
         \begin{align}
             \| \mathbf{W}_{M} \mathbf{W}_{M-1} \ldots \mathbf{W}_{2} \mathbf{W}_{1} \mathbf{H}_{1} - \mathbf{Y}  \|_F^2
             &= \| \mathbf{U}_{W_{M}}
             \mathbf{D}
             \mathbf{U}_{W_{M}}^{\top}
             \mathbf{Y}  \|_F^2
             = \operatorname{trace}
             (\mathbf{U}_{W_{M}}
             \mathbf{D}
             \mathbf{U}_{W_{M}}^{\top}
             \mathbf{Y} (\mathbf{U}_{W_{M}}
             \mathbf{D}
             \mathbf{U}_{W_{M}}^{\top}
             \mathbf{Y})^{\top}  ) \nonumber \\
             &= \operatorname{trace}
             (\mathbf{U}_{W_{M}}
             \mathbf{D}
             \mathbf{U}_{W_{M}}^{\top}
             \mathbf{Y} \mathbf{Y}^{\top} \mathbf{U}_{W_{M}} \mathbf{D}
             \mathbf{U}_{W_{M}}^{\top}
             ) \nonumber \\
             &=  \operatorname{trace}
             ( \mathbf{D}^{2} \mathbf{U}_{W_{M}}^{\top}  \mathbf{Y} \mathbf{Y}^{\top}   \mathbf{U}_{W_{M}} ).
             \nonumber
         \end{align}
        
        \noindent We denote $\mathbf{u}^{k}$ and $\mathbf{u}_{k}$ are the $k$-th row and column of $\mathbf{U}_{W_{M}}$, respectively. Let $\mathbf{n} = (n_{1}, \ldots, n_{K})$, we have the following:
        \begin{align}
        \begin{gathered}    
            \mathbf{U}_{W_{M}} =
            \begin{bmatrix}
             -\mathbf{u}^{1}-  \\
             \ldots  \\
            -\mathbf{u}^{K}-  \\
            \end{bmatrix}
            = \begin{bmatrix}
            | & | & |  \\
            \mathbf{u}_{1} & \ldots & \mathbf{u}_{K}  \\
            | & | & | \\
            \end{bmatrix},  \\
            \mathbf{Y} \mathbf{Y}^{\top} = \operatorname{diag}(n_{1}, n_{2}, \ldots, n_{K}) \in \mathbb{R}^{K \times K}  \\
            \Rightarrow 
             \mathbf{U}_{W_{M}}^{\top}
             \mathbf{Y} \mathbf{Y}^{\top} \mathbf{U}_{W_{M}} =
             \begin{bmatrix}
            | & | & |  \\
            (\mathbf{u}^{1})^{\top} & \ldots & (\mathbf{u}^{K})^{\top}  \\
            | & | & | \\
            \end{bmatrix}
            \operatorname{diag}(n_{1}, n_{2}, \ldots, n_{K})
            \begin{bmatrix}
             -\mathbf{u}^{1}-  \\
             \ldots  \\
            -\mathbf{u}^{K}-  \\
            \end{bmatrix}  \\
            =  \begin{bmatrix}
            | & | & |  \\
            (\mathbf{u}^{1})^{\top} & \ldots & (\mathbf{u}^{K})^{\top}  \\
            | & | & | \\
            \end{bmatrix}
            \begin{bmatrix}
             - n_{1}\mathbf{u}^{1}-  \\
             \ldots  \\
            - n_{k} \mathbf{u}^{K}-  \\
            \end{bmatrix} \\
            \Rightarrow 
             (\mathbf{U}_{W_{M}}^{\top}
             \mathbf{Y} \mathbf{Y}^{\top} \mathbf{U}_{W_{M}})_{kk} 
             = n_{1} u_{1k}^{2} + n_{2} u_{2k}^{2} + \ldots + n_{k} 
             u_{K k}^{2} = (\mathbf{u}_{k} \odot \mathbf{u}_{k} )^{\top} \mathbf{n}
        \end{gathered}
        \end{align}
        
        \begin{align}
             \Rightarrow
              \| \mathbf{W}_{M} \mathbf{W}_{M-1} \ldots \mathbf{W}_{2} \mathbf{W}_{1} \mathbf{H}_{1} - \mathbf{Y} \|_F^2 
             &= 
             \operatorname{trace}
             (\mathbf{D}^{2} \mathbf{U}_{W}^{\top}
             \mathbf{Y} \mathbf{Y}^{\top} \mathbf{U}_{W}) \nonumber \\
             &=
             \sum_{k=1}^{r}
             (\mathbf{u}_{k} \odot \mathbf{u}_{k} )^{\top} \mathbf{n} \frac{(-N \lambda_{H_{1}})^{2}  }{ ( c s_{k}^{2M} + N \lambda_{H_{1}})^{2}}  
             + \sum_{h= r+ 1}^{K}
             (\mathbf{u}_{h} \odot \mathbf{u}_{h} )^{\top} \mathbf{n},
             \label{eq:im_WH1_norm}
        \end{align}
        where the last equality is from the fact that $\mathbf{D}^{2}$ is a diagonal matrix, so the diagonal of  $\mathbf{D}^{2} \mathbf{U}_{W_{M}}^{\top}
        \mathbf{Y} \mathbf{Y}^{\top} \mathbf{U}_{W_{M}}$ is the element-wise product between the diagonal of $\mathbf{D}^{2}$ and $\mathbf{U}_{W_{M}}^{\top}
        \mathbf{Y} \mathbf{Y}^{\top} \mathbf{U}_{W_{M}}$.
        \\
        
        \noindent Similarly, we calculate the Frobenius norm of $\mathbf{H}_{1}$, from equation \eqref{eq:im_H1_form}, we have:
        \begin{align}
            \| \mathbf{H}_{1} \|_F^2
            &= \operatorname{trace}
            ( \mathbf{V}_{W_{1}} 
            \mathbf{C} \mathbf{U}_{W_{M}}^{\top}
            \mathbf{Y} \mathbf{Y}^{\top}
            \mathbf{U}_{W_{M}} \mathbf{C}^{\top}
            \mathbf{V}_{W_{1}}^{\top}
            ) = 
            \operatorname{trace} 
            (  \mathbf{C}^{\top}  \mathbf{C} \mathbf{U}_{W_{M}}^{\top}
            \mathbf{Y} \mathbf{Y}^{\top}
            \mathbf{U}_{W_{M}}  ) \nonumber \\
            &= \sum_{k=1}^{r} 
            (\mathbf{u}_{k} \odot \mathbf{u}_{k} )^{\top} \mathbf{n} \frac{c s_{k}^{2M}  }{ ( cs_{k}^{2M} + N \lambda_{H_{1}})^{2}}.
            \label{eq:im_H1_norm}
        \end{align}
        
        \noindent Now, we plug the equations \eqref{eq:im_WH1_norm}, \eqref{eq:im_H1_norm} and the SVD of weight matrices into the function $f$ and note that orthonormal matrix does not change Frobenius norm, we got:
        \begin{align}
            &f
            =
            \frac{1}{2N} 
            \| \mathbf{W}_{M} \mathbf{W}_{M-1} \ldots \mathbf{W}_{1}   \mathbf{H}_{1} - \mathbf{Y} \|_F^2   
            + \frac{\lambda_{W_{M}}}{2} \| \mathbf{W}_{M} \|^2_F + \ldots + \frac{\lambda_{W_{1}}}{2} \| \mathbf{W}_{1} \|^2_F +\frac{ \lambda_{H_{1}}}{2}\left\|\mathbf{H}_{1}
            \right\|_{F}^{2} \nonumber \\
            &= 
            \frac{1}{2N} \sum_{k=1}^{r}
             (\mathbf{u}_{k} \odot \mathbf{u}_{k} )^{\top} \mathbf{n} \frac{(-N \lambda_{H_{1}})^{2}  }{ ( c s_{k}^{2M} + N \lambda_{H_{1}})^{2}} 
             + 
             \frac{1}{2N}
             \sum_{h= r+ 1}^{K}
             (\mathbf{u}_{h} \odot \mathbf{u}_{h} )^{\top} \mathbf{n} 
            + 
            \frac{\lambda_{W_{M}}}{2} \sum_{k = 1}^{r}
            \frac{\lambda_{W_{1}}}{\lambda_{W_{M}}} s_{k}^{2}
            \nonumber \\
            &+ \frac{\lambda_{W_{M-1}}}{2} \sum_{k=1}^{r}  \frac{\lambda_{W_{1}}}{\lambda_{W_{M-1}}} s_{k}^{2}
            + \ldots + \frac{\lambda_{W_{1}}}{2} \sum_{k=1}^{r} s_{k}^{2} + \frac{\lambda_{H_{1}}}{2} 
            \sum_{k=1}^{r} 
            (\mathbf{u}_{k} \odot \mathbf{u}_{k} )^{\top} \mathbf{n} \frac{c s_{k}^{2M}  }{ ( cs_{k}^{2M} + N \lambda_{H_{1}})^{2}}
            \nonumber \\
            &= \frac{\lambda_{H_{1}}}{2} \sum_{k=1}^{r}
            \frac{(\mathbf{u}_{k} \odot \mathbf{u}_{k} )^{\top} \mathbf{n}}
            {c s_{k}^{2M} + N \lambda_{H_{1}}} 
            + \frac{1}{2N} \sum_{h= r+ 1}^{K}
             (\mathbf{u}_{h} \odot \mathbf{u}_{h} )^{\top} \mathbf{n} 
            + \frac{M \lambda_{W_{1}}}{2} \sum_{k=1}^{r} s_{k}^{2} \nonumber \\
            &= \frac{1}{2N} 
            \sum_{k=1}^{r} 
            \left( 
            \frac{(\mathbf{u}_{k} \odot \mathbf{u}_{k} )^{\top} \mathbf{n}}{\frac{c s_{k}^{2M}}{N \lambda_{H_{1}}} + 1} + M N \lambda_{W_{1}} \sqrt[M]{\frac{N \lambda_{H_{1}}}{c}} \left(\sqrt[M]{\frac{c s_{k}^{2M}}{N \lambda_{H_{1}}}} \right) 
            \right)
            + \frac{1}{2N} \sum_{h= r+ 1}^{K}
             (\mathbf{u}_{h} \odot \mathbf{u}_{h} )^{\top} \mathbf{n} \nonumber \\
            &=  \frac{1}{2N} 
            \sum_{k=1}^{r} 
            \left( \frac{(\mathbf{u}_{k} \odot \mathbf{u}_{k} )^{\top} \mathbf{n}}{x^{M}_{k} +1} + bx_{k} 
            \right)
            + \frac{1}{2N} \sum_{h= r+ 1}^{K}
             (\mathbf{u}_{h} \odot \mathbf{u}_{h} )^{\top} \mathbf{n} \nonumber \\
             &= 
             \frac{1}{2N} 
            \sum_{k=1}^{r} 
            \left( \frac{a_{k}}{x^{M}_{k} +1} + bx_{k} 
            \right)
            + \frac{1}{2N} \sum_{h= r+ 1}^{K}
             a_{h}
             ,
         \label{eq:im_f_norm_deep}
        \end{align}
        with $x_{k} := \sqrt[M]{\frac{c s_{k}^{2M}}{N \lambda_{H_{1}}}} $,  $a_{k} := (\mathbf{u}_{k} \odot \mathbf{u}_{k} )^{\top} \mathbf{n}$ and $b:= M N \lambda_{W_{1}} \sqrt[M]{\frac{N \lambda_{H_{1}}}{c}} = M N \lambda_{W_{1}}
        \sqrt[M]{\frac{N \lambda_{W_{M}} \lambda_{W_{M-1}} \ldots \lambda_{W_{2}} \lambda_{H_{1}}}{ \lambda_{W_{1}}^{M-1}}}  = 
        M N \sqrt[M]{N  \lambda_{W_{M}} \lambda_{W_{M-1}} \ldots \lambda_{W_{1}} \lambda_{H_{1}}}$.
        \\
        
        \noindent From the fact that
        $\mathbf{U}_{W}$ is an orthonormal matrix, we have:
        \begin{align}
             \sum_{k=1}^{K} a_{k} = 
            \sum_{k=1}^{K} 
            (\mathbf{u}_{k} \odot \mathbf{u}_{k} )^{\top} \mathbf{n} = 
            \left(\sum_{k=1}^{K} 
            \mathbf{u}_{k} \odot \mathbf{u}_{k} \right)^{\top} \mathbf{n}
            = \mathbf{1}^{\top} \mathbf{n} 
            = \sum_{k=1}^{K} n_{k} = N,
        \end{align}
        and, for any $j \in [K]$, denote $p_{i, j} := u_{i1}^{2} + u_{i2}^{2} + \ldots + u_{ij}^{2} \: \forall \: i \in [K]$, we have:
    \begin{align}
        \sum_{k=1}^{j} a_{k}
        &= 
       \sum_{k=1}^{j} (\mathbf{u}_{k} \odot \mathbf{u}_{k} )^{\top} \mathbf{n} 
        = 
        n_{1}( u_{11}^{2} + u_{12}^{2} + \ldots + u_{1j}^{2})
        + n_{2}( u_{21}^{2} + 
        u_{22}^{2} + \ldots + u_{2j}^{2})
        + \ldots \nonumber \\
        &+ n_{K} ( u_{K1}^{2} + 
        u_{K2}^{2} + \ldots + u_{K j}^{2}) 
        \nonumber \\
        &= \sum_{k=1}^{K} p_{k,j} n_{k} \leq p_{1,j}n_{1} + p_{2,j}n_{2} + \ldots + p_{j-1,j}n_{j-1} + (p_{j,j} + p_{j+1,j} + p_{j+2,j} + \ldots + p_{K,j})n_{j} \nonumber \\
        &= p_{1,j}n_{1} + p_{2,j}n_{2} + \ldots + p_{j-1,j}n_{j-1} + (j - p_{1,j} + \ldots + p_{j-1,j})n_{j}
        \nonumber \\
        &= \sum_{k=1}^{j} n_{k} + \sum_{h=1}^{j-1} (n_{h}-n_{j})(p_{h,j} - 1) \leq \sum_{k=1}^{j} n_{k} \nonumber \\
        \Rightarrow \sum_{k = j+1}^{K} a_{k} &\geq N - \sum_{k=1}^{j} n_{k} = \sum_{k = j+1}^{K} n_{k} \quad \forall \: j \in [K],
        \label{eq:partial_a}
        \end{align}
        where we used the fact that $\sum_{k=1}^{K} p_{k,j} = j$ since it is the sum of squares of all entries of the first $j$ columns of an orthonormal matrix, and $p_{i,j} \leq 1 \: \forall \: i$ because it is the sum of squares of some entries on the $i$-th row of $\mathbf{U}_{W}$.
        \\
        
         By applying Lemma \ref{lm:weighted_sum} to the RHS of equation \eqref{eq:im_f_norm_deep} with $z_{k} = \frac{1}{x_{k}^{M} + 1} \: \forall \: k \leq r$ and $z_{k} = 1$ otherwise, we obtain:
        \begin{align}
            f(\mathbf{W}_{M}, \mathbf{W}_{M-1},\ldots, \mathbf{W}_{2}, \mathbf{W}_{1}, \mathbf{H}_{1}) &\geq 
            \frac{1}{2N} \sum_{k=1}^{r}
            \left( 
            \frac{n_{k}}{x_{k}^{M} + 1} + bx_{k}
            \right) + \frac{1}{2N} \sum_{h=r+1}^{K}  n_{h}
            \label{eq:im_ine_deep}
            \\
            &= \frac{1}{2N} \sum_{k=1}^{r}
            n_{k} \left( 
            \frac{1}{x_{k}^{M} + 1} + \frac{b}{n_{k}}  x_{k}
            \right) + \frac{1}{2N} \sum_{h=r+1}^{K} n_{h}.
            \label{eq:im_f_form_deep}
        \end{align}

         The minimizer of the function $g(x) = \frac{1}{x^{M}+1} + ax$ has been studied in Section \ref{sec:study_g}. Apply this result for the lower bound \eqref{eq:im_f_form_deep}, we finish bounding $f(\mathbf{W}_{M}, \mathbf{W}_{M-1},\ldots, \mathbf{W}_{2}, \mathbf{W}_{1}, \mathbf{H}_{1})$.
        \\

        Now, we study the equality conditions. In the lower bound \eqref{eq:im_f_form_deep},  by letting $x_{k}^{*}$ be the minimizer of $\frac{1}{x_{k}^{M} + 1} + \frac{b}{n_{k}}x_{k}$ for all $k \leq r$ and $x_{k}^{*} = 0$ for all $k > r$, there are only four possibilities as following:
    \begin{itemize}
        \item \textbf{Case A:} If $x^{*}_{1} > 0$ and $n_{1} > n_{2}$:  If $x^{*}_{2} = 0$, it is clear that $x^{*}_{1} > x^{*}_{2}$. Otherwise, we have $x^{*}_{1}$ and $x^{*}_{2}$ must satisfy (see Section \ref{sec:study_g} for details):
            \begin{align}
                \frac{M x_{1}^{* M-1}}{(x_{1}^{* M} + 1)^{2}} = \frac{b}{n_{1}}, \nonumber \\
                \frac{M x_{2}^{* M-1}}{(x_{2}^{* M} + 1)^{2}} = \frac{b}{n_{2}} .\nonumber
            \end{align}
        Because $\frac{b}{n_{1}} < \frac{b}{n_{2}}$ and the function $p(x) = \frac{Mx^{M-1}}{(x^{M}+1)^{2}}$ is a decreasing function when $x > \sqrt[M]{\frac{M-1}{M+1}}$, we got $x^{*}_{1} > x^{*}_{2}$.
        Hence, from the equality condition of Lemma \ref{lm:weighted_sum}, we have $a_{1} = n_{1}$. From the orthonormal property of $\mathbf{u}_{k}$, we have:
            \begin{align}
                a_{1} = (\mathbf{u}_{1} \odot \mathbf{u}_{1} )^{\top} \mathbf{n} = n_{1} u_{11}^{2} + 
                n_{2} u_{21}^{2} + \ldots + n_{k} u_{K1}^{2} \leq n_{1} (u_{11}^{2} + u_{21}^{2} + \ldots +  u_{K1}^{2}) = n_{1}.
                \nonumber
            \end{align}
        The equality holds when and only when $u_{11}^{2} = 1$ and $u_{21} = \ldots = u_{K1} = 0$.
        
        \item \textbf{Case B:} If $x^{*}_{1} > 0$ and there exists $1 < j \leq r$ such that $n_{1} = n_{2} = \ldots = n_{j} > n_{j+1}$, we have:
       \begin{align}
           \frac{1}{{x}^{M} + 1} + \frac{b}{n_{1}}x
           = \frac{1}{{x}^{M} + 1} + \frac{b}{n_{2}}x = \ldots
           = \frac{1}{{x}^{M} + 1} + \frac{b}{n_{j}}x, \nonumber
       \end{align}
       and thus, $x^{*}_{1} = x^{*}_{2} = \ldots = x^{*}_{j} > x^{*}_{j+1}$.
       Hence, from the equality condition of Lemma \ref{lm:weighted_sum}, we have $a_{1} + a_{2} + \ldots + a_{j} = n_{1} + \ldots + n_{j}$. 
       We have:
        \begin{align}
        \begin{aligned}
            \sum_{k=1}^{j} (\mathbf{u}_{k} \odot \mathbf{u}_{k} )^{\top} \mathbf{n} 
            &= 
            n_{1}( u_{11}^{2} + u_{12}^{2} + \ldots + u_{1j}^{2})
            + n_{2}( u_{21}^{2} + 
            u_{22}^{2} + \ldots + u_{2j}^{2})
            \nonumber \\ 
            &+ \ldots
            + n_{K} ( u_{K1}^{2} + 
            u_{K2}^{2} + \ldots + u_{K j}^{2}) 
            \leq \sum_{k=1}^{j} n_{j},
        \end{aligned}
        \end{align}
        where the inequality is from the fact that for any $k \in [K]$, $(u_{k1}^{2} + 
        u_{k2}^{2} + \ldots + u_{k j}^{2}) \leq 1$ and $\sum_{k=1}^{K} (u_{k1}^{2} +
        u_{k2}^{2} + \ldots + u_{k j}^{2}) = j$. The equality holds iff $u_{k1}^{2} + 
        u_{k2}^{2} + \ldots + u_{k j}^{2} = 1 \: \forall \: k = 1, 2, \ldots, j$ and $u_{k1} = u_{k2} = \ldots = u_{kj} = 0 \: \forall \: k = j+1, \ldots, K$, i.e. the upper left sub-matrix size $j \times j$ of $\mathbf{U}_{W_{M}}$ is an orthonormal matrix and other entries of $\mathbf{U}_{W_{M}}$ lie on the same rows or columns with this sub-matrix must all equal $0$'s.

        \item \textbf{Case C:} If $x_{1}^{*} > 0$, $r < K$
        and there exists $r < j \leq K$ such that $n_{1} = n_{2} = \ldots = n_{r} = \ldots = n_{j} > n_{j+1}$, we have $x_{1}^{*} = x_{2}^{*} = \ldots = x_{r}^{*} > 0$ and $x_{r+1}^{*} = \ldots = x_{K}^{*} = 0$. Hence, from the equality condition of Lemma \ref{lm:weighted_sum}, we have $a_{1} + a_{2} + \ldots + a_{r} = n_{1} + \ldots + n_{r}$. 
       We have:
        \begin{align}
        \begin{aligned}
            \sum_{k=1}^{r} (\mathbf{u}_{k} \odot \mathbf{u}_{k} )^{\top} \mathbf{n} 
            &= 
            n_{1}( u_{11}^{2} + u_{12}^{2} + \ldots + u_{1r}^{2})
            + n_{2}( u_{21}^{2} + 
            u_{22}^{2} + \ldots + u_{2r}^{2})
            \nonumber \\ 
            &+ \ldots
            + n_{K} ( u_{K1}^{2} + 
            u_{K2}^{2} + \ldots + u_{K r}^{2}) 
            \leq \sum_{k=1}^{r} n_{k},
        \end{aligned}
        \end{align}
        where the inequality is from the fact that for any $k \in [K]$, $(u_{k1}^{2} + 
        u_{k2}^{2} + \ldots + u_{k r}^{2}) \leq 1$ and $\sum_{k=1}^{K} (u_{k1}^{2} +
        u_{k2}^{2} + \ldots + u_{k r}^{2}) = r$. 
        The equality holds iff $u_{k1} = u_{k2} = \ldots = u_{kr} = 0 \: \forall \: k = j+1, \ldots, K$, i.e. the upper left sub-matrix size $j \times r$ of $\mathbf{U}_{W_{M}}$ includes $r$ orthonormal vectors in $\mathbb{R}^{j}$ and the bottom left sub-matrix size $(K-j) \times r$ are all zeros. The other $K-r$ columns of $\mathbf{U}_{W_{M}}$ does not matter because $\mathbf{W}_{M}^{*}$ can be written as:
        \begin{align}
            \mathbf{W}_{M}^{*} =
            \sum_{k=1}^{r} s_{k}^{*} \mathbf{u}_{k} \mathbf{v}_{k}^{\top}, \nonumber
        \end{align}
        with $\mathbf{v}_{k}$ is the right singular vector that satisfies $\mathbf{W}_{M}^{* \top} \mathbf{u}_{k} = s_{k}^{*} \mathbf{v}_{k}$. Note that since $s_{1}^{*} = s_{2}^{*} = \ldots = s_{r}^{*} := s^{*}$, thus we have compact SVD form as follows:
        \begin{align}
             \mathbf{W}_{M}^{*} = s^{*} \mathbf{U}_{W_{M}}^{'} \mathbf{V}_{W_{M}}^{' \top},
        \end{align}
        where $\mathbf{U}_{W_{M}}^{'} \in \mathbb{R}^{K \times r}$ and $\mathbf{V}_{W_{M}}^{'} \in \mathbb{R}^{d \times r}$. Especially, the last $K-j$ rows of $\mathbf{W}_{M}^{*}$ will be zeros since the last $K-j$ rows of $\mathbf{U}_{W_{M}}^{'}$ are zeros.
        Furthermore, $\mathbf{U}_{W_{M}}^{'} \mathbf{U}_{W_{M}}^{' \top}$ after removing the last $K-j$ zero rows and the last $K-j$ zero columns  is the best rank-$r$ approximation of $\mathbf{I}_{j}$.
        \\
        
        We note that if \textbf{Case C} happens, then the number of positive singular values are limited by the matrix rank $r$ (e.g., by $r \leq R = \min(d_{M},\ldots, d_{1},K) < K$), and $n_{r} = n_{r+1}$, thus $x_{r}^{*} > 0$ and $x_{r+1}^{*} = 0$ ($x_{r+1}^{*}$ should equal $x_{r}^{*} > 0$ if it is not forced to be zero).
        
        \item \textbf{Case D:} If $x_{1}^{*} = 0$, we must have $x_{2}^{*} = \ldots = x_{K}^{*} = 0$, $\sum_{k=1}^{K} (\mathbf{u}_{k} \odot \mathbf{u}_{k} )^{\top} \mathbf{n}$ always equal $N$ and thus, $\mathbf{U}_{W_{M}}$ can be an arbitrary size $K \times K$ orthonormal matrix.
    \end{itemize}

We perform similar arguments as above  for all subsequent $x^{*}_{k}$'s, after we finish reasoning for prior ones. Before going to the conclusion, we first study the matrix $\mathbf{U}_{W_{M}}$. If $\textbf{Case C}$ does not happen for any $x_{k}^{*}$'s, we have:
            \begin{align}
            \mathbf{U}_{W_{M}} = \begin{bmatrix}
            \mathbf{A}_{1} & \mathbf{0} & \mathbf{0} & \mathbf{0} \\
            \mathbf{0} & \mathbf{A}_{2} & \mathbf{0} & \mathbf{0} \\
            \vdots & \vdots &  \ddots & \vdots \\
             \mathbf{0}& \mathbf{0} & \mathbf{0} & \mathbf{A}_{l} \\
            \end{bmatrix},
            \label{eq:U_W_M_normal}
            \end{align}
            where each $\mathbf{A}_{i}$ is an orthonormal block which corresponds with one or a group of classes that have the same number of training samples and their $x^{*} > 0$ (\textbf{Case A} and \textbf{Case B}) or corresponds with all classes with $x^{*} = 0$ (\textbf{Case D}). If \textbf{Case C} happens, we have:
            \begin{align}
            \mathbf{U}_{W_{M}} = \begin{bmatrix}
            \mathbf{A}_{1} & \mathbf{0} & \mathbf{0} & \mathbf{0} \\
            \mathbf{0} & \mathbf{A}_{2} & \mathbf{0} & \mathbf{0} \\
            \vdots & \vdots &  \ddots & \vdots \\
             \mathbf{0}& \mathbf{0} & \mathbf{0} & \mathbf{A}_{l} \\
            \end{bmatrix},
            \label{eq:U_W_M_abnormal}
            \end{align}
             where each $\mathbf{A}_{i}, i \in [l-1] $ is an orthonormal block which corresponds with one or a group of classes that have the same number of training samples and their $x^{*} > 0$ (\textbf{Case A} and \textbf{Case B}). $\mathbf{A}_{l}$ is the orthonormal block has the same property as $\mathbf{U}_{W_{M}}$ in \textbf{Case C}. 
             \\
    
    We consider \textbf{the case $R = K$} from now on. By using arguments about the minimizer of $g(x)$ applied to the lower bound \eqref{eq:im_f_form_deep}, we consider four cases as following:
        
        \begin{itemize}
            \item \textbf{Case 1a:}$\frac{b}{n_{1}} \leq \frac{b}{n_{2}} \leq \ldots \leq \frac{b}{n_{K}} < \frac{(M-1)^{\frac{M-1}{M}}}{M}$.
            \\ 
            
            Then, the lower bound \eqref{eq:im_f_form_deep} is minimized at $(x_{1}^{*},x_{2}^{*},\ldots, x_{K}^{*})$ where $x^{*}_{i}$ is the largest positive solution of the equation $\frac{b}{n_{i}} - \frac{M x^{M-1}}{(x^{M}+1)^{2}} = 0$ for $i = 1,2,\ldots, K$. We conclude:
            \begin{align}
                (s_{1}^{*}, s_{2}^{*}, \ldots,s_{K}^{*} ) =
                \left( \sqrt[2M]{\frac{N \lambda_{H_{1}} x_{1}^{* M}}{c}},
                \sqrt[2M]{\frac{N \lambda_{H_{1}} x_{2}^{* M}}{c}},\ldots
                \sqrt[2M]{\frac{N \lambda_{H_{1}} x_{K}^{* M}}{c}} \right).
            \end{align}

             First, we have the property that the features in each class $\mathbf{h}_{k,i}^{*}$ collapsed to their class-mean $\mathbf{h}_{k}^{*}$ $(\mathcal{NC}1)$. Let $\overline{\mathbf{H}}^{*} = \mathbf{V}_{W_{1}} \mathbf{C} \mathbf{U}_{W_{M}}^{\top}$, we know that $\mathbf{H}^{*}_{1} = \overline{\mathbf{H}}^{*} \mathbf{Y}$ from equation \eqref{eq:im_H1_form}. Then, columns from the $(n_{k-1} + 1)$-th until $(n_{k})$-th of $\mathbf{H}_{1}^{*}$ will all equals the $k$-th column of $\overline{\mathbf{H}}^{*}$, thus the features in class $k$ collapse to their class-mean $\mathbf{h}_{k}^{*}$ (which is the $k$-th column of $\overline{\mathbf{H}}^{*}$), i.e., $ \mathbf{h}_{k,1}^{*} = \mathbf{h}_{k,2}^{*} = \ldots =  \mathbf{h}_{k,n_{k}}^{*} \: \forall \: k \in [K]$.
            \\
            
            Since $r = R = K$, 
            \textbf{Case C} never happens, and we have $\mathbf{U}_{W_{M}}$ as in equation \eqref{eq:U_W_M_normal}.
            Hence, together with equations \eqref{eq:im_H1_form} and \eqref{eq:im_WH1_form}, we can conclude the geometry of the following:
            \begin{align}
                 \mathbf{W}_{M}^{*} \mathbf{W}_{M}^{* \top}
                &= \mathbf{U}_{W_{M}} \mathbf{S}_{W_{M}} \mathbf{S}_{W_{M}}^{\top} \mathbf{U}_{W_{M}}^{\top}
                = \operatorname{diag} \left(
                \frac{\lambda_{W_{1}}}{\lambda_{W_{M}}} s_{1}^{2}, \ldots, \frac{\lambda_{W_{1}}}{\lambda_{W_{M}}} s_{K}^{2}
                \right),
                \\
                \mathbf{H}_{1}^{* \top} \mathbf{H}_{1}^{*}
                &= \mathbf{Y}^{\top} \mathbf{U}_{W_{M}} \mathbf{C}^{T} \mathbf{C}
                \mathbf{U}_{W_{M}}^{\top}
                \mathbf{Y} = 
                \begin{bmatrix}
                \frac{c s_{1}^{2M}}{(c s_{1}^{2M} + N \lambda_{H_{1}})^{2}} \mathbf{1}_{n_{1}}  \mathbf{1}_{n_{1}}^{\top} & \ldots & \mathbf{0} \\
                 \vdots & \ddots & \vdots  \\
                \mathbf{0} &  \ldots& \frac{c s_{K}^{2M}}{(c s_{K}^{2M} + N \lambda_{H_{1}})^{2}} \mathbf{1}_{n_{K}}  \mathbf{1}_{n_{K}}^{\top} 
                \end{bmatrix} ,
            \end{align}
            
            \begin{align}
                \mathbf{W}_{M}^{*} \mathbf{W}_{M-1}^{*} \ldots \mathbf{W}_{2}^{*} \mathbf{W}_{1}^{*}   \mathbf{H}_{1}^{*}
                &= 
                \mathbf{U}_{W_{M}} \mathbf{S}_{W_{M}} \mathbf{S}_{W_{M-1}} \ldots
                \mathbf{S}_{W_{1}}
                \mathbf{C} \mathbf{U}_{W_{M}}^{\top}
                \mathbf{Y} \nonumber \\
                &=
                \begin{bmatrix}
                \frac{c s_{1}^{2M}}{c s_{1}^{2M} + N \lambda_{H_{1}}} \mathbf{1}_{n_{1}}^{\top} & \ldots & \mathbf{0} \\
                 \vdots & \ddots & \vdots  \\
                \mathbf{0} &  \ldots& \frac{c s_{K}^{2M}}{c s_{K}^{2M} + N \lambda_{H_{1}}} \mathbf{1}_{n_{K}}^{\top}    \end{bmatrix}.
            \end{align}
            
            We additionally have the structure of the class-means matrix:
            \begin{align}
                \overline{\mathbf{H}}^{* \top}
                \overline{\mathbf{H}}^{*} &= \mathbf{U}_{W_{M}}^{\top} \mathbf{C}^{\top} 
                \mathbf{C}
                \mathbf{U}_{W_{M}}
                = \begin{bmatrix}
                 \frac{c s_{1}^{2M}}{(c s_{1}^{2M} + N \lambda_{H_{1}})^{2}} & \ldots & 0   \\
                 \vdots & \ddots & \vdots  \\
                0 &  \ldots&  \frac{c s_{K}^{2M}}{(c s_{K}^{2M} + N \lambda_{H_{1}})^{2}} 
                \end{bmatrix}, \\
               \mathbf{W}_{M}^{*} \mathbf{W}_{M-1}^{*} \ldots \mathbf{W}_{2}^{*} \mathbf{W}_{1}^{*}  \overline{\mathbf{H}}^{*} 
               &= \mathbf{U}_{W_{M}} \mathbf{S}_{W_{M}} \mathbf{C} \mathbf{U_{W}}^{\top}
               = \begin{bmatrix}
                 \frac{c s_{1}^{2M}}{c s_{1}^{2M} + N \lambda_{H_{1}}} & \ldots & 0  \\
                 \vdots & \ddots & \vdots  \\
                0 &  \ldots&  \frac{c s_{K}^{2M}}{c s_{K}^{2M} + N \lambda_{H_{1}}}
                \end{bmatrix}.
            \end{align}
            
            And the alignment between the weights and features are as following. For any $k \in [K]$,  denote $(\mathbf{W}_{M}^{*} \mathbf{W}_{M-1}^{*} \ldots \mathbf{W}_{2}^{*} \mathbf{W}_{1}^{*})_{k}$ the $k$-th row of $\mathbf{W}_{M}^{*} \mathbf{W}_{M-1}^{*} \ldots \mathbf{W}_{2}^{*} \mathbf{W}_{1}^{*}$:
            \begin{align}
            \begin{gathered}
               \mathbf{W}_{M}^{*} \mathbf{W}_{M-1}^{*} \ldots \mathbf{W}_{2}^{*} \mathbf{W}_{1}^{*}  = \mathbf{U}_{W_{M}} \mathbf{S}_{W_{M}}
               \mathbf{S}_{W_{M-1}}\ldots
               \mathbf{S}_{W_{1}}
               \mathbf{V}_{W_{1}}^{\top},  \\
                \overline{\mathbf{H}}^{*} = \mathbf{V}_{W_{1}} 
                \mathbf{C}
                \mathbf{U}_{W_{M}}^{\top} \\
                \Rightarrow  (\mathbf{W}_{M}^{*} \mathbf{W}_{M-1}^{*} \ldots \mathbf{W}_{2}^{*} \mathbf{W}_{1}^{*})_{k} = (cs_{k}^{2M} + N \lambda_{H_{1}})
                \mathbf{h}_{k}^{*}.
            \end{gathered}
            \end{align}
            
            \item \textbf{Case 2a:} 
            There exists $j \in [K-1]$ s.t. $ \frac{b}{n_{1}} \leq \frac{b}{n_{2}} \leq \ldots \leq \frac{b}{n_{j}} < \frac{(M-1)^{\frac{M-1}{M}}}{M} < \frac{b}{n_{j+1}} \leq \ldots \leq \frac{b}{n_{K}}$.
            \\
            
           Then, the lower bound \eqref{eq:im_f_form_deep} is minimized at $(x_{1}^{*},x_{2}^{*},\ldots, x_{K}^{*})$ where $x^{*}_{i}$ is the largest positive solution of equation $\frac{b}{n_{i}} - \frac{M x^{M-1}}{(x^{M}+1)^{2}} = 0$ for $i = 1,2,\ldots, j$ and $x^{*}_{i} = 0$ for $i = j+1,\ldots,K$. We conclude:
            \begin{align}
                (s_{1}^{*}, s_{2}^{*}, \ldots,s_{j}^{*}, s_{j+1}^{*},\ldots s_{K}^{*} ) =
                \left( \sqrt[2M]{\frac{N \lambda_{H_{1}} x_{1}^{* M}}{c}},
                \sqrt[2M]{\frac{N \lambda_{H_{1}} x_{2}^{* M}}{c}},\ldots,\sqrt[2M]{\frac{N \lambda_{H_{1}} x_{j}^{* M}}{c}}, 0,\ldots,0 \right).
            \end{align}
            
             First, we have the property that the features in each class $\mathbf{h}_{k,i}^{*}$ collapsed to their class-mean $\mathbf{h}_{k}^{*}$ $(\mathcal{NC}1)$. Let $\overline{\mathbf{H}}^{*} = \mathbf{V}_{W} \mathbf{C} \mathbf{U}_{W}^{\top}$, we know that $\mathbf{H}^{*}_{1} = \overline{\mathbf{H}}^{*} \mathbf{Y}$. Then, columns from the $(n_{k-1} + 1)$-th until $(n_{k})$-th of $\mathbf{H}^{*}_{1}$ will all equals the $k$-th column of $\overline{\mathbf{H}}^{*}$, thus the features in class $k$ are collapsed to their class-mean $\mathbf{h}_{k}^{*}$ (which is the $k$-th column of $\overline{\mathbf{H}}$), i.e $ \mathbf{h}_{k,1}^{*} = \mathbf{h}_{k,2}^{*} = \ldots =  \mathbf{h}_{k,n_{k}}^{*} \forall k \in [K]$.
            \\
            
            For any $k \in [K]$,  denote $(\mathbf{W}_{M}^{*} \mathbf{W}_{M-1}^{*} \ldots \mathbf{W}_{2}^{*} \mathbf{W}_{1}^{*})_{k}$ the $k$-th row of $\mathbf{W}_{M}^{*} \mathbf{W}_{M-1}^{*} \ldots \mathbf{W}_{2}^{*} \mathbf{W}_{1}^{*}$:
            \begin{align}
            \begin{gathered}
               \mathbf{W}_{M}^{*} \mathbf{W}_{M-1}^{*} \ldots \mathbf{W}_{2}^{*} \mathbf{W}_{1}^{*}  = \mathbf{U}_{W_{M}} \mathbf{S}_{W_{M}}
               \mathbf{S}_{W_{M-1}}\ldots
               \mathbf{S}_{W_{1}}
               \mathbf{V}_{W_{1}}^{\top},  \\
                \overline{\mathbf{H}}^{*} = \mathbf{V}_{W_{1}} 
                \mathbf{C}
                \mathbf{U}_{W_{M}}^{\top} \\
                \Rightarrow  (\mathbf{W}_{M}^{*} \mathbf{W}_{M-1}^{*} \ldots \mathbf{W}_{2}^{*} \mathbf{W}_{1}^{*})_{k} = (c s_{k}^{2M} + N \lambda_{H_{1}})
                \mathbf{h}_{k}^{*}.
            \end{gathered}
            \end{align}
            And, for $k > j$, we have $(\mathbf{W}_{M}^{*} \mathbf{W}_{M-1}^{*} \ldots \mathbf{W}_{2}^{*} \mathbf{W}_{1}^{*})_{k} = \mathbf{h}_{k}^{*} = \mathbf{0}$.
            \\
            
        Recall the form of $\mathbf{U}_{W_{M}}$ as in equation \eqref{eq:U_W_M_normal} (\textbf{Case C} cannot happen since $r = j$ and $n_{j} > n_{j+1}$). We can conclude the geometry of following objects, with the usage of equations \eqref{eq:im_H1_form} and \eqref{eq:im_WH1_form}:
            \begin{align}
                \mathbf{W}_{M}^{*} \mathbf{W}_{M}^{* \top} 
                &= \mathbf{U}_{W_{M}} \mathbf{S}_{W_{M}} \mathbf{S}_{W_{M}}^{\top} \mathbf{U}_{W}^{\top} \nonumber \\
                &= \operatorname{diag} 
                \left(
                \frac{ \lambda_{W_{1}}}{\lambda_{W_{M}}} 
                    s_{1}^{2}
                    , \frac{ \lambda_{W_{1}}}{\lambda_{W_{M}}} 
                    s_{2}^{2}, \ldots, \frac{ \lambda_{W_{1}}}{\lambda_{W_{M}}} 
                    s_{j}^{2}, 0,\ldots,0 \right),
                \\
                \mathbf{H}_{1}^{* \top} \mathbf{H}_{1}^{*}
                &=
                \begin{bmatrix}
                \frac{c s_{1}^{2M}}{(c s_{1}^{2M} + N \lambda_{H_{1}})^{2}} \mathbf{1}_{n_{1}}  \mathbf{1}_{n_{1}}^{\top} & \mathbf{0} & \ldots & \mathbf{0} \\
                \mathbf{0} &
                \frac{c s_{2}^{2M}}{(c s_{2}^{2M} + N \lambda_{H_{1}})^{2}} \mathbf{1}_{n_{2}}  \mathbf{1}_{n_{2}}^{\top}  & \ldots & \mathbf{0} \\
                \vdots & \vdots & \ddots & \vdots \\
                \mathbf{0} & \mathbf{0} & \ldots & \mathbf{0}_{n_{K} \times n_{K}} \\
                \end{bmatrix}, \\
                 \mathbf{W}_{M}^{*} \mathbf{W}_{M-1}^{*} \ldots \mathbf{W}_{2}^{*} \mathbf{W}_{1}^{*}   \mathbf{H}_{1}^{*}
                &= \mathbf{U}_{W} 
                \operatorname{diag} \left( \frac{c s_{1}^{2M}}{cs_{1}^{2M} + N \lambda_{H_{1}}} , \ldots, \frac{c s_{j}^{2M}}{c s_{j}^{2M} + N \lambda_{H_{1}}}, 0,\ldots,0 \right)
              \mathbf{U}_{W}^{\top}
              \mathbf{Y} \nonumber \\
                &= 
                \begin{bmatrix}
                \frac{c s_{1}^{2M}}{cs_{1}^{2M} + N \lambda_{H_{1}}} \mathbf{1}_{n_{1}}^{\top} & \mathbf{0} & \ldots & \mathbf{0} \\
                \mathbf{0} & \frac{c s_{2}^{2M}}{c s_{2}^{2M} + N \lambda_{H_{1}}} \mathbf{1}_{n_{2}}^{\top} &  \ldots & \mathbf{0} \\
                \vdots & \vdots & \ddots & \vdots \\
                \mathbf{0} & \mathbf{0} & \ldots & \mathbf{0}_{n_{K}}^{\top} \\
                \end{bmatrix},
                \nonumber
            \end{align}
            where $\mathbf{1}_{n_{k}} \mathbf{1}_{n_{k}}^{\top}$ is a $n_{k} \times n_{k}$ matrix will all entries are $1$'s.
            
            \item \textbf{Case 3a:} $\frac{(M-1)^{\frac{M-1}{M}}}{M} < \frac{b}{n_{1}} \leq \frac{b}{n_{2}} \leq \ldots \leq \frac{b}{n_{K}}$.
            \\
            
            In this case, the lower bound \eqref{eq:im_f_form_deep} is minimized at:
            \begin{align}
                (s_{1}^{*}, s_{2}^{*}, \ldots, s_{K}^{*} ) = (0,0,\ldots,0).
            \end{align}
            Hence, the global minimizer of $f$ is $(\mathbf{W}_{M}^{*}, \mathbf{W}_{M-1}^{*},\ldots, \mathbf{W}_{2}^{*}, \mathbf{W}_{1}^{*}, \mathbf{H}_{1}^{*}) = (\mathbf{0}, \mathbf{0},\ldots, \mathbf{0})$.
            
          \item \textbf{Case 4a:} There exists $i, j \in [K]$ ($i \leq j$) such that $\frac{b}{n_{1}} \leq \frac{b}{n_{2}} \leq \ldots \leq \frac{b}{n_{i-1}} <
            \frac{b}{n_{i}} = \frac{b}{n_{i+1}} = \ldots =
            \frac{b}{n_{j}} = \frac{(M-1)^{\frac{M-1}{M}}}{M} < \frac{b}{n_{j+1}} \leq \frac{b}{n_{j+2}} \leq \ldots \leq \frac{b}{n_{K}} $.
            
             Then, the lower bound \eqref{eq:im_f_form_deep} is minimized at $(x_{1}^{*},x_{2}^{*},\ldots, x_{K}^{*})$ where $\forall \: t \leq i-1, x^{*}_{t}$ is the largest positive solution of equation $\frac{b}{n_{t}} - \frac{M x^{M-1}}{(x^{M}+1)^{2}} = 0$. If $ i \leq t \leq j,   x_{t}^{*}$ can either be $0$ or the largest positive solution of equation $\frac{b}{n_{t}} - \frac{M x^{M-1}}{(x^{M}+1)^{2}} = 0$ as long as the sequence $\{ x^{*}_{t} \}$ is a decreasing sequence. Otherwise, $\forall \: t > j$, $x^{*}_{t} = 0$. \\
             
             In this case, we have $\mathcal{NC}1$ and  $\mathcal{NC}3$ properties similar as \textbf{Case 1a}.
             \\

            For $(\mathcal{NC}2)$, we can freely choose the number of positive singular values $r$ to be any value between $i$ and $j$. Thus, \textbf{Case C} does happen for this case. As a consequence, the diagonal block $\operatorname{diag}(s_{i}^{2}, \ldots, s_{j}^{2})$ of $\mathbf{W}_{M}^{*} \mathbf{W}_{M}^{* \top}$ in \textbf{Case 1a},  will be replace by $s_{r}^{2} \mathcal{P}_{r-i+1} (\mathbf{I}_{j-i+1}) $. Similar changes are also applied for $\mathbf{H}_{1}^{* \top} \mathbf{H}_{1}^{*}$ and $\mathbf{W}_{M}^{*} \mathbf{W}_{M-1}^{*} \ldots \mathbf{W}_{2}^{*} \mathbf{W}_{1}^{*}   \mathbf{H}_{1}^{*}$.
        \end{itemize}

     Now, we turn to consider \textbf{the bottleneck case $R < K$}. Again, we consider the following cases:

    \begin{itemize}
            \item \textbf{Case 1b:}$\frac{b}{n_{1}} \leq \frac{b}{n_{2}} \leq \ldots \leq \frac{b}{n_{R}} < \frac{(M-1)^{\frac{M-1}{M}}}{M}$.
            \\ 
            
            Then, the lower bound \eqref{eq:im_f_form_deep} is minimized at $(x_{1}^{*},x_{2}^{*},\ldots, x_{K}^{*})$ where $x^{*}_{i}$ is the largest positive solution of the equation $\frac{b}{n_{i}} - \frac{M x^{M-1}}{(x^{M}+1)^{2}} = 0$ for $i = 1,2,\ldots, R$ and $x^{*}_{i} = 0$ for $i = R+1,\ldots,K$. We conclude:
            \begin{align}
                (s_{1}^{*}, s_{2}^{*}, \ldots,s_{R}^{*},s_{R+1}^{*}, \ldots s_{K}^{*} ) =
                \left( \sqrt[2M]{\frac{N \lambda_{H_{1}} x_{1}^{* M}}{c}},
                \sqrt[2M]{\frac{N \lambda_{H_{1}} x_{2}^{* M}}{c}},\ldots
                \sqrt[2M]{\frac{N \lambda_{H_{1}} x_{R}^{* M}}{c}}, 0,\ldots,0 \right).
            \end{align}

            We have $(\mathcal{NC}1)$ and $(\mathcal{NC}3)$ properties are the same as \textbf{Case 1a}.
            \\
        
            We have \textbf{Case C} happens iff $x_{R}^{*} > 0$ (already satisfied) and $n_{R} = n_{R+1}$.
            If $n_{R} > n_{R+1}$, we can conclude the geometry of the following:
            \begin{align}
                 \mathbf{W}_{M}^{*} \mathbf{W}_{M}^{* \top}
                &= \mathbf{U}_{W_{M}} \mathbf{S}_{W_{M}} \mathbf{S}_{W_{M}}^{\top} \mathbf{U}_{W_{M}}^{\top}
                = \begin{bmatrix}
                \frac{\lambda_{W_{1}}}{\lambda_{W_{M}}} s_{1}^{2} & \ldots & 0 & \ldots & 0 &  \\
                 \vdots & \ddots & \vdots & \ddots & \vdots  \\
                0 &  \ldots& \frac{\lambda_{W_{1}}}{\lambda_{W_{M}}} s_{R}^{2} & \ldots & 0 \\
                 \vdots & \ddots & \vdots & \ddots & \vdots  \\
                0 & \ldots & 0 & \ldots & 0 \\
                \end{bmatrix} \nonumber \\
                &= \operatorname{diag} \left(
                \frac{\lambda_{W_{1}}}{\lambda_{W_{M}}} s_{1}^{2}, \ldots, \frac{\lambda_{W_{1}}}{\lambda_{W_{M}}} s_{R}^{2}, 0,\ldots,0
                \right),
                \\
                \overline{\mathbf{H}}^{* \top}
                \overline{\mathbf{H}}^{*} &= \mathbf{U}_{W_{M}}^{\top} \mathbf{C}^{\top} 
                \mathbf{C}
                \mathbf{U}_{W_{M}}
                = \begin{bmatrix}
                 \frac{c s_{1}^{2M}}{(c s_{1}^{2M} + N \lambda_{H_{1}})^{2}} & \ldots & 0 & \ldots & 0 &  \\
                 \vdots & \ddots & \vdots & \ddots & \vdots  \\
                0 &  \ldots&  \frac{c s_{R}^{2M}}{(c s_{R}^{2M} + N \lambda_{H_{1}})^{2}} & \ldots & 0 \\
                 \vdots & \ddots & \vdots & \ddots & \vdots  \\
                0 & \ldots & 0 & \ldots & 0 \\
                \end{bmatrix}, \\
               \mathbf{W}_{M}^{*} \mathbf{W}_{M-1}^{*} \ldots \mathbf{W}_{2}^{*} \mathbf{W}_{1}^{*}  \overline{\mathbf{H}}^{*} 
               &= \mathbf{U}_{W_{M}} \mathbf{S}_{W_{M}} \mathbf{C} \mathbf{U}_{W_{M}}^{\top}
               = \begin{bmatrix}
                 \frac{c s_{1}^{2M}}{c s_{1}^{2M} + N \lambda_{H_{1}}} & \ldots & 0 & \ldots & 0 &  \\
                 \vdots & \ddots & \vdots & \ddots & \vdots  \\
                0 &  \ldots&  \frac{c s_{R}^{2M}}{c s_{R}^{2M} + N \lambda_{H_{1}}} & \ldots & 0 \\
                 \vdots & \ddots & \vdots & \ddots & \vdots  \\
                0 & \ldots & 0 & \ldots & 0 \\
                \end{bmatrix}.
            \end{align}
            Furthermore, for $k > R$, we have $(\mathbf{W}_{M}^{*} \mathbf{W}_{M-1}^{*} \ldots \mathbf{W}_{2}^{*} \mathbf{W}_{1}^{*})_{k} = \mathbf{h}_{k}^{*} = \mathbf{0}$.
            \\
            
            If $n_{R} = n_{R+1}$, there exists $k \leq R$, $l > R$ such that $n_{k-1} > n_{k} = n_{k+1} = \ldots = n_{R} = \ldots = n_{l} > n_{l+1}$, then :
            \begin{align}
                &\mathbf{W}_{M}^{*} 
                 \mathbf{W}_{M}^{* \top}
                 = 
                 \frac{\lambda_{W_{1}}}{\lambda_{W_{M}}}
                 \begin{bmatrix}
                    s_{1}^{2} & \ldots & \mathbf{0} & \mathbf{0} & \mathbf{0}  \\
                    \vdots & \ddots & \vdots & \vdots & \vdots  \\
                    \mathbf{0} & \ldots & s_{k-1}^{2}  & \mathbf{0} & \mathbf{0}  \\
                    \mathbf{0} & \ldots & \mathbf{0} & s_{k}^{2}  \mathcal{P}_{R-k+1}(\mathbf{I}_{l-k+1}) &  \mathbf{0}  \\
                    \mathbf{0} & \ldots & \mathbf{0} & \mathbf{0} & \mathbf{0}_{(K-l) \times (K-l)}   \\
                    \end{bmatrix}, 
                    \\
                &\overline{\mathbf{H}}^{* \top}
                \overline{\mathbf{H}}^{*} = \begin{bmatrix}
                \frac{c s_{1}^{2M}}{(c s_{1}^{2M} + N \lambda_{H_{1}})^{2}} & \ldots & \mathbf{0} & \mathbf{0} & \mathbf{0}  \\
                \vdots & \ddots & \vdots & \vdots & \vdots  \\
                \mathbf{0} & \ldots & \frac{c s_{k-1}^{2M}}{(c s_{k-1}^{2M} + N \lambda_{H_{1}})^{2}}  & \mathbf{0} & \mathbf{0}  \\
                \mathbf{0} & \ldots & \mathbf{0} & \frac{c s_{k}^{2M}}{(c s_{k}^{2M} + N \lambda_{H_{1}})^{2}}  \mathcal{P}_{R-k+1}(\mathbf{I}_{l-k+1}) &  \mathbf{0}  \\
                \mathbf{0} & \ldots & \mathbf{0} & \mathbf{0} & \mathbf{0}_{(K-l) \times (K-l)}   \\
                \end{bmatrix}, 
                \\
               &\mathbf{W}_{M}^{*} \mathbf{W}_{M-1}^{*} \ldots \mathbf{W}_{1}^{*}  \overline{\mathbf{H}}^{*} 
               = 
               \begin{bmatrix}
                 \frac{c s_{1}^{2M}}{c s_{1}^{2M} + N \lambda_{H_{1}}} & \ldots & \mathbf{0} & \mathbf{0} & \mathbf{0}  \\
                \vdots & \ddots & \vdots & \vdots & \vdots  \\
                \mathbf{0} & \ldots &  \frac{c s_{k-1}^{2M}}{c s_{k-1}^{2M} + N \lambda_{H_{1}}}  & \mathbf{0} & \mathbf{0}  \\
                \mathbf{0} & \ldots & \mathbf{0} &  \frac{c s_{k}^{2M}}{c s_{k}^{2M} + N \lambda_{H_{1}}}  \mathcal{P}_{R-k+1}(\mathbf{I}_{l-k+1}) &  \mathbf{0}  \\
                \mathbf{0} & \ldots & \mathbf{0} & \mathbf{0} & \mathbf{0}_{(K-l) \times (K-l)}   \\
                \end{bmatrix},
            \end{align}
            and, for any $h > l > R$, $(\mathbf{W}_{M}^{*} \mathbf{W}_{M-1}^{*} \ldots \mathbf{W}_{2}^{*} \mathbf{W}_{1}^{*})_{h} = \mathbf{h}_{h}^{*} = \mathbf{0}$.
            
            \item \textbf{Case 2b:} 
            There exists $j \in [R-1]$ s.t. $ \frac{b}{n_{1}} \leq \frac{b}{n_{2}} \leq \ldots \leq \frac{b}{n_{j}} < \frac{(M-1)^{\frac{M-1}{M}}}{M} < \frac{b}{n_{j+1}} \leq \ldots \leq \frac{b}{n_{R}}$.
            \\
            
           Then, the lower bound \eqref{eq:im_f_form_deep} is minimized at $(x_{1}^{*},x_{2}^{*},\ldots, x_{K}^{*})$ where $x^{*}_{i}$ is the largest positive solution of equation $\frac{b}{n_{i}} - \frac{M x^{M-1}}{(x^{M}+1)^{2}} = 0$ for $i = 1,2,\ldots, j$ and $x^{*}_{i} = 0$ for $i = j+1,\ldots,K$. We conclude:
            \begin{align}
                (s_{1}^{*}, s_{2}^{*}, \ldots,s_{j}^{*}, s_{j+1}^{*},\ldots s_{K}^{*} ) =
                \left( \sqrt[2M]{\frac{N \lambda_{H_{1}} x_{1}^{* M}}{c}},
                \sqrt[2M]{\frac{N \lambda_{H_{1}} x_{2}^{* M}}{c}},\ldots,\sqrt[2M]{\frac{N \lambda_{H_{1}} x_{j}^{* M}}{c}}, 0,\ldots,0 \right).
            \end{align}

        We have $(\mathcal{NC}1)$ and $(\mathcal{NC}3)$ properties are the same as \textbf{Case 2a}. 
        \\
        
        We can conclude the geometry of following objects, with the usage of equations \eqref{eq:im_H1_form} and \eqref{eq:im_WH1_form}:
            \begin{align}
                \mathbf{W}_{M}^{*} \mathbf{W}_{M}^{* \top} 
                &= \mathbf{U}_{W_{M}} \mathbf{S}_{W_{M}} \mathbf{S}_{W_{M}}^{\top} \mathbf{U}_{W}^{\top} \nonumber \\
                &= \operatorname{diag} 
                \left(
                \frac{ \lambda_{W_{1}}}{\lambda_{W_{M}}} 
                    s_{1}^{2}
                    , \frac{ \lambda_{W_{1}}}{\lambda_{W_{M}}} 
                    s_{2}^{2}, \ldots, \frac{ \lambda_{W_{1}}}{\lambda_{W_{M}}} 
                    s_{j}^{2}, 0,\ldots,0 \right),
                \\
                \mathbf{H}_{1}^{* \top} \mathbf{H}_{1}^{*}
                &=
                \begin{bmatrix}
                \frac{c s_{1}^{2M}}{(c s_{1}^{2M} + N \lambda_{H_{1}})^{2}} \mathbf{1}_{n_{1}}  \mathbf{1}_{n_{1}}^{\top} & \mathbf{0} & \ldots & \mathbf{0} \\
                \mathbf{0} &
                \frac{c s_{2}^{2M}}{(c s_{2}^{2M} + N \lambda_{H_{1}})^{2}} \mathbf{1}_{n_{2}}  \mathbf{1}_{n_{2}}^{\top}  & \ldots & \mathbf{0} \\
                \vdots & \vdots & \ddots & \vdots \\
                \mathbf{0} & \mathbf{0} & \ldots & \mathbf{0}_{n_{K} \times n_{K}} \\
                \end{bmatrix}, \\
                 \mathbf{W}_{M}^{*} \mathbf{W}_{M-1}^{*} \ldots \mathbf{W}_{2}^{*} \mathbf{W}_{1}^{*}   \mathbf{H}_{1}^{*}
                &= \mathbf{U}_{W} 
                \operatorname{diag} \left( \frac{c s_{1}^{2M}}{cs_{1}^{2M} + N \lambda_{H_{1}}} , \ldots, \frac{c s_{j}^{2M}}{c s_{j}^{2M} + N \lambda_{H_{1}}}, 0,\ldots,0 \right)
              \mathbf{U}_{W}^{\top}
              \mathbf{Y} \nonumber \\
                &= 
                \begin{bmatrix}
                \frac{c s_{1}^{2M}}{cs_{1}^{2M} + N \lambda_{H_{1}}} \mathbf{1}_{n_{1}}^{\top} & \mathbf{0} & \ldots & \mathbf{0} \\
                \mathbf{0} & \frac{c s_{2}^{2M}}{c s_{2}^{2M} + N \lambda_{H_{1}}} \mathbf{1}_{n_{2}}^{\top} &  \ldots & \mathbf{0} \\
                \vdots & \vdots & \ddots & \vdots \\
                \mathbf{0} & \mathbf{0} & \ldots & \mathbf{0}_{n_{K}}^{\top} \\
                \end{bmatrix},
                \nonumber
            \end{align}
            where $\mathbf{1}_{n_{k}} \mathbf{1}_{n_{k}}^{\top}$ is a $n_{k} \times n_{k}$ matrix will all entries are $1$'s. \textbf{Case C} cannot happen in this case because $r = j < R$ and $n_{j} > n_{j+1}$.
            \\
            
            And, for $k > j$, we have $(\mathbf{W}_{M}^{*} \mathbf{W}_{M-1}^{*} \ldots \mathbf{W}_{2}^{*} \mathbf{W}_{1}^{*})_{k} = \mathbf{h}_{k}^{*} = \mathbf{0}$.
            
            \item \textbf{Case 3b:} $\frac{(M-1)^{\frac{M-1}{M}}}{M} < \frac{b}{n_{1}} \leq \frac{b}{n_{2}} \leq \ldots \leq \frac{b}{n_{R}}$.
            \\
            
            In this case, the lower bound \eqref{eq:im_f_form_deep} is minimized at:
            \begin{align}
                (s_{1}^{*}, s_{2}^{*}, \ldots, s_{K}^{*} ) = (0,0,\ldots,0).
            \end{align}
            Hence, the global minimizer of $f$ is $(\mathbf{W}_{M}^{*}, \mathbf{W}_{M-1}^{*},\ldots, \mathbf{W}_{2}^{*}, \mathbf{W}_{1}^{*}, \mathbf{H}_{1}^{*}) = (\mathbf{0}, \mathbf{0},\ldots, \mathbf{0})$.
            
          \item \textbf{Case 4b:} There exists $i, j \in [R]$ ($i \leq j \leq R$) such that $\frac{b}{n_{1}} \leq \frac{b}{n_{2}} \leq \ldots \leq \frac{b}{n_{i-1}} <
            \frac{b}{n_{i}} = \frac{b}{n_{i+1}} = \ldots =
            \frac{b}{n_{j}} = \frac{(M-1)^{\frac{M-1}{M}}}{M} < \frac{b}{n_{j+1}} \leq \frac{b}{n_{j+2}} \leq \ldots \leq \frac{b}{n_{R}} 
            $.
            \\
            
             Then, the lower bound \eqref{eq:im_f_form_deep} is minimized at $(x_{1}^{*},x_{2}^{*},\ldots, x_{K}^{*})$ where $\forall \: t \leq i-1, x^{*}_{t}$ is the largest positive solution of equation $\frac{b}{n_{t}} - \frac{M x^{M-1}}{(x^{M}+1)^{2}} = 0$. If $ i \leq t \leq j,   x_{t}^{*}$ can either be $0$ or the largest positive solution of equation $\frac{b}{n_{t}} - \frac{M x^{M-1}}{(x^{M}+1)^{2}} = 0$ as long as the sequence $\{ x^{*}_{t} \}$ is a decreasing sequence and there is no more than $R$ positive singular values. Otherwise, $\forall \: t > j$, $x^{*}_{t} = 0$. 
             \\
             
             In this case, we have $(\mathcal{NC}1)$ and  $(\mathcal{NC}3)$ properties similar as \textbf{Case 1b}. 
             \\

             For $(\mathcal{NC}2)$, if $b/ n_{R} > \frac{(M-1)^{\frac{M-1}{M}}}{M}$,  we can freely choose the number of positive singular values $r$ between $i$ and $j$, thus we have similar results as in \textbf{Case 4a}.

             Otherwise, if $b/ n_{R} = \frac{(M-1)^{\frac{M-1}{M}}}{M}$, we can freely choose the number of positive singular values $r$ between $i$ and $R$, thus we still have similar geometries as in \textbf{Case 4a}.
        \end{itemize}
    We finish the proof.
\end{proof}

\section{Proof of Theorem \ref{thm:CE}}
\label{sec:CE_proof}
\begin{proof}[Proof of Theorem \ref{thm:CE}]
 Let $\mathbf{Z} =  \mathbf{W}_{M} \mathbf{W}_{M-1} \ldots \mathbf{W}_{2} \mathbf{W}_{1}   \mathbf{H}_{1}$. We begin by noting that any critical point 
 $( \mathbf{W}_{M}, \mathbf{W}_{M-1},\ldots, \mathbf{W}_{2}, \mathbf{W}_{1}, \mathbf{H}_{1}, \mathbf{b})$ of $f$ satisfies the following:
  \begin{align}
  &\frac{\partial f}{\partial \mathbf{W}_{M}} =
  \frac{2}{N} \frac{\partial g}{\partial \mathbf{Z}} \mathbf{H}_{1}^{\top} \mathbf{W}_{1}^{\top} \ldots
  \mathbf{W}_{M-1}^{\top} + \lambda_{W_{M}} \mathbf{W}_{M} = \mathbf{0}, \\
  &\frac{\partial f}{\partial \mathbf{W}_{M-1}} =
  \frac{2}{N} \mathbf{W}_{M}^{\top} \frac{\partial g}{\partial \mathbf{Z}} \mathbf{H}^{\top}_{1} \mathbf{W}_{1}^{\top} \ldots \mathbf{W}_{M-2}^{\top}  + \lambda_{W_{M-1}} \mathbf{W}_{M-1} = \mathbf{0}, \\
  &\ldots, \nonumber \\
  &\frac{\partial f}{\partial \mathbf{W}_{1}} =
  \frac{2}{N} \mathbf{W}_{2}^{\top} \mathbf{W}_{3}^{\top} \ldots \mathbf{W}_{M}^{\top} \frac{\partial g}{\partial \mathbf{Z}} \mathbf{H}_{1}^{\top}  + \lambda_{W_{1}} \mathbf{W}_{1} = \mathbf{0}, \\
  &\frac{\partial f}{\partial \mathbf{H}_{1}} =
  \frac{2}{N} \mathbf{W}_{1}^{\top} \mathbf{W}_{2}^{\top} \ldots \mathbf{W}_{M}^{\top} \frac{\partial g}{\partial \mathbf{Z}} \mathbf{H}^{\top}  + \lambda_{H_{1}} \mathbf{H}_{1} = \mathbf{0}.
\end{align}
    
    \noindent Next, we have:
\begin{align}
    &\mathbf{0} = \mathbf{W}^{\top}_{M} \frac{\partial f}{\partial \mathbf{W}_{M}} - \frac{\partial f}{\partial \mathbf{W}_{M-1}} \mathbf{W}_{M-1}^{\top} = \lambda_{W_{M}} \mathbf{W}^{\top}_{M} \mathbf{W}_{M} - \lambda_{W_{M-1}} \mathbf{W}_{M-1} \mathbf{W}^{\top}_{M-1} \nonumber \\ 
    &\Rightarrow \lambda_{W_{M}} \mathbf{W}^{\top}_{M} \mathbf{W}_{M} = \lambda_{W_{M-1}} \mathbf{W}_{M-1} \mathbf{W}^{\top}_{M-1}. \nonumber \\
    &\mathbf{0} = \mathbf{W}^{\top}_{M-1} \frac{\partial f}{\partial \mathbf{W}_{M-1}} - \frac{\partial f}{\partial \mathbf{W}_{M-2}} \mathbf{W}_{M-2}^{\top} = \lambda_{W_{M-1}} \mathbf{W}^{\top}_{M-1} \mathbf{W}_{M-1} - \lambda_{W_{M-2}} \mathbf{W}_{M-2} \mathbf{W}^{\top}_{M-2} \nonumber \\
    &\Rightarrow \lambda_{W_{M-1}} \mathbf{W}^{\top}_{M-1} \mathbf{W}_{M-1} = \lambda_{W_{M-2}} \mathbf{W}_{M-2} \mathbf{W}^{\top}_{M-2}. \nonumber 
\end{align}
Making similar argument for the other derivatives, we also have:
\begin{align}
\begin{gathered}
    \lambda_{W_{M}} \mathbf{W}^{\top}_{M} \mathbf{W}_{M} = \lambda_{W_{M-1}} \mathbf{W}_{M-1} \mathbf{W}^{\top}_{M-1},  \\
    \lambda_{W_{M-1}} \mathbf{W}^{\top}_{M-1} \mathbf{W}_{M-1} = \lambda_{W_{M-2}} \mathbf{W}_{M-2} \mathbf{W}^{\top}_{M-2},  \\
    \ldots,  \\
    \lambda_{W_{2}} \mathbf{W}_{2}^{\top} \mathbf{W}_{2} = \lambda_{W_{1}} \mathbf{W}_{1} \mathbf{W}_{1}^{\top},  \\
    \lambda_{W_{1}} \mathbf{W}_{1}^{\top} \mathbf{W}_{1} =  \lambda_{H_{1}}\mathbf{H}_{1} \mathbf{H}_{1}^{\top}.
    \label{eq:W_H}    
\end{gathered}
\end{align}

\noindent Now, let $\mathbf{H}_{1} = \mathbf{U}_{H} \mathbf{S}_{H} \mathbf{V}_{H}^{\top}$ be the SVD decomposition of $\mathbf{H}_{1}$ with orthonormal matrices $\mathbf{U} \in \mathbb{R}^{d_{1} \times d_{1}}, \mathbf{V} \in \mathbb{R}^{N \times N}$ and $\mathbf{S} \in \mathbb{R}^{d_{1} \times N}$ is a diagonal matrix with decreasing singular values. We note that from equations \eqref{eq:W_H}, $r:= \operatorname{rank} (\mathbf{W}_{M}) = \ldots = \operatorname{rank} (\mathbf{W}_{1}) = \operatorname{rank} (\mathbf{H}_{1})$ is at most $R := \min(d_{M}, d_{M-1},\ldots, d_{1}, K)$. We denote $r$ singular values  of $\mathbf{H}_{1}$ as $\left\{s_{k}\right\}_{k=1}^{r}$.

    \noindent Next, we start to bound $g(\mathbf{W}_{M} \mathbf{W}_{M-1} \ldots \mathbf{W}_{2} \mathbf{W}_{1} \mathbf{H}_{1} + \mathbf{b} \mathbf{1}^{\top})$ with techniques extended from Lemma D.3 in \cite{Zhu21}. By using Lemma \ref{lm:7} for $\mathbf{z}_{k,i} = \mathbf{W}_{M} \mathbf{W}_{M-1} \ldots \mathbf{W}_{2} \mathbf{W}_{1} \mathbf{h}_{k,i} + \mathbf{b}$ with the same scalar $c_{1}, c_{2}$ ($c_{1}$ can be chosen arbitrarily) for all $k$ and $i$, we have:
    \begin{align}
    \begin{aligned}
        &(1 + c_{1})(K-1)[ g(\mathbf{W}_{M} \mathbf{W}_{M-1} \ldots \mathbf{W}_{2} \mathbf{W}_{1}   \mathbf{H}_{1} + \mathbf{b} \mathbf{1}^{\top}) - c_{2}] \\
        = \: &(1 + c_{1})(K-1) \left[ \frac{1}{N} \sum_{k=1}^{K} \sum_{i=1}^{n} 
        \mathcal{L}_{CE} (\mathbf{W}_{M} \mathbf{W}_{M-1} \ldots \mathbf{W}_{2} \mathbf{W}_{1}   \mathbf{h}_{k,i} + \mathbf{b}, \mathbf{y}_{k}  )
        - c_{2} \right] \\
        \geq \: &\frac{1}{N} \sum_{k=1}^{K} \sum_{i=1}^{n} \left[ 
        \sum_{j=1}^{K} ((\mathbf{W}_{M} \mathbf{W}_{M-1} \ldots \mathbf{W}_{2} \mathbf{W}_{1} )_{j} \mathbf{h}_{k,i} + 
        b_{j}) - K ((\mathbf{W}_{M} \mathbf{W}_{M-1} \ldots \mathbf{W}_{2} \mathbf{W}_{1} )_{k} \mathbf{h}_{k,i} + b_{k} ) \right] \\
        = \: &\frac{1}{N} \sum_{i=1}^n \left[ \left(\sum_{k=1}^K \sum_{j=1}^K
        (\mathbf{W}_{M} \mathbf{W}_{M-1} \ldots \mathbf{W}_{1}   )_{j}
        \mathbf{h}_{k, i}
        -K \sum_{k=1}^K 
        (\mathbf{W}_{M} \mathbf{W}_{M-1} \ldots \mathbf{W}_{1})_{k}
        \mathbf{h}_{k, i} \right) + \underbrace{\sum_{k=1}^K \sum_{j=1}^K\left(b_j-b_k\right)}_{=0} \right] \\
        = \: &\frac{1}{N} \sum_{i=1}^n \left( \sum_{k=1}^K \sum_{j=1}^K
        (\mathbf{W}_{M} \mathbf{W}_{M-1} \ldots \mathbf{W}_{2} \mathbf{W}_{1} )_{j}
        \mathbf{h}_{k, i}
        -K \sum_{k=1}^K 
        (\mathbf{W}_{M} \mathbf{W}_{M-1} \ldots \mathbf{W}_{2} \mathbf{W}_{1} )_{k}
        \mathbf{h}_{k, i} 
        \right) \\
        = \: &\frac{K}{N}
        \sum_{i=1}^n \sum_{k=1}^K 
        \left[ 
        (\mathbf{W}_{M} \mathbf{W}_{M-1} \ldots \mathbf{W}_{2} \mathbf{W}_{1} )_{k} \left(
        \frac{1}{K} \sum_{j=1}^{K} (
        \mathbf{h}_{j,i} - \mathbf{h}_{k,i}) 
        \right)
        \right] \\
        = \: &\frac{1}{n}
        \sum_{i=1}^n \sum_{k=1}^K 
        (\mathbf{W}_{M} \mathbf{W}_{M-1} \ldots \mathbf{W}_{2} \mathbf{W}_{1} )_{k} ( \overline{\mathbf{h}}_{i} - \mathbf{h}_{k,i}) \\
        = \: &\frac{-1}{n}
        \sum_{i=1}^n \sum_{k=1}^K 
        (\mathbf{W}_{M} \mathbf{W}_{M-1} \ldots \mathbf{W}_{2} \mathbf{W}_{1} )_{k} (\mathbf{h}_{k, i} -  \overline{\mathbf{h}}_{i} ), 
 \label{eq:CE_form}
    \end{aligned}
    \end{align}
    where $\overline{\mathbf{h}}_{i} = \frac{1}{K} \sum_{j=1}^{K} \mathbf{h}_{j,i}$. Now, from the AM-GM inequality, we know that for any $\mathbf{u}, \mathbf{v} \in \mathbb{R}^{K}$ and any $c_{3} > 0$,
    \begin{align}
        \mathbf{u}^{\top} \mathbf{v} \leq \frac{c_{3}}{2} \| \mathbf{u} \|_2^2 + \frac{1}{2 c_{3}} \| \mathbf{v} \|_2^2. \nonumber
    \end{align}

    \noindent The equality holds when $c_{3} \mathbf{u} = \mathbf{v}$. Therefore, by applying AM-GM for each term  $ (\mathbf{W}_{M} \mathbf{W}_{M-1} \ldots \mathbf{W}_{2} \mathbf{W}_{1} )_{k} (\mathbf{h}_{k, i} -  \overline{\mathbf{h}}_{i})$, we further have:
    \begin{align}
    \begin{aligned}
        & (1 + c_{1})(K-1)[ g(\mathbf{W}_{M} \mathbf{W}_{M-1} \ldots \mathbf{W}_{2} \mathbf{W}_{1}  + \mathbf{b} \mathbf{1}^{\top}) - c_{2}] \\
        \geq & -\frac{c_3}{2} \sum_{k=1}^K\left
        \| (\mathbf{W}_{M} \mathbf{W}_{M-1} \ldots \mathbf{W}_{2} \mathbf{W}_{1} )_{k}\right\|_2^2
        -\frac{1}{2 c_3 n} \sum_{i=1}^n \sum_{k=1}^K
        \left\| \mathbf{h}_{k, i} - \overline{\mathbf{h}}_i
        \right\|_2^2 \\
        = &  -\frac{c_3}{2} \sum_{k=1}^K\left
        \| (\mathbf{W}_{M} \mathbf{W}_{M-1} \ldots \mathbf{W}_{2} \mathbf{W}_{1} )_{k}\right\|_2^2
        -\frac{1}{2 c_3 n} \sum_{i=1}^n\left[\left(\sum_{k=1}^K
        \left\|
        \mathbf{h}_{k,i} \right\|_2^2\right)
        - K\left\| \overline{\mathbf{h}}_i \right\|_2^2\right] \\
        = & -\frac{c_3}{2}\|\mathbf{W}_{M} \mathbf{W}_{M-1} \ldots \mathbf{W}_{2} \mathbf{W}_{1} \|_F^2
        - \frac{1}{2c_3n}
        \left(\|\mathbf{H}_{1}\|_F^2-K \sum_{i=1}^n\left\|\overline{\mathbf{h}}_i\right\|_2^2\right) \\
        \geq & -\frac{c_3}{2}\|\mathbf{W}_{M} \mathbf{W}_{M-1} \ldots \mathbf{W}_{2} \mathbf{W}_{1} \|_F^2 - \frac{1}{2c_3n} \|\mathbf{H}_{1}\|_F^2,        \label{eq:CE_form1}
    \end{aligned}
    \end{align}
    where the first inequality becomes an equality if and only if
    \begin{align}
        c_{3} ( \mathbf{W}_{M} \mathbf{W}_{M-1} \ldots \mathbf{W}_{2} \mathbf{W}_{1}   )_{k} = \mathbf{h}_{k,i} - \overline{\mathbf{h}}_{i} \: \forall k, i,
    \end{align}
    and we ignore the term $\sum_{i=1}^{n} \left\|\overline{\mathbf{h}}_i\right\|_2^2$ in the last inequality (equality holds iff $\overline{\mathbf{h}}_i = \mathbf{0} \: \forall i$).
    \\
    
    \noindent Now, by using equation \eqref{eq:W_H}, we have:
    \begin{align}
        \|\mathbf{W}_{M} \mathbf{W}_{M-1} \ldots \mathbf{W}_{2} \mathbf{W}_{1} \|_F^2 &= \operatorname{trace}
        (\mathbf{W}_{1}^{\top} \mathbf{W}_{2}^{\top}
        \ldots \mathbf{W}_{M-1}^{\top}
        \mathbf{W}_{M}^{\top}
        \mathbf{W}_{M} \mathbf{W}_{M-1} \ldots \mathbf{W}_{2} \mathbf{W}_{1} ) \nonumber \\
        &= \underbrace{\frac{\lambda_{H_{1}}^{M}}{ \lambda_{W_{M}} \lambda_{W_{M-1}} \ldots \lambda_{W_{1}}}}_{c}
        \operatorname{trace}
        [ (\mathbf{H}_{1} \mathbf{H}_{1}^{\top})^{M} ] = c \sum_{k=1}^{K} s_{k}^{2M}.
    \end{align}
    
    \noindent We will choose $c_{3}$ to let all the inequalities at \eqref{eq:CE_form1} become equalities, which is as following:
    \begin{align}
       & c_{3} ( \mathbf{W}_{M} \mathbf{W}_{M-1} \ldots \mathbf{W}_{2} \mathbf{W}_{1} )_{k} = \mathbf{h}_{k, i} \quad \forall k, i \nonumber \\
       \Rightarrow \: & c_{3}^{2} = 
       \frac{\sum_{k=1}^{K} \sum_{i=1}^{n} \|  \mathbf{h}_{k,i} \|_2^2}
       {n \sum_{k=1}^{K} \| ( \mathbf{W}_{M} \mathbf{W}_{M-1} \ldots \mathbf{W}_{2} \mathbf{W}_{1} )_{k}  \|_2^2   } = 
       \frac{\|\mathbf{H_{1}}\|_F^2   }{ n   \|\mathbf{W}_{M} \mathbf{W}_{M-1} \ldots \mathbf{W}_{2} \mathbf{W}_{1} \|_F^2}
       = \frac{ \sum_{k=1}^{r} s_{k}^{2} }{ c n \sum_{k=1}^{r} s_{k}^{2M} }.
       \label{eq:c3}
    \end{align}
    
    \noindent With $c_{3}$ chosen as above, continue from the lower bound at \eqref{eq:CE_form1}, we have:
    \begin{align}
        g(\mathbf{W}_{M} \mathbf{W}_{M-1} \ldots \mathbf{W}_{2} \mathbf{W}_{1}   \mathbf{H}_{1} + \mathbf{b} \mathbf{1}^{\top}) \geq \frac{1}{(1+c_{1})(K-1)} \left( 
        - \sqrt{\frac{c}{n}} \sqrt{ \left( \sum_{k=1}^{r} s_{k}^{2} \right ) \left( \sum_{k=1}^{r} s_{k}^{2M} \right) } 
        \right) + c_{2}.
        \label{eq:lower_g}
    \end{align}
    
    \noindent Using this lower bound of $f$, we have for any critical point $(\mathbf{W}_{M} \mathbf{W}_{M-1} \ldots \mathbf{W}_{2} \mathbf{W}_{1} , \mathbf{H}_{1}, \mathbf{b})$ of function $f$ and $c_{1} > 0$:
    \begin{align}
    \begin{aligned}
        &f(\mathbf{W}_{M}, \mathbf{W}_{M-1},\ldots, \mathbf{W}_{2}, \mathbf{W}_{1}, \mathbf{H}_{1}, \mathbf{b}) 
        =
        g(\mathbf{W}_{M} \mathbf{W}_{M-1} \ldots \mathbf{W}_{2} \mathbf{W}_{1}   \mathbf{H}_{1} + \mathbf{b} \mathbf{1}^{\top}) +
        \frac{\lambda_{W_{M}}}{2} \| \mathbf{W}_{M} \|^2_F 
        \\
        & +  \ldots +   \frac{\lambda_{W_{2}}}{2} \| \mathbf{W}_{2} \|^2_F + \frac{\lambda_{W_{1}}}{2} \| \mathbf{W}_{1} \|^2_F +  \frac{\lambda_{H_{1}}}{2} \| \mathbf{H}_{1} \|^2_F \\
        &\geq 
        \frac{1}{(1+c_{1})(K-1)} \left( 
        - \sqrt{\frac{c}{n}} \sqrt{ \left( \sum_{k=1}^{r} s_{k}^{2} \right ) \left( \sum_{k=1}^{r} s_{k}^{2M} \right) } 
        \right) + c_{2} 
        + \frac{\lambda_{W_{M}}}{2} \frac{\lambda_{H_{1}}}{\lambda_{W_{M}}} \sum_{k=1}^{r} s_{k}^{2}
        \\
        &+ \ldots 
        +
        \frac{\lambda_{W_{1}}}{2} \frac{\lambda_{H_{1}}}{\lambda_{W_{1}}} \sum_{k=1}^{r} s_{k}^{2}
        + \frac{\lambda_{H_{1}}}{2}  \sum_{k=1}^{r} s_{k}^{2}
        + \frac{\lambda_{b}}{2} \|  \mathbf{b} \|_2^2 \\
        &= 
        \underbrace{\frac{1}{(1+c_{1})(K-1)} \left( 
        - \sqrt{\frac{c}{n}} \sqrt{ \left( \sum_{k=1}^{r} s_{k}^{2} \right ) \left( \sum_{k=1}^{r} s_{k}^{2M} \right) } 
        \right) + c_{2} + \frac{M+1}{2} \lambda_{H_{1}} \sum_{k=1}^{r} s_{k}^{2}}
        _{\xi (s_{1}, s_{2},\ldots, s_{r}, \lambda_{W_{2}}, \lambda_{W_{1}}, \lambda_{H_{1}} )} +  \frac{\lambda_{b}}{2} \|  \mathbf{b} \|_2^2 \\
        &\geq \xi(s_{1}, s_{2},\ldots, s_{r}, \lambda_{W_{M}},\ldots, \lambda_{W_{1}}, \lambda_{H_{1}}),
    \end{aligned}
    \end{align}
    where the last inequality becomes an equality when either $\mathbf{b} = \mathbf{0}$ or $\lambda_{b} = 0$.
    \\
    
    \noindent From Lemma \ref{lm:8}, we know that the inequality $f(\mathbf{W}_{M}, \mathbf{W}_{M-1},\ldots, \mathbf{W}_{2}, \mathbf{W}_{1}, \mathbf{H}_{1}, \mathbf{b}) \geq  \xi(s_{1}, s_{2},\ldots, s_{r}, \lambda_{W_{M}},\ldots, \lambda_{W_{1}}, \lambda_{H_{1}})$ becomes equality if and only if:
    \begin{align}
        \begin{aligned}
        & \left\| (\mathbf{W}_{M} \mathbf{W}_{M-1} \ldots  \mathbf{W}_{1} )_{1} \right\|_2
        =
        \left\|
        (\mathbf{W}_{M} \mathbf{W}_{M-1} \ldots  \mathbf{W}_{1} )_{2}
        \right\|_2
        =
        \cdots
        =
        \left\|(\mathbf{W}_{M} \mathbf{W}_{M-1} \ldots  \mathbf{W}_{1} )_{K}
        \right\|_2, \\ 
        & \mathbf{b} = \mathbf{0}  \text { or } \lambda_{b} = 0 , \\
        & \overline{\mathbf{h}}_i:=\frac{1}{K} \sum_{j=1}^K \mathbf{h}_{j, i}=\mathbf{0}, \quad \forall i \in[n], \quad \text { and } 
        \quad 
        c_{3} (\mathbf{W}_{M} \mathbf{W}_{M-1} \ldots  \mathbf{W}_{1} )_{K} = \mathbf{h}_{k, i}, \quad \forall k \in[K], i \in[n], \\
        & \mathbf{W}_{M} \mathbf{W}_{M-1} \ldots  \mathbf{W}_{1} (\mathbf{W}_{M} \mathbf{W}_{M-1} \ldots  \mathbf{W}_{1})^{\top}=\frac{ c \sum_{k=1}^{r} s_{k}^{2M}}{K-1}\left(\boldsymbol{I}_K-\frac{1}{K} \mathbf{1}_K \mathbf{1}_K^{\top}\right), \\
        & c_1 = \left[(K-1) \exp \left(
        -\frac{\sqrt{c}}{(K-1) \sqrt{n}} 
        \sqrt{ \left( \sum_{k=1}^{r} s_{k}^{2} \right ) \left( \sum_{k=1}^{r} s_{k}^{2M} \right) }
        \right)
        \right]^{-1},
        \label{eq:econ}
        \end{aligned} 
    \end{align}
    with $c_{3}$ as in equation \eqref{eq:c3}. Furthermore, $\mathbf{H}_{1}$ includes repeated columns with $K$ non-repeated columns, and the sum of these non-repeated columns is $\mathbf{0}$. Hence, $\operatorname{rank} (\mathbf{H}_{1}) \leq  \min(d_{M}, d_{M-1}, \ldots, d_{1},  K-1) = K - 1$.
    \\
    
    \noindent Now, the only work left is to prove $\xi(s_{1}, s_{2},\ldots, s_{r}, \lambda_{W_{M}},\ldots, \lambda_{W_{1}}, \lambda_{H_{1}})$ achieve its minimum at finite $s_{1},\ldots, s_{r}$ for any fixed $\lambda_{W_{M}},\ldots \lambda_{W_{1}}, \lambda_{H_{1}}$. From equation \eqref{eq:econ}, we know that $c_1 =  \left[(K-1) \exp \left(
    -\frac{\sqrt{c}}{(K-1) \sqrt{n}} 
    \sqrt{ \left( \sum_{k=1}^{r} s_{k}^{2} \right ) \left( \sum_{k=1}^{r} s_{k}^{2M} \right) }
    \right)
    \right]^{-1}$ is an increasing function in terms of $s_{1}, s_{2},   \ldots, s_{r}$, and $c_2=\frac{1}{1+c_1} \log \left(\left(1+c_1\right)(K-1)\right)+\frac{c_1}{1+c_1} \log \left(\frac{1+c_1}{c_1}\right)$ is a decreasing function in terms of $c_1$. Therefore, we observe the following:
    When any $s_{k} \rightarrow+\infty, c_1 \rightarrow+\infty$
    and  $\frac{1}{(1+c_{1})(K-1)} \left( 
    - \sqrt{\frac{c}{n}} \sqrt{ \left( \sum_{k=1}^{r} s_{k}^{2} \right ) \left( \sum_{k=1}^{r} s_{k}^{2M} \right) } 
    \right) \rightarrow 0$
    , $c_2 \rightarrow 0$, so that $\xi(s_{1},\ldots, s_{K}, \lambda_{W_{M}},\ldots \lambda_{W_{1}}, \lambda_{H_{1}})  \rightarrow+\infty$ as $s_{k} \rightarrow+\infty$. 
    \\
    
    \noindent Since $\xi(s_{1}, s_{2},\ldots, s_{r}, \lambda_{W_{M}},\ldots, \lambda_{W_{1}}, \lambda_{H_{1}})$ is a continuous function of $(s_{1}, s_{2},\ldots, s_{r})$ and  $\xi(s_{1}, s_{2},\ldots, s_{r}, \lambda_{W_{M}},\ldots, \lambda_{W_{1}}, \lambda_{H_{1}}) \rightarrow+\infty$ when any $s_{k} \rightarrow+\infty$, $\xi$ must achieves its minimum at finite $(s_{1}, s_{2},\ldots, s_{r})$. This finishes the proof.

\end{proof}

\subsection{Supporting lemmas}
\begin{lemma}[Lemma D.5 in \cite{Zhu21}]
\label{lm:7}
    Let $\boldsymbol{y}_k \in \mathbb{R}^K$ be an one-hot vector with the $k$-th entry equalling 1 for some $k \in[K]$. For any vector $\boldsymbol{z} \in \mathbb{R}^K$ and $c_1>0$, the cross-entropy loss $\mathcal{L}_{\mathrm{CE}}\left(\boldsymbol{z}, \boldsymbol{y}_k\right)$ with $\boldsymbol{y}_k$ can be lower bounded by
$$
\mathcal{L}_{\mathrm{CE}}\left(\boldsymbol{z}, \boldsymbol{y}_k\right) \geq \frac{1}{1+c_1} \frac{\left(\sum_{i=1}^K z_i\right)-K z_k}{K-1}+c_2,
$$
where $c_2=\frac{1}{1+c_1} \log \left(\left(1+c_1\right)(K-1)\right)+\frac{c_1}{1+c_1} \log \left(\frac{1+c_1}{c_1}\right)$. The inequality becomes an equality when
$$
z_i=z_j, \quad \forall i, j \neq k, \quad \text { and } \quad c_1=\left[(K-1) \exp \left(\frac{\left(\sum_{i=1}^K z_i\right)-K z_k}{K-1}\right)\right]^{-1}.
$$
\end{lemma}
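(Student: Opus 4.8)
The plan is to reduce the statement to a one-dimensional convexity argument. First I would write the cross-entropy loss explicitly as
\[
\mathcal{L}_{\mathrm{CE}}(\boldsymbol{z}, \boldsymbol{y}_k) = \log\Big(\sum_{i=1}^K e^{z_i}\Big) - z_k = \log\Big(1 + \sum_{i \neq k} e^{z_i - z_k}\Big),
\]
and set $w_i := z_i - z_k$ for $i \neq k$. Then the target linear functional is exactly the mean $\bar{w} := \frac{1}{K-1}\sum_{i\neq k} w_i = \frac{(\sum_{i=1}^K z_i) - K z_k}{K-1}$, so the claim reduces to proving $\log\big(1 + \sum_{i\neq k} e^{w_i}\big) \geq \frac{1}{1+c_1}\bar{w} + c_2$ for all $(w_i)_{i\neq k}$.

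Second, I would collapse the multivariate dependence on $(w_i)_{i\neq k}$ onto $\bar{w}$ using AM--GM: $\frac{1}{K-1}\sum_{i\neq k} e^{w_i} \geq \big(\prod_{i\neq k} e^{w_i}\big)^{1/(K-1)} = e^{\bar{w}}$, hence $\sum_{i\neq k} e^{w_i} \geq (K-1)e^{\bar{w}}$, with equality if and only if all $w_i$ coincide, i.e. $z_i = z_j$ for all $i,j \neq k$. Monotonicity of the logarithm then yields $\mathcal{L}_{\mathrm{CE}} \geq \phi(\bar{w})$, where $\phi(t) := \log(1 + (K-1)e^t)$. Since $\phi$ is a log-sum-exp, it is convex, so any tangent line is a global lower bound: $\phi(t) \geq \phi(t_0) + \phi'(t_0)(t - t_0)$. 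I would choose the contact point $t_0$ so the slope matches the prescribed coefficient, solving $\phi'(t_0) = \frac{(K-1)e^{t_0}}{1 + (K-1)e^{t_0}} = \frac{1}{1+c_1}$; this gives $(K-1)e^{t_0} = 1/c_1$, hence $t_0 = -\log(c_1(K-1))$ and $\phi(t_0) = \log\frac{1+c_1}{c_1}$. Substituting $t = \bar{w}$ produces a lower bound with the correct slope $\frac{1}{1+c_1}$.

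The only genuinely laborious step --- and the one I expect to be the main obstacle, though it is bookkeeping rather than a conceptual difficulty --- is verifying that the tangent-line intercept $\phi(t_0) - \phi'(t_0)t_0$ equals the stated constant $c_2$, i.e.
\[
\log\tfrac{1+c_1}{c_1} + \tfrac{1}{1+c_1}\log\big(c_1(K-1)\big) = \tfrac{1}{1+c_1}\log\big((1+c_1)(K-1)\big) + \tfrac{c_1}{1+c_1}\log\tfrac{1+c_1}{c_1}.
\]
I would expand both sides into the basis $\big\{\log(1+c_1),\ \log c_1,\ \log(K-1)\big\}$ and collect coefficients; the $\log(K-1)$ terms match trivially, while the $\log c_1$ terms combine via the identity $1 - \frac{1}{1+c_1} = \frac{c_1}{1+c_1}$, and the $\log(1+c_1)$ terms add to $1$.

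Finally I would read off the equality characterization. The AM--GM step is tight exactly when $z_i = z_j$ for all $i,j\neq k$, and the tangent bound is tight exactly when $\bar{w} = t_0$, i.e. $c_1(K-1)e^{\bar{w}} = 1$, which rearranges to $c_1 = \big[(K-1)\exp\big(\tfrac{(\sum_{i} z_i) - K z_k}{K-1}\big)\big]^{-1}$. Together these give precisely the stated equality conditions, completing the proof.
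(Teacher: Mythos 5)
Your proof is correct. The paper itself does not prove this lemma --- it imports it verbatim as Lemma D.5 of \cite{Zhu21} --- but your argument (rewrite $\mathcal{L}_{\mathrm{CE}}$ as $\log\bigl(1+\sum_{i\neq k}e^{z_i-z_k}\bigr)$, collapse to the mean $\bar w$ via AM--GM on the exponentials, then take the tangent line to the convex function $\log\bigl(1+(K-1)e^{t}\bigr)$ at the point where the slope equals $\tfrac{1}{1+c_1}$) is exactly the standard derivation used there, and your verification of the intercept $c_2$ and of the two equality conditions is accurate.
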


\begin{lemma}[Extended from Lemma D.4 in \cite{Zhu21}]
\label{lm:8}
    Let $(\mathbf{W}_{M}, \mathbf{W}_{M-1},\ldots, \mathbf{W}_{2}, \mathbf{W}_{1}, \mathbf{H}_{1}, \mathbf{b})$ be a critical point of $f$ with $\left\{ s_{k} \right\}_{k=1}^{r}$ be the singular values of $\mathbf{H}_{1}$.
    The lower bound \eqref{eq:lower_g} of $g$  is attained for  $(\mathbf{W}_{M}, \mathbf{W}_{M-1},\ldots, \mathbf{W}_{2}, \mathbf{W}_{1}, \mathbf{H}_{1}, \mathbf{b})$ if and only if:
    \begin{align}
        \begin{aligned}
        & \left\| (\mathbf{W}_{M} \mathbf{W}_{M-1} \ldots \mathbf{W}_{2} \mathbf{W}_{1} )_{1} \right\|_2
        =
        \left\|
        (\mathbf{W}_{M} \mathbf{W}_{M-1} \ldots \mathbf{W}_{2} \mathbf{W}_{1} )_{2}
        \right\|_2
        =
        \cdots
        =
        \left\|(\mathbf{W}_{M} \mathbf{W}_{M-1} \ldots \mathbf{W}_{2} \mathbf{W}_{1} )_{K}
        \right\|_2, \\
        & \mathbf{b} = b \mathbf{1}, \\
        & \bar{\mathbf{h}}_i:=\frac{1}{K} \sum_{j=1}^K \mathbf{h}_{j, i}=\mathbf{0}, \quad \forall i \in[n], \quad \text { and } 
        \quad 
        c_{3} (\mathbf{W}_{M} \mathbf{W}_{M-1} \ldots \mathbf{W}_{2} \mathbf{W}_{1} )_{k} = \mathbf{h}_{k, i}, \quad \forall k \in[K], i \in[n], \\
        & \mathbf{W}_{M} \mathbf{W}_{M-1} \ldots \mathbf{W}_{2} \mathbf{W}_{1} (\mathbf{W}_{M} \mathbf{W}_{M-1} \ldots \mathbf{W}_{2} \mathbf{W}_{1})^{\top} = \frac{c \sum_{k=1}^{K} s_{k}^{2M}}{K-1}\left(\boldsymbol{I}_K-\frac{1}{K} \mathbf{1}_K \mathbf{1}_K^{\top}\right), \\
        & c_1 = \left[(K-1) \exp \left(
        -\frac{\sqrt{c}}{(K-1) \sqrt{n}} 
        \sqrt{ \left( \sum_{k=1}^{K} s_{k}^{2} \right ) \left( \sum_{k=1}^{K} s_{k}^{2M} \right) }
        \right)
        \right]^{-1},
        \end{aligned}
    \end{align}
    with $c_{3}$ as in equation \eqref{eq:c3}.
\end{lemma}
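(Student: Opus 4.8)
The plan is to prove both implications by dissecting the chain of inequalities in the proof of Theorem~\ref{thm:CE} that produces the lower bound \eqref{eq:lower_g}. Abbreviating the end-to-end weight product by $\mathbf{A}:=\mathbf{W}_{M}\mathbf{W}_{M-1}\cdots\mathbf{W}_{1}$, that chain consists of four separate relaxations: (i) the per-example cross-entropy bound of Lemma~\ref{lm:7}, applied with one common pair $(c_{1},c_{2})$ to every $(k,i)$; (ii) the term-wise AM--GM linearization $\mathbf{u}^{\top}\mathbf{v}\le\tfrac{c_{3}}{2}\|\mathbf{u}\|_{2}^{2}+\tfrac{1}{2c_{3}}\|\mathbf{v}\|_{2}^{2}$ applied to each $(\mathbf{A})_{k}(\mathbf{h}_{k,i}-\overline{\mathbf{h}}_{i})$; (iii) discarding the nonnegative quantity $K\sum_{i}\|\overline{\mathbf{h}}_{i}\|_{2}^{2}$; and (iv) the choice of $c_{3}$ that balances $\tfrac{c_{3}}{2}\|\mathbf{A}\|_{F}^{2}$ against $\tfrac{1}{2c_{3}n}\|\mathbf{H}_{1}\|_{F}^{2}$. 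The bound \eqref{eq:lower_g} is attained exactly when all four relaxations are simultaneously tight, so the proof reduces to reading off and assembling their individual equality conditions, using throughout the critical-point identities \eqref{eq:W_H}, which give $\|\mathbf{A}\|_{F}^{2}=c\sum_{k}s_{k}^{2M}$ and $\|\mathbf{H}_{1}\|_{F}^{2}=\sum_{k}s_{k}^{2}$.

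For the easier ``$\Leftarrow$'' direction I would substitute the hypothesized conditions and verify tightness one relaxation at a time. Feature collapse with self-duality $c_{3}(\mathbf{A})_{k}=\mathbf{h}_{k,i}$ together with $\overline{\mathbf{h}}_{i}=\mathbf{0}$ makes (ii) and (iii) equalities; choosing $c_{3}$ as in \eqref{eq:c3} makes (iv) tight; and the prescribed $c_{1}$, together with $\mathbf{b}=b\mathbf{1}$ and the ETF identity for $\mathbf{A}\mathbf{A}^{\top}$, makes every per-example instance of (i) an equality. The key check is that under self-duality and $\mathbf{A}\mathbf{A}^{\top}=\tfrac{c\sum_{k}s_{k}^{2M}}{K-1}(\mathbf{I}_{K}-\tfrac1K\mathbf{1}_{K}\mathbf{1}_{K}^{\top})$, the logit of example $(k,i)$ at class $j$ equals $c_{3}(\mathbf{A}\mathbf{A}^{\top})_{jk}+b_{j}$, so all non-target logits coincide and the per-example gap $\big(\sum_{j}z_{j}-Kz_{k}\big)/(K-1)$ takes the single value forced by the stated $c_{1}$; substituting back reproduces the right-hand side of \eqref{eq:lower_g} exactly.

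The ``$\Rightarrow$'' direction is where the real work lies, and I would peel the relaxations off in reverse. Tightness of (ii) and (iii) gives $\overline{\mathbf{h}}_{i}=\mathbf{0}$ for all $i$ and $c_{3}(\mathbf{A})_{k}=\mathbf{h}_{k,i}$ for all $k,i$ (hence collapse, $\overline{\mathbf{H}}^{*}=c_{3}\mathbf{A}^{\top}$, and $\sum_{k}(\mathbf{A})_{k}=\mathbf{0}$, i.e. $\mathbf{A}^{\top}\mathbf{1}_{K}=\mathbf{0}$); tightness of (iv) pins $c_{3}$ to \eqref{eq:c3}; and tightness of every instance of (i), with $c_{1}$ a single scalar, forces the non-target logits to agree and fixes $c_{1}$. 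The delicate part is converting these scattered identities into the two geometric conclusions. Writing the class-$j$ logit of $(k,i)$ as $c_{3}(\mathbf{A}\mathbf{A}^{\top})_{jk}+b_{j}$, equality of the non-target logits across $j$ for each target $k$ first forces $b_{j}$ to be independent of $j$, giving $\mathbf{b}=b\mathbf{1}$, and then forces $(\mathbf{A}\mathbf{A}^{\top})_{jk}$ to be constant over $j\ne k$ for each fixed $k$; symmetry of $\mathbf{A}\mathbf{A}^{\top}$ upgrades this to a single global off-diagonal value $\beta$. Since $\mathbf{A}^{\top}\mathbf{1}_{K}=\mathbf{0}$ yields $\mathbf{A}\mathbf{A}^{\top}\mathbf{1}_{K}=\mathbf{0}$, every row of $\mathbf{A}\mathbf{A}^{\top}$ sums to zero, so $\|(\mathbf{A})_{k}\|_{2}^{2}=-(K-1)\beta$ for all $k$ — this \emph{derives} the equal-norm condition rather than assuming it — and hence $\mathbf{A}\mathbf{A}^{\top}=-K\beta(\mathbf{I}_{K}-\tfrac1K\mathbf{1}_{K}\mathbf{1}_{K}^{\top})$; identifying $-K(K-1)\beta=\operatorname{trace}(\mathbf{A}\mathbf{A}^{\top})=\|\mathbf{A}\|_{F}^{2}=c\sum_{k}s_{k}^{2M}$ through \eqref{eq:W_H} produces the stated constant $\tfrac{c\sum_{k}s_{k}^{2M}}{K-1}$. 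I expect this assembly of the ETF geometry out of the per-example Lemma~\ref{lm:7} equalities to be the main obstacle, precisely because it is where the deep product structure must be fed back in through \eqref{eq:W_H} instead of through a single classifier matrix as in the plain UFM of \cite{Zhu21}.
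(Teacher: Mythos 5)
Your overall strategy is the same as the paper's: you dissect the chain of inequalities leading to \eqref{eq:lower_g} into its constituent relaxations (Lemma \ref{lm:7} applied with a common $(c_{1},c_{2})$, the AM--GM linearization, the discarded $K\sum_{i}\|\overline{\mathbf{h}}_{i}\|_{2}^{2}$ term, and the balancing choice of $c_{3}$ in \eqref{eq:c3}), extract each equality condition, and feed the critical-point identities \eqref{eq:W_H} back in to express $\|\mathbf{W}_{M}\cdots\mathbf{W}_{1}\|_{F}^{2}=c\sum_{k}s_{k}^{2M}$ and $\|\mathbf{H}_{1}\|_{F}^{2}=\sum_{k}s_{k}^{2}$. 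Most of your assembly matches the paper, including deriving (rather than assuming) the equal-norm condition from the zero row sums of $\mathbf{A}\mathbf{A}^{\top}$ with $\mathbf{A}:=\mathbf{W}_{M}\cdots\mathbf{W}_{1}$, and identifying the ETF constant through the trace.

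However, one step in your ``$\Rightarrow$'' chain fails as stated: equality of the non-target logits alone does \emph{not} force $\mathbf{b}=b\mathbf{1}$. Those equalities only give $b_{j}-b_{l}=c_{3}\bigl((\mathbf{A}\mathbf{A}^{\top})_{lk}-(\mathbf{A}\mathbf{A}^{\top})_{jk}\bigr)$ for all $j,l\neq k$, and this system admits symmetric PSD Gram matrices with zero row sums together with a non-constant bias: for $K=3$ and $c_{3}=1$, take $\mathbf{A}\mathbf{A}^{\top}$ to be the weighted graph Laplacian with off-diagonal entries $(-1+2\epsilon,\,-1+\epsilon,\,-1)$ and $\mathbf{b}=(0,\epsilon,2\epsilon)$; every non-target equality holds, yet $\mathbf{b}$ is not constant. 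What eliminates such configurations is precisely the requirement that $c_{1}$ be a \emph{single} scalar across all targets $k$, which forces $c_{3}\|(\mathbf{A})_{k}\|_{2}^{2}+b_{k}$ to be independent of $k$ (the paper's \eqref{eq:126}); combining this with the non-target equalities and $\sum_{k}(\mathbf{A})_{k}=\mathbf{0}$ (the paper's \eqref{eq:127}--\eqref{eq:128}) yields $(K-1)(b_{j}-b_{l})=0$, hence $\mathbf{b}=b\mathbf{1}$, after which your route to the single off-diagonal value $\beta$, the equal norms, and the ETF form goes through. Since you do list the common-$c_{1}$ condition among your ingredients, the gap is local and fixable, but the derivation order you sketch --- non-target equality $\Rightarrow$ constant bias $\Rightarrow$ constant off-diagonal --- is invalid and must be replaced by the joint argument coupling the target and non-target conditions.
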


\begin{proof}[Proof of Lemma \ref{lm:8}]
    For the inequality \eqref{eq:lower_g}, to become an equality, first we will need two inequalities at \eqref{eq:CE_form1} to become equalities, this leads to:
    \begin{align}
        \overline{\mathbf{h}}_{i} &= 0 \quad \forall i \in  [n], \nonumber \\
        c_{3} (\mathbf{W}_{M} \mathbf{W}_{M-1} \ldots \mathbf{W}_{2} \mathbf{W}_{1} )_{k} &= \mathbf{h}_{k,i} \quad \forall k \in  [K], i \in [n], \nonumber
    \end{align}
    with $c_{3} = \sqrt{ \frac{ \sum_{k=1}^{r} s_{k}^{2} }{ c n \sum_{k=1}^{r} s_{k}^{2M} }}$ and $c = \frac{\lambda_{H_{1}}^{M}}{ \lambda_{W_{M}} \lambda_{W_{M-1}} \ldots \lambda_{W_{1}}}$.
    \\
    
    \noindent Next, we will need the inequality at \eqref{eq:CE_form} to become an equality, which is true if and only if (from the equality conditions of Lemma \ref{lm:7}):
    \begin{align}
    \begin{gathered}
        (\mathbf{W}_{M} \mathbf{W}_{M-1} \ldots \mathbf{W}_{2} \mathbf{W}_{1})_{j} \mathbf{h}_{k,i} + 
        b_{j} =  (\mathbf{W}_{M} \mathbf{W}_{M-1} \ldots \mathbf{W}_{2} \mathbf{W}_{1})_{l} \mathbf{h}_{k,i} + 
        b_{l}, \quad \forall j, l \neq k, \nonumber \\
        c_1 = \left[(K-1) \exp \left(\frac{\left(\sum_{j=1}^K [z_{k,i}]_{j}\right)-K [z_{k,i}]_{k}}{K-1}\right)\right]^{-1} \quad \forall  i \in [n]; k \in [K], \nonumber       
    \end{gathered}
    \end{align}
    with $z_{k,i} = \mathbf{W}_{M} \mathbf{W}_{M-1} \ldots \mathbf{W}_{2} \mathbf{W}_{1} \mathbf{h}_{k,i}$, and we have:
    \begin{align}
        \begin{aligned}
        \sum_{j=1}^K\left[\boldsymbol{z}_{k, i}\right]_j
        &= 
        \sum_{j=1}^K (\mathbf{W}_{M} \mathbf{W}_{M-1} \ldots \mathbf{W}_{2} \mathbf{W}_{1})_{j}
        \mathbf{h}_{k, i}
        +\sum_{j=1}^K b_j 
        =
        \sum_{j=1}^K
        \frac{1}{c_{3}} \mathbf{h}_{j,i}^{\top}
        \mathbf{h}_{k, i} +  \sum_{j=1}^K b_j \\
        & =
         K \overline{\mathbf{h}}_{i}
         \mathbf{h}_{k, i}^{\top} 
        +
         \sum_{j=1}^K b_j=K \bar{b},
        \end{aligned} \nonumber
    \end{align}
    with $\bar{b}=\frac{1}{K} \sum_{i=1}^K b_i$, and:
    \begin{align}
        K\left[\boldsymbol{z}_{k, i}\right]_k
        = K
        (\mathbf{W}_{M} \mathbf{W}_{M-1} \ldots \mathbf{W}_{2} \mathbf{W}_{1})_{k}
        \mathbf{h}_{k, i}
        + K b_k
        =
        K c_{3} \| (\mathbf{W}_{M} \mathbf{W}_{M-1} \ldots \mathbf{W}_{2} \mathbf{W}_{1})_{k} \|_2^2 +
        K b_k .
        \nonumber
    \end{align}
    With these calculations, we can calculate $c_{1}$ as following:
    \begin{align}
        \begin{aligned}
        c_1 & =\left[(K-1) \exp \left(\frac{\left(\sum_{j=1}^K\left[\boldsymbol{z}_{k, i}\right]_j\right)-K\left[\boldsymbol{z}_{k, i}\right]_k}{K-1}\right)\right]^{-1} \\
        & =
        \left[(K-1) \exp \left(\frac{K}{K-1}\left(\bar{b}
        -
        c_{3} \| (\mathbf{W}_{M} \mathbf{W}_{M-1} \ldots \mathbf{W}_{2} \mathbf{W}_{1})_{k} \|_2^2
        - b_k\right)\right)\right]^{-1} .
        \end{aligned}
        \label{eq:125}
    \end{align}
    
    \noindent Since $c_{1}$ is chosen to be the same for all $k \in [K]$, we have:
    \begin{align}
         c_{3} \| (\mathbf{W}_{M} \mathbf{W}_{M-1} \ldots \mathbf{W}_{2} \mathbf{W}_{1})_{k} \|_2^2
        + b_k = c_{3} \| (\mathbf{W}_{M} \mathbf{W}_{M-1} \ldots \mathbf{W}_{2} \mathbf{W}_{1})_{l} \|_2^2
        + b_l \quad \forall l \neq k,
        \label{eq:126}
    \end{align}
    
    \noindent Second, since $\left[z_{k, i}\right]_j=\left[z_{k, i}\right]_{\ell}$ for all $\forall j, \ell \neq k, k \in[K]$, we have:
    \begin{align}
    \begin{aligned}
    & (\mathbf{W}_{M} \mathbf{W}_{M-1} \ldots  \mathbf{W}_{1})_{j} \mathbf{h}_{k,i} + 
        b_{j} =  (\mathbf{W}_{M} \mathbf{W}_{M-1} \ldots  \mathbf{W}_{1})_{l} \mathbf{h}_{k,i} + 
        b_{l}, \quad \forall  j, l \neq k \\
    \Leftrightarrow \: & 
    c_{3} (\mathbf{W}_{M}  \ldots  \mathbf{W}_{1})_{j} (\mathbf{W}_{M}  \ldots \mathbf{W}_{1})_{k}
    + b_j
    =
    c_{3} (\mathbf{W}_{M}  \ldots  \mathbf{W}_{1})_{l} (\mathbf{W}_{M}  \ldots \mathbf{W}_{1})_{k}
    + b_{l}, \quad \forall  j, l \neq k.
    \label{eq:127}
    \end{aligned}        
    \end{align}

    \noindent Based on this and $\sum_{k=1}^{K} (\mathbf{W}_{M} \mathbf{W}_{M-1} \ldots \mathbf{W}_{2} \mathbf{W}_{1})_{k} = \frac{1}{c_{3}} \sum_{k=1}^{K} \mathbf{h}_{k,i} = \frac{1}{c_{3}} K \overline{\mathbf{h}_{i}} = \mathbf{0} $, we have:
    \begin{align}
    \begin{aligned}
    & c_{3} \left\|
    (\mathbf{W}_{M} \mathbf{W}_{M-1} \ldots \mathbf{W}_{2} \mathbf{W}_{1})_{k}
    \right\|_2^2 + b_k 
    = - c_{3} \sum_{j \neq k}  (\mathbf{W}_{M} \mathbf{W}_{M-1} \ldots \mathbf{W}_{1})_{l}
    (\mathbf{W}_{M} \mathbf{W}_{M-1} \ldots  \mathbf{W}_{1})_{k}
    + b_k \\
    & = - (K-1) c_{3}  \underbrace{ (\mathbf{W}_{M} \mathbf{W}_{M-1} \ldots \mathbf{W}_{2} \mathbf{W}_{1})_{l}
    (\mathbf{W}_{M} \mathbf{W}_{M-1} \ldots \mathbf{W}_{2} \mathbf{W}_{1})_{k}}_{l \neq k} 
    + \left(b_k+\sum_{j \neq l, k}\left( b_{l}-b_j \right)\right) \\
    & = - (K-1) c_{3}  (\mathbf{W}_{M} \mathbf{W}_{M-1} \ldots \mathbf{W}_{2} \mathbf{W}_{1})_{l}
    (\mathbf{W}_{M} \mathbf{W}_{M-1} \ldots \mathbf{W}_{2} \mathbf{W}_{1})_{k}
    + \left[2 b_k+(K-1) b_{l}-K \bar{b}\right],
    \label{eq:128}
    \end{aligned}        
    \end{align}
    for all $l \neq k$. Combining equations \eqref{eq:126} and \eqref{eq:128}, for all $k, l \in[K]$ with $k \neq l$ we have:
    $$
    2 b_k+(K-1) b_{\ell}-K \bar{b}=2 b_{l}+(K-1) b_k-K \bar{b} \quad \Longleftrightarrow \quad b_k=b_{l}, \forall k \neq l .
    $$
    Hence, we have $\mathbf{b} = b \mathbf{1}$ for some $b > 0$. Therefore, from equations \eqref{eq:126},  \eqref{eq:127} and \eqref{eq:128}:
    \begin{align}
        &\| (\mathbf{W}_{M}  \ldots  \mathbf{W}_{1})_{1}  \|_2^2 = \ldots 
        = \| (\mathbf{W}_{M}  \ldots  \mathbf{W}_{1})_{K}  \|_2^2
        = \frac{1}{K} \| (\mathbf{W}_{M}  \ldots  \mathbf{W}_{1})  \|_F^2 = \frac{c}{K} \sum_{k=1}^{r} s_{k}^{2M},
        \\
        &(\mathbf{W}_{M} \mathbf{W}_{M-1} \ldots  \mathbf{W}_{1})_{j} 
        (\mathbf{W}_{M} \mathbf{W}_{M-1} \ldots  \mathbf{W}_{1})_{k} =
        (\mathbf{W}_{M} \mathbf{W}_{M-1} \ldots  \mathbf{W}_{1})_{l} 
        (\mathbf{W}_{M} \mathbf{W}_{M-1} \ldots  \mathbf{W}_{1})_{k} 
        \nonumber \\
        &=
        - \frac{1}{K-1} \| (\mathbf{W}_{M} \mathbf{W}_{M-1} \ldots  \mathbf{W}_{1})_{k}  \|_2^2 = - \frac{c}{K(K-1)} \sum_{k=1}^{r} s_{k}^{2M} \quad \forall j, l \neq k,
    \end{align}
    and this is equivalent to:
    \begin{align}
        (\mathbf{W}_{M} \mathbf{W}_{M-1} \ldots  \mathbf{W}_{1}) (\mathbf{W}_{M} \mathbf{W}_{M-1} \ldots  \mathbf{W}_{1})^{\top} =  \frac{c \sum_{k=1}^{r} s_{k}^{2M}}{K-1} \left( 
        \mathbf{I}_{K} - \frac{1}{K} \mathbf{1}_{K} \mathbf{1}_{K}^{\top}
        \right).
    \end{align}
    
    \noindent Continue with $c_{1}$ in equation \eqref{eq:125}, we have:
    \begin{align}
        c_1 &= \left[ (K-1) \exp \left( \frac{-K}{K-1}
        c_{3} \| (\mathbf{W}_{M} \mathbf{W}_{M-1} \ldots  \mathbf{W}_{1})_{k} \|_2^2 \right)
       \right]^{-1} \nonumber \\
       &= \left[(K-1) \exp \left(
        -\frac{\sqrt{c}}{(K-1) \sqrt{n}} 
        \sqrt{ \left( \sum_{k=1}^{r} s_{k}^{2} \right ) \left( \sum_{k=1}^{r} s_{k}^{2M} \right) }
        \right)
        \right]^{-1}. \nonumber
    \end{align}
\end{proof}

\end{document}